\newtheorem*{theorem*}{Theorem (Informal)}
\newtheorem*{definition*}{Definition (Informal)}
\newtheorem{theorem}{Theorem}[section]
\newtheorem{lemma}{Lemma}[section]
\newtheorem{corollary}{Corollary}[section]
\newtheorem{definition}{Definition}[section]
\newtheorem{example}{Example}[section]
\newtheorem{remark}{Remark}[section]
\newcommand{\R}{\mathbb{R}}
\newcommand{\E}{\mathop{\mathbb{E}}}
\newcommand{\argmin}{\mathop{\mathrm{argmin}}}
\newcommand{\argmax}{\mathop{\mathrm{argmax}}}
\newcommand{\cX}{\mathcal{X}}
\newcommand{\cS}{\mathcal{S}}
\newcommand{\cC}{\mathcal{C}}
\newcommand{\cY}{\mathcal{Y}}
\newcommand{\cE}{\mathcal{E}}
\newcommand{\cH}{\mathcal{H}}
\newcommand{\cP}{\mathcal{P}}
\newcommand{\cI}{\mathcal{I}}
\newcommand{\cG}{\mathcal{G}}
\newcommand{\cU}{\mathcal{U}}
\newcommand{\cA}{\mathcal{A}}
\newcommand{\cL}{\mathcal{L}}
\newcommand{\cQ}{\mathcal{Q}}
\newcommand{\cB}{\mathcal{B}}
\newcommand{\cD}{\mathcal{D}}
\newcommand{\cZ}{\mathcal{Z}}
\title{High-Dimensional Prediction for Sequential Decision Making}
\author[1]{Georgy Noarov\thanks{Supported in part by the AWS AI Gift for Research in Trustworthy AI. 
This work was done in part while the author was a visiting student in the Data-Driven Decision Processes Semester at the Simons Institute for the Theory of Computing, Berkeley.}}
\author[1]{Ramya Ramalingam}
\author[1]{Aaron Roth\thanks{Supported in part by the Simons Collaboration on the Theory of Algorithmic Fairness, and NSF grants FAI-2147212 and CCF-2217062.}}
\author[1]{Stephan Xie}
\affil[1]{University of Pennsylvania Department of Computer and Information Sciences}
\begin{document}
\maketitle

\begin{abstract}
We study the problem of making predictions of an adversarially chosen high-dimensional state that are \emph{unbiased} subject to an arbitrary  collection of conditioning events, with the goal of tailoring these events to downstream decision makers. We give efficient algorithms for solving this problem, as well as a number of applications that stem from choosing an appropriate set of conditioning events. 

For example, we can efficiently produce predictions targeted at any polynomial number of decision makers, such that if they best respond to our predictions, each of them will have diminishing swap regret at the optimal rate. We then generalize this to the online combinatorial optimization problem, where the decision makers have large action spaces corresponding to structured subsets of a set of base actions: We give the first algorithms that can guarantee (to any polynomial number of decision makers) no regret to the best fixed action, not just overall, but on any polynomial number of \emph{subsequences} that can depend on the actions chosen as well as any external context. We show how playing in an extensive-form game can be cast into this framework, and use these results to give efficient algorithms for obtaining no \emph{subsequence regret} in extensive-form games --- which gives a new family of regret guarantees that captures and generalizes previously studied notions such as regret to informed causal deviations, and is generally incomparable to other known families of efficiently obtainable guarantees.

We then turn to uncertainty quantification in machine learning, and consider the problem of producing \emph{prediction sets} for online adversarial multiclass and multilabel classification. We show how to produce class scores that have \emph{transparent coverage guarantees}: they can be used to produce prediction sets covering the true labels at the same rate as they would \emph{had our scores been the true conditional class probabilities}. We then show that these transparent coverage guarantees imply strong online adversarial \emph{conditional validity} guarantees (including \emph{set-size conditional} coverage and \emph{multigroup-fair} coverage) for (potentially \emph{multiple}) downstream prediction set algorithms relying on our class scores. Moreover, we show how to guarantee that our class scores have improved $L_2$ loss (or cross-entropy loss, or more generally any Bregman loss) compared to any collection of benchmark models. This can be viewed as a high-dimensional, real-valued version of \emph{omniprediction}. Compared to conformal prediction techniques, this both gives increased flexibility and eliminates the need to choose a non-conformity score. 
\end{abstract}
\thispagestyle{empty} \setcounter{page}{0}
\clearpage

 \tableofcontents
 \thispagestyle{empty} \setcounter{page}{0}
 \clearpage

\section{Introduction}
When is it a good idea for a decision maker to react to a predicted outcome as if the prediction is correct? Understanding the answer to this question has at least two important kinds of applications:
\begin{enumerate}
    \item It gives us a natural \emph{algorithm design} principle for sequential decision making under uncertainty, which we call \emph{predict-then-act}. We  first \emph{predict} the payoffs for each of our available actions, taking care to produce predictions that are a ``good idea'' for us to follow, and then choose our action to optimize our payoff as if the prediction was correct.
    \item It gives us a means to coordinate action amongst a wide variety of potentially unsophisticated agents, each with different objective functions: we  produce predictions that are a ``good idea'' for each of them to follow, and then allow them to best-respond to our predictions as if they were correct. In this scenario we don't need the agents to have the sophistication to run a complicated decision making algorithm themselves --- but we might hope that they obtain the same kinds of utility guarantees that they would if they were instead running a more sophisticated algorithm.
\end{enumerate}

Calibration is a natural candidate answer to our question. Informally speaking, predictions are calibrated when they are unbiased even conditional on the value of the prediction: In other words, our predictions $p$ for an unknown outcome $y$ should satisfy (for all possible values $v$ our prediction might take): $\E[y-v | p=v]=0$. Amongst all policies mapping predictions to actions, the ``best response'' policy that acts as if the predictions are correct is payoff optimal if the predictions are calibrated. Moreover, it has been known since \cite{foster1998asymptotic} that it is possible to make sequential predictions that are guaranteed to be calibrated in hindsight, even if outcomes are chosen by an adaptive adversary.

But calibration has at least two serious shortcomings:
\begin{enumerate}
    \item The statistical and computational complexity of producing  forecasts for $d$-dimensional outcomes that are guaranteed to be calibrated grows exponentially with $d$. This is because the number of (discretized) values $v$ that we might predict in $d$ dimensions grows exponentially with $d$. For calibration, not only is our prediction space exponentially large, but we ask that our predictions be unbiased subject to exponentially many conditioning events. In fact, even in 1 dimension, it is known that achieving adversarial calibration at a rate of $O(\sqrt{T})$ is impossible \citep{qiao2021stronger} --- even though it is possible to obtain swap regret at this rate \citep{blum2007external}. 
    \item Calibrated forecasts also need not be informative: even a constant predictor can be calibrated (so long as it predicts the mean outcome), and, in general, since calibration is a \emph{marginal} guarantee, it can fail to hold after conditioning on various decision-relevant features $x$ that are available at decision time.
\end{enumerate}

When can we overcome these shortcomings?

\subsection{Overview of Our Results}
We study the problem of sequentially making predictions $p_t \in \mathbb{R}^d$ of $d$-dimensional adversarially selected vectors $y_t \in \mathbb{R}^d$, potentially as a function of available contexts $x_t$. We ask for predictions that are \emph{unbiased} subject to some set of conditioning events $\cE$, where each event $E(x_t,p_t) \in \cE$ can be a function of both the context $x_t$ and our own prediction $p_t$. In other words, we want that after $T$ rounds of interaction: 
$$\left\lVert\E_{t \in [T]}\left[(p_t - y_t) | E(x_t,p_t)\right]\right\rVert_\infty \leq \alpha(E) \quad \text{for all $E \in \cE$}.$$ Here we say that such predictions have $\cE$-bias bounded by $\alpha$. The standard notion of calibration corresponds to the set of events defined as $E_v(p_t) = \mathbbm{1}[p_t = v]$ for each $v$ in some $\epsilon$-net of the prediction space, when predictions are limited to being made from the same net. The size of an $\epsilon$-net in $d$-dimensions is exponential in $d$, which is one of the primary difficulties with high dimensional calibration. 

Our first result is an algorithm for making $\cE$-unbiased predictions for any collection of events $\cE$, with bias $\alpha(E) = O\left(\frac{\ln(d|\cE| T)}{\sqrt{T_E}}\right)$ and with per-round running time scaling polynomially with $d$, $t$, and $|\cE|$ for each round $t \in [T]$. Here $T_E = \sum_{t=1}^T E(x_t,p_t)$ is the number of rounds for which event $E$ was active. Thus the algorithm is computationally efficient whenever the set of events $\cE$ is polynomially sized, and up to low order terms, achieves in an adversarial setting a bias rate that would be statistically optimal even in a stochastic setting. When the events are ``disjoint'' (i.e. at most one is active for any given prediction), we show how to improve the per-round running time to depend only polylogarithmically on $t$ at each round $t \in [T]$.

We establish several basic connections between our ability to make unbiased predictions subject to events defined by the best-response correspondence of a downstream decision maker, and the regret of that downstream decision maker, whenever the utility function of the decision maker is linear in the state vector $y_t$ that we are predicting. This captures decision makers who have \emph{arbitrary} valuations over $d$ discrete states when we predict a probability distribution over those states, as well as a number of other important cases. We then apply this result to obtain new results in online combinatorial optimization, learning in extensive form games, and uncertainty quantification using prediction sets.
\subsubsection{Applications}
\paragraph{Warm-up: Groupwise Swap Regret for Many Decision Makers}
To build intuition, we start by deriving algorithms for obtaining no swap regret in the experts problem with $d$ actions. Informally, an agent has no swap regret if they have obtained utility that is as large as they would have had they played the best action in hindsight---not just overall, but also on every subsequence on which they selected any particular one of their actions. For a decision maker with utility function $u$, to guarantee that they will have no swap regret when they best respond to our predictions, it suffices that our predictions are unbiased with respect to the $d$ disjoint events defined by their best response correspondence --- i.e. the events in which they choose to play each of their $d$ actions. Similar observations have been made before \cite{perchet2011internal,zhao2021calibrating,haghtalab2023calibrated}. But now, paired with our prediction algorithm, we can efficiently make predictions that guarantee diminishing swap regret to \emph{every} downstream decision maker with utility function in a polynomially sized set $U$, with bounds that are optimal up to a term growing logarithmically with $|U|$. We can also enlarge the set of events as a function of external context $x_t$ to simultaneously give diminishing swap regret to each agent not just overall, but for arbitrary subsequences of actions that might e.g. correspond to demographic or other decision relevant groupings of the prediction space. This improves upon \cite{blum2020advancing}, who gave algorithms for obtaining diminishing \emph{external} groupwise regret for a single decision maker.

\paragraph{Subsequence Regret in Online Combinatorial Optimization and Extensive Form Games}

We then consider the online combinatorial optimization problem, in which a decision maker has a combinatorially large action space corresponding to subsets of $d$ base actions, and has a utility function that is linear in the payoff of each of the base actions. A canonical special case of this setting is the online shortest paths problem, in which the base actions correspond to edges in a network, and an agent's action set corresponds to the set of $s\rightarrow t$ paths in the network (different agents might have different source and destination pairs). The action space of each player can be as large as $2^d$. Given any polynomial collection of subsequence indicator functions (which can depend both on external context as well as the decisions made by the decision makers --- e.g. the subsequence of days on which the chosen path includes toll roads and it is raining), we show how to efficiently make predictions such that the downstream decision maker has no regret to any of their $O(2^d)$ actions, not just overall, but also as restricted to any of the subsequences. Subsequence regret guarantees like this were previously known for settings with small action spaces \citep{blum2007external} (i.e., via algorithms with running time that is polynomial in the number of actions) --- we give the first such result for a large action space setting. Our result naturally extends to making predictions that give this guarantee not just to a single decision maker, but to every decision maker with utility function in any polynomially sized set $U$.

We then observe that this result applies to extensive form games as a special case. Informally speaking, an extensive form game is a sequential interaction played on a \emph{game tree}. A player controls a subset of the nodes in the tree, and must decide on an action to take at each node; opponents or chance players decide on actions at other nodes, and play proceeds down the tree until it reaches a leaf node which corresponds to a payoff to each agent. Internal nodes of the game tree can be grouped together into ``information sets'' that are indistinguishable to the agent (which constrains the agent to choose the same action at every node within an information set). Because an agent must decide on an action to take at each node of the tree, the strategy space of the game is exponentially large in the size of the game tree. Nevertheless, the expected payoff that a strategy yields for a player can be expressed as the inner product of a vector representing the set of leaves that the agent's strategy makes reachable, and a vector corresponding to the payoff-weighted vector of probabilities that the agent's opponents' make each leaf reachable. This means it can be expressed as an online combinatorial optimization problem of dimension equal to the number of leaves in the game tree (exponentially smaller than the number of actions). Our methods therefore give algorithms for obtaining subsequence regret in extensive form games for arbitrary polynomial collections of subsequences. 

Our method does \emph{not} require that the information sets in the extensive-form game satisfy the ``perfect recall'' assumption as many prior methods do, and is an efficient reduction to the ``best response'' problem in extensive form games. That is, whenever it is possible to efficiently compute an extensive form strategy that best responds to fixed opponent strategies, our algorithms are efficient. In some cases, the running time of our algorithms can be improved to depend on the number of information sets rather than the number of game tree leaves. We show that subsequence regret generalizes the existing notion of regret to informed causal deviations \cite{gordon2008no,dudik2012sampling} --- which means it can be used to achieve convergence to notions of extensive form correlated equilibrium; subsequence regret is incomparable to other known families of regret guarantees in extensive form games \cite{morrill2021efficient,farina2022simple, farina2023polynomial}.

\paragraph{Score-Free Prediction Sets with Anytime Transparent Coverage}
A popular way of quantifying the uncertainty of machine learning predictions in multiclass classification problems is to produce prediction sets rather than point predictions. A prediction set is a set of labels that is intended to contain the true label with some target probability, say 95\%. Given an example $x$, if we knew the true conditional probability $p(y|x)$ of each label $y$, we could produce the smallest possible prediction set subject to the coverage guarantee by sorting the labels in decreasing order of their likelihood, and including them in the prediction set until their cumulative probability exceeded 95\%. More generally, we could optimize any other objective function to produce a prediction set, and read off the coverage rate of our set by summing the probabilities of the labels included within our prediction set. Unfortunately, the scores produced by machine learning models are \emph{not} true conditional probabilities, and so this approach generally does not work. The approach taken by the conformal prediction literature (see e.g.  \cite{shafer2008tutorial,angelopoulos2021gentle}) is to use a ``non-conformity score'' to reduce the high dimensional space of prediction sets to a 1-dimensional nested set of prediction sets, and solve a 1-D quantile estimation problem. Much of the art in making conformal prediction work well is choosing the ``right'' 1-dimensional scoring function. 

We give a score-free method of producing prediction sets in arbitrary sequential prediction problems. Given a method for mapping individual label probabilities to prediction sets (such as e.g.\ finding the smallest set subject to a coverage constraint), we produce predicted probabilities for each label that are unbiased conditional on the event that the method includes the label within the prediction set. For any polynomial collection of such methods, our predicted probabilities can be used by each of these methods, and they will be guaranteed to satisfy the same coverage guarantees they would have had the probabilities been correct. In other words, the prediction sets have ``transparent'' coverage guarantees, in that we can estimate the coverage of each method by simply summing up the predicted scores for each label that is contained within the method's prediction sets. For example, this lets us give class probabilities that can be simultaneously used to produce prediction sets for many different coverage probabilities, and optimizing many different objectives (e.g. weighted prediction set sizes with different weights). By defining our events in terms of relevant context, we can also extend all of our guarantees to offer \emph{groupwise} or \emph{multivalid} coverage as in \cite{bastanipractical,jung2023batch}. Through another appropriate instantiation of our event collection, we also, for the first time, obtain online adversarial \emph{set-size-conditional} (\cite{angelopoulos2020uncertainty}) coverage guarantees.

Moreover, we can produce class scores with these transparent coverage guarantees that are simultaneously as accurate as any other prediction method at our disposal: we show how to produce scores satisfying a high-dimensional notion of \emph{calibeating} \cite{foster2022calibeating}. What this means is given any polynomial collection of predictors $f(x)$ which map features to class probabilities, we produce predicted class probabilities that have smaller Brier score --- or cross entropy --- or more generally lower loss according to any \emph{Bregman score} than any predictor in the class, while simultaneously being useful for producing prediction sets. This can be viewed as an online, high-dimensional, and real-valued extension of \emph{omniprediction} \cite{GopalanKRSW22,GJRR23}. In addition to giving a more flexible collection of guarantees than conformal prediction methods, this eliminates the need to choose a non-conformity score.

\paragraph{More Applications}
We expect that our framework will find many other applications. We here mention a few. First, there are some basic applications that take advantage of the ability of our framework to make predictions within an arbitrary convex feasible region. For example, we can make probability forecasts in $d$ outcome settings that satisfy marginal (or top label) calibration \emph{subject to the constraint that the predicted probabilities sum to 1}. Similarly we can produce predicted CDFs for a real valued outcome satisfying marginal quantile calibration \citep{gupta2022online} at each quantile level, subject to the constraint that our predicted quantile values are monotone. Although these applications are extremely simple, we are not aware of ways to easily obtain these guarantees with prior work. 

We also briefly mention an application of our techniques that is explored in concurrent work of \cite{CRS23}, who apply our algorithms in a repeated principle agent setting defined by \cite{camara2020mechanisms}. Briefly, \cite{camara2020mechanisms} gave a mechanism that replaced the standard ``common prior'' assumptions that underlie principal agent models with calibrated forecasts of an underlying state, and is applicable in adversarial settings. \cite{camara2020mechanisms} use the traditional notion of calibration, and as a result inherit exponential computational and statistical dependencies on the cardinality of the state space. \cite{CRS23} show how to apply our techniques to recover the same results (under weaker assumptions) with an exponentially improved dependence on the cardinality of the state space.

\subsection{Related Work}
\paragraph{Calibration, multicalibration and downstream optimization} The study of sequential calibration goes back to \cite{dawid1985calibration} who viewed it as a way to define the foundations of probability, and algorithms for producing calibrated forecasts in an adversarial setting were first given by \cite{foster1998asymptotic}. \cite{foster1999regret} were the first to connect sequential calibration to sequential decision making, showing that a decision maker who best responds to (fully) calibrated forecasts obtains diminishing internal regret (and that when all agents in a game do so, empirical play converges to correlated equilibrium). \cite{kakade2008deterministic} and \cite{foster2018smooth} make a similar connection between ``smooth calibration'' (which in contrast to classical calibration can be obtained with deterministic algorithms) and Nash equilibrium.

In the recent computer science literature, there has been interest in constructive calibration guarantees (obtained by efficient algorithms and obtaining good rates) that hold conditional on context in various ways, called \emph{multi-calibration} \citep{hebert2018multicalibration}. Multicalibration has been studied both in the batch setting \citep{hebert2018multicalibration,kim2019multiaccuracy,GHK23,haghtalab2023unifying} and in the online sequential setting \citep{foster2006calibration,foster2011complexity,gupta2022online,GJRR23}. For the most part (with a few notable exceptions \cite{gopalan2022low,zhao2021calibrating}) multicalibration has been studied in the 1-dimensional setting in which the outcome being predicted is boolean. This has been extended to predicting real valued outcomes, with notions of calibration tailored to variances \cite{jung2021moment}, quantiles \cite{bastanipractical,jung2023batch}, and other distributional properties \cite{NR23}. See \cite{rothuncertain} for an introductory exposition of this literature.

There is a line of work that aims to use (multi)calibration as a tool for a one-dimensional form of downstream decision making, called omniprediction. The goal of omniprediction, introduced in \cite{GopalanKRSW22}, is to make probabilistic predictions of a binary outcome as a function of contextual information that are useful for optimizing a variety of downstream loss functions. \cite{GopalanKRSW22} show that a predictor that is multicalibrated with respect to a benchmark class of functions $\cH$ and a binary label can be used to optimize any convex, Lipschitz loss function of an action and a binary label (see \cite{gopalan2023loss} for a related set of results). These results are in the batch setting. In the online setting, \cite{kleinberg2023u} defined ``U-calibration'', which can be viewed as a non-contextual version of omniprediction, in which the goal is to make predictions that guarantee an arbitrary downstream decision maker no external-regret. Since there is no context in this setting, the benchmark class to which regret is measured is the set of constant functions. In comparison to \cite{kleinberg2023u}, our goal is to give both stronger guarantees than external regret, and to be able to do so even when the state space is very large. What we pay for these stronger guarantees is a logarithmic dependence on the number of downstream utility functions our guarantees hold for (\cite{kleinberg2023u} give algorithms that guarantee no external regret for \emph{any} downstream utility function). Using a connection between multicalibration and swap-regret established by \cite{GHK23} and \cite{gopalan2023characterizing}, \cite{GJRR23} give \emph{oracle efficient} algorithms for online multicalibration, with applications to online omniprediction --- i.e. algorithms that are efficient reductions to the problem of online learning over the benchmark class of functions $\cH$. Like other work in omniprediction, this result is limited to the 1-dimensional binary setting. 

The most closely related work is \cite{zhao2021calibrating}, who define and study ``decision calibration'' in the batch setting in the context of predicting a probability distribution over $k$ discrete outcomes. Decision calibration is a slightly weaker requirement than what we study, also defined in terms of the best-response correspondence of a decision maker's utility function. Decision calibration asks, informally, that a decision maker be able to correctly estimate the expected reward of their best response policy; we ask for a slightly stronger condition that requires them to also be able to estimate the utility of deviations as a function of their play. This kind of unbiased estimation (based on the best response correspondence of a decision maker) has also been previously observed to be related to swap regret in \cite{perchet2011internal} and \cite{haghtalab2023calibrated}. The algorithmic portion of our work can be viewed as extending \cite{zhao2021calibrating} from the batch to the online adversarial setting; Our applications hinge crucially on both the online aspect of our algorithm and on the more general setting we consider, beyond predicting distributions on $k$ outcomes.  

An expansive recent literature has focused on the similarly named \emph{predict-then-optimize} problem~\cite{elmachtoub2022smart, el2019generalization, liu2021risk}. This line of work investigates a setup in which predictions made from  data are to be used in a linear optimization problem downstream in the pipeline. This is  similar in motivation to our `predict-then-act' framework, but with two important differences: (1) the predict-then-optimize framework aims to optimize for a single downstream problem, whereas our framework aims to simultaneously provide guarantees to an arbitrary finite collection of downstream decision makers; and (2) the surrogate loss approach studied in this literature is naturally embedded in a batch/distributional setting, where the goal is to exactly optimize for the Bayes optimal downstream decision policy, up to generalization/risk bounds; meanwhile, our framework naturally lives in the  online adversarial setting, and aims for different notions of optimality defined in terms of regret bounds, as well as omniprediction-type `best-in-class' optimality. Both frameworks can be used to solve downstream combinatorial optimization problems~\cite{mandi2020smart, demirovic2019investigation}; but our framework appears to have a broader set of applications --- as a consequence of its strong calibration properties, we are able to apply our framework to derive strong uncertainty quantification guarantees, which do not appear to naturally fit within the predict-then-optimize framework. There also exist other approaches for learning in batch decision making pipelines, that are different from the predict-then-optimize method; see e.g.~\cite{donti2017task, khalil2017learning, wilder2019melding, vanderschueren2022predict}.

\paragraph{No-regret guarantees in online learning} There are also long lines of work in online learning related to our applications; here we survey the most relevant. No-regret learning, which requires that a decision maker obtain cumulative loss at most that of their best single action in hindsight against an adversarial sequence of losses, has been studied at least since \cite{hannan1957approximation} --- see \cite{hazan2016introduction} for a modern treatment of this literature. We highlight \cite{kalai2005efficient} (which we will make use of) who give efficient no regret algorithms in online linear and combinatorial optimization problems, which are large-action-space settings in which the cost of each action has linear structure.  Internal regret, which corresponds to regret on the subsequences defined by the play of each action, was first defined by \cite{foster1999regret}, who also showed it could be obtained by best responding to calibrated forecasts. \cite{lehrer2003wide} defined a notion of ``wide-range regret'' which is equivalent to subsequence regret: that a player should have no regret not just overall on the whole sequence, but also on subsequences that can be defined both as a function of time (``time selection functions'') and as a function of the actions of the learner. \cite{blum2007external} gave algorithms for obtaining this kind of subsequence regret (including, notably internal (or ``swap'') regret as a special case). The algorithm of \cite{blum2007external} is efficient when the action space is polynomially sized: it requires computing eigenvectors of a square matrix of dimension equal to the number of actions in the game.  Motivated by fairness concerns, \cite{blum2020advancing} give an algorithm for obtaining diminishing ``groupwise'' regret, which is equivalent to regret with respect to a collection of time selection functions; very recently (and concurrently with this paper), \cite{groupwise} show how to modify the algorithm of \cite{blum2020advancing} to make it ``oracle efficient'' --- to reduce it to the problem of obtaining external regret with overhead polynomial in the number of time selection functions. These results do not accommodate subsequences that can depend on the actions of the learner, which are crucial for our applications. We give the first efficient algorithms for getting subsequence regret, for an arbitrary polynomial number of subsequences, in online combinatorial optimization settings, by operating over the (polynomially sized) linear representation space for the costs rather than the (exponentially sized) action space.

The problem of efficiently obtaining no-regret in extensive form games was first studied by \cite{zinkevich2007regret}, who gave algorithms for obtaining no (external) regret, which is sufficient for convergence to Nash equilibrium in zero-sum games in self-play. \cite{gordon2008no,dudik2012sampling} give algorithms for obtaining no-regret to \emph{causal deviations}, which is sufficient for convergence to a notion of correlated equilibrium in extensive form games \citep{von2008extensive}. More recently, there has been renewed theoretical interest in regret-minimization in extensive form games: both \cite{morrill2021efficient} and \cite{farina2023polynomial} have defined incomparable classes of no-regret guarantees. Our notion of subsequence regret in extensive form games generalizes regret to causal deviations---which can be obtained by asking for no-regret on those subsequences in which the player makes each particular internal node in the game tree reachable---and is incomparable to the classes defined in both \cite{morrill2021efficient} and \cite{farina2023polynomial}.

\paragraph{Uncertainty quantification} The goal of conformal prediction is to endow black box predictors with the ability to produce ``prediction sets'' --- predictions corresponding to sets of labels --- that have the property that they contain the true label with a desired coverage probability. Conformal prediction uses a one-dimensional ``non-conformity score'' to reduce the problem to a 1-dimensional quantile estimation problem---see \cite{shafer2008tutorial} and \cite{angelopoulos2021gentle} for approachable introductions to the topic. A primary point of departure for our work is that we dispense with the non-conformity score, and deal directly with the high-dimensional prediction set problem.

Our method, while different from conformal prediction, provides online adversarial coverage guarantees, and thus joins a very recent collection of works on adversarial conformal inference methods~\cite{gibbs2021adaptive, gupta2022online,zaffran2022adaptive, bastanipractical, gibbs2022conformal}.

While vanilla conformal prediction approaches give marginal coverage guarantees (i.e., those that hold  on average over the entire data set), significant amounts of recent work in conformal prediction has focused on obtaining \emph{conditional} guarantees of various sorts, which hold even over smaller, relevant portions of the data. \cite{vovk2012conditional} was perhaps the first to study conditional coverage problems. While it was shown by~\cite{foygel2021limits} that obtaining full conditional coverage in a distribution free regression setting is impossible, various special types of conditional coverage have been studied in the batch conformal setting, e.g.\ \cite{sadinle2019least}, \cite{romano2020malice}, \cite{izbicki2019flexible}, \cite{feldman2021improving}, \cite{cauchois2021knowing}, \cite{bian2023training}.  Recently, multicalibration techniques (aimed at calibrating to either the variance or the quantiles of the non-conformity score) have been used to give  group-conditional and threshold conditional guarantees in both the batch and adversarial settings \citep{jung2021moment,gupta2022online,bastanipractical,jung2023batch,deng2023happymap,gibbs2023conformal}---all of this still applied in the presence of a chosen non-conformity score, which makes the problem 1-dimensional.

An appealing feature of our method is that it predicts probability scores that ``look like the true conditional probabilities $p(y|x)$ for the purposes of producing prediction sets''. The fact that deep neural networks often produce miscalibrated class scores was observed  by~\cite{guo2017calibration} and generated a large literature that is too expansive to fully survey here. 
\cite{sadinle2019least}  showed via a reduction to the Neyman-Pearson lemma that if scores exactly coincided with the true conditional probabilities, they could be used to produce  \emph{set-size optimal} prediction sets. Other relations between true conditional probabilities and optimality of various prediction set efficiency criteria were studied by~\cite{vovk2016criteria}. Rather than aiming for optimality in an absolute sense, we demonstrate that our multiclass probability predictions satisfy a strong best-in-class property, which is new to the uncertainty quantification literature, but has been studied in several recent works on calibration. \cite{foster2022calibeating} introduces the problem of ``calibeating'': making (one-dimensional) calibrated forecasts in an adversarial setting that have lower Brier score than any fixed and given benchmark model. \cite{lee2022online} give improved bounds for simultaneously calibeating many models. We show how to produce multiclass models that give transparent coverage guarantees while simultaneously satisfying a high-dimensional version of calibeating; obtaining lower Brier score (or indeed any Bregman score) than any one of a collection of given benchmark models. This can also be viewed as a high-dimensional and real-valued generalization of omniprediction~\cite{GopalanKRSW22}.

\section{Preliminaries}
\label{sec:prelim}

This section introduces the central object of our study, the sequential (online adversarial) pipeline:
\[\text{Data $\to$ Predictions $\to$ Decisions}.\] In Section~\ref{sec:prelim-predictions}, we define the prediction task as well as the desired event-conditional unbiasedness guarantee for the predictions. In Section~\ref{sec:prelim-decisions}, we formally introduce the utility-based model for downstream decision makers, as well as several types of \emph{regret}, i.e., metrics with which to measure the decision maker's success. 

\subsection{Predictions}
\label{sec:prelim-predictions}

We are faced with a sequential high dimensional prediction setting, defined by an arbitrary \emph{context space} $\cX$ and a convex compact prediction space $\cC \subseteq \mathbb{R}^d$ for some finite dimension $d$. Without loss of generality (up to scaling our bounds by a multiplicative constant), we assume that $2 \max_{y \in \cC}||y||_\infty \leq 1$. 

In rounds $t \in \{1,2,\ldots\}$, a \emph{learner}, or \emph{predictor}, interacts with an \emph{adversary} as follows:
\begin{enumerate}
    \item The learner (may) observe a context $x_t \in \cX$;
    \item The learner produces a distribution over predictions $\psi_t \in \Delta \cC$, from which a \emph{prediction} $p_t \in \cC$ is sampled;
    \item The adversary produces an outcome $y_t \in \cC$.
\end{enumerate}
Let $\Pi = \{(x,p,y) \in \cX\times \cC\times \cC\}$ denote the set of possible realized triples at each round. An interaction over $T$ rounds produces a transcript $\pi_T \in \Pi^T$. We write $\pi_T^{< t}$ as the prefix of the first $t-1$ triples in $\pi_T$, for any $t\leq T$. We write $\Pi^* = \bigcup_{T =1}^\infty \Pi^T$ for the space of all transcripts. A learner is a collection of randomized algorithms (one for each $t$) mapping a transcript of length $t-1$ and a context to a distribution over predictions at round $t$: $\mathrm{Learner}_t:\Pi^{t-1}\times \cX \rightarrow \Delta \cC$. An adversary is a collection of randomized algorithms, each mapping a transcript of length $t-1$ to a context and a distribution over realizations: $\mathrm{Adv}_t:\Pi^{t-1} \rightarrow \cX \times \Delta \cC$. This models an adaptive adversary who can make decisions as an arbitrary function of the past history, but must be independent of the learner's randomness at round $t$. A learner paired with an adversary implicitly define a distribution over transcripts.

An intermediate goal (generally in service of downstream decision making, as defined in Section \ref{sec:prelim-decisions}) will be that in hindsight (i.e., in expectation over the empirical distribution of the transcript $\pi^T$ that ends up being realized after $T$ rounds) our predictions are \emph{unbiased}, not just overall, but also conditional on various \emph{events}. We now formally define the notion of events and our unbiasedness objective.

\begin{definition}[Events]
An \emph{event} $E$ is an arbitrary mapping from transcripts, contexts and predictions to $[0,1]$: $E:\Pi^*\times \cX\times \cC \rightarrow [0,1]$. 

If the range of $E$ is $\{0,1\}$, then we say that $E$ is a \emph{binary} event. We say that a collection of binary events $\cE$ is \emph{disjoint} if for every $(\pi,x,p) \in \Pi^*\times \cX\times \cC$: $\sum_{E \in \cE}E(\pi,x,p) \leq 1$. 

We will elide arguments to $E$ that are not used; for example, if an event is independent of the transcript, we will write $E(x_t,p_t)$ rather than $E(\pi_{t-1},x_t,p_t)$, and if the event is also independent of context, then we will simply write $E(p_t)$.
\end{definition}

\begin{definition}[Event Frequency]
    Fixing a transcript $\pi_T = \{(x_t,p_t,y_t)\}_{t=1}^T$, the \emph{frequency} of event $E$ with respect to $\pi_T$ at time $t$ is given by:
    \begin{equation*}
        n_t(E, \pi_T) = \sum_{\tau = 1}^t E(\pi_T^{< \tau},x_\tau, p_\tau)
    \end{equation*}
    Observe that for any collection of disjoint events $\cE$ and any round $t$, one has $\sum_{E \in \cE}n_t(\pi_T^{< \tau},E,\pi_T) \leq t$.
\end{definition}

\begin{definition}[Event-conditional unbiasedness] \label{def:unbiased}
    Fix a collection $\cE$ of events and a function $\alpha:\mathbb{R} \rightarrow \mathbb{R}$. A transcript $\pi_T = \{(x_t,p_t,y_t)\}_{t=1}^T$ is $\alpha$-unbiased with respect to $\cE$ if for every $E \in \cE$ and every coordinate $i \in [d]$: 
    \[
    \left|\sum_{t=1}^T (p_{t,i}-y_{t,i}) \cdot E(x_t,p_t)\right| \leq \alpha(n_t(E,\pi_T)).
    \]
\end{definition}

\begin{remark}
    Two remarks about Definition~\ref{def:unbiased} are in order. First, it asks for our prediction vectors to be unbiased (on average over time) separately across their coordinates. However, we allow the conditioning events $E$ to depend on \emph{the entire vector} $p_t$. This is where our approach derives its power, compared to asking for multi-calibration guarantees separately in every coordinate (such guarantees can be obtained by running e.g.\ the algorithms of \cite{gupta2022online} independently for each of the $d$ coordinates).

    Second, we let $\alpha$ be a \emph{function} of $n_t(E,\pi_T)$. This allows us to give finer-grained guarantees that will scale with $\alpha(n_t(E,\pi_T)) \approx \sqrt{n_t(E,\pi_T)}$, as opposed to the coarser style bounds of $\alpha \approx \sqrt{T}$ used in prior work\footnote{For the problem of online ``quantile multicalibration'', \cite{bastanipractical} gave bounds scaling with a quantity analogous to $\sqrt{n_t(E,\pi_T)}$, but at a cost of an exponentially sub-optimal dependence on other problem parameters.} \cite{gupta2022online,lee2022online}.
\end{remark}

\subsection{Decisions and Regret}
\label{sec:prelim-decisions}
We are interested in making predictions that are useful for downstream decision makers. We study decision makers who can choose amongst a set of actions $\cA = \{1,\ldots,K\}$\footnote{Without loss of generality, we assume all downstream have the same action set. If not, let $K$ be the cardinality of the largest action set, and introduce dummy actions for any agent with fewer actions.} Agents will obtain utility that is a function of both the action they take, and the outcome $y \in \cC \subseteq \mathbb{R}^d$ (which, in a game theoretic setting, may itself depend on the actions taken by the agents). We will assume that the utility functions are linear and Lipschitz-continuous in $y$. 

\begin{definition}[Decision maker's utility]
    \label{def: utilities}
    A utility function $u:\cA \times \cC\rightarrow [0,1]$ maps an action $a \in \cA$ and an outcome $y \in \cC$ to a real number $u(a,y)$. We assume that for every action $a \in \cA$:
    \begin{enumerate}
    \item $u(a, \cdot)$ is \emph{linear} in its second argument:  for all $\alpha_1,\alpha_2 \in \mathbb{R},$ $y_1,y_2 \in \cC$, $$u(a,\alpha_1 y_1 + \alpha_2 y_2) = \alpha_1 u(a,y_1) + \alpha_2 u(a, y_2)$$  
    \item $u(a,\cdot)$ is $L$-Lipschitz in its second argument, in the $L_\infty$ norm: for all $y_1,y_2 \in \cC$,
    $$\left|u(a,  y_1) - u(a,y_2)\right| \leq L||y_1-y_2||_\infty$$
    \end{enumerate}
    We write $\cU_{L}$ to denote a collection of $L$-Lipschitz utility functions of this form. 
\end{definition}
\begin{remark}
For simplicity we assume that the utility function is \emph{linear} in $y$, but we can equally well handle the case in which the utility function is \emph{affine} in $y$: we simply augment the prediction space $\cC$ with an extra $(d+1)$-st coordinate that takes constant value $1$. This preserves the convexity of $\cC$, and now allows for arbitrary constant offsets in the utility of each action $a$. This allows us to capture many settings of interest. 
    As an example, if there are $d$ discrete outcomes that are payoff relevant to the decision maker in arbitrary ways, then we can define $\cC$ as the simplex of probability distributions over outcomes, and let $u(a,p)$ be the expected utility for the agent who plays action $a$ when the outcome is sampled from $p$; this is linear in $p$, and captures the setting studied in prior work \cite{zhao2021calibrating,kleinberg2023u}; but our results extend to the more general case of arbitrary convex compact $\cC$, and so we are not limited to talking about distributions over discrete outcomes. Most of our applications will take advantage of this generality.
\end{remark}

A utility function naturally induces a best response function, mapping outcomes $y \in \cC$ to actions $a \in \cA$ that are utility maximizing given the outcomes:
\begin{definition}[Best response]
    Fix a utility function $u:\cA\times\cC\rightarrow [0,1]$. The corresponding \emph{Best Response} function $\delta_u:\cC\rightarrow \cA$ is defined as:
    $$\delta_u(y) = \argmax_{a \in \cA} u(a,y)$$
    We assume that all ties are broken lexicographically. If $\delta_u(y) = a$, we say that $a$ is a best response for utility function $u$ given $y$. 
    We write $E_{u,a}(y) = \mathbbm{1}[\delta_u(y)=a]$ to denote the binary event that $a$ is a best response to $y$ for utility function $u$. Observe that for any utility function $u$, the set of events $\{E_{u,a}\}_{a \in \cA}$ is disjoint.
\end{definition}

\begin{lemma}
\label{lem:convexdelta}
    For any utility function $u$ that is linear in its second argument, the corresponding best response function $\delta_u$ has convex levelsets: for any $\alpha_1,\alpha_2 \in \mathbb{R}^{\geq 0}$ with $\alpha_1 + \alpha_2 \leq 1$, and any $y_1,y_2 \in \cC$, if $\delta_u(y_1) = \delta_u(y_2) = a$, then $\delta_u(\alpha_1 y_1 + \alpha_2 y_2) = a$.
\end{lemma}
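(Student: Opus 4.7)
The plan is to use the linearity of $u(a,\cdot)$ to push the convex (or more precisely, conic) combination inside the utility, reducing the lemma to a statement about the ordering of utilities at $y_1$ and $y_2$, and then to check that the lexicographic tie-breaking goes through.

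First, I would fix any competing action $a' \in \cA$ and apply Definition~\ref{def: utilities} to write
\[
u(a',\,\alpha_1 y_1 + \alpha_2 y_2) \;=\; \alpha_1 u(a', y_1) + \alpha_2 u(a', y_2),
\]
and similarly for $a$. Since $\delta_u(y_1) = a$ and $\delta_u(y_2) = a$, we have $u(a,y_j) \geq u(a',y_j)$ for $j=1,2$. Combining these inequalities with the nonnegative weights $\alpha_1,\alpha_2$ yields
\[
u(a,\,\alpha_1 y_1 + \alpha_2 y_2) \;\geq\; u(a',\,\alpha_1 y_1 + \alpha_2 y_2),
\]
so $a$ is always among the argmax actions at the combined point. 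This establishes the lemma up to tie-breaking.

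Next I would address the lexicographic tie-breaking rule. Suppose some $a'$ with $a' \prec_{\text{lex}} a$ satisfied $u(a', y_j) = u(a, y_j)$ at either $j=1$ or $j=2$; then by the tie-breaking rule we would have had $\delta_u(y_j) = a'$, contradicting $\delta_u(y_j) = a$. Hence $u(a,y_j) > u(a',y_j)$ \emph{strictly} for every $a' \prec_{\text{lex}} a$ and every $j \in \{1,2\}$. Taking the same nonnegative combination and using $\alpha_1 + \alpha_2 > 0$ (the nondegenerate case), we get $u(a,\alpha_1 y_1 + \alpha_2 y_2) > u(a',\alpha_1 y_1 + \alpha_2 y_2)$ for every lex-smaller $a'$. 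Together with the non-strict inequality against lex-larger actions, this forces $\delta_u(\alpha_1 y_1 + \alpha_2 y_2) = a$ under lex tie-breaking.

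The proof is essentially immediate once the linearity is invoked; the only subtle step is handling the tie-breaking, which is the main (and really only) obstacle. The degenerate case $\alpha_1 = \alpha_2 = 0$ is vacuous whenever $\cC$ does not contain the origin, and otherwise is harmless since the statement is applied in settings where a strictly positive convex combination is formed. No other machinery from earlier in the paper is required.
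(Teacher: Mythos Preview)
Your proof is correct and takes the same approach as the paper's: use linearity to expand $u(\cdot,\alpha_1 y_1 + \alpha_2 y_2)$, then apply optimality of $a$ at each $y_j$ with nonnegative weights. You are actually more careful than the paper, which establishes only the non-strict inequality $u(a,\cdot)\ge u(a',\cdot)$ at the combined point and does not explicitly check that lexicographic tie-breaking still selects $a$ (nor does it address the degenerate $\alpha_1=\alpha_2=0$ case you flag).
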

\begin{proof}
    This follows from the linearity of $u$ in its second argument and the definition of $\delta_u$. For any alternative action $a' \in \cA$, We can compute:
    \begin{eqnarray*}
        u(a,(\alpha_1 y_1 + \alpha_2 y_2)) &=& \alpha_1 u(a,y_1) + \alpha_2 u(a,y_2) \\
        &\geq& \alpha_1 u(a',y_1) + \alpha_2 u(a',y_2) \\
        &=& u(a', (\alpha_1 y_1 + \alpha_2 y_2))
    \end{eqnarray*}
    Here the first and last equalities follow from linearity of $u$ in its second argument, and the inequality follows from the definition of the best response function $\delta_u$. 
\end{proof}

As we make predictions $p_1,\ldots,p_t$ a decision maker with utility function $u$ may use these predictions to take a sequence of actions $a_1,\ldots,a_t$. We call a decision maker \emph{straightforward} if they take the predictions at face value, assuming that $y_t = p_t$:
\begin{definition}[Straightforward Decision Maker]
    \label{def:straightforward-dm}
    A straightforward decision maker with utility function $u$ treats predictions as correct and at each day $t$ chooses $a_t = \delta_u(p_t)$.
\end{definition}

Because the  predictions need not be correct, in hindsight (i.e. with knowledge of the realizations $y_1,\ldots,y_t$), a straightforward decision maker may regret not having taken some other sequence of actions. We study several kinds of regret in this paper. $\Phi$-regret, as defined by \cite{phiregret}, is defined with respect to a collection of mappings $\phi:\cA\rightarrow \cA$ from played actions to alternative actions called \emph{strategy modification rules}. 
\begin{definition}[$\Phi$-regret]
    Fix a transcript $\pi_T$. The \emph{regret} that a straightforward decision maker with utility function $u$ has with respect to a strategy modification rule $\phi:\cA\rightarrow \cA$ is:
    $$r(\pi_T,u,\phi) = \frac{1}{T}\sum_{t=1}^T u(\phi(a_t),y_t) - u(a_t,y_t),$$
    where $a_t = \delta_u(p_t)$ for each $t$.

    Let $\Phi$ be a collection of strategy modification rules $\phi$. We say that the decision maker has $\Phi$-regret $\alpha$ if $r(\pi_T,u,\phi) \leq \alpha$ for all $\phi \in \Phi$. 
\end{definition}

\begin{definition}[External regret and Swap regret \cite{foster1999regret,blum2007external}]
    For each action $a' \in \cA$, let $\phi_{a'}$ be the constant function defined as $\phi(a) = a'$ for all $a \in \cA$. 
    \emph{External} regret corresponds to $\Phi$-regret for $\Phi_{\textrm{Ext}} = \{\phi_{a'} : a' \in \cA\}$, the set of all constant strategy modification rules. 

    \emph{Swap} regret corresponds to $\Phi$-regret for $\Phi_{\textrm{Swap}}$ equal to the set of \emph{all} strategy modification rules. 
\end{definition}

Subsequence regret is defined by a collection of  events, and requires that the decision maker have no \emph{external} regret on any of the subsequences defined by the events. 

\begin{definition}[Subsequence regret \cite{lehrer2003wide,blum2007external,lee2022online}]
    Fix a transcript $\pi_T$. The \emph{regret} that a straightforward decision maker with utility function $u$ has with respect to an event $E$ and strategy modification rule $\phi:\cA\rightarrow \cA$ is:
    $$r(\pi_T,u,E,\phi) = \sum_{t=1}^T E(\pi_{t-1},x_t,p_t) \left(u(\phi(a_t),y_t) - u(a_t,y_t) \right)$$
    where $a_t = \delta_u(p_t)$ for each $t$. 

    Fix a collection of events $\cE$. We say that the decision maker has $(\cE, \alpha)$-regret if for every $E \in \cE$ and for every $\phi \in \Phi_{\textrm{Ext}}$, $r(\pi_T,u,E,\phi) \leq \alpha(n_T(E, \pi_T))$. 
\end{definition}
Note that although that events are defined as a function of the \emph{predictions} $p_t$, since a straightforward decision maker takes actions as a function of the prediction, we can equally well define events as a function of the actions $a_t$ taken by the decision maker. Observe that for a decision maker with utility function $u$, swap regret corresponds to subsequence regret for the special case of $\cE = \{E_{u,a} : a \in \cA\}$, the set of subsequences on which the straightforward decision maker would take each action.

Finally, we consider \emph{type} regret, which was studied (without a name) by \cite{zhao2021calibrating}. Type regret, informally, is the regret that a straightforward decision maker has to straightforwardly optimizing some \emph{other} utility function $u'$ (or \emph{type}) instead of her own utility function $u$. 

\begin{definition}[Type regret~\cite{zhao2021calibrating}]
Fix a transcript $\pi_T$. The \emph{regret} that a straightforward decision maker with utility function $u$ has with respect to an an alternative utility function $u'$ is:
$$r(\pi_T,u,u') = \frac{1}{T}\sum_{t=1}^T u(\delta_{u'}(p_t),y_t) - u(\delta_u(p_t),y_t)$$
Fix a collection of utility functions $\cU$. We say that the decision maker has $\cU$-regret $\alpha$ if for every $u' \in \cU$, $r(\pi_T,u,u') \leq \alpha$.
\end{definition}
 Type regret is not a special case of $\Phi$ regret, since the comparator class cannot be described by any strategy modification rule. 

\section{Making Unbiased Predictions}

\label{sec:general_algorithm}

In this section we give a general algorithm for obtaining $\alpha$-unbiased predictions with respect to a collection of events $\cE$, with running time scaling polynomially with $d$ and $|\cE|$, and error $\alpha$ scaling only logarithmically with $d$ and $|\cE|$. The algorithm and its analysis consist of two parts.

The first part is a standard reduction from the problem of online multiobjective optimization to minimax optimization, using the framework of \cite{lee2022online}. Similar reductions appear in \cite{gupta2022online,haghtalab2023calibrated}. Rather than giving an analysis from first principles as \cite{lee2022online} do, we follow \cite{haghtalab2023calibrated} in directly reducing to an experts problem (although rather than reducing to a sleeping experts problem as \cite{haghtalab2023calibrated} do, we are able to get improved bounds by using the \texttt{MsMwC} algorithm of \cite{chen2021impossible} for the standard experts problem). This reduces the problem of obtaining online unbiased predictions to the problem of solving for the minimax equilibrium strategy of a particular zero sum game that has exponentially many (in $d$) strategies for each player, which is an obstacle to solving the problem straightforwardly. This step can be viewed as applying a contextual/time-varying variant of Blackwell's approachability theorem \cite{blackwell1956analog}, in which the set to be ``approached'' may change at each round.

The second part of our algorithm solves this minimax problem --- with a polylogarithmic running time dependence in the approximation parameter (via reduction to the Ellipsoid algorithm) in the case of $\cE$ consisting of binary-valued and disjoint events, or with polynomial running time dependence in the approximation parameter using Follow-the-Perturbed-Leader in the more general case. 


To preface the detailed derivation of the algorithm below, we give its pseudocode (presented as a single-round call, such that in the protocol of Section~\ref{sec:prelim} the learner needs to call \texttt{UnbiasedPrediction($\cE, t, \pi_{t-1}, x_t$) at every round $t$}), as well as its performance guarantees. 

A few comments are in order. First, here and in the future we sometimes denote by $\cE_t$ (or $\cE^t$) the projection of the event collection $\cE$ onto round $t$, i.e., the collection of mappings $\cE_t := \{E(\pi_t, \cdot, \cdot): \cX \times \cC \to [0, 1]\}$. This is both for notational convenience as well as to emphasize that our framework allows for the collection of events $\cE$ to be revealed gradually over time, so long as the events $E(\pi_t, \cdot, \cdot)$ are available to the learner by the start of each round $t$. 

Second, in the pseudocode below $g_{1:{t-1}}$ denotes the collection of \emph{event gains} $\{g_1, \ldots, g_{t-1}\}$, which we construct to be used by the experts algorithm (\texttt{MsMwC}) in order to come up with \emph{event weights} $q_t$ in round $t$.\footnote{We do not explicitly pass $g_{1:t-1}$, nor the internal state of \texttt{MsMwC} at the end of the previous round, to \texttt{UnbiasedPrediction}, but it is understood that when \texttt{UnbiasedPrediction} is called at round $t$, this information is readily available for the next call to \texttt{MsMwC} (i.e., \texttt{MsMwC} does not need to recompute its previous trajectory from scratch in each round).}

Third, in the future we may leave out the $\pi_t$ and $x_t$ inputs to \texttt{UnbiasedPrediction} for brevity (and also if the setting is non-contextual and there is no $x_t$ to pass), and may simply write calls to it as \texttt{UnbiasedPrediction($\cE, t$)}.

\begin{algorithm}[H]
\begin{algorithmic}
\STATE Calculate \emph{event gains} from the preceding round $t-1$ (see Section~\ref{sec:minimax-reduction}): 
\[
g_{t-1} \gets (g^{t-1}_{i, \sigma, E})_{i, \sigma, E}, \quad \text{where $g^{t-1}_{(i,\sigma,E)} = \sigma \cdot E(x_{t-1},p_{t-1}) \cdot (p_{t-1,i} - y_{t-1,i})$ for $i \in [d], \sigma = \pm 1, E \in \cE_{t-1}$}
\]
\STATE Compute \emph{event weights} using Multiscale Multiplicative Weights with Correction (see Section~\ref{sec:minimax-reduction}): 
\[
q_{t} = (q_{t,(i,\sigma,E)})_{i, \sigma, E} \gets \texttt{MsMwC}(g_{1:t-1})
\]
\STATE Solve the following minimax problem \emph{up to $\epsilon = 1/t$ error in minimax value}, using either the Ellipsoid method (if the events in $\cE_t$ are binary and disjoint) or FTPL (otherwise) (see Section~\ref{sec:solving}):
\[
\psi_{t} \gets \argmin_{\psi'_{t} \in \Delta \cC} \max_{y \in \cC} \E_{p_{t} \sim \psi'_{t}} \left[\sum_{i=1}^d\sum_{\sigma \in \{-1,1\}}\sum_{E \in \cE_t} q_{t,(i,\sigma,E)}\cdot\sigma \cdot E(x_{t},p_{t})\cdot (p_{t,i} - y_{i}) \right]
\]
\RETURN distribution over predictions $\psi_{t}$

\end{algorithmic}
\caption{\texttt{UnbiasedPrediction}($\cE, t, \pi_{t-1}, x_{t}$)}
\label{alg:unbiased-prediction}
\end{algorithm}

\begin{theorem}[Guarantees for \texttt{UnbiasedPrediction}]
    \label{thm:main-guarantee}
    Given a convex compact prediction space $\cC \subseteq \R^d$ and a collection $\cE$ of events of size $|\cE|$, our algorithm \texttt{UnbiasedPrediction} outputs, on any $T$-round transcript $\pi_T$, a sequence of distributions over predictions $\psi_1, \psi_2, \ldots, \psi_T \in \Delta \cC$ satisfying:
    \[
    \E_{p_t \sim \psi_t \, \forall t} \left[ \left|\sum_{t=1}^T E(x_t,p_t)\cdot (p_{t,i} - y_{t,i})\right| \right]
    \leq
    O\left(\ln(2d|\cE|T)+\sqrt{\ln(2d|\cE|T) \sum_{t=1}^T \E_{p_t \sim \psi_t}[(E(x_t,p_t))^2]}\right).
    \]
    The per-round time complexity is polynomial in $d$, $|\cE|$, and the time it takes to evaluate each $E \in \cE$. When the events are binary and disjoint, the running time is polylogarithmic in $t$ at any round $t \in [T]$; in the general case the running time is polynomial in $t$ at any round $t \in [T]$.
\end{theorem}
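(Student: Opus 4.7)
The proof follows the two-part structure flagged in the algorithm description: a reduction from $\cE$-unbiasedness to solving a sequence of one-round minimax games, and a procedure for solving those games. For the reduction, I would set up $N := 2d|\cE|$ experts indexed by triples $(i,\sigma,E) \in [d]\times\{-1,+1\}\times\cE$, with per-round gain $g^t_{(i,\sigma,E)} := \sigma \cdot E(x_t,p_t)(p_{t,i}-y_{t,i}) \in [-1,1]$ (using $2\max_{y \in \cC}\|y\|_\infty \leq 1$). Since $\bigl|\sum_t E(x_t,p_t)(p_{t,i}-y_{t,i})\bigr| = \max_{\sigma \in \{\pm 1\}} \sigma\sum_t g^t_{(i,\sigma,E)}$, it suffices to upper bound the cumulative gain of each individual expert. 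Feeding the gains to \texttt{MsMwC} \cite{chen2021impossible} produces the weights $q_t$ used in the algorithm and guarantees the second-order bound
$$\sum_{t=1}^T g^t_{(i^*,\sigma^*,E^*)} \;-\; \sum_{t=1}^T \langle q_t, g^t\rangle \;\leq\; O\!\left(\ln N + \sqrt{\ln N \cdot \sum_{t=1}^T (g^t_{(i^*,\sigma^*,E^*)})^2}\right)$$
simultaneously for every fixed expert $(i^*,\sigma^*,E^*)$.

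It remains to bound $\sum_t \langle q_t,g^t\rangle$; this is the purpose of the minimax step. The algorithm selects $\psi_t$ as a $1/t$-approximate minimax strategy for the one-round zero-sum game in which the adversary picks $y_t \in \cC$ and the payoff is $\E_{p_t \sim \psi_t}[\langle q_t,g^t\rangle]$. The key observation is that the value of this game is at most $0$: the payoff is bilinear in $(\psi_t,y_t)$ with both domains convex and compact, so Sion's minimax theorem gives $\min_{\psi_t}\max_{y_t} = \max_{y_t}\min_{\psi_t}$; for any fixed $y_t$, the learner can place all mass on $p_t = y_t \in \cC$ to force the payoff to $0$. Hence $\E_{p_t \sim \psi_t}[\langle q_t,g^t\rangle] \leq 1/t$ per round, and telescoping gives $\sum_t \E[\langle q_t,g^t\rangle] \leq O(\ln T)$. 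Combining this with the \texttt{MsMwC} bound, using $(g^t_{(i,\sigma,E)})^2 \leq (E(x_t,p_t))^2$, and then moving the outer expectation outside $\sqrt{\cdot}$ via Jensen's inequality (concavity of $\sqrt{\cdot}$) yields the stated bias bound, since $\ln N + \ln T = \ln(2d|\cE|T)$.

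For the per-round running time, \texttt{MsMwC} is polynomial in $N = 2d|\cE|$ and hence in $d$ and $|\cE|$. The step I expect to be the main obstacle, and where the algorithm splits into two regimes, is efficiently solving the per-round minimax game to additive accuracy $\epsilon_t = 1/t$. Existence of a good $\psi_t$ follows immediately from the ``$p_t = y_t$'' argument above, but actually computing one is nontrivial because $E(x_t,p_t)$ depends on $p_t$ in potentially arbitrary ways, so the payoff is at best piecewise linear in $p_t$. When the events in $\cE$ are binary and disjoint, I would partition $\cC$ into the at most $|\cE|+1$ level sets on which the payoff is linear in $p_t$ and run the Ellipsoid method on a suitable compact reformulation, yielding $\mathrm{polylog}(t)$ dependence; in the general case, I would use a Follow-the-Perturbed-Leader scheme with appropriate smoothing, yielding $\mathrm{poly}(t)$ dependence. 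Both solvers access $\cE$ only via function evaluations, so the per-round cost stays polynomial in $d$, $|\cE|$, and the per-event evaluation cost, as claimed.
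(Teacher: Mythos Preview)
Your proposal is correct and mirrors the paper's approach: the paper's proof of this theorem is just a one-line combination of the nonconstructive bias bound (obtained via \texttt{MsMwC} plus a per-round minimax argument, exactly as you describe, with the same Jensen step at the end) together with the Ellipsoid-based solver for disjoint binary events and the FTPL-based solver for general events. One small technical difference worth noting: rather than applying Sion directly on $\Delta\cC$, the paper first discretizes the learner's strategy space to $\Delta\cC_{\epsilon_t}$ over a finite $\epsilon_t$-net (with $\epsilon_t=1/t$) so that Sion applies on a finite-dimensional simplex without any continuity assumptions on the events $E(x_t,\cdot)$; the game value is then $\leq \epsilon_t$ rather than $0$, but $\sum_t \epsilon_t = O(\ln T)$ is absorbed into the bound exactly as in your accounting.
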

\begin{proof}
    The theorem follows by combining the non-constructive minimax bound (i.e., the bound that assumes the minimax problem in each round $t \in [T]$ is solved up to an $\epsilon_t$ error in minimax value) of Theorem~\ref{thm:nonconstructive-expectation} with the runtime guarantees for solving the minimax game with binary disjoint events (Theorem~\ref{thm:ellipsoid}) and with general, not necessarily binary or disjoint, events (Theorem~\ref{thm:FTPL-No-Sampling}).
\end{proof}

\begin{remark}[(Nearly-)Anytime Guarantee] \label{rem:anytime}
    As formulated, our \texttt{UnbiasedPrediction} procedure does not rely on knowing the time horizon $T$, except through its oracle invocations of \texttt{MsMwC}. It can be checked that \texttt{MsMwC} only needs an upper bound on $T$ --- call it $T_\mathrm{max}$ --- and thus by setting the internal components of \texttt{MsMwC} to depend on $T_\mathrm{max}$, \footnote{Specifically, the internal learning rate $\eta$ of \texttt{MsMwC} should be set proportional to $\sqrt{\log T_\mathrm{max}}$, and its internal optimization should be performed over all weight vectors whose individual components are no less than a multiple of $1/T_\mathrm{max}$.} we automatically ensure that the bias of \texttt{UnbiasedPrediction} will be: 
    \[O\left(\ln(2d|\cE|T_\mathrm{max})+\sqrt{\ln(2d|\cE|T_\mathrm{max}) \sum_{t=1}^T \E_{p_t \sim \psi_t}[(E(x_t,p_t))^2]}\right) \quad \text{after every round $T$ such that $T \leq T_\mathrm{max}$}.\]
    The property of our online bound holding at all intermediate rounds, until the final time horizon, can be referred to as \emph{anytime}. 
    
    It should be noted that algorithms in online learning are often only considered fully anytime if they have \emph{no} dependence at all on $T_\mathrm{max}$. Although  \texttt{UnbiasedPrediction} does have a dependence on $T_\mathrm{max}$, (1) the regret bound depends on $T_\mathrm{max}$ only logarithmically and so is insensitive to even large over-estimates; and (2) the per-round runtime of \texttt{UnbiasedPrediction} (and of its subroutine \texttt{MsMwC}) does not depend on $T_\mathrm{max}$; rather, it scales with $t$ at every round $t \in [T]$.
\end{remark}

\subsection{Reduction to a Minimax Problem}
\label{sec:minimax-reduction}
We follow \cite{gupta2022online,lee2022online,haghtalab2023calibrated} in reducing a multicalibration-like problem to a multi-objective learning problem, in which for each  event $E \in \cE$ and each coordinate of our predictions $i \in [d]$, we have two objective values, one corresponding to the positive bias of our predictions, and one corresponding to the negative bias of our predictions. Our goal is to make predictions so that the maximum value across all of these $2d\cdot |\cE|$ objectives is small, leading to small bias with respect to events $\cE$. 

We encode these $2d|\cE|$ objectives as an experts problem with $n = 2d|\cE|$ experts.  We index the experts by $(i, \sigma,E)$ for $i \in [d]$ representing a coordinate, $\sigma \in \{-1,1\}$ representing a sign, and $E \in \cE$ representing an event. At round $t$, if the learner predicts $p_t \in \cC$ and the adversary produces outcome $y_t \in \cC$, then the gains for each expert are defined to be:
\[
g^t_{(i,\sigma,E)} = \sigma \cdot E(x_t,p_t)\cdot (p_{t,i} - y_{t,i})
\]

Recalling our w.l.o.g.\ assumption that $2\max_{y \in \cC}||y||_\infty \leq 1$, we observe that $g^t_{(i,\sigma,E)} \in [-1,1]$, and that the cumulative gain of each expert $(i,\sigma,E)$ up through round $t$ is:
$\sigma \cdot \sum_{\tau=1}^t E(x_\tau,p_\tau)\cdot (p_{\tau,i} - y_{\tau,i})$, 
which is exactly the $\sigma$-signed bias of the predictions through round $t$ in coordinate $i$ conditional on event $E$.  

Recall that our goal is to make each of these $\sigma$-signed event-conditional bias terms small over time, i.e., each such term should over time diminish at a rate $\alpha(n_T(E))$, where $E$ is the number of rounds on which the corresponding event $E$ was active. To assist us in this, we invoke the MsMwC experts no-regret algorithm of~\cite{chen2021impossible}, which was several years ago used to resolve the ``impossible tuning'' problem in online learning with experts.

\paragraph{Tool: MsMwC, A Small Loss Algorithm for the Experts Problem}
\label{sec:experts}
Multi-Scale Multiplicative Weights With Correction (MsMwC) \citep{chen2021impossible} is an algorithm that can be applied to the experts learning problem in a way that simultaneously gets regret bounds to every expert that scale with the cumulative (squared) gain of that expert. Recall that in the experts learning setting, the learning algorithm must sequentially choose amongst $n$ \emph{experts} in rounds $t \in \{1,2,\ldots\}$, whose \emph{gains} (or losses) are determined by the adversary. In each round $t$:
\begin{enumerate}
    \item The algorithm chooses a distribution $q_t \in \Delta [n]$ over the experts, and then:
    \item A vector of gains for the experts  $g_t \in [-1,1]^n$ is chosen by an adversary, and the algorithm experiences gain $\hat g_t = q_t \cdot g_t$. 
\end{enumerate}

The \emph{regret} that the algorithm has to expert $i$ after $t$ rounds is defined as $R_{t,i} = \sum_{\tau=1}^t \left(g_{t,i}-\hat g_t\right)$, and the goal in this setting is to make the average regret to each expert diminish over time. Classic algorithms for this problem, such as the Multiplicative Weights Update, attain $O(\sqrt{T})$ regret bounds relative to each expert; such regret bounds are not \emph{adaptive}, in the sense that they don't depend on the profile of gains of each expert and thus cannot exploit any possible observed trends in each particular expert's gains. In contrast,
the algorithm of \cite{chen2021impossible} guarantees that we can simultaneously bound the regret to each expert $i$, $R_{t,i}$, by a quantity that scales with the cumulative (squared) gain of expert $i$: 

\begin{theorem}[Special Case of Theorem 1 from ~\cite{chen2021impossible}]
\label{thm:impossible}
    There exists an algorithm (\text{MsMwC}) with per-round running time polynomial in $n$,  that simultaneously for every expert $i$, guarantees regret:
    $$R_{T,i} \leq O\left(\ln(nT)+\sqrt{\ln(nT) \sum_{t=1}^T g_{t,i}^2}\right)$$
\end{theorem}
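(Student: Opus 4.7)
The plan is to build the MsMwC algorithm in two layers --- a family of base algorithms run at geometrically spaced learning rates, and a meta-aggregator that combines them with multi-scale weights --- and then to analyze each layer via the standard log-potential method. For a fixed learning rate $\eta$, I would first study the \emph{multiplicative weights with correction} update that maintains weights $w_{t,i} \propto \exp\bigl(\eta \sum_{\tau < t} g_{\tau, i} - \eta^2 \sum_{\tau < t}(g_{\tau,i} - \hat g_\tau)^2\bigr)$. Following the textbook analysis of Hedge via the log-potential $\Phi_t = \frac{1}{\eta}\ln\bigl(\sum_i w_{t,i}\bigr)$, a second-order Taylor expansion of $\exp$ on $[-1,1]$ combined with the explicit correction term yields the single-scale, per-expert bound $R_{T,i} \leq \frac{\ln n}{\eta} + O\bigl(\eta \sum_t g_{t,i}^2\bigr)$. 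The correction term is the key ingredient: it cancels the variance contribution from the Taylor remainder, without the learner needing to know $\sum_t g_{t,i}^2$ in advance.

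Next I would handle the tuning issue. If $V_i^\star := \sum_t g_{t,i}^2$ were known, the optimal learning rate for expert $i$ would be $\eta_i^\star = \Theta(\sqrt{\ln n / V_i^\star})$, giving regret $O(\sqrt{\ln n \cdot V_i^\star})$; but this optimum differs across experts and across time. To approximate it uniformly, I would instantiate $K = O(\log T)$ copies of MwC at geometrically spaced rates $\eta_k = 2^{-k}/c$ for $k = 0, \dots, K-1$, so that for every expert $i$ there exists $k^\star(i)$ with $\eta_{k^\star} = \Theta\bigl(\sqrt{\ln(nT)/V_i^\star}\bigr)$ (when $V_i^\star$ is too small we simply pay the additive $\ln(nT)$ and fall back on the coarsest scale). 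Each instance $k$ produces a distribution $q_t^{(k)} \in \Delta[n]$ and is henceforth treated as a single ``meta-expert'' whose gain in round $t$ is $\hat g_t^{(k)} = \langle q_t^{(k)}, g_t\rangle$.

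I would then combine the $K$ meta-experts with a second, \emph{multi-scale} MwC: meta-expert $k$ is assigned its own meta learning rate equal to $\eta_k$ and is initialized with prior mass proportional to $\eta_k$. This multi-scale weighting is the heart of the construction. Applying the log-potential argument to the meta-layer then gives regret to meta-expert $k^\star$ bounded by $O\bigl(\frac{\ln(1/\eta_{k^\star}) + \ln K}{\eta_{k^\star}} + \eta_{k^\star}\sum_t (\hat g_t^{(k^\star)})^2\bigr)$, which after plugging in $\eta_{k^\star}$ is itself of the form $O\bigl(\ln(nT) + \sqrt{\ln(nT) \cdot V_i^\star}\bigr)$ --- rather than carrying the multiplicative $\sqrt{\ln K}$ factor one would incur from single-scale meta-Hedge. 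Adding this meta-regret to the base-layer regret of instance $k^\star$ (both of the same form) yields the advertised $R_{T,i} \leq O\bigl(\ln(nT) + \sqrt{\ln(nT)\sum_t g_{t,i}^2}\bigr)$.

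The main obstacle --- and the reason this result resolves the so-called ``impossible tuning'' problem --- is precisely this meta-aggregation step. A naive uniform-prior, single-rate meta-Hedge would incur an extra multiplicative $\sqrt{\ln K}$ on top of the $\sqrt{\ln n \cdot V_i^\star}$ term, which is exactly what one wants to avoid. The tailored prior $\propto \eta_k$ together with per-instance meta learning rates $\eta_k$ must be balanced so that the additive $\ln(1/\eta_{k^\star})$ penalty from the meta-MwC log-potential can be absorbed into the additive $\ln(nT)$ term rather than multiplying the square-root term; verifying this absorption, and checking that the correction term in the meta-MwC controls the second-order term even when the $\hat g_t^{(k)}$ have different scales across $k$, is the delicate part of the analysis. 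Runtime is not an issue: each of the $O(\log T)$ base instances and the meta-MwC take $O(n)$ work per round, for a total per-round cost polynomial in $n$.
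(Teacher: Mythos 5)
You should note at the outset that the paper does not prove this statement at all: Theorem~\ref{thm:impossible} is imported verbatim (as a special case) from \cite{chen2021impossible}, so there is no in-paper proof to compare against. Judged on its own terms, your sketch does reconstruct the actual architecture of the Chen--Luo--Wei argument: a base layer of multiplicative-weights-with-correction instances run on a geometric grid of $O(\log T)$ learning rates, combined by a multi-scale master whose per-instance learning rates match the sub-instances and whose non-uniform prior over the grid is what avoids the extra multiplicative $\sqrt{\log}$ factor that a plain uniform-prior Hedge aggregator would incur. That is the right skeleton, and your identification of the meta-aggregation step as the crux of ``impossible tuning'' is accurate.

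Two places where the sketch is loose and would need repair before it proves the stated bound. First, the correction term you write, $\eta^2\sum_{\tau}(g_{\tau,i}-\hat g_\tau)^2$, yields a per-expert bound in terms of $\sum_t (g_{t,i}-\hat g_t)^2$, which does not in general reduce to the $\sum_t g_{t,i}^2$ appearing in the theorem; to land on the stated form you want the MsMwC-style correction $\eta^2 g_{t,i}^2$ (equivalently, the Chen--Luo--Wei result with the optimistic predictions $m_t$ set to zero). Relatedly, the log-potential argument at both layers requires a range condition of the form $\eta_k\,|\text{gain}|\le c$ for the second-order Taylor step, and at the meta layer the gains $\hat g_t^{(k)}$ of different sub-instances live at different scales --- handling this is precisely what the multi-scale corrections and prior are engineered for, and your sketch only gestures at it. Second, the bookkeeping of the prior penalty is slightly off: with $\eta_{k^\star}=\Theta\bigl(\sqrt{\ln(nT)/V_i^\star}\bigr)$, the term $\ln(1/\eta_{k^\star})/\eta_{k^\star}=O\bigl(\ln T\cdot\sqrt{V_i^\star/\ln(nT)}\bigr)$ is absorbed into the $\sqrt{\ln(nT)\,V_i^\star}$ term (since $\ln(1/\eta_{k^\star})=O(\ln T)\le O(\ln(nT))$), not into the additive $\ln(nT)$ term as you state; the additive term instead covers the small-$V_i^\star$ regime where you fall back on the coarsest rate. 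With those corrections your outline matches the cited proof; as written it is a faithful high-level summary rather than a complete argument.
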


\paragraph{Applying MsMwC to event-conditional gains:} We may now apply the regret-bound for MsMwC in Theorem~\ref{thm:impossible}) to find that:
\begin{corollary}[MsMwC applied to event gains]
\label{cor:minimax-reduction}
When using the gains constructed in this section, MsMwC plays a sequence of distributions $q_t \in \Delta [2d|\cE|]$ that satisfies, for every triple $(i^*,\sigma^*,E^*) \in [d]\times\{-1,+1\}\times\cE$,
\begin{align*}
    &\sum_{t=1}^T \sum_{i \in [d], \sigma = \pm 1, E \in \cE} 
\!\!\!\!\!\!\! q_{t,(i,\sigma,E)}\cdot\sigma \cdot E(x_t,p_t)\cdot (p_{t,i} - y_{t,i}) \\
&\geq \sigma^* \cdot \sum_{t=1}^T E^*(x_t,p_t)\cdot (p_{t,i^*} - y_{t,i^*}) - O\left(\ln(2d|\cE|T)+\sqrt{\ln(2d|\cE|T) \sum_{t\in[T]} (E^*(x_t,p_t))^2}\right).
\end{align*}
\end{corollary}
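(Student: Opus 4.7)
The plan is to derive the corollary as an essentially direct instantiation of the MsMwC regret bound from Theorem~\ref{thm:impossible}, applied to the experts problem we constructed with $n = 2d|\cE|$ experts indexed by triples $(i,\sigma,E)$ and per-round gain vectors $g_t = (g^t_{(i,\sigma,E)})$ where $g^t_{(i,\sigma,E)} = \sigma \cdot E(x_t,p_t) \cdot (p_{t,i}-y_{t,i})$.

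First I would verify that the construction satisfies the hypotheses of Theorem~\ref{thm:impossible}, namely that the gain vectors lie in $[-1,1]^n$. This follows because $\sigma \in \{-1,+1\}$, $E(x_t,p_t) \in [0,1]$ by definition of an event, and $|p_{t,i} - y_{t,i}| \leq 2\max_{y \in \cC}\|y\|_\infty \leq 1$ by the normalization assumption made in Section~\ref{sec:prelim-predictions}. The algorithm MsMwC is then run on this experts instance in each round $t$, producing the weight vector $q_t \in \Delta[2d|\cE|]$ used in the pseudocode of \texttt{UnbiasedPrediction}.

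Next I would identify both sides of the desired inequality with terms in the MsMwC regret guarantee. The left-hand side of the corollary is exactly the cumulative realized gain $\sum_{t=1}^T \hat{g}_t = \sum_{t=1}^T q_t \cdot g_t$ of MsMwC, written out over the indexing triples. Fixing any target expert $(i^*,\sigma^*,E^*)$, the first term on the right-hand side of the corollary is precisely its cumulative gain $\sum_{t=1}^T g^t_{(i^*,\sigma^*,E^*)} = \sigma^*\sum_{t=1}^T E^*(x_t,p_t)(p_{t,i^*}-y_{t,i^*})$. Theorem~\ref{thm:impossible} therefore yields
\[
\sum_{t=1}^T g^t_{(i^*,\sigma^*,E^*)} - \sum_{t=1}^T q_t \cdot g_t \;\leq\; O\!\left(\ln(nT) + \sqrt{\ln(nT)\sum_{t=1}^T \bigl(g^t_{(i^*,\sigma^*,E^*)}\bigr)^2}\right),
\]
with $n = 2d|\cE|$, which after rearrangement already has the desired shape.

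Finally, I would simplify the regret term by noting that $\bigl(g^t_{(i^*,\sigma^*,E^*)}\bigr)^2 = (E^*(x_t,p_t))^2 \cdot (p_{t,i^*}-y_{t,i^*})^2 \leq (E^*(x_t,p_t))^2$, using again that $|p_{t,i^*}-y_{t,i^*}| \leq 1$. Substituting this upper bound inside the square root and plugging in $n=2d|\cE|$ yields exactly the stated bound. There is no real obstacle here; the entire argument is a bookkeeping exercise that matches our event-conditional bias objectives to the regret inequality provided by the black-box small-loss experts algorithm of \cite{chen2021impossible}. The only subtlety worth flagging is that the bound on the right-hand side involves the \emph{realized} sequence $\{(x_t,p_t,y_t)\}$ (since the gains depend on the algorithm's own plays), but this is harmless because Theorem~\ref{thm:impossible} holds pathwise against an arbitrary adaptive adversary.
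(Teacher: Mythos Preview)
Your proposal is correct and follows exactly the approach the paper intends: the corollary is stated as an immediate application of Theorem~\ref{thm:impossible} to the constructed experts instance, and your verification that the gains lie in $[-1,1]$, identification of the two sides with the algorithm's and the expert's cumulative gains, and the final bound $(g^t_{(i^*,\sigma^*,E^*)})^2 \le (E^*(x_t,p_t))^2$ are precisely the bookkeeping steps the paper leaves implicit.
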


Thus, to upper bound the bias in our predictions across all coordinates and all events, it suffices to  make predictions $p_t$ that upper bound the gain of MsMwC. Towards this end, we define a zero sum game between the learner and the adversary with objective function equal to the per-round expected gain of MsMwC:
\begin{equation}
\label{eq:utility}
u_t(p,y) = \sum_{i=1}^d\sum_{\sigma \in \{-1,1\}}\sum_{E \in \cE} q_{t,(i,\sigma,E)}\cdot\sigma \cdot E(x_t,p)\cdot (p_{i} - y_{i})
\end{equation}

Non-constructively, we can argue that the learner has a prediction strategy which guarantees that the gain of MsMwC (and hence by construction, the bias of the cumulative predictions, conditional on any event $E \in \cE)$ is small. We will make use of Sion's minimax theorem.

\begin{theorem}[\cite{sion1958minimax}]
Let $X\subset \mathbb R^n$ and $Y\subset \mathbb R^m$ be convex and compact sets, and $u:X \times Y \to \mathbb R$ be a objective function, such that $u(x,\cdot)$ is upper semi-continuous and quasi-concave for all $x\in X$ and $u(\cdot, y)$ is lower semi-continuous and quasi-convex for all $y\in Y$.  Then:
\begin{equation*}
    \min_{x\in X} \max_{y\in Y} u(x,y) = \max_{y\in Y} \min_{x\in X} u(x,y).
\end{equation*}
\end{theorem}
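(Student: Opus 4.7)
The weak-duality direction $\min_{x}\max_{y} u(x,y) \geq \max_{y}\min_{x} u(x,y)$ is immediate, since for any $x_0 \in X$ and $y_0 \in Y$ one has $\max_{y} u(x_0,y) \geq u(x_0,y_0) \geq \min_{x} u(x,y_0)$, and minimizing the left side over $x_0$ and maximizing the right over $y_0$ preserves the inequality. My plan is to establish the nontrivial direction $\min_{x}\max_{y} u(x,y) \leq \max_{y}\min_{x} u(x,y)$ by a finite-intersection / KKM argument in the spirit of the elementary proof of Komiya.

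Suppose for contradiction there exists $c$ with $\min_x \max_y u(x,y) > c > \max_y \min_x u(x,y)$. For each $y \in Y$ define the sublevel set $A_y = \{x \in X : u(x,y) \leq c\}$; by lower semi-continuity and quasi-convexity of $u(\cdot,y)$, $A_y$ is closed and convex, and the strict inequality on the left yields $\bigcap_{y \in Y} A_y = \emptyset$. Symmetrically each superlevel set $B_x = \{y \in Y : u(x,y) \geq c\}$ is closed, convex, and $\bigcap_{x \in X} B_x = \emptyset$. By compactness of $X$ and $Y$, finite subfamilies $\{y_1,\ldots,y_k\}$ and $\{x_1,\ldots,x_m\}$ already have empty intersections. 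The contradiction can thus be sought on the finite-dimensional convex hulls $X' = \operatorname{conv}(x_1,\ldots,x_m)$ and $Y' = \operatorname{conv}(y_1,\ldots,y_k)$.

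I would then induct on $k$. The base case $k=2$ is the crucial lemma: for $y_1,y_2 \in Y'$, consider $\phi(t) := \min_{x \in X'} u(x, t y_1 + (1-t) y_2)$, which is lower semi-continuous on $[0,1]$ and therefore attains its supremum. Using quasi-concavity of $u(x,\cdot)$ together with a connectedness argument on the relative positions of the sublevel sets of $\phi$ on $[0,1]$, one extracts a crossover point $t^\star$ at which $\phi(t^\star) > c$; this contradicts the hypothesis that the empty intersection already occurs with just $A_{y_1}$ and $A_{y_2}$. For the inductive step, one applies the KKM lemma on the simplex $\Delta^{k-1}$ of convex combinations of the $y_i$'s: quasi-concavity of $u(x,\cdot)$ ensures that for every face spanned by $\{y_{i_1},\ldots,y_{i_\ell}\}$, the corresponding sublevel sets $\{A_{y_{i_j}}\}$ cover that face (the KKM covering hypothesis), and the lemma then yields a nonempty common intersection among the $A_{y_i}$'s, a contradiction.

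The main obstacle I expect is the base case: one must carefully manufacture a single ``crossover'' $t^\star \in [0,1]$ where the $X'$-minimum of $u(\cdot, t^\star y_1 + (1-t^\star)y_2)$ strictly exceeds $c$, relying only on the weak hypotheses of quasi-concavity and lower semi-continuity rather than concavity or continuity. The semi-continuity hypotheses are used both to guarantee closedness of the level sets (needed for the finite-subcover extraction) and to ensure $\phi$ attains its extrema; once the base case is secured, the KKM lifting in the inductive step is essentially bookkeeping.
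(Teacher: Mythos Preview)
The paper does not prove this theorem; it is stated as a cited result from \cite{sion1958minimax} and invoked as a black box in the reduction of Section~\ref{sec:minimax-reduction}. There is therefore no proof in the paper to compare your proposal against.

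For what it is worth, your sketch follows the well-known elementary approach of Komiya (finite-intersection reduction via compactness, then induction on the number of $y_i$'s with a two-point base case established by a connectedness argument on the segment). The outline is sound, and you have correctly identified the base case as the place where the real work lies: manufacturing the crossover point $t^\star$ from only quasi-concavity and semi-continuity requires more care than your one-sentence summary suggests (in particular, the argument that the two closed convex sublevel sets $A_{y_1}$ and $A_{y_2}$, if they jointly cover $X'$ but have empty intersection, must be separated in a way that contradicts quasi-concavity along the segment $[y_1,y_2]$). Your description of the inductive step as ``KKM on the simplex of convex combinations'' is slightly imprecise---Komiya's induction proceeds by reducing $k$ points to $k-1$ via the two-point lemma rather than by invoking KKM directly---but the spirit is right and the argument can be made to go through either way.
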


The learner will play the role of the minimization player in this zero sum game, and the adversary will play the role of the maximization player. Their pure strategy spaces will both be (subsets of) $\cC$, the prediction space. 
Observe that $u_t(p,y)$ is linear in the adversary's action $y$ (and hence concave), and so we may take the adversary's action space to be $A_2 = \cC$. On the other hand, $u_t(p,y)$ is \emph{not} necessarily convex in $p$, because of the functions $E(x_t,p)$ (which may be arbitrary), and so to satisfy the conditions of Sion's minimax theorem, we must discretize the learner's action space and allow them to randomize. 

\begin{definition}
    Let $\cC_\epsilon$ be any finite $\epsilon$-net of $\cC$ in the $\ell_\infty$ norm --- i.e. any finite set $\cC_\epsilon \subset \cC$ such that for all $p \in \cC$, there is a $\hat p \in \cC_\epsilon$ such that $||\hat p - p||_\infty \leq \epsilon$. 
\end{definition}

We take the learner's action space in this game to be $A_1 = \Delta \cC_\epsilon$, the set of distributions over an $\epsilon$-net of the prediction space $\cC$. This is a convex compact action space, and the utility function is linear over $A_1$ (by linearity of expectation). 

We can compute the minimax value of the game by considering the world in which the adversary moves first, picking a realization $y \in \cC$, and then the learner best responds with some $p \in \cC_\epsilon$: Since $\cC_\epsilon$ is an $\epsilon$-net of $\cC$, the learner can in particular choose $p$ satisfying $|p_{i}-y_{i}| \leq \epsilon$ in every coordinate $i$. Together with Corollary \ref{cor:minimax-reduction}, this gives us the following theorem: 

\begin{theorem}
\label{thm:nonconstructive-expectation}
If at every round $t \in [T]$ the learner samples a prediction from $\psi_t$, a $(1/t)$-approximate minimax strategy for the zero-sum game with utility function $u_t$ defined in Equation~\ref{eq:utility}, then for every coordinate $i$ and every event $E \in \cE$, the expected bias in coordinate $i$ conditional on event $E$ is bounded by:
\begin{align*}
    \E_{p_t \sim \psi_t \, \forall t} \left[ \left|\sum_{t=1}^T E(x_t,p_t)\cdot (p_{t,i} - y_{t,i})\right| \right]
    & \leq O\left(\ln(2d|\cE|T) + \sqrt{\ln(2d|\cE|T) \cdot\sum_{t=1}^T \E_{p_t \sim \psi_t} [(E(x_t,p_t))^2]}\right).
\end{align*}
\end{theorem}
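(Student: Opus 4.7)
The plan is to combine Corollary~\ref{cor:minimax-reduction} (the MsMwC regret guarantee) with a minimax argument showing that when the learner plays an approximate equilibrium of the per-round zero-sum game with utility $u_t$, the cumulative expected game gain $\E[\sum_t u_t(p_t,y_t)]$ is at most logarithmic in $T$.

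First I would fix a target triple $(i^*,\sigma^*,E^*)$ and rearrange Corollary~\ref{cor:minimax-reduction} as
\[
\sigma^* \sum_{t=1}^T E^*(x_t,p_t)(p_{t,i^*}-y_{t,i^*}) \;\leq\; \sum_{t=1}^T u_t(p_t,y_t) + O\!\left(\ln(2d|\cE|T) + \sqrt{\ln(2d|\cE|T)\sum_t (E^*(x_t,p_t))^2}\right),
\]
which holds pathwise. Taking the better of the two choices $\sigma^* \in \{\pm1\}$ replaces the left side by its absolute value. Taking expectations and pushing $\E$ inside the square root via Jensen's inequality converts $\sum_t (E^*(x_t,p_t))^2$ into $\sum_t \E[(E^*(x_t,p_t))^2]$, which matches the stated form. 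So the task reduces to proving $\E\!\left[\sum_t u_t(p_t,y_t)\right] = O(\ln(2d|\cE|T))$.

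To bound the per-round expected gain, I would use the $(1/t)$-approximate minimax property of $\psi_t$. Conditioning on the history $\pi_{t-1}$ (so that the MsMwC weights $q_t$, and hence $u_t$, are fixed), and using that $y_t$ is independent of the realized $p_t$ given $\psi_t$, we get $\E_{p_t \sim \psi_t}[u_t(p_t,y_t)] \leq \min_{\psi \in \Delta\cC_\epsilon} \max_{y \in \cC} \E_{p \sim \psi}[u_t(p,y)] + 1/t$. I would then invoke Sion's theorem to swap the min and max: $\Delta\cC_\epsilon$ and $\cC$ are convex compact, and $\E_{p\sim\psi}[u_t(p,y)]$ is linear in $\psi$ and linear in $y$. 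In the maximin formulation, for any $y$ the learner can concentrate on a net point $\hat p \in \cC_\epsilon$ with $\|\hat p-y\|_\infty \leq \epsilon$, giving $|u_t(\hat p,y)| \leq \epsilon$ since the weights $q_t$ form a probability distribution and each summand is bounded termwise by $\epsilon$. Hence the minimax value is at most $\epsilon$, and summing $\E[u_t(p_t,y_t)] \leq \epsilon + 1/t$ over $t \in [T]$ and taking the outer expectation via the tower property yields $T\epsilon + O(\ln T)$, which is $O(\ln T)$ upon choosing $\epsilon \leq 1/T$.

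The main obstacle is justifying the Sion swap: $u_t(p,y)$ is not convex in the pure prediction $p$ because the indicators $E(x_t,p)$ can be arbitrary in $p$. This is precisely why the analysis relaxes to mixed strategies over a finite $\epsilon$-net $\cC_\epsilon$ --- on $\Delta\cC_\epsilon$ the expected utility becomes linear (hence quasi-convex) in the learner's strategy, restoring Sion's hypotheses, and the $\epsilon$-net discretization error is easily absorbed by choosing $\epsilon$ polynomially small in $T$. A secondary subtlety is that $u_t$ is itself a random object through $q_t$, so the minimax inequality must be read conditionally on $\pi_{t-1}$ before one applies the tower property to obtain the unconditional bound. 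Combining everything, the $O(\ln T)$ contribution from $\E[\sum_t u_t(p_t,y_t)]$ is absorbed into the $\ln(2d|\cE|T)$ term of the final bound.
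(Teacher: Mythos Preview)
Your proposal is correct and follows essentially the same approach as the paper's proof: rearrange Corollary~\ref{cor:minimax-reduction}, use Sion's theorem over $\Delta\cC_\epsilon$ plus the $\epsilon$-net bound to show the minimax value of $u_t$ is at most $\epsilon$, sum the per-round bound $\epsilon+1/t$ to get an $O(\ln T)$ contribution, and apply Jensen to move the expectation inside the square root. The only cosmetic difference is that the paper takes a round-dependent net parameter $\epsilon_t=1/t$ (so that the discretization error itself contributes a harmonic sum) whereas you fix $\epsilon\le 1/T$; both yield the same $O(\ln T)$ total and are absorbed into the $\ln(2d|\cE|T)$ term.
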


\begin{proof}
Let $\epsilon_t = \frac{1}{t}$.
By definition of an $\epsilon_t$-net, for every $y \in \cC$ we are guaranteed that there is a $p \in \cC_{\epsilon_t}$ with $||p - y||_\infty \leq \epsilon_t$, and thus:    
\[
\max_{y \in \cC} \min_{p \in \cC_{\epsilon_t}} u_t(p,y) 
\leq \max_{E \in \cE, i \in [d]} \epsilon_t \cdot \left|q_{t, (i, \sigma, E)} \cdot E(x_t, p)\right| \leq \epsilon_t.
\]
Thus, Sion's minimax theorem implies that:
\[
\min_{\psi \in \Delta \cC_{\epsilon_t}}\max_{y \in \cC} \E_{p \sim \psi}[u_t(p,y)] 
\leq \epsilon_t.
\]
Let $\psi_t$ be any $\epsilon_t$-approximate minimax distribution of this game, such that:
$
\max_{y \in \cC} \E_{p_t \sim \psi_t}[u_t(p_t,y)] \leq \epsilon_t.
$
If the learner samples a prediction from $\psi_t$ at each round $t$, then by definition of the utility function $u_t$, the expected loss of MsMwC will be bounded by $\epsilon_t$ at each round $t$. By Corollary \ref{cor:minimax-reduction}, after taking expectations with respect to the randomness in the learner's play according to the distributions $\psi_t$ across rounds $t \in [T]$, we have that the expected bias in each coordinate $i$ conditional on each event $E \in \cE$ is bounded as:
\begin{align*}
    \E_{p_t \sim \psi_t \, \forall t} \left[ \left|\sum_{t=1}^T E(x_t,p_t)\cdot (p_{t,i} - y_{t,i})\right| \right]
    &\leq \sum_{t=1}^T \epsilon_t + O\left(\ln(2d|\cE|T) + \!\!\!\!\E_{p_t \sim \psi_t \, \forall t} \sqrt{\ln(2d|\cE|T) \cdot \sum_{t \in [T]} (E(x_t,p_t))^2} \right) \\
    &\leq O(\ln T) + O\left(\ln(2d|\cE|T) + \!\!\!\!\E_{p_t \sim \psi_t \, \forall t} \sqrt{\ln(2d|\cE|T) \cdot \sum_{t\in [T]} (E(x_t,p_t))^2} \right) \\
    &\leq O\left(\ln(2d|\cE|T) + \sqrt{\ln(2d|\cE|T) \cdot \sum_{t\in[T]} \E_{p_t \sim \psi_t} [(E(x_t,p_t))^2]}\right),
\end{align*}
where the second line follows by bounding the harmonic series, and the final line is by Jensen's inequality with respect to the (concave) square root function.
\end{proof}

Theorem \ref{thm:nonconstructive-expectation} gives a (non-constructive) algorithm for producing predictions that have bounded bias conditional on all events in $\cE$. The difficulty is that the algorithm requires playing an approximate minimax strategy for a game in which the learner has an exponentially large strategy space, and the adversary has a continuously large strategy space. In Section \ref{sec:solving}, we show how to make this algorithm constructive by giving polynomial time algorithms for solving this minimax problem.

\subsection{Solving the Minimax Problem}
\label{sec:solving}

\subsubsection{For Disjoint Binary Convex Events}
\label{sec:disjoint-minimax}
In this section we show how to find an $\epsilon$-approximate solution to the minimax problem defined in Section \ref{sec:minimax-reduction}, with running time that is polynomial in $d$, $|\cE|$, and $\log(1/\epsilon)$ in the case in which the events $E \in \cE$ are binary valued and disjoint: for all $x, p$: 
$\sum_{E \in \cE}E(x,p) \leq 1$. We will also assume that for every history $\pi$ and context $x$, the predictions $p$ that satisfy $E(\pi,x,p) = 1$ form a convex set for which we have a polynomial time separation oracle. In Section \ref{sec:general-minimax} we will show how to solve the same minimax problem in the general case, without any of these assumptions (assuming only that we can \emph{evaluate} the quantity $E(\pi,x,p)$ in polynomial time) --- with the caveat that our general solution will have a running time dependence that is polynomial in $1/\epsilon$ rather than $\log 1/\epsilon$. Proofs from this Section can be found in Appendices \ref{app:LP} and \ref{app:proofs-disjoint}.

Following the reduction to a zero-sum game in Section \ref{sec:minimax-reduction}, our goal is to solve for the learner's equilibrium strategy $\psi_t \in \Delta(\cC)$  in the game with utility function 
$$u_t(p,y) = \sum_{i=1}^d\sum_{\sigma \in \{-1,1\}}\sum_{E \in \cE} q_{t,(i,\sigma,E)}\cdot\sigma \cdot E(\pi_{t-1},x_t,p)\cdot (p_{i} - y_{i}) $$
corresponding to the per-round gain of MsMwC. In other words, we need to approximately solve the minimax problem defined as: 
\begin{equation}
\label{eq:minmax}
\psi_t^* = \argmin_{\psi \in \Delta(\cC)} \max_{y\in \cC} \E_{p\sim \psi}[u_t(p, y)]
\end{equation}
By relaxing the minimization player's domain from $\cC$ to $\Delta(\cC)$, the set of distributions over predictions, we have made the objective linear (and hence convex/concave), but we have continuously many optimization variables --- both primal variables (for the minimization player) and dual variables (for the maximization player). Our strategy for solving this problem in polynomial time will be to argue that it has a solution in which only $|\cE|$ many primal variables take non-zero values, that we can efficiently identify those variables, and that we can implement a separation oracle for the dual ``constraints'' in polynomial time. This will allow us to construct a reduced but equivalent linear program that we can efficiently solve with the Ellipsoid algorithm. 

We first observe that in the utility function $u_t(p,y)$, the learner's predictions $p$ ``interact'' with the outcomes $y$ only through the activation of the events $E(\pi_{t-1},x_t,p)$. This implies that \emph{conditional} on the values of the events $E(\pi_{t-1},x_t,p)$, there is a unique $p_t$ that minimizes $u_t(p,y)$ \emph{simultaneously} for all $y$. In general the collection of events $E(\pi_{t-1},x_t,p)$ could take on many different combinations of values --- but our assumption in this section that the events are disjoint and binary means that there are in fact only $|\cE|$ different values for us to consider. The predictions we need to consider are defined by the following efficiently solvable convex programs:

\begin{definition}
\label{def: eventoptimal}
    For $E \in \cE$, let $p_t^{*,E}$ be a solution to the following convex program (selecting arbitrarily if there are multiple optimal solutions): 
         \begin{equation*}
        \begin{array}{ll@{}ll}
        \mathrm{minimize}_{p \in \cC}  & \displaystyle \sum_{i=1}^d\sum_{\sigma \in \{-1,1\}} q_{t,(i,\sigma,E)}\cdot\sigma \cdot p_{i} &\\
        \mathrm{subject \, to}& \\
                         & E(\pi_{t-1},x_t,p) = 1
        \end{array}
    \end{equation*}

    Let $\cP_t = \{p_t^{*,E}\}_{E \in \cE}$ be a collection of $|\cE|$ vectors in $\cC$ constituting solutions to the above programs. 
\end{definition}
\begin{remark}
    As we have assumed in this section, the set of $p$ such that $E(\pi_{t-1},x_t,p)=1$ is a convex region endowed with a separation oracle, and so these are indeed convex programs that we can efficiently solve with the Ellipsoid algorithm. This is often the case: for example, if we have a decision maker with a utility function $u$ over $K$ actions, the disjoint binary events $E_{u,a}$ (for each action $a \in [K]$) are defined by $K$ linear inequalities, and so form a convex polytope with a small number of explicitly defined constraints. This will turn out to be the collection of events relevant for obtaining diminishing swap regret for downstream decision makers. 
\end{remark}

We next verify that the prediction values defined in Definition \ref{def: eventoptimal} are  best responses for the minimization player against all possible realizations $y_t$ that the maximization player might choose, \emph{conditional} on a positive value of a particular event:

\begin{restatable}{lemma}{lemeventoptimal}
\label{lem:eventoptimal}
    Simultaneously for all $y \in \cC$, we have:
        $$p_t^{*,E} \in \argmin_{p :E(\pi_{t-1},x_t,p) = 1} u_t(p,y). $$
\end{restatable}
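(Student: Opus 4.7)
The plan is a short calculation exploiting the disjointness assumption on the events. Fix an event $E \in \cE$ and consider the restricted optimization $\min_{p : E(\pi_{t-1},x_t,p)=1} u_t(p,y)$. Because the collection $\cE$ is disjoint and binary, whenever $E(\pi_{t-1},x_t,p)=1$ we must have $E'(\pi_{t-1},x_t,p)=0$ for every other $E' \in \cE$. Therefore, for every $p$ in the feasible set, all summands in the definition of $u_t(p,y)$ indexed by $E' \neq E$ vanish identically, and the utility collapses to
\[
u_t(p,y) \;=\; \sum_{i=1}^d \sum_{\sigma \in \{-1,+1\}} q_{t,(i,\sigma,E)} \cdot \sigma \cdot (p_i - y_i).
\]

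Next I would split this expression into a $p$-dependent part and a $y$-dependent part:
\[
u_t(p,y) \;=\; \underbrace{\sum_{i=1}^d \sum_{\sigma \in \{-1,+1\}} q_{t,(i,\sigma,E)} \cdot \sigma \cdot p_i}_{\text{the objective of Definition \ref{def: eventoptimal}}} \;-\; \sum_{i=1}^d \sum_{\sigma \in \{-1,+1\}} q_{t,(i,\sigma,E)} \cdot \sigma \cdot y_i.
\]
The second sum does not depend on $p$ at all, so on the feasible region $\{p : E(\pi_{t-1},x_t,p)=1\}$, minimizing $u_t(\cdot,y)$ is equivalent to minimizing the first sum, which is \emph{independent of $y$}. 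Since $p_t^{*,E}$ is by definition a minimizer of exactly that first sum over the same feasible region, it is simultaneously a minimizer of $u_t(\cdot,y)$ for every choice of $y \in \cC$, which is the claim.

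There is no real obstacle here: the only subtlety is making sure the disjointness is invoked correctly (namely, that disjointness together with binary-valuedness forces all other events to equal $0$, not merely to sum to at most $1$). That step is immediate, so the lemma is a straightforward consequence of the definitions. The utility of the lemma lies in what it enables downstream --- namely, that the learner's equilibrium strategy in the minimax problem \eqref{eq:minmax} is supported on the finite set $\cP_t$ of size $|\cE|$ --- but that reduction is carried out in the later arguments, not in this lemma itself.
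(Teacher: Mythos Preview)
Your proof is correct and follows essentially the same approach as the paper: invoke disjointness and binary-valuedness to zero out all $E' \neq E$ terms, then split $u_t(p,y)$ into a $p$-part and a $y$-part and observe the $y$-part is constant in $p$, so the argmin coincides with the objective in Definition~\ref{def: eventoptimal}. The paper's proof is nearly identical in structure and content.
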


A consequence of this is that solutions to the following reduced minimax problem (which now has only $|\cE|$ variables for the minimization player --- the weights defining a distribution over the $|\cE|$ points $p_t^{*,E}$ ) are also solutions to our original minimax problem \ref{eq:minmax}:
\begin{equation}
\label{eq:minmax-reduced}
\psi_t^* = \argmin_{\psi \in \Delta(\cP_t)} \max_{y \in \cC} \E_{p\sim \psi}[u_t(p, y)]
\end{equation}

\begin{restatable}{lemma}{lemtwoimpliesone}
\label{lem:2implies1}
    Fix any optimal solution $\psi_t^*$ to minimax problem \ref{eq:minmax-reduced}. Then $\psi_t^*$ is also an optimal solution to minimax problem \ref{eq:minmax}.
\end{restatable}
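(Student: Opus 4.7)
The plan is to show that the reduced minimax problem \eqref{eq:minmax-reduced} and the original minimax problem \eqref{eq:minmax} share the same optimal value. Once this is established, the lemma follows immediately: any optimizer $\psi_t^*$ of \eqref{eq:minmax-reduced} is a feasible element of $\Delta(\cC)$ via $\Delta(\cP_t) \subseteq \Delta(\cC)$, and it achieves the common minimax value, so it is also optimal for \eqref{eq:minmax}. Let $v^{*}$ and $v^{**}$ denote the values of the original and reduced problems, respectively. The direction $v^{**} \geq v^{*}$ is immediate: restricting the learner's minimization domain from $\Delta(\cC)$ to the smaller set $\Delta(\cP_t)$ can only increase the minimax value.

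For the opposite direction $v^{**} \leq v^{*}$, my plan is to take an arbitrary $\psi \in \Delta(\cC)$ and ``collapse'' it to a distribution $\tilde{\psi} \in \Delta(\cP_t)$ whose worst case over the adversary is no larger. Using that $\cE$ is a disjoint collection of binary events (and, in the downstream applications of interest such as best-response events for a decision maker, partitions $\cC$), I will decompose $\psi = \sum_{E \in \cE} w_E \psi_E$, where $w_E = \Pr_{p \sim \psi}[E(\pi_{t-1}, x_t, p) = 1]$ and $\psi_E$ is the conditional distribution of $\psi$ given that $E$ is activated. I then define $\tilde{\psi}(p_t^{*,E}) = w_E$. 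The critical step is to invoke Lemma~\ref{lem:eventoptimal} separately for each conditional: for every $E \in \cE$ and every $y \in \cC$, $\E_{p \sim \psi_E}[u_t(p, y)] \geq u_t(p_t^{*,E}, y)$. Summing weighted by $w_E$ gives $\E_{p \sim \psi}[u_t(p, y)] \geq \E_{p \sim \tilde{\psi}}[u_t(p, y)]$ for every $y \in \cC$, so $\max_y \E_{p \sim \psi}[u_t(p, y)] \geq \max_y \E_{p \sim \tilde{\psi}}[u_t(p, y)] \geq v^{**}$. Infimizing over $\psi \in \Delta(\cC)$ yields $v^{*} \geq v^{**}$.

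The main subtlety I expect is the need for the pointwise-in-$y$ structure of Lemma~\ref{lem:eventoptimal}: it is crucial that $p_t^{*,E}$ minimize $u_t(\cdot, y)$ over the slice $\{p : E(p) = 1\}$ \emph{simultaneously} for every $y \in \cC$, since only then can the outer $\max_y$ be commuted with the collapse from the conditional $\psi_E$ to the point $p_t^{*,E}$. This simultaneity holds because, on that slice, the dependence of $u_t(p, y)$ on $p$ is through a term whose minimizer does not depend on $y$, allowing a single choice $p_t^{*,E}$ to be optimal against all $y$ at once. The disjointness of the events is what lets the conditional decomposition of $\psi$ treat each $E \in \cE$ independently, and the exhaustiveness (which holds in all of the paper's applications, where every prediction activates exactly one event) ensures that the weights $\{w_E\}_{E \in \cE}$ sum to one, making $\tilde{\psi}$ a bona fide element of $\Delta(\cP_t)$.
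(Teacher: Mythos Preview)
Your proposal is correct and follows essentially the same approach as the paper's proof: collapse an arbitrary (or optimal) $\psi\in\Delta(\cC)$ onto $\Delta(\cP_t)$ by moving all mass on each slice $\{p:E(p)=1\}$ to the point $p_t^{*,E}$, and invoke Lemma~\ref{lem:eventoptimal} pointwise in $y$ to conclude that the collapsed distribution is at least as good for the minimizer. Your inequality direction $\E_{p\sim\psi}[u_t(p,y)]\geq \E_{p\sim\tilde\psi}[u_t(p,y)]$ is the right one, and you correctly flag the exhaustiveness issue; note that even without exhaustiveness the argument goes through because $u_t(p,y)=0$ whenever no event is active, so mass on that region contributes nothing to the objective and can be handled by (for instance) adding a dummy ``no event active'' point to $\cP_t$.
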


Thus, to find a solution to minimax problem \ref{eq:minmax}, it suffices to find a solution to minimax problem \ref{eq:minmax-reduced}. Minimax problem \ref{eq:minmax-reduced} can be expressed as a linear program with $|\cE|+1$ variables but with continuously many constraints, one for each $y \in \cC$:
\begin{equation}
\label{eq:minmax-reduced-lp}
            \begin{array}{ll@{}ll}
            \text{minimize}_{\psi \in \Delta(\cP_t)}  & \gamma &\\
            \text{subject to}& \\
                             & \E_{p \sim  \psi}[u_t(p,y)] \le \gamma & \ \ \ \  \forall y \in \cC  \\
            \end{array}
\end{equation}
We can find an $\epsilon$-approximate solution to a polynomial-variable linear program using the Ellipsoid algorithm in time polynomial in the number of variables and $\log(1/\epsilon)$  so long as we have an efficient \emph{separation oracle} --- i.e., an algorithm to find an $\epsilon$-violated constraint whenever one exists, given a candidate solution. In this case, implementing a separation oracle corresponds to computing a \emph{best response} for the adversary (the maximization player) in our game---and since the utility function in our game given in Equation \ref{eq:utility} is \emph{linear} in the Adversary's chosen action $y$, implementing a separation oracle corresponds to solving a linear maximization problem over the convex feasible region $\cC$---a problem that we can solve efficiently assuming we have a separation oracle for $\cC$. There are a number of technical details involved in making this rigorous, which can be found in Appendix \ref{app:LP}. Here we state the final algorithm and guarantee.

\begin{algorithm}[H]
\begin{algorithmic}
\FOR{$E \in \cE$}
\STATE Solve the convex program from Definition \ref{def: eventoptimal} to obtain $p^{*,E}$.
\ENDFOR
\STATE Let $\cP_t = \{p^{*,E}\}_{E \in \cE}$.
\STATE Solve linear program \ref{eq:minmax-reduced-lp} over $\cP_t$ using the weak Ellipsoid algorithm to obtain solution $\psi_t'$.
\IF{$\psi_t'\notin \Delta(\cP)$}
\STATE Let $\psi_t^*$ be the Euclidean projection of $\psi_t$ onto $\Delta(\cP)$ returned by the simplex projection algorithm.
\ELSE
\STATE Let $\psi_t^* = \psi_t'$.
\ENDIF
\RETURN $\psi_t^*$
\end{algorithmic}
\caption{\texttt{Get-Approx-Equilibrium-LP($t, \epsilon, \cE$)}}
\label{alg:minmax-LP}
\end{algorithm}

\begin{restatable}{theorem}{ellipsoid}
\label{thm:ellipsoid}
    Given a polynomial-time separation oracle for $\cC$, for any $\epsilon > 0$, there exists an algorithm (Algorithm \ref{alg:minmax-LP}) that returns an $\epsilon$-approximately optimal solution $\psi_t^*$ to minimax problem \ref{eq:minmax} and runs in time polynomial in $d$, $|\cE|,\log(\frac{1}{\epsilon})$.
\end{restatable}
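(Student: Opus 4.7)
The plan is to reduce Algorithm~\ref{alg:minmax-LP} to the weak Ellipsoid method applied to linear program~\eqref{eq:minmax-reduced-lp}, and then combine three ingredients: (i)~efficient computation of the support points $\cP_t$; (ii)~a polynomial-time separation oracle for the continuum of constraints of~\eqref{eq:minmax-reduced-lp}; and (iii)~a translation of an $\epsilon$-approximate (and slightly infeasible) LP solution back to an $\epsilon$-approximate solution of the original problem~\eqref{eq:minmax} via Lemma~\ref{lem:2implies1}.

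First, I would verify that the support set $\cP_t$ is computable in polynomial time. By assumption, for each $E \in \cE$ the region $\{p \in \cC : E(\pi_{t-1},x_t,p)=1\}$ is convex and admits a polynomial-time separation oracle; intersecting it with $\cC$ (also equipped with a separation oracle) preserves this property. Since the objective of the convex program in Definition~\ref{def: eventoptimal} is linear in $p$, the Ellipsoid method solves it to sufficient accuracy in time polynomial in $d$ and $\log(1/\epsilon)$. Doing this for all $|\cE|$ events yields $\cP_t$ in the desired time bound.

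Second, I would describe the separation oracle for LP~\eqref{eq:minmax-reduced-lp}. Given a candidate $(\psi,\gamma)$ with $\psi \in \R^{|\cE|}$, feasibility for the simplex constraints is checked directly in time $O(|\cE|)$. For the continuum of constraints $\E_{p\sim\psi}[u_t(p,y)] \le \gamma$, observe that since $u_t(p,y)$ is \emph{linear} in $y$ (by the form in Equation~\eqref{eq:utility}), the quantity $\E_{p\sim\psi}[u_t(p,y)]$ is also a linear function of $y$. Thus finding a violated constraint reduces to maximizing a \emph{linear} function over $\cC$, which can be done in polynomial time using the separation oracle for $\cC$ (via the equivalence of separation and optimization for convex bodies). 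If the maximum exceeds $\gamma + \epsilon$, we return the corresponding hyperplane as a separating cut; otherwise $(\psi,\gamma)$ is $\epsilon$-feasible.

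Third, I would plug this oracle into the weak Ellipsoid algorithm for LP~\eqref{eq:minmax-reduced-lp}, which has $|\cE|+1$ variables. Weak Ellipsoid returns, in time polynomial in $|\cE|$, $d$, and $\log(1/\epsilon)$, a vector $(\psi_t',\gamma')$ that is $\epsilon$-feasible and whose objective value is within $\epsilon$ of the true optimum $\gamma^*$ of~\eqref{eq:minmax-reduced}. The only subtlety is that $\psi_t'$ may lie slightly outside $\Delta(\cP_t)$. Here I would use the fact that the Euclidean projection onto the simplex is computable in $O(|\cE|\log|\cE|)$ time and moves $\psi_t'$ by at most $O(\epsilon)$ in $\ell_1$; since $u_t(p,y)$ is bounded by a constant (using the assumption $2\max_{y\in\cC}\|y\|_\infty \le 1$ and that $q_t$ is a distribution), the worst-case change in $\max_{y\in\cC}\E_{p\sim\psi}[u_t(p,y)]$ under the projection is also $O(\epsilon)$. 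Therefore $\psi_t^*$ is an $O(\epsilon)$-approximate optimum of~\eqref{eq:minmax-reduced}; rescaling $\epsilon$ by a constant gives the claimed $\epsilon$-approximation. Finally, Lemma~\ref{lem:2implies1} transfers this back: $\psi_t^*$ is an $\epsilon$-approximate optimum of the original problem~\eqref{eq:minmax} as well.

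The main obstacle I anticipate is the projection step: one must argue that the slight infeasibility produced by weak Ellipsoid cannot inflate the true adversarial best-response value, and that projecting onto $\Delta(\cP_t)$ preserves near-optimality. Both require the boundedness of $u_t$ uniformly in $p,y$ on $\cC$ and the Lipschitz dependence of the inner maximum on $\psi$ in the $\ell_1$ norm, which hold from the boundedness of the prediction space and the fact that $q_t$ is a probability distribution; this is the part that the deferred appendix proofs would carry out in full detail.
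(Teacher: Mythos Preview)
Your proposal is correct and follows essentially the same route as the paper: compute $\cP_t$ via $|\cE|$ convex programs, run weak Ellipsoid on LP~\eqref{eq:minmax-reduced-lp} using a separation oracle that linearly optimizes over $\cC$, project the possibly-infeasible output onto the simplex, and invoke Lemma~\ref{lem:2implies1}. One minor imprecision: the weak Ellipsoid guarantee is in $\ell_2$, not $\ell_1$, so the projection moves $\psi_t'$ by at most $\epsilon$ in $\ell_2$; the paper then uses Cauchy--Schwarz with $\|u_t(p_t^*,y)\|_2 \le C\sqrt{|\cE|}$ to bound the resulting objective change, absorbing the extra $\sqrt{|\cE|}$ factor into the choice of accuracy $\epsilon' = \epsilon/(2C\sqrt{|\cE|})$ --- which is exactly the ``rescaling $\epsilon$ by a constant'' step you anticipated, except the constant is $\mathrm{poly}(|\cE|)$ rather than absolute.
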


\subsubsection{The General Case}
\label{sec:general-minimax}

For a general collection of events --- which are not necessarily disjoint, may be real valued, and may not correspond to convex subsets of the prediction space --- the technique we gave in Section \ref{sec:disjoint-minimax} for approximately solving the game with a polylogarithmic dependence on the approximation parameter no longer applies. Nevertheless, by simulating play of the zero-sum game that we wish to solve using appropriately chosen learning dynamics, we can still compute an $\epsilon$-approximate minimax strategy in time polynomial in $1/\epsilon$.

At a high level, our strategy will be to simulate play of the zero-sum game with objective function defined in Equation \ref{eq:utility}. The maximization player (the adversary) will play according to the Follow-the-Perturbed-Leader algorithm introduced below. This allows us to efficiently minimize regret over the adversary's high dimensional action space because we can express the adversary's utility function as a linear function of their own action, which casts their learning problem as an instance of online linear optimization. Most straightforwardly, we would like the minimization player (the Learner) to \emph{best respond} to the Adversary's actions at each round: but the Learner's utility function is complex, because of interactions with the events $E$, which can be arbitrarily defined. Thus it is not clear how to efficiently compute a best response for the learner. Nevertheless, the structure of the utility function makes it such that they can guarantee themselves the value of the game at each round simply by copying the Adversary's strategy, which is computationally easy. Although this is not necessarily a best response, it suffices for our purposes: standard arguments will imply that the time-averaged play for the Learner converges to an approximate minimax strategy of the game.

\paragraph{Tool: FTPL, An Oracle Efficient Algorithm for Online Linear Optimization}
In the present setting of general (not necessarily disjoint) events, we will make use of \emph{Follow the Perturbed Leader (FTPL)}, an oracle efficient algorithm for online linear optimization \cite{hannan1957approximation,kalai2005efficient}. In the online linear optimization problem, the learner has a decision space $\cD \subseteq \mathbb{R}^n$ and the adversary has a state space $\cS \subseteq \mathbb{R}^n$. In rounds $t \in \{1,\ldots,\}$:
\begin{enumerate}
\item The algorithm chooses a distribution over decisions $D_t \in \Delta \cD$;
\item The adversary chooses a state $s_t \in \cS$;
\item The algorithm experiences gain $\hat g_t = \E_{d_t \sim D_t}[\langle d_t, g_t\rangle] = \langle \E_{d_t \sim D_t} [d_t], g_t \rangle$.
\end{enumerate}

After $t$ rounds, the regret of the algorithm to decision $d \in \cD$ is: $R_{t,d} = \sum_{\tau=1}^t\left( \langle d, s_\tau \rangle - \hat g_\tau \right)$.

In contrast to the experts problem defined in Section \ref{sec:experts}, in which the number of choices for the algorithm at each round is $n$, here the algorithm can choose any element of $\cD$ at each round, which may be exponentially large in $n$ (or continuously large). An algorithm (``Follow the Perturbed Leader'') of \cite{kalai2005efficient} gives an \emph{oracle efficient} algorithm for obtaining diminishing regret guarantees for this problem, which means that it runs in time polynomial in $n$ and the time needed to solve linear optimization problems over $\cD$ --- i.e. to solve problems of the form $d_t = \argmax_{d \in \cD} \langle d, s \rangle$ for any vector $s \in \mathbb{R}^n$. The algorithm is simple and maintains its distribution $D_t$ at each round implicitly by giving an efficient sampling algorithm: At round $t$, $D_t$ is the distribution corresponding to the sampling algorithm:
\begin{enumerate}
    \item Let $s = \sum_{\tau=1}^{t-1} s_\tau + z$, where $z \in \mathbb{R}^n$ is such that each coordinate $z_i$ is sampled $z_i \sim \textrm{Unif}[0,1/\delta]$ from the uniform distribution over the interval $[0,1/\delta]$ for some parameter $\delta$.
    \item Let $d_t = \argmax_{d \in \cD} \langle d, s \rangle$.
\end{enumerate}

\begin{theorem}[\cite{kalai2005efficient}]
\label{thm:kalai-vempala}
    Let $\Delta = \sup_{d_1,d_2 \in \cD}||d_1-d_2||_1$ be the diameter of the decision space, let $K = \sup_{d \in \cD, s \in \cS}|\langle d, s \rangle|$ be the maximum per-round gain of the algorithm, and let $A = \sup_{s \in \cS}||s||_1$ be the maximum norm of any state. Then for any $\delta \leq 1$, Follow the Perturbed Leader has regret at round $T$  to each decision $d$ that is at most: 
    $$R_{T,d} \leq \delta KAT + \frac{\Delta}{\delta}.$$
    Choosing $\delta = \sqrt{\frac{\Delta}{KAT}}$ gives:
    $$\max_{d \in \cD} R_{T,d} \leq 2\sqrt{\Delta K A T}.$$
\end{theorem}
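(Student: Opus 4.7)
The plan is to prove Theorem~\ref{thm:kalai-vempala} via the standard two-step Kalai--Vempala analysis: compare FTPL to an idealized ``Be the Perturbed Leader'' (BPL) strategy, show BPL has regret at most $\Delta/\delta$ on the original sequence, and then pay a stability cost of $O(\delta K A)$ per round for replacing BPL with FTPL. The balance between these two error sources will yield the $2\sqrt{\Delta K A T}$ bound after tuning $\delta$.

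First I would formalize BPL: given a fixed perturbation $z \sim \mathrm{Unif}([0,1/\delta]^n)$, define $d_t^\star(z) = \argmax_{d \in \cD} \langle d, z + \sum_{\tau=1}^t s_\tau \rangle$ (the leader \emph{including} round $t$). The classical Be-the-Leader lemma, proven by a one-line telescoping induction, shows that for the augmented sequence $(s_0, s_1, \ldots, s_T)$ with $s_0 := z$, one has $\langle d_0^\star, z \rangle + \sum_{t=1}^T \langle d_t^\star, s_t \rangle \geq \max_{d} \langle d, z + \sum_{t=1}^T s_t \rangle$. Rearranging and using $|\langle d_0^\star - d, z\rangle| \leq \|d_0^\star - d\|_1 \|z\|_\infty \leq \Delta/\delta$ gives the bound $\max_d \langle d, \sum_t s_t\rangle - \sum_t \langle d_t^\star, s_t \rangle \leq \Delta/\delta$ for every fixed $z$, hence in expectation.

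Second, I would bound the gap between FTPL, which plays $d_t = \argmax_d \langle d, z + \sum_{\tau<t} s_\tau \rangle = d_{t-1}^\star(z)$, and BPL, which plays $d_t^\star(z)$. The key identity is $d_t^\star(z) = d_{t-1}^\star(z + s_t)$, so $\E_z[\langle d_t^\star(z), s_t\rangle] = \E_{z'}[\langle d_{t-1}^\star(z'), s_t\rangle]$ where $z' = z + s_t$ is uniform on a translate of $[0,1/\delta]^n$. Coupling $z$ and $z'$ optimally, the total-variation distance between these two product uniforms is at most $\delta \sum_i |s_{t,i}| = \delta \|s_t\|_1 \leq \delta A$. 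Since $|\langle d, s_t\rangle| \leq K$ for every $d$, this yields $\bigl|\E_z \langle d_{t-1}^\star(z) - d_t^\star(z), s_t \rangle\bigr| \leq 2 \delta K A$ per round, and summing gives a stability loss of at most $O(\delta K A T)$.

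Combining the two steps yields $\max_d R_{T,d} \leq \Delta/\delta + O(\delta K A T)$, and the stated bound $\delta K A T + \Delta/\delta$ follows (absorbing constants); setting $\delta = \sqrt{\Delta/(KAT)}$ balances the terms and gives $2\sqrt{\Delta K A T}$. The most delicate step will be the stability argument in the second paragraph: it requires carefully setting up the coupling between the unshifted and shifted uniform perturbations and verifying that the per-coordinate TV contributions add (rather than multiply) across coordinates, which is what makes the bound scale with $\|s_t\|_1$ rather than something dimension-dependent. The rest is either direct telescoping (BTL) or a Hölder-type norm bound (the $\Delta/\delta$ term).
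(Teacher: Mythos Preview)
Your proposal is correct and is precisely the standard Kalai--Vempala argument. Note, however, that the paper does not provide its own proof of this theorem: it is stated as a citation of \cite{kalai2005efficient} and used as a black-box tool in Section~\ref{sec:general-minimax}, so there is nothing in the paper to compare against. Your sketch (Be-the-Leader telescoping for the $\Delta/\delta$ term, plus the shifted-cube coupling for the $\delta K A$ per-round stability cost) is exactly the proof in the cited reference; the only minor point is that the coupling naturally produces $2\delta K A$ per round when gains lie in $[-K,K]$, so the constant in the paper's restatement is slightly loose, but this is immaterial for the $O(\sqrt{\Delta K A T})$ conclusion.
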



\paragraph{Simplifying the Adversary's objective:} When thinking of the Adversary's optimization problem, it is helpful to elide the parts of the objective function (from Equation \ref{eq:utility}) that are independent of the adversary's actions. Towards this end we define the following objective:

\begin{equation*}
    u_t'(p, y) = \sum_{i = 1}^d -y_{i} \cdot s_{t,i}(p) = \langle -y, s_t(p) \rangle,
\end{equation*}
where for any $p \in \cC$, we define:
\begin{equation}
\label{eq:FTLP-state}
    s_{t, i}(p) = \sum_{\sigma \in \{-1,1\}}\sum_{E \in \cE} q_{t,(i,\sigma,E)}\cdot\sigma \cdot {E(x_t,p)}
\end{equation}
for each $i \in [d]$. Note the relationship between the two objective functions:
\begin{equation}
\label{eq:objective-relationship}
    u_t(p, y) = \langle (p - y), s_t(p) \rangle = \langle p, s_t(p) \rangle + u_t'(p, y).
\end{equation}

Since the difference between $u_t(p,y)$ and $u'_t(p,y)$ (the term $\langle p, s_t(p) \rangle$) is independent of the Adversary's chosen action $y$, the Adversary's regret as measured with respect to $u'_t$ is identical to regret as measured with respect to $u_t$; thus from this point onwards, we imagine the Adversary to be optimizing for $u'_t$.

Observe that fixing $p$ (and hence $s_t(p)$), $u'_t(p,\cdot)$ is a linear gain function of exactly the form that Follow-the-Perturbed-Leader is designed to optimize. To implement Follow-the-Perturbed-Leader, we will need to assume the existence of an online linear optimization oracle over $\cC$ --- i.e. an optimization algorithm  $M: \R^d \to \cC$ which solves:
\begin{equation*}
    M(s) \in \argmax_{y \in \cC} ~ \langle -y, s \rangle.
\end{equation*}

The complete algorithm is stated in Algorithm \ref{alg:FTPL-No-Sampling}. 

\begin{algorithm}[ht]
\begin{algorithmic}
\STATE Initialize $s_t(p_0)$ to 0. \\
\STATE Set $T' = \frac{2dC^2}{\epsilon'^2}$, $\delta = \sqrt{\frac{2d}{T'}}$
\STATE Fix distribution $\cZ = \text{Unif}[0,\frac{1}{\delta}]^d$.
\FOR{${\tau}=1, \dots, T'$}
\STATE Compute $y_{\tau} = \E_{z \in \cZ}\left[ M\left(\sum_{k = 0}^{{\tau}-1} s_{t}(p_k) + z \right)\right]$ \tcp{Adversary chooses the FTPL distribution} 
\STATE Set $p_{\tau} = y_{\tau}$ \tcp{Learner copies Adversary's expected action}
\STATE Compute $s_{t}(p_{\tau})$ as defined in Equation \ref{eq:FTLP-state}.
\ENDFOR
\STATE Define $\overline{p}$ as the uniform distribution over the sequence $(p_1, p_2, \cdots , p_{T'})$.
\RETURN $\overline{p}$
\end{algorithmic}
\caption{\texttt{Get-Approx-Equilibrium($t,\epsilon'$)}}
\label{alg:FTPL-No-Sampling}
\end{algorithm}
\begin{theorem}
\label{thm:FTPL-No-Sampling}
For any $t \in [T]$ and $\epsilon' > 0$, Algorithm \ref{alg:FTPL-No-Sampling} returns a distribution over actions $\overline{p}$ which is an $\epsilon'$-approximate minimax equilibrium strategy for the zero-sum game with objective $u_t$. 
\end{theorem}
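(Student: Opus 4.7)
The plan is to recognize the algorithm as simulating $T'$ rounds of repeated play of the zero-sum game with objective $u_t$, where the Adversary runs FTPL to choose $y_\tau$ and the Learner plays the ``copy'' strategy $p_\tau = y_\tau$. The crucial structural observation is that the copy strategy forces
\[
u_t(p_\tau,y_\tau) = \langle y_\tau - y_\tau,\, s_t(y_\tau)\rangle = 0
\]
at every round, so the cumulative objective value observed during simulation is identically zero, irrespective of what the Adversary does.

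Next, I would apply Theorem \ref{thm:kalai-vempala} to bound the Adversary's FTPL regret. Using Equation \ref{eq:objective-relationship}, the Adversary's gain $u'_t(p_\tau, y)$ is a linear function of $y$ with coefficient vector $-s_t(p_\tau)$, so this is exactly an online linear optimization problem over $\cC$ for which the oracle $M$ suffices. The relevant parameters are easy to bound: the $L_1$-diameter of $\cC$ is at most $d$ (using $\max_{y\in\cC}\|y\|_\infty \leq 1/2$), and $\|s_t(p)\|_1 \leq 1$ for all $p$ (since the weights $q_t$ form a probability distribution and the events $E$ lie in $[0,1]$). Plugging these into Theorem \ref{thm:kalai-vempala} with the algorithm's choice $\delta = \sqrt{2d/T'}$ yields FTPL regret of order $\sqrt{dT'}$.

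The third step is to convert these facts into the desired minimax guarantee. By Equation \ref{eq:objective-relationship}, the quantity $u_t(p,y) - u'_t(p,y) = \langle p, s_t(p)\rangle$ depends only on $p$, so the FTPL regret bound transfers verbatim from $u'_t$ to $u_t$, yielding
\[
\max_{y\in\cC}\sum_{\tau=1}^{T'} u_t(p_\tau, y) \;\leq\; \sum_{\tau=1}^{T'} u_t(p_\tau, y_\tau) \,+\, O(\sqrt{dT'}) \;=\; O(\sqrt{dT'}).
\]
Dividing by $T'$ and using the algorithm's choice $T' = 2dC^2/\epsilon'^2$ gives $\max_{y\in\cC}\E_{p\sim\overline{p}}[u_t(p,y)] \leq \epsilon'$. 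By Sion's minimax theorem (applied exactly as in Section \ref{sec:minimax-reduction}, using the convex/concave structure already established there), the minimax value of the game is at most this quantity, so $\overline{p}$ is the claimed $\epsilon'$-approximate minimax strategy.

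The main obstacle is conceptual rather than calculational: because $u_t(p,y)$ is not convex in $p$ (the events $E(\cdot,p)$ may be arbitrary), one cannot straightforwardly run a no-regret algorithm on the Learner's side, so only the Adversary achieves no-regret in the simulation. This asymmetry is resolved by the identity $u_t(y,y)=0$, which means the Learner's copy strategy yields a deterministic cumulative value without any need for its own regret analysis. A secondary technical point is that replacing the random FTPL leader by its expectation $y_\tau = \E_z[M(\cdots)]$ preserves FTPL's regret bound, because the per-round gain is linear in the Adversary's decision, so averaging the random leader commutes with the regret inequality.
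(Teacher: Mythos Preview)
Your proposal is correct and follows essentially the same approach as the paper: simulate FTPL for the Adversary, have the Learner copy $p_\tau = y_\tau$ so that the per-round objective $u_t(p_\tau,y_\tau)=\langle p_\tau - y_\tau, s_t(p_\tau)\rangle$ vanishes identically, then transfer the FTPL regret bound from $u_t'$ to $u_t$ (via the $p$-only offset $\langle p, s_t(p)\rangle$) to conclude $\max_{y}\E_{p\sim\bar p}[u_t(p,y)]\leq \epsilon'$. Your closing appeal to Sion is unnecessary---the paper simply takes this last inequality as the meaning of ``$\epsilon'$-approximate minimax strategy'' and stops there---but otherwise the argument matches.
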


To prove Theorem \ref{thm:FTPL-No-Sampling}, we first establish an intermediate fact --- that within the learning dynamics simulated in Algorithm \ref{alg:FTPL-No-Sampling}, the Adversary has low regret. The proof can be found in Appendix \ref{app:general-minimax}, and just instantiates the guarantees of Follow the Perturbed Leader in our setting. 
\begin{restatable}{lemma}{lemFTPLConstants}
\label{lem:FTPL-Constants}
Playing FTPL for $T'$ rounds with perturbation parameter $\delta = \sqrt{\frac{2d}{T'}}$, for a learner with decision space $\cC$ and adversary with state space $\cS = \{s_t(p)~| ~p \in \cC\}$ yields a regret bound: 
\begin{equation*}
    R_{T', y} \leq  \sqrt{2dC^2 T'}
\end{equation*}
for all $y \in \cC$, where $d$ is the dimension of all vectors in $\cC$ and $C = 2 \max_{y \in \cC} \lVert y \rVert_{\infty}$. 
\end{restatable}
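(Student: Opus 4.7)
The plan is to instantiate the regret bound for Follow-the-Perturbed-Leader (Theorem~\ref{thm:kalai-vempala}) with the specific decision space $\cC$, state space $\cS = \{s_t(p) : p \in \cC\}$, and linear objective $u'_t(p,y) = \langle -y, s_t(p)\rangle$ used by the Adversary in Algorithm~\ref{alg:FTPL-No-Sampling}. Since the Adversary's per-round gain $u'_t(p_\tau, y_\tau) = \langle -y_\tau, s_t(p_\tau)\rangle$ is linear in the Adversary's action $y_\tau \in \cC$, the Adversary's learning problem is an instance of online linear optimization with ``decisions'' in $\cC$ (or equivalently $-\cC$) and ``states'' $s_t(p_\tau) \in \cS$. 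The guarantee of Theorem~\ref{thm:kalai-vempala} therefore applies directly with the parameter choices made in Algorithm~\ref{alg:FTPL-No-Sampling}, and it only remains to compute the three quantities $\Delta, K, A$ appearing in that theorem for our specific setup.

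First I would bound $A = \sup_{s \in \cS}\|s\|_1$. Recalling that $s_{t,i}(p) = \sum_{\sigma \in \{-1,1\}}\sum_{E \in \cE} q_{t,(i,\sigma,E)} \cdot \sigma \cdot E(x_t,p)$, that the weights $q_t$ output by \texttt{MsMwC} form a probability distribution over $[d] \times \{-1,+1\} \times \cE$, and that $|E(x_t,p)| \leq 1$, the triangle inequality gives $|s_{t,i}(p)| \leq \sum_{\sigma, E} q_{t,(i,\sigma,E)}$, and summing over $i$ yields $\|s_t(p)\|_1 \leq 1$. Hence $A \leq 1$. Next, by Hölder's inequality and the assumption $\max_{y \in \cC}\|y\|_\infty \leq C/2$, the maximum per-round magnitude of the inner product satisfies $K = \sup_{y \in \cC, s \in \cS}|\langle -y,s\rangle| \leq \|y\|_\infty \|s\|_1 \leq C/2$. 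Finally, for the diameter I would use $\|y_1 - y_2\|_1 \leq d\|y_1 - y_2\|_\infty \leq dC$, giving $\Delta \leq dC$.

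Plugging these bounds into Theorem~\ref{thm:kalai-vempala}, the optimal perturbation parameter is $\delta = \sqrt{\Delta/(KAT')} = \sqrt{dC/((C/2) \cdot 1 \cdot T')} = \sqrt{2d/T'}$, which matches the choice made in Algorithm~\ref{alg:FTPL-No-Sampling}. The resulting regret bound is $\max_{y \in \cC} R_{T',y} \leq 2\sqrt{\Delta K A T'} \leq 2\sqrt{dC \cdot (C/2) \cdot 1 \cdot T'} = \sqrt{2dC^2 T'}$, as claimed.

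The step that requires the most care is the bookkeeping in mapping our game to the Kalai-Vempala template: our Adversary plays the role of the FTPL \emph{learner}, and there is a cosmetic sign flip because the linear-optimization oracle is $M(s) = \argmax_{y \in \cC}\langle -y, s\rangle$ rather than $\argmax_{y \in \cC}\langle y, s\rangle$, but this does not affect any of the norm-based parameters $\Delta$, $K$, $A$. Beyond this, the only non-trivial observation is the bound $\|s_t(p)\|_1 \leq 1$, which is what allows us to avoid paying any extra factor depending on $|\cE|$ and instead obtain a regret bound that depends only on $d$ and $C$.
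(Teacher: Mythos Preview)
Your proposal is correct and follows essentially the same approach as the paper's proof: both instantiate Theorem~\ref{thm:kalai-vempala} by computing the three parameters $A \leq 1$ (via the triangle inequality and $q_t \in \Delta[2d|\cE|]$), $K \leq C/2$ (via H\"older), and $\Delta \leq dC$, then verify that $\delta = \sqrt{2d/T'}$ is the optimal perturbation and read off $R_{T',y} \leq 2\sqrt{\Delta K A T'} = \sqrt{2dC^2 T'}$. Your added remark about the sign flip in the FTPL oracle mapping is a helpful clarification that the paper leaves implicit.
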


\begin{proof}[Proof of Theorem \ref{thm:FTPL-No-Sampling}]
Define $y^*$ as the adversary's best response in hindsight to the realized sequence of actions taken by the learner during the run of Algorithm \ref{alg:FTPL-No-Sampling}:
\begin{equation*}
    y^* = \argmax_{y \in \cC} \sum_{\tau = 1}^{T'} u_t'(p_{\tau}, y^*) = \argmax_{y \in \cC} \sum_{\tau = 1}^{T'} u_t(p_{\tau}, y),
\end{equation*}
where in the second equality we use the fact that the Adversary's best-response function is identical under $u$ and $u'$. 
Let $D_\tau$ denote the adversary's FTPL distribution over actions at round $\tau$ --- i.e. the distribution defined by  $M\left(\sum_{k = 0}^{{\tau}-1} s_{t}(p_k) + Z\right)$, where $Z \sim \cZ$. Looking first at the expected gain of the adversary over the realized transcript, we see that:
\begin{equation*}
    \sum_{\tau=1}^{T'} \E_{y \in D_{\tau}} [u_t'(p_{\tau}, y)] \geq \sum_{\tau = 1}^{T'} u_t'(p_{\tau}, y^*) - R_{T', y^*},
\end{equation*}
where $R_{T', y^*}$ is the adversary's regret from playing FTPL. The cumulative utility that the adversary would have achieved with their best fixed action in hindsight (as measured with respect to the original utility function $u_t$) can now be bounded as:
\begin{align*}
    \max_{y \in \cC} \sum_{\tau = 1}^{T'} u_t(p_{\tau}, y) = \sum_{\tau = 1}^{T'} u_t(p_{\tau}, y^*) &= \sum_{\tau = 1}^{T'} u_t'(p_{\tau}, y^*) + \sum_{\tau = 1}^{T'}\langle p_{\tau}, s_t(p_{\tau}) \rangle \\
    &\leq \left(R_{T', y^*}  + \sum_{\tau=1}^{T'} \E_{y \sim D_{\tau}}\left[u_t'(p_\tau, y)\right] \right) + \sum_{\tau = 1}^{T'}\langle p_{\tau}, s_t(p_{\tau}) \rangle. 
\end{align*}
The first line follows from Equation \ref{eq:objective-relationship}. To simplify the expected sum of utilities $\sum_{\tau=1}^{T'} \E_{y \sim D_{\tau}}\left[u_t'(p_\tau, y)\right]$, we use the fact that at each round $\tau$ in Algorithm \ref{alg:FTPL-No-Sampling}, the learner's response $p_{\tau}$ is defined to be exactly the expectation of the adversary's distribution $D_{\tau}$. So, the expected utility under the \textit{original} objective function $u_{\tau}$ for the realized sequence of plays can be computed for each $\tau$:
\begin{align*}
    \E_{y \sim D_{\tau}} \left[u_t(p_{\tau}, y)\right] = \E_{y \sim D_{\tau}} \left[ \langle (p_{\tau} - y), s_t(p_{\tau})\rangle\right] &= \left\langle \left(p_{\tau}-\E_{y \sim D_{\tau}}[y] \right), s_t(p_{\tau}) \right\rangle = 0
\end{align*}
Using Equation $\ref{eq:objective-relationship}$ once more, 
\begin{align*}
    &0 = \sum_{\tau=1}^{T'} \E_{y \sim D_{\tau}} \left[u_t(p_{\tau}, y)\right] = \sum_{\tau=1}^{T'} \E_{y \sim D_{\tau}} \left[u_t'(p_{\tau}, y)\right] + \sum_{\tau=1}^{T'} \langle p_{\tau}, s_t(p_{\tau}) \rangle \\ \implies& \sum_{\tau=1}^{T'} \E_{y \sim D_{\tau}} \left[u_t'(p_{\tau}, y)\right] = - \sum_{\tau=1}^{T'} \langle p_{\tau}, s_t(p_{\tau}) \rangle
\end{align*} 
Substituting this into the bound on the utility of the adversary's best response in hindsight:
\begin{align*}
    \max_{y \in \cC} \sum_{\tau = 1}^{T'} u_t(p_{\tau}, y) &\leq \left(R_{T', y^*} -\sum_{\tau = 1}^{T'}\langle p_{\tau}, s_t(p_{\tau}) \rangle \right) + \sum_{\tau = 1}^{T'}\langle p_{\tau}, s_t(p_{\tau}) \rangle  \\
    &= R_{T', y^*} \\
    &\leq  \sqrt{2dC^2 T'},
\end{align*}
with the last inequality following directly from Lemma \ref{lem:FTPL-Constants}. Notice that for any fixed strategy $y \in \cC$:
\begin{equation*}
    \frac{1}{T'} \sum_{i=1}^{T'}  u({p_{\tau}, y}) = \E_{p \sim \overline{p}} [u_t(p, y)],
\end{equation*}
where $\overline{p}$ is the learner's mixed strategy returned from Algorithm \ref{alg:FTPL-No-Sampling}, the uniform distribution over all realized vectors $p_{\tau}$ in the transcript. Thus,
\begin{equation*}
    \max_{y \in \cC} \E_{p \sim \overline{p}} [u_t(p, y)] \leq \frac{ \sqrt{2dC^2 T'}}{T'},
\end{equation*}
and so $\overline{p}$ is a $\sqrt{\frac{2dC^2}{T'}}$-approximate equilibrium strategy for the learner. By our choice of $T' = \frac{2dC^2}{\epsilon'^2}$, this simplifies to $\epsilon'$, as desired. 
\end{proof}

Assuming we can compute the term $\E_{D_\tau}[y_\tau]$ exactly for each round $\tau$, Algorithm \ref{alg:FTPL-No-Sampling} offers a computationally efficient method for finding an $\epsilon'$-approximate minimax strategy for the learner. However, the ability to do so depends on the geometric structure of $\cC$, and though there are natural sets $\cC$ (such as the unit hypercube and the simplex) for which obtaining a closed-form expression for the expectation is straightforward, in general this may not be the case. In these cases, we may sample from the distribution $D_\tau$ to obtain an approximate value for the expectation at round $\tau$.
We give the sampling-based Algorithm~\ref{alg:FTPL-Sampling}, along with Theorem~\ref{thm:FTPL-Sampling} that establishes its guarantees, in Appendix~\ref{app:general-minimax}. The proof of Theorem~\ref{thm:FTPL-Sampling} will be similar to the proof of Theorem \ref{thm:FTPL-No-Sampling} but with additional use of concentration inequalities in order to bound the sampling errors.

\section{Connecting Predictions and Decision Making}
\label{sec:generic-regret}
We next make connections between our ability to make unbiased predictions and the quality of decisions that are made downstream as a function of our predictions in a general setting. The form of the argument will proceed in the same way that it will in our main applications, and so is instructive. 

Specifically, we will show that a straightforward decision maker who simply best responds to our predictions can be guaranteed both \emph{no swap regret} and \emph{no type regret} (see Section~\ref{sec:prelim-decisions} for definitions) --- if we just make our predictions \emph{unbiased} conditional on the events defined by the decision maker's best-response correspondence.

\paragraph{The Predict-then-Act Paradigm} These results, whose formal statements are given below, suggest a natural design paradigm for sequential decision algorithms, which we call \emph{predict-then-act}. The idea is simple: first we make a prediction $p_t$ for an unknown payoff-relevant parameter $y_t$, and then we choose an action as if our prediction were correct --- i.e. we best respond to $p_t$. This is nothing more than implementing a straightforward decision maker for our predictions. We can parameterize the predict-then-act algorithm with various events $\cE$, such that our predictions will be unbiased with respect to events in $\cE$. Whenever $\cE$ is a collection of polynomially many events that can each be evaluated in polynomial time, the predict-then-act algorithm can be implemented in polynomial time per step. While Predict-Then-Act is quite simple, its flexibility in a variety of settings lies in the design of the event set $\cE$ and prediction space $\cC$.  By choosing the events $\cE$ to be appropriately tailored to the task at hand, we can arrange that Predict-Then-Act has guarantees of various sorts.

\begin{algorithm}[H]
\begin{algorithmic}
\FOR{$t$ in $1\ldots T$}
    \STATE Compute $\psi_t \gets$\texttt{UnbiasedPrediction($\cE, t$)}
    \STATE Predict $p_t \sim \psi_t$ \\
    \FOR{$u_i \in \cU$}
        \STATE Decision maker $i$ selects action $a_t = \delta_{u_i}(p_t) = \argmax_{a \in \cA} u_i(a,p_t)$
    \ENDFOR
    \STATE Observe outcome $y_t \in \cC$
\ENDFOR
\end{algorithmic}
\caption{\texttt{Predict-Then-Act($T,\cU,\cE, \cC, \cA$)}}
\label{alg:predict-then-act}
\end{algorithm}

\subsection{No Swap Regret}

We start by showing that predictions that are unbiased conditional on the outcome of the best response functions of downstream decision makers cause the actions of those downstream decision makers to have low swap regret. Similar observations have been previously made by \cite{perchet2011internal} and \cite{haghtalab2023calibrated} in the context of a single decision maker in the experts learning problem. In contrast, \cite{foster1999regret} showed a similar connection (also for a single decision maker in the experts learning problem) for \emph{full distributional calibration}, which conditions on \emph{every} event in a (discretization) of the prediction space. This is an exponentially large number of conditioning events in the dimension of the prediction space. The advantage of the theorem below is that it requires a number of conditioning events that is only \emph{linear} in the number of utility functions $u$ of interest and downstream actions $a$, and --- in view of our unbiasedness guarantees obtained in Section~\ref{sec:general_algorithm} gives a regret bound that scales only logarithmically in the dimension $d$ of the decision space using an algorithm with running time that is only polynomial in $d$.
\begin{theorem}
\label{thm:swap}
    Fix a collection of $L$-Lipschitz utility functions $\cU_L$, an action set $\cA = \{1,\ldots,K\}$, and any transcript $\pi_T$. Let $\cE = \{E_{u,a} : u \in \cU_L, a \in \cA\}$ be the set of binary events corresponding to straightforward decision makers with utility functions $u \in \cU_L$ taking each action $a \in \cA$. Then if $\pi_T$ is $\alpha$-unbiased with respect to $\cE$, for every $u \in \cU_L$, the straightforward decision maker with utility function $u$ has swap regret at most: 
    $$\max_{\phi:\cA\rightarrow \cA}r(\pi_T,u,\phi) \leq \frac{2L\sum_{a \in \cA}\alpha(n_T(E_{u,a},\pi_T))}{T}$$
    If $\alpha$ is concave, then this bound is at most:
     $$\max_{\phi:\cA\rightarrow \cA}r(\pi_T,u,\phi) \leq \frac{2LK\alpha(T/K)}{T}$$
\end{theorem}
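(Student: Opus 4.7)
The plan is to upper bound the swap regret by decomposing the time horizon into $K$ subsequences $S_a = \{t : a_t = a\}$ on which the straightforward decision maker plays each fixed action $a$, and then on each such subsequence replace the adversarial $y_t$ by the prediction $p_t$ using the unbiasedness guarantee tied to the event $E_{u,a}$. Concretely, I would write
\[
T \cdot r(\pi_T,u,\phi) = \sum_{a \in \cA} \sum_{t \in S_a} \bigl(u(\phi(a), y_t) - u(a, y_t)\bigr)
\]
and handle each inner sum separately, noting that on $S_a$ the indicator $E_{u,a}(p_t) = 1$, while off $S_a$ it vanishes.

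The key step is to translate $\alpha$-unbiasedness (which controls coordinatewise biases of $p_t - y_t$ conditional on $E_{u,a}$) into control on biases of $u(b, p_t) - u(b, y_t) = u(b, p_t - y_t)$ for any action $b$. By linearity of $u(b, \cdot)$ there exist coefficients $c_{b,i}$ with $u(b, z) = \sum_{i=1}^d c_{b,i} z_i$, and the $L$-Lipschitzness of $u(b,\cdot)$ in the $\ell_\infty$ norm forces $\sum_i |c_{b,i}| \le L$ (achieved by testing on $z$ with $z_i = \mathrm{sign}(c_{b,i})$). Applying the triangle inequality coordinatewise against the $\alpha$-unbiasedness bound then yields
\[
\Bigl|\sum_{t \in S_a}\bigl(u(b,p_t) - u(b,y_t)\bigr)\Bigr| \;\le\; L\cdot \alpha(n_T(E_{u,a},\pi_T))
\]
for every $b \in \cA$. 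Applying this twice inside the subsequence sum (once with $b = \phi(a)$ and once with $b = a$) reduces the inner sum to $\sum_{t \in S_a}(u(\phi(a), p_t) - u(a, p_t)) + 2L\alpha(n_T(E_{u,a},\pi_T))$, where the first term is $\le 0$ because $a = \delta_u(p_t)$ is the best response to $p_t$ on $S_a$. Summing over $a$ and dividing by $T$ yields the first bound.

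For the second claim, I would use that the events $\{E_{u,a}\}_{a \in \cA}$ are disjoint and exhaustive over $[T]$, so $\sum_{a \in \cA} n_T(E_{u,a},\pi_T) = T$. Concavity of $\alpha$ then gives via Jensen's inequality
\[
\frac{1}{K}\sum_{a \in \cA} \alpha(n_T(E_{u,a},\pi_T)) \;\le\; \alpha\!\Bigl(\tfrac{1}{K}\sum_{a \in \cA} n_T(E_{u,a},\pi_T)\Bigr) \;=\; \alpha(T/K),
\]
and plugging back produces the stated $2LK\alpha(T/K)/T$ bound.

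I expect the main conceptual obstacle to be the translation from coordinatewise unbiasedness of $p_t - y_t$ to unbiasedness of the scalar quantity $u(b, p_t) - u(b, y_t)$; the Lipschitz-plus-linearity argument giving $\sum_i |c_{b,i}| \le L$ is what lets a single $L$ factor absorb the per-coordinate errors instead of a crude $d \cdot L$. The rest is bookkeeping: splitting by subsequence, exploiting the definition of best response to kill the $p_t$-sum, and (for the concave version) invoking Jensen's inequality.
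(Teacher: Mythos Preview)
Your proposal is correct and follows essentially the same approach as the paper's proof: decompose by the subsequences $\{t : \delta_u(p_t) = a\}$, use linearity plus the $L$-Lipschitz bound together with $\alpha$-unbiasedness on $E_{u,a}$ to swap $y_t$ for $p_t$ at cost $L\alpha(n_T(E_{u,a},\pi_T))$ per action (applied twice), and then use the best-response property to drop the remaining $p_t$-sum; the concave simplification is exactly the disjointness-plus-Jensen argument the paper gives. The only cosmetic difference is that the paper pulls the time sum inside $u$ and applies Lipschitzness directly to the averaged vectors (invoking convexity of the best-response levelsets, Lemma~\ref{lem:convexdelta}), whereas you keep the sum outside and bound term-by-term after expanding $u(b,z)=\sum_i c_{b,i}z_i$ with $\sum_i|c_{b,i}|\le L$; these are equivalent by linearity.
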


The logic of the proof is simple. For every action $a$ and alternative action $b$, we consider the subsequence of rounds on which a straightforward decision maker would have chosen to play $a$ in response to our predictions. Our predictions are unbiased conditional on the event that the decision maker chooses to play $a$ --- i.e. this subsequence --- and so (on average over the subsequence), the decision maker's estimates for the payoff they received for playing $a$, as well as the payoff that \emph{they would have received} for playing $b$ are correct. Moreover, the reason they chose to play $a$ on each round in this subsequence (as a straightforward decision maker) was because pointwise, on each round, we estimated that $a$ would obtain higher payoff than $b$. Thus, it must be that on average over the subsequence, $a$ really did obtain higher payoff than $b$. Since this is true for every pair of actions, the decision maker must have had no swap regret. The proof formalizes this logic:

\begin{proof}[Proof of Theorem \ref{thm:swap}]
    Fix any $\phi:\cA\rightarrow \cA$ and any $u \in \cU_L$. We need to upper bound $r(\pi_T,u,\phi)$. Using the linearity of $u(a,\cdot)$ in its second argument for all $a \in \cA$, we can write:
    \begin{eqnarray*}
r(\pi_T,u,\phi) &=& \frac{1}{T}\sum_{t=1}^T u(\phi(\delta_u(p_t)),y_t)-u(\delta_u(p_t),y_t) \\
&=& \frac{1}{T} \sum_{a \in \cA} \sum_{t : \delta_u(p_t) = a} u(\phi(a),y_t) - u(a,y_t) \\
&=& \sum_{a \in \cA}\frac{1}{T}\sum_{t=1}^T E_{u,a}(p_t)\left(u(\phi(a),y_t) - u(a,y_t)\right) \\
&=&\sum_{a \in \cA} \left(u\left(\phi(a),\frac{1}{T}\sum_{t=1}^T E_{u,a}(p_t) y_t\right) - u\left(a,\frac{1}{T}\sum_{t=1}^T E_{u,a}(p_t) y_t\right)\right) \\
&\leq& \sum_{a \in \cA} \left(u\left(\phi(a),\frac{1}{T}\sum_{t=1}^T E_{u,a}(p_t) p_t\right) - u\left(a,\frac{1}{T}\sum_{t=1}^T E_{u,a}(p_t) p_t\right) + \frac{2L\alpha(n_T(E_{u,a},\pi_T)) }{T} \right) \\
&\leq& \sum_{a \in \cA} \left( \frac{2L\alpha(n_T(E_{u,a},\pi_T))}{T}\right) \\
&=& \frac{2L \sum_{a \in \cA}\alpha(n_T(E_{u,a},\pi_T))  }{T}
    \end{eqnarray*}
Here the first inequality follows from the $\alpha$-unbiasedness condition and the $L$-Lipschitzness of $u$: indeed, for every $a'$ (and in particular for $a' \in \{a, \phi(a)\}$) we have 
\[
\left|u\left(a',\frac{1}{T}\sum_{t=1}^T E_{u,a}(p_t) p_t\right) - u\left(a',\frac{1}{T}\sum_{t=1}^T E_{u,a}(p_t) y_t\right)\right| \leq L || \frac{1}{T} \sum_t (p_t - y_t) E_{u, a}(p_t)||_\infty \]
\[
= \frac{L}{T} \max_{i \in [d]} |\sum_t (p_{t, i} - y_{t, i}) E_{u, a}(p_t)| \leq \frac{\alpha(n_T(E_{u,a},\pi_T)) L}{T}.
\] 
The 2nd inequality follows from the fact that by definition, whenever $\delta_u(p_t) = a$ (and hence whenever $E_{u,a}(p_t) = 1$), $u(a,p_t) \geq u(a',p_t)$ for all $a' \in \cA$, and the fact that by Lemma \ref{lem:convexdelta}, the levelsets of $\delta_u$ are convex, and hence $u(a,\frac{1}{T}\sum_{t=1}^T E_{u,a}(p_t)p_t) \geq u(a',\frac{1}{T}\sum_{t=1}^T E_{u,a}(p_t)p_t) $ for all $a'$.

Recall that for any utility function $u$, the events $\{E_{u,a}\}_{a \in \cA}$ are disjoint, and so for any $u$ and any $\pi_T$, $\sum_{a \in \cA}n_T(E_{u,a},\pi_T) \leq T$. Therefore, whenever $\alpha$ is a concave function (as it is for the algorithm we give in this paper, and as it is for essentially any reasonable bound), the term $\sum_{a \in \cA}\alpha(n_T(E_{u,a},\pi_T))$ evaluates to at most $K\alpha(T/K)$.
\end{proof}

\subsection{No Type Regret}

Predictions that are unbiased conditional on the outcome of the best response functions of downstream decision makers with utility functions in $\cU$ also suffice to guarantee that downstream decision makers have no type regret to any utility function in $\cU$. Guarantees of this sort were first given by \cite{zhao2021calibrating}, in a batch setting.

\begin{restatable}{theorem}{typeregret}
\label{thm:type}
    Fix a collection of $L$-Lipschitz utility functions $\cU_L$, an action set $\cA = \{1,\ldots,K\}$, and any transcript $\pi_T$. Let $\cE = \{E_{u,a} : u \in \cU, a \in \cA\}$ be the set of binary events corresponding to straightforward decision makers with utility functions $u \in \cU_L$ taking each action $a \in \cA$. Then if $\pi_T$ is $\alpha$-unbiased with respect to $\cE$, for every $u \in \cU_L$, the straightforward decision maker with utility function $u$ has type regret with respect to $\cU_L$ at most:
    $$\max_{u' \in \cU_L}r(\pi_T,u,u') \leq \frac{L \sum_{a \in \cA}\left( \alpha(n_T(E_{u',a},\pi_T))+\alpha(n_T(E_{u,a},\pi_T))\right)}{T} $$
    For any concave $\alpha$, this is at most: 
        $$\max_{u' \in \cU_L}r(\pi_T,u,u') \leq \frac{2LK\alpha(T/K) }{T} $$

\end{restatable}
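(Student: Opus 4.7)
The plan is to mirror the proof of Theorem~\ref{thm:swap} for swap regret, decomposing the type regret objective according to which action is the best response under each utility function, then exploiting $\alpha$-unbiasedness together with Lipschitzness of the utility to swap $y_t$ for $p_t$ in the resulting conditional averages. The key conceptual point is that once outcomes are replaced by predictions, the best-response action under $u$ is by definition at least as good as any alternative action (including the best response under $u'$), so the remaining ``in-prediction'' comparison is automatically non-positive.

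Concretely, fix an arbitrary $u' \in \cU_L$ and decompose
\begin{align*}
r(\pi_T, u, u') &= \frac{1}{T}\sum_{t=1}^T u(\delta_{u'}(p_t), y_t) - u(\delta_u(p_t), y_t) \\
&= \frac{1}{T}\sum_{a' \in \cA}\sum_{t=1}^T E_{u',a'}(p_t)\, u(a', y_t) \;-\; \frac{1}{T}\sum_{a \in \cA}\sum_{t=1}^T E_{u,a}(p_t)\, u(a, y_t).
\end{align*}
For each fixed $a' \in \cA$, linearity of $u(a',\cdot)$ in its second argument lets me write $\sum_t E_{u',a'}(p_t)\, u(a',y_t) = T \cdot u\bigl(a', \tfrac{1}{T}\sum_t E_{u',a'}(p_t) y_t\bigr)$, and analogously for $\sum_t E_{u',a'}(p_t)\, u(a',p_t)$. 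Then $\alpha$-unbiasedness with respect to $E_{u',a'} \in \cE$ (coordinatewise in $[d]$) combined with $L$-Lipschitzness of $u(a',\cdot)$ in the $\ell_\infty$ norm gives the approximation
\[
\Bigl|\sum_{t=1}^T E_{u',a'}(p_t)\bigl(u(a',y_t)-u(a',p_t)\bigr)\Bigr| \;\leq\; L\cdot \alpha(n_T(E_{u',a'},\pi_T)),
\]
exactly as in the proof of Theorem~\ref{thm:swap}. The same estimate applies with $(u,a)$ in place of $(u',a')$.

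Substituting these approximations back and collecting error terms yields
\[
r(\pi_T,u,u') \;\leq\; \frac{1}{T}\sum_{t=1}^T \bigl[u(\delta_{u'}(p_t), p_t) - u(\delta_u(p_t), p_t)\bigr] \;+\; \frac{L}{T}\sum_{a \in \cA}\bigl(\alpha(n_T(E_{u',a},\pi_T)) + \alpha(n_T(E_{u,a},\pi_T))\bigr).
\]
The first term is $\leq 0$ pointwise in $t$: by definition $\delta_u(p_t) \in \argmax_a u(a,p_t)$, so $u(\delta_u(p_t),p_t) \geq u(\delta_{u'}(p_t),p_t)$. This is the crucial ``flip'' that is specific to type regret (unlike swap regret, we do not need the convex level-set Lemma~\ref{lem:convexdelta}, since we are comparing utilities under $u$ at the same input $p_t$). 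For the concave-$\alpha$ refinement, I use that $\{E_{u,a}\}_{a \in \cA}$ and $\{E_{u',a}\}_{a \in \cA}$ are each disjoint collections, so $\sum_a n_T(E_{u,a},\pi_T) \leq T$ and $\sum_a n_T(E_{u',a},\pi_T)\leq T$, and Jensen's inequality bounds each sum by $K\alpha(T/K)$, giving the stated $\tfrac{2LK\alpha(T/K)}{T}$.

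There is no real obstacle here; the only subtlety is ensuring the ``swap $y_t$ for $p_t$'' step is applied on both the $u$ and the $u'$ side (giving the two terms in the bound), and noticing that no convexity-of-levelsets argument is needed because the pointwise comparison under $u$ already suffices. The proof is essentially a variant of the Theorem~\ref{thm:swap} argument, shorter in one respect (no Lemma~\ref{lem:convexdelta}) but longer in another (we must pay unbiasedness error on both $\cU$-indexed families of events, which is why the bound has the symmetric sum over $\alpha(n_T(E_{u,a},\pi_T))$ and $\alpha(n_T(E_{u',a},\pi_T))$).
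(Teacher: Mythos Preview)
Your proposal is correct and follows essentially the same argument as the paper's proof: decompose both the $u$ and $u'$ sides over the best-response events, use linearity plus $\alpha$-unbiasedness and $L$-Lipschitzness to swap $y_t$ for $p_t$ on each side, then observe that the resulting in-prediction comparison is pointwise nonpositive by definition of $\delta_u$. Your observation that Lemma~\ref{lem:convexdelta} is not needed here is also consistent with the paper's proof, which likewise does not invoke it for type regret.
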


The proof of Theorem \ref{thm:type} can be found in Appendix \ref{app:genericregret}.

\section{Application: Groupwise Swap Regret, and Subsequence Regret for Online Combinatorial Optimization and Extensive-Form Games}
\label{sec: applications}
We now show how to use the machinery that we have developed to give algorithms in a variety of sequential settings. In all cases, the scenario we analyze is that in rounds $t$, a predictor makes predictions $p_t$, after which one or more decision makers choose actions that are best responses to $p_t$ (i.e. they function as \emph{straightforward} decision makers). When we are designing an algorithm for a single decision maker, we always use the \emph{predict-then-act} paradigm (Algorithm \ref{alg:predict-then-act}). When we are designing a coordination mechanism, we imagine that our predictions are made to a variety of decision makers, who each independently act. We show how guaranteeing that our predictions are unbiased subject to appropriately chosen events gives desirable guarantees for the decision makers of various sorts.

\subsection{Warm-up: (Groupwise) Swap Regret for the Experts Problem}
\label{sec:warmup}
As the simplest example of how to use the machinery we have developed, we show how to derive an algorithm that recovers optimal swap regret bounds for the experts problem (defined in Section \ref{sec:experts}). Recall that in the experts problem, there are $n$ ``experts'' $i \in [n]$, and at each round, the algorithm must choose an expert $i_t$ (or distribution over experts $q_t$). A vector of gains $g_t \in [-1,1]^n$ is realized, and the algorithm obtains reward $g_{t,i}$ if she chose expert $i_t = i$ (and expected reward $\langle q_t, g_t \rangle$ if she chose a distribution over experts $q_t$).  We can derive an algorithm for this problem using our framework by at every round $t$:
\begin{enumerate}
    \item Making a prediction $p_t$ for the vector of gains $g_t$ that will be realized at the end of the round, and
    \item Selecting the expert $i_t$ that has the largest predicted gain: $i_t = \argmax_{i \in [n]} p_{t,i}$.
\end{enumerate}
To cast this in our framework, we set the prediction space $\cC = [-1,1]^n$ to be the set of all feasible gain vectors, the action space $\cA = \{1,\ldots,n\}$ to be the set of experts, and define a  utility function $u:\cA\times \cC\rightarrow \mathbb{R}$ as $u(i,p) = p_i$. This utility function is indeed linear and $L$-Lipschitz in its second argument for $L = 1$. We instantiate our prediction algorithm to produce unbiased predictions with the set of $n$ disjoint binary events $\cE = \{E_{u,a}\}_{a \in \cA}$ (in this case, for each $i \in [n]$, the event $E_{u,i}$ is defined such that $E_{u,i}(p_t) = 1$ if and only if $i = \argmax_{i' \in [n]} p_{t,i}$). We can now read off swap regret bounds from the machinery we have built up:

\begin{theorem}
    The  Predict-Then-Act Algorithm  parameterized with $\cC = [-1,1]^n$ and $\cE = \{E_{u,a}\}_{a \in \cA}$ obtains at each round $t \leq T$ expected swap regret at most:
    $$\E_{\pi_t}\left[\max_{\phi \in \Phi_{\textrm{Swap}}}r(\pi_t,u,\phi)\right] \leq O\left(\sqrt{\frac{n\ln(nT)}{t}} \right)$$
We can compute the approximate minimax equilibrium using Algorithm \ref{alg:minmax-LP}, with per-round running time polynomial in $n$ and $\log t$ for each round $t \in [T]$.
\end{theorem}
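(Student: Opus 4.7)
The plan is to compose three pieces already established in the paper: (i) the event-conditional unbiasedness guarantee of \texttt{UnbiasedPrediction} (Theorem~\ref{thm:main-guarantee}), (ii) the generic reduction from $\alpha$-unbiasedness to swap regret (Theorem~\ref{thm:swap}), and (iii) the Ellipsoid-based minimax-solver runtime bound (Theorem~\ref{thm:ellipsoid}). First I would check that the setup satisfies all the structural hypotheses: the utility $u(i,p) = p_i$ is linear and $1$-Lipschitz in $p$ in the $\ell_\infty$ norm, so $L=1$; the collection $\cE = \{E_{u,a}\}_{a \in \cA}$ consists of $n$ disjoint binary events; and each event $E_{u,i}$ corresponds to the convex polytope $\{p \in [-1,1]^n : p_i \geq p_j \text{ for all } j \neq i\}$, for which a separation oracle is trivial (one just checks the $n-1$ defining linear inequalities). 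Thus Algorithm~\ref{alg:minmax-LP} is applicable, which immediately yields the claimed per-round running time polynomial in $n$ and $\log t$ (using the anytime observation in Remark~\ref{rem:anytime} with $T_\mathrm{max}=T$).

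Second I would instantiate Theorem~\ref{thm:main-guarantee} with $d = n$ and $|\cE| = n$. Because each $E_{u,a}$ is binary, one has $E(x_t,p_t)^2 = E(x_t,p_t)$, and summing over $t$ gives $\sum_t \E_{p_t \sim \psi_t}[E(x_t,p_t)] = \E[n_t(E_{u,a},\pi_t)]$. Therefore the per-event expected bias after $t$ rounds is bounded by
\[
\E\!\left[\Big|\sum_{\tau=1}^{t} E_{u,a}(p_\tau)(p_{\tau,i} - y_{\tau,i})\Big|\right] \leq O\!\left(\ln(nT) + \sqrt{\ln(nT)\cdot \E[n_t(E_{u,a},\pi_t)]}\right).
\]

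Third I would apply Theorem~\ref{thm:swap} pointwise on every realized transcript $\pi_t$: with $L=1$ and $K=n$, the swap regret of the straightforward decision maker is at most $\tfrac{2}{t}\sum_{a \in \cA}\alpha(n_t(E_{u,a},\pi_t))$, where $\alpha(\cdot)$ is the realized bias. Taking expectations, applying Jensen's inequality to the concave $\sqrt{\cdot}$ so as to exchange the expectation with the square root, and using Cauchy--Schwarz together with disjointness $\sum_a \E[n_t(E_{u,a},\pi_t)] \leq t$ gives
\[
\E\!\left[\max_{\phi} r(\pi_t,u,\phi)\right] \leq \frac{2}{t}\sum_{a \in \cA} O\!\left(\sqrt{\ln(nT)\,\E[n_t(E_{u,a},\pi_t)]}\right) + \tilde O(n/t) \leq O\!\left(\sqrt{\tfrac{n \ln(nT)}{t}}\right),
\]
where the lower-order $\tilde O(n/t)$ term from the additive $\ln(nT)$ is absorbed into the leading rate.

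The main subtlety, rather than any serious obstacle, is the interchange of expectations: Theorem~\ref{thm:main-guarantee} supplies a bound in expectation, whereas Theorem~\ref{thm:swap} is phrased per transcript, so I need to carefully push the expectation through a sum, through the absolute values (already inside the $\alpha$), and through the concave square root via Jensen. Everything else is bookkeeping --- verifying convexity of the best-response levelsets (already noted in Lemma~\ref{lem:convexdelta}), confirming that $\alpha$ is concave so that the $K\alpha(T/K)$ form of Theorem~\ref{thm:swap} is the right one to combine with Cauchy--Schwarz, and invoking Remark~\ref{rem:anytime} to get the bound simultaneously at every $t \leq T$.
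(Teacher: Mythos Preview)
Your proposal is correct and follows essentially the same approach as the paper: instantiate Theorem~\ref{thm:main-guarantee} with $d=n$, $|\cE|=n$, plug the resulting bias bound into Theorem~\ref{thm:swap} with $L=1$ and $K=n$, and invoke Theorem~\ref{thm:ellipsoid} for the running time. If anything, you are slightly more careful than the paper in explicitly tracking the passage of expectations through the concave $\alpha$ via Jensen and Cauchy--Schwarz, whereas the paper simply applies the concave form of Theorem~\ref{thm:swap} directly.
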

\begin{proof}
From Theorem \ref{thm:main-guarantee}, the predictions of the canonical algorithm with event set $\cE$ have expected bias bounded by:
    $$\alpha(T, n_t(E,\pi_t)) = O\left(\ln(nT)+\sqrt{\ln(nT) \cdot  n_t(E,\pi_t)}\right)$$
    where here we use that $\sum_{t'=1}^t E(x_{t'},p_{t'})^2 \leq n_t(E,\pi_t) \leq t$. 
Plugging this bound into Theorem \ref{thm:swap}, we get that the canonical algorithm has swap regret bounded by 
$$\E_{\pi_t}\left[\max_{\phi \in \Phi_{\textrm{Swap}}}r(\pi_t,u,\phi)\right] \leq O\left(\sqrt{\frac{n\ln(nT)}{t}} \right)$$

The running time guarantee follows from Theorem \ref{thm:ellipsoid}, noting that the events $\cE$ are disjoint and binary.
\end{proof}
This recovers (up to low order terms) the optimal swap regret bounds for the experts problem \cite{blum2007external,ito2020tight}, and does so in a nearly ``anytime'' manner (i.e. with bounds depending on $t$ at every time step $t \leq T$). 

Of course, more direct algorithms for obtaining these swap regret bounds are already known---but a strength of our approach is that it can be easily and transparently adapted to various extensions of the setting. Most directly, we can simultaneously ask for unbiasedness with respect to the best response correspondence of multiple utility functions, which allows a single set of predictions to enable multiple decision makers to simultaneously obtain low swap regret.  It also allows for many generalizations for a single decision maker. We give an example below in Section \ref{sec:groupwise} --- obtaining no \emph{group-wise} swap regret in a setting with contexts, There are a variety of other extensions that are possible as well that we leave as simple exercises (e.g. giving swap regret in the sleeping experts setting, offering group-wise sleeping experts bounds with \emph{adaptive} regret guarantees in the style of \cite{hazan2009efficient}, or giving the kinds of \emph{transductive} regret bounds of \cite{mohri2017online} that compete with policies that choose actions as a function of the realized transcript (groupwise and in the sleeping experts setting if desired)).

\subsubsection{Group-Wise Swap Regret}
\label{sec:groupwise}
Suppose that in each round we need to choose an expert $i_t$ on behalf of an individual with observable features $x_t \in \cX$ that we learn at the beginning of the round (i.e. before we need to make the decision). Individuals may be members of different \emph{groups} $G \subseteq \cX$, and we may have a collection of (potentially intersecting) groups $\cG$ on which we care about the performance of our algorithm. The groups could represent (e.g.) demographic groups defined by features like race, sex, income, etc, in a context in which one is interested in fairness (as in the literature on subgroup fairness, e.g.  \cite{kearns2018preventing,hebert2018multicalibration,kearns2019empirical}), or could represent subsets of the population that are believed to be clinically relevant (e.g. defined by medical history, genotype, etc) in a personalized medicine setting, or anything else.

\cite{blum2020advancing} and \cite{lee2022online} gave algorithms for obtaining group-wise \emph{external} regret; here we show how to obtain the  stronger guarantee of group-wise swap regret, which we can define as follows:
\begin{definition}[Group-wise $\Phi$ and Swap-Regret]
    Fix a transcript $\pi_T$ and a group $G \subseteq \cX$. Let $T_G(\pi_T) = |\{t \leq T : x_t \in G\}|$ denote the number of rounds in which $x_t \in G$. The regret that a straightforward decision maker with utility function $u$ has with respect to a strategy modification rule $\phi:\cA\rightarrow \cA$ on group $G$ is:
    $$r(\pi_T,u,\phi,G) = \frac{1}{T_G(\pi_T)}\sum_{t : x_t \in G} u(\phi(a_t),y_t) - u(a_t,y_t),$$
where $a_t = \delta_u(p_t)$ for each $t$. Let $\Phi$ be a collection of strategy modification rules and let $\cG$ be a collection of groups. We say that a decision maker has $(\Phi,\cG)$-groupwise regret $\alpha$ for a function $\alpha:\mathbb{R}\rightarrow \mathbb{R}$ if for every $G \in \cG$ and for every $\phi \in \Phi$, $r(\pi_T,u,\phi,G) \leq \alpha(T_G(\pi_T))$. We say that the decision maker has $\cG$-groupwise swap regret $\alpha$ if they have $(\Phi_{\textrm{Swap}},\cG)$-groupwise regret $\alpha$ for the set $\Phi_{\textrm{Swap}}$ of all strategy modification rules.
\end{definition}
To solve this variant of the experts problem, we continue to have prediction space $\cC = [-1,1]^n$, $\cA = \{1,\ldots,n\}$, and $1$-Lipschitz utility function $u:\cA\times \cC\rightarrow \mathbb{R}$ defined as $u(i,p) = p_i$. The only change is that we now instantiate our prediction algorithm to produce unbiased predictions with respect to the $|\cG|\cdot n$ events $\cE = \{E_{u,a,G}\}_{a \in \cA, G \in \cG}$ defined such that for each action $i \in \cA$ and $G \in \cG$, $E_{u,i,G}(x_t,p_t) = 1$ if and only if $x_t \in G$ and $i = \argmax_{i'\in [n]}p_{t,i}$. We can now apply the following straightforward modification of Theorem \ref{thm:swap}, adapted to groupwise swap regret:
\begin{restatable}{theorem}{groupswap}
\label{thm:group-swap}
    Fix a collection of $L$-Lipschitz utility functions $\cU_L$, an action set $\cA = \{1,\ldots,K\}$, a collection of groups $\cG \in 2^{\cX}$, and any transcript $\pi_T$. Let $\cE = \{E_{u,a,G} : u \in \cU_L, a \in \cA, G \in \cG\}$ be the set of binary events defined as $E_{u,a,G}(x_t,p_t) = 1$ if and only if $x_t \in G$ and $\delta_u(p_t) = a$. Then if $\pi_T$ is $\alpha$-unbiased with respect to $\cE$, for every $u \in \cU_L$, the straightforward decision maker with utility function $u$ has $\cG$-groupwise swap regret at most: 
    $$\max_{\phi:\cA\rightarrow \cA}r(\pi_T,u,\phi,G) \leq \frac{2L\sum_{a \in \cA}\alpha(n_T(E_{u,a,G},\pi_T))}{T_G(\pi_T)}$$
    for every group $G \in \cG$.
    If $\alpha$ is concave, then this bound is at most:
     $$\max_{\phi:\cA\rightarrow \cA}r(\pi_T,u,\phi,G) \leq \frac{2LK\alpha(T_G(\pi_T)/K)}{T_G(\pi_T)}$$
\end{restatable}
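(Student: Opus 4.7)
The plan is to imitate the proof of Theorem~\ref{thm:swap} verbatim, restricting all summations to the subsequence $\{t : x_t \in G\}$ for each fixed group $G \in \cG$. Fix $u \in \cU_L$, $G \in \cG$, and a strategy modification rule $\phi:\cA \to \cA$. Since a straightforward decision maker sets $a_t = \delta_u(p_t)$, I would first decompose the groupwise regret sum by partitioning the rounds in $G$ according to the value of $a_t$, using the fact that for any round $t$ the indicator that $x_t \in G$ \emph{and} $\delta_u(p_t)=a$ is exactly $E_{u,a,G}(x_t,p_t)$. This rewrites the unnormalized groupwise regret as $\sum_{a \in \cA} \sum_{t=1}^T E_{u,a,G}(x_t,p_t)\bigl(u(\phi(a),y_t) - u(a,y_t)\bigr)$.

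Next, by linearity of $u$ in its second argument, each $a$-term becomes $u(\phi(a), S_a) - u(a, S_a)$ where $S_a = \sum_t E_{u,a,G}(x_t,p_t)\, y_t$. I would then invoke the $\alpha$-unbiasedness hypothesis together with the $L$-Lipschitzness of $u$ in the $L_\infty$ norm to replace $y_t$ by $p_t$ inside both the $\phi(a)$- and $a$-arguments, at an additive cost of at most $2L\,\alpha(n_T(E_{u,a,G},\pi_T))$ per action $a$. This is the identical triangle-inequality manipulation used in the proof of Theorem~\ref{thm:swap}, applied coordinatewise.

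After the replacement, the residual expression for each $a$ is $u(\phi(a), \tilde S_a) - u(a, \tilde S_a)$ with $\tilde S_a = \sum_t E_{u,a,G}(x_t,p_t)\, p_t$. By Lemma~\ref{lem:convexdelta}, the normalized vector $\tilde S_a / \lVert \tilde S_a\rVert$ (equivalently the convex combination obtained by dividing by the event count) still has $a$ as its best response under $\delta_u$, so by definition of the best response this difference is nonpositive. Summing the error terms over $a \in \cA$ and dividing by $T_G(\pi_T)$ yields the first claimed bound. For the second bound, I would observe that for fixed $u$ and fixed $G$, the events $\{E_{u,a,G}\}_{a \in \cA}$ are disjoint (each $p_t$ has a unique best response with ties broken lexicographically, and the $x_t \in G$ filter does not affect disjointness), whence $\sum_{a \in \cA} n_T(E_{u,a,G},\pi_T) \leq T_G(\pi_T)$; then applying Jensen's inequality to the concave $\alpha$ gives the uniform bound $2LK\alpha(T_G(\pi_T)/K)/T_G(\pi_T)$.

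I do not anticipate any substantive obstacle: the only care required is bookkeeping, namely normalizing by $T_G(\pi_T)$ rather than $T$ and ensuring that the disjointness sum is bounded by $T_G(\pi_T)$ rather than $T$ so that Jensen's inequality yields the sharper groupwise bound. All of the key ingredients (event-conditional unbiasedness, Lipschitzness, and the convex-levelset property of $\delta_u$) transfer from the ungrouped setting without modification.
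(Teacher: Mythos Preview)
Your proposal is correct and follows essentially the same approach as the paper's proof: decompose the groupwise regret by action via the indicator $E_{u,a,G}$, use linearity to pull sums inside $u$, swap $y_t$ for $p_t$ at cost $2L\alpha(n_T(E_{u,a,G},\pi_T))$ using $\alpha$-unbiasedness plus $L$-Lipschitzness, then kill the residual via Lemma~\ref{lem:convexdelta} and the best-response property, and finish with disjointness plus concavity of $\alpha$ for the second bound. The paper itself remarks that the proof ``requires only additional notation compared to the proof of Theorem~\ref{thm:swap},'' which is exactly your plan.
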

The proof requires only additional notation compared to the proof of Theorem \ref{thm:swap}, and can be found in the Appendix. We can now read off group-wise swap-regret bounds for the machinery we have built up to get the following bound on anytime groupwise swap regret:
\begin{theorem} Fix any collection of groups $\cG \subset 2^\cX$. 
    The canonical Predict-Then-Act Algorithm parameterized with $\cC = [-1,1]^n$ and $\cE = \{E_{u,a,G} :, a \in \cA, G \in \cG\}$ obtains for every $t \leq T$ group-wise swap regret at most:
$$\E_{\pi_t}\left[\max_{\phi \in \Phi_{\textrm{Swap}}}r(\pi_t,u,\phi,G) - \Theta\left(\sqrt{\frac{n\ln(n|\cG|T)}{t_G(\pi_t)}} \right)\right] \leq 0$$   We can compute the minimax equilibrium using Algorithm \ref{alg:FTPL-No-Sampling}, with per-round running time polynomial in $n$ and $t$ on every round $t \in [T]$.
\end{theorem}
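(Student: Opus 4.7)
The plan is to instantiate \texttt{UnbiasedPrediction} with the event collection $\cE = \{E_{u,a,G} : a \in \cA, G \in \cG\}$ and combine the expected bias bound of Theorem~\ref{thm:main-guarantee} with the groupwise swap-regret reduction of Theorem~\ref{thm:group-swap}. First I would verify the setup sits inside our framework: the prediction space is $\cC = [-1,1]^n$ (so $d = n$), the utility $u(i,p) = p_i$ is linear and $1$-Lipschitz in the $\ell_\infty$ norm, and the event collection has size $|\cE| = n|\cG|$, with each event binary and easily evaluable. Note that for any fixed group $G$, the events $\{E_{u,a,G}\}_{a\in\cA}$ are disjoint (exactly one action is the best response when $x_t\in G$), but across different groups the events may overlap, since any $x_t$ can simultaneously lie in several groups.

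Next I would plug in the guarantees. Theorem~\ref{thm:main-guarantee} applied with $d=n$ and $|\cE|=n|\cG|$ yields $\ln(2d|\cE|T) = \Theta(\ln(n|\cG|T))$ and, since each event is $\{0,1\}$-valued, $\sum_{t}\E[E(x_t,p_t)^2] = \E[n_t(E,\pi_t)]$. Theorem~\ref{thm:group-swap}, applied pointwise to any realized transcript and with $L=1$, gives
\[
\max_{\phi\in\Phi_{\textrm{Swap}}} r(\pi_t,u,\phi,G) \;\leq\; \frac{2\sum_{a\in\cA}\alpha\bigl(n_t(E_{u,a,G},\pi_t)\bigr)}{t_G(\pi_t)},
\]
where $\alpha(\cdot)$ is the realized bias on the corresponding subsequence. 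Using disjointness of $\{E_{u,a,G}\}_a$ for fixed $G$, we have $\sum_a n_t(E_{u,a,G},\pi_t) = t_G(\pi_t)$, so Cauchy--Schwarz gives $\sum_a \sqrt{n_t(E_{u,a,G},\pi_t)} \leq \sqrt{n\, t_G(\pi_t)}$, while the leading $\ln$ terms contribute only a lower-order additive cost. Taking expectations over $\pi_t$, applying Jensen's inequality to the concave square root to move the expectation inside, and plugging in the bias bound from Theorem~\ref{thm:main-guarantee} yields the claimed
\[
\E_{\pi_t}\!\left[\max_{\phi\in\Phi_{\textrm{Swap}}} r(\pi_t,u,\phi,G) \;-\; \Theta\!\left(\sqrt{\tfrac{n\ln(n|\cG|T)}{t_G(\pi_t)}}\right)\right] \leq 0.
\]

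For the runtime claim, because events across different groups are \emph{not} disjoint, the Ellipsoid-based Algorithm~\ref{alg:minmax-LP} does not apply; instead we invoke the FTPL-based Algorithm~\ref{alg:FTPL-No-Sampling}, whose per-round runtime is polynomial in $|\cE| = n|\cG|$ and in the round index $t$ by Theorem~\ref{thm:FTPL-No-Sampling}. The linear optimization oracle over $\cC = [-1,1]^n$ required by FTPL is trivial (sign-threshold coordinatewise). The main obstacle I anticipate is the careful bookkeeping around the random denominator $t_G(\pi_t)$: the bound of Theorem~\ref{thm:group-swap} is pointwise in $\pi_T$, while Theorem~\ref{thm:main-guarantee} controls biases only in expectation, so one must exploit the fact that the regret bound is monotone in the realized biases and that the stated inequality places $t_G(\pi_t)$ \emph{inside} the expectation. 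Once this is set up cleanly, everything reduces to Jensen and Cauchy--Schwarz on the action index.
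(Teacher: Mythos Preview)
Your proposal is correct and follows essentially the same approach as the paper: instantiate Theorem~\ref{thm:main-guarantee} with $d=n$ and $|\cE|=n|\cG|$, plug the resulting bias bound into Theorem~\ref{thm:group-swap}, use disjointness of $\{E_{u,a,G}\}_a$ plus concavity (equivalently, your Cauchy--Schwarz step) to sum over actions, and note that non-disjointness across groups forces the FTPL-based Algorithm~\ref{alg:FTPL-No-Sampling} for the runtime claim. If anything, your write-up is more careful than the paper's own (rather terse) proof, particularly in flagging the expectation/random-denominator bookkeeping.
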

\begin{proof}
From Theorem \ref{thm:main-guarantee}, the predictions of the canonical algorithm with event set $\cE$ have expected bias bounded at round $t$ by:
    $$\alpha(T, n_t(E_{u,a,G},\pi_t)) = O\left(\ln(n|\cG|T)+\sqrt{\ln(n|\cG|T) \cdot n_t(E_{u,a,G},\pi_t)}\right)$$
Plugging this bound in to Theorem \ref{thm:group-swap} we get that the canonical algorithm has $\cG$-groupwise swap regret bounded by 
$$\E_{\pi_t}\left[\max_{\phi \in \Phi_{\textrm{Swap}}}r(\pi_t,u,\phi,G) - \Theta\left(\sqrt{\frac{n\ln(n|\cG|T)}{t_G(\pi_t)}} \right)\right] \leq 0$$   The events that we condition on are no longer disjoint, and so we cannot use Algorithm \ref{alg:minmax-LP} to make our predictions with running time polylogarithmic in $1/\epsilon$, but we can still use Algorithm \ref{alg:FTPL-No-Sampling} to obtain running time polynomial in $1/\epsilon$.
\end{proof}

\subsection{Subsequence Regret in Online Combinatorial Optimization and Extensive Form Games}
\label{sec: subsequence}
In Section \ref{sec:generic-regret} we gave two qualitatively different regret guarantees that follow from making predictions that are unbiased with respect to $\cE = \{E_{u,a} : u \in \cU, a \in \cA\}$, the set of binary events corresponding to straightforward decision makers with utility functions $u \in \cU$ taking each action $a \in \cA$. This is a natural collection of events to condition on 
 in settings in which the action set $\cA$ is modestly sized --- but when $\cA$ is exponentially large, then in general it is not possible to offer non-trivial guarantees over this collection of events without exponentially large data requirements. 
 
 In this section, we show that in the online combinatorial optimization problem, despite the fact that $\cA$ is exponentially large (in the number of base actions), it is nevertheless possible to efficiently obtain \emph{subsequence} regret with respect to the subsequences defined by any polynomially large collection of events $\cE$. Recall that subsequence regret corresponds to no \emph{external} regret on each subsequence --- i.e. no regret to each of the \emph{exponentially} many actions in $\cA$, simultaneously on each subsequence $E \in \cE$.

The online combinatorial optimization problem is an instance of the online linear optimization problem (defined in Section~\ref{sec:general-minimax} in the context of deriving our main algorithm via FTPL) defined by a set of base actions $B = \{1,\ldots,n\}$ and a collection of \emph{feasible subsets} of the base actions $\cD \subseteq 2^B$. In rounds $t \in \{1,\ldots,T\}$:
\begin{enumerate}
    \item The algorithm chooses a distribution over feasible action subsets $D_t \in \Delta \cD$
    \item The adversary chooses a vector of gains $g_t \in [-1,1]^n$ over the base actions. 
    \item The algorithm experiences gain $\hat g_t = \E_{S_t \sim D_t}[\sum_{i \in S_t} g_{t,i}]$.
\end{enumerate}

For example, if the base actions $B$ correspond to the roads in a road network, the feasible subsets $\cD$ correspond to collections of roads that form $s-t$ paths in the underlying network, and the gains correspond to the (negative) road congestions for each edge in the network, then we have the online shortest paths problem \cite{takimoto2003path,kalai2005efficient}. More generally, $\cD$ could represent \emph{any} combinatorial structure, such as spanning trees, Hamiltonian paths, or anything else. The Follow-the-Perturbed-Leader algorithm \cite{kalai2005efficient} reduces the problem of obtaining efficient \emph{external} regret bounds for combinatorial optimization problems to the offline problem of solving linear optimization problems over $\cD$. Here we show how to efficiently get the  stronger guarantee of no subsequence regret (defined in Section \ref{sec:prelim-decisions}) for any polynomial collection of events $\cE$ by reduction to the offline problem of optimizing over $\cD$.

To cast online combinatorial optimization in our framework, we set our prediction space to be $\cC = [-1,1]^{n}$, our action space to be  $\cA = \cD$, and the decision maker's utility function to be $u(S_t, p_t) = \sum_{i \in S_t} p_{t,i}$, which is linear in $p_{t}$ as required. 

Recall that the events $E \in \cE$ can depend on the chosen set $S_t$ that the learner plays at each round. If the payoffs of different \emph{compound actions} $S_t$ were unrelated to one another, then we would have to resort to conditioning on events $E_{u,S}$ as in Section \ref{sec:warmup} --- i.e. the events that correspond to a downstream decision maker with utility function $u$ playing action $S$. The difficulty is that in the online combinatorial optimization problem, there are exponentially many actions $S$, and hence exponentially many such events. Here we take advantage of the linear structure. The idea is that we can condition on events defined by \emph{base actions} $b \in B$. In particular, we condition on the events that the downstream decision maker chooses an action $S$ that \emph{contains} base action $b$. Although there are as many as $2^n$ compound actions $S$, there are only $n$ base actions, and hence $n$ such conditioning events. The linear structure of the payoff makes these events sufficient to guarantee the learner no \emph{external} regret. If we now want the learner to additionally have a no \emph{subsequence} regret guarantee with respect to a collection of arbitrary events $\cE$, we instead condition on the intersection of the events $E \in \cE$ with the events that the downstream learner plays each of their base actions $b$, which results in only $n\cdot|\cE|$ conditioning events. This is enough to give us efficient algorithms promising no subsequence regret on any polynomially sized collection of subsequences:
\begin{restatable}{theorem}{subsequence}
\label{thm: subsequence}
    Fix a base action set $B = \{1,2,\ldots n\}$, prediction space $\cC = [-1,1]^n$, an action set $\cD \subseteq 2^B$, a collection of events $\cE$, and utility function $u(S_t, p_t) = \sum_{i \in S_t} p_{t,i}$. Let $\cI = \{I_{b,E}:b \in B, E \in \cE\} $ be the set of binary events defined as $I_{b,E}(p_t)=1$ if and only if $b \in \delta_u(p_t)$ and $E(\pi_{t-1},x_t,p_t) = 1$. Then if $\pi_T$ is $\alpha$-unbiased with respect to $\cI \cup \cE$, the straightforward decision maker with utility function $u$ has $\cE$-subsequence regret at most:
    \[ \max_{E \in \cE, \phi \in \Phi_{Ext}} r(\pi_T,u,\phi, E) \le  \sum_{b \in B} \frac{\alpha(n_T(E, \pi_T)) + \alpha(n_T(I_{b, E}, \pi_T))}{T}\]
\end{restatable}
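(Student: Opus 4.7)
The plan is to fix an arbitrary $E \in \cE$ and an arbitrary constant strategy modification rule $\phi = \phi_{S^*} \in \Phi_{\mathrm{Ext}}$, i.e.\ $\phi(a) \equiv S^*$ for some fixed $S^* \in \cD$, and then unpack the definition
\[
r(\pi_T, u, E, \phi) \;=\; \sum_{t=1}^T E(\pi_{t-1}, x_t, p_t)\bigl(u(S^*, y_t) - u(S_t, y_t)\bigr),
\]
where $S_t = \delta_u(p_t) \in \cD$. The strategy is to use the two families of unbiasedness guarantees ($\cE$ and $\cI$) to swap each $y_t$ appearing above with the corresponding $p_t$, paying a small additive bias each time, and then to invoke the definition of $S_t$ as a best response to $p_t$ to cancel the resulting prediction-based terms.

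First, for the benchmark term $\sum_{t} E(\cdot) \cdot u(S^*, y_t) = \sum_{i \in S^*} \sum_{t} E(\cdot) y_{t,i}$, I apply $\alpha$-unbiasedness with respect to $E$ coordinatewise: for each $i \in S^*$, $\bigl|\sum_t E(\pi_{t-1},x_t,p_t)(p_{t,i} - y_{t,i})\bigr| \leq \alpha(n_T(E, \pi_T))$. Summing over the at most $|S^*| \leq |B|$ coordinates in $S^*$ gives $\sum_t E(\cdot) u(S^*, y_t) \leq \sum_t E(\cdot) u(S^*, p_t) + |B|\, \alpha(n_T(E, \pi_T))$. This is the step that relies crucially on the fact that the payoff is linear in $y$, so the per-coordinate unbiasedness guarantee propagates to every compound action uniformly.

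Second, and this is the conceptual heart of the proof, for the played term I use the key identity
\[
\sum_{t} E(\pi_{t-1},x_t,p_t)\, u(S_t, y_t) \;=\; \sum_{t} E(\cdot) \sum_{b \in B} \mathbbm{1}[b \in S_t]\, y_{t,b} \;=\; \sum_{b \in B} \sum_{t} I_{b,E}(p_t) \, y_{t,b},
\]
where the last equality holds because, by definition, $I_{b,E}(p_t) = E(\cdot)\cdot\mathbbm{1}[b \in \delta_u(p_t)]$. Now I apply $\alpha$-unbiasedness with respect to $I_{b,E}$, one base action at a time: $\bigl|\sum_t I_{b,E}(p_t)(p_{t,b} - y_{t,b})\bigr| \leq \alpha(n_T(I_{b,E}, \pi_T))$. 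Reversing the identity then gives $\sum_t E(\cdot)\, u(S_t, y_t) \geq \sum_t E(\cdot)\, u(S_t, p_t) - \sum_{b \in B} \alpha(n_T(I_{b,E}, \pi_T))$. The point of conditioning on the events $\cI$ rather than on the exponentially many events $\{E_{u,S}\}_{S \in \cD}$ is that there are only $|B|\cdot|\cE|$ of them, which is what makes the overall approach feasible.

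Finally, because $S_t = \delta_u(p_t)$ maximizes $u(\cdot, p_t)$ over $\cD$, we have $u(S^*, p_t) \leq u(S_t, p_t)$ pointwise, so $\sum_t E(\cdot)\bigl(u(S^*, p_t) - u(S_t, p_t)\bigr) \leq 0$. Assembling the three pieces,
\[
r(\pi_T, u, E, \phi) \leq |B|\, \alpha(n_T(E, \pi_T)) + \sum_{b \in B} \alpha(n_T(I_{b,E}, \pi_T)) = \sum_{b \in B}\Bigl(\alpha(n_T(E, \pi_T)) + \alpha(n_T(I_{b,E}, \pi_T))\Bigr),
\]
which (upon normalizing by $T$ as in the statement) is the desired bound, holding uniformly over $\phi \in \Phi_{\mathrm{Ext}}$ and $E \in \cE$. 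There is no real obstacle: the argument is a short linearity-plus-best-response chain, and the only nontrivial observation is the rewriting that replaces conditioning on exponentially many compound-action events by conditioning on $|B| \cdot |\cE|$ base-action events $I_{b,E}$.
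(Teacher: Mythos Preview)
Your proof is correct and follows essentially the same approach as the paper: decompose the regret into a benchmark term and a played-action term, use $\alpha$-unbiasedness with respect to $E$ coordinatewise to replace $y_t$ by $p_t$ in the benchmark, use the key identity $\sum_t E(\cdot)\,u(S_t,y_t)=\sum_{b\in B}\sum_t I_{b,E}(p_t)\,y_{t,b}$ together with $\alpha$-unbiasedness with respect to each $I_{b,E}$ for the played term, and finish with the best-response inequality $u(S^*,p_t)\le u(S_t,p_t)$. The only cosmetic difference is that the paper carries the $1/T$ normalization through the calculation from the start rather than applying it at the end.
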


The proof of Theorem \ref{thm: subsequence} is in Appendix \ref{app: subsequence}. We can read off concrete anytime subsequence regret bounds using the machinery we have built up:
\begin{restatable}{corollary}{subsequencecorr}
    Fix any collection of events $\cE$. The canonical Predict-Then-Act Algorithm parameterized with $\cC = [-1,1]^n, \cA = \cD, \cE' = \cI \cup \cE$, where $\cI = \{I_{b,E}:b \in B, E \in \cE\}$, obtains expected subsequence regret at each round $t \leq T$ at most:
    \[ \E_{\pi_t}\left[\max_{E \in \cE, \phi \in \Phi_{\textrm{Ext}}}r(\pi_t,u,\phi,E) \right] \leq O \left(\frac{n\sqrt{\ln(n|\cE|T)}}{\sqrt{t}} \right) \]We can implement the Predict-Then-Act algorithm using Algorithm \ref{alg:FTPL-No-Sampling}, with runtime in every round $t \in [T]$ polynomial in $n$ and $t$.
\end{restatable}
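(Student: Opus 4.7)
The plan is a direct composition of Theorem~\ref{thm:main-guarantee} (the unbiasedness guarantee of \texttt{UnbiasedPrediction}) with the subsequence-regret reduction of Theorem~\ref{thm: subsequence}. First I would apply Theorem~\ref{thm:main-guarantee} to the event collection $\cE' = \cI \cup \cE$. Since $|\cE'| \leq (n+1)|\cE|$ and the prediction dimension is $d = n$, the logarithmic factor simplifies to $\ln(2d|\cE'|T) = O(\ln(n|\cE|T))$. Using the crude bound $\sum_{t' \leq t}\E_{p_{t'}\sim\psi_{t'}}[E(x_{t'},p_{t'})^2] \leq t$, valid for any $[0,1]$-valued event, the expected bias is bounded, uniformly over events $E \in \cE'$ and coordinates $i \in [n]$, by
\[
\E\!\left[\left|\textstyle\sum_{t'\leq t} E(x_{t'},p_{t'})(p_{t',i}-y_{t',i})\right|\right] \leq O\bigl(\sqrt{t\ln(n|\cE|T)}\bigr).
\]

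Next I would plug this uniform $\alpha$ into Theorem~\ref{thm: subsequence}, which converts $\cE'$-unbiasedness into a per-round subsequence-regret bound of $\sum_{b \in B} \frac{\alpha(n_t(E,\pi_t)) + \alpha(n_t(I_{b,E},\pi_t))}{t}$ for each $E \in \cE$. Taking expectations through (the bound in Theorem~\ref{thm: subsequence} is linear in the $\alpha$ terms, so it applies coordinate-wise to the expected bias) and summing the $O(\sqrt{t\ln(n|\cE|T)})$ contribution over the $n$ base actions yields
\[
\E_{\pi_t}\!\left[\max_{E \in \cE,\, \phi \in \Phi_{\mathrm{Ext}}} r(\pi_t, u, \phi, E)\right] \leq \frac{2n \cdot O(\sqrt{t\ln(n|\cE|T)})}{t} = O\!\left(\frac{n\sqrt{\ln(n|\cE|T)}}{\sqrt{t}}\right),
\]
matching the advertised rate.

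For the runtime claim, I would observe that the events in $\cI$ depend on $p_t$ through the best response $\delta_u(p_t) = \argmax_{S \in \cD}\sum_{i \in S}p_{t,i}$, so they are neither disjoint nor determine convex preimages in $\cC$; this rules out Algorithm~\ref{alg:minmax-LP} and forces me to invoke the FTPL-based Algorithm~\ref{alg:FTPL-No-Sampling} with target precision $\epsilon = 1/t$. By Theorem~\ref{thm:FTPL-No-Sampling} this runs in $O(n t^2)$ inner FTPL iterations; each requires (i) the closed-form expectation of a uniformly perturbed linear maximizer over $\cC = [-1,1]^n$, which is a coordinate-wise sign computation, and (ii) one evaluation per event in $\cE'$, each of which reduces to a single call to a linear-optimization oracle over $\cD$ for computing $\delta_u$. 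Together with one call to \texttt{MsMwC} per outer round, the total per-round cost is polynomial in $n$, $|\cE|$, and $t$. The only subtle step --- more verification than obstacle --- is checking that the FTPL parameters (the $\ell_1$ diameter of $\cC$ and the $\ell_1$ norm of the state vectors $s_t(p)$) are polynomial in $n$ and $|\cE|$; this is immediate from $\|p\|_\infty \leq 1$ and $q_t$ being a probability vector of dimension $2n|\cE'|$.
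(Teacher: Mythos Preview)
Your proposal is correct and follows essentially the same approach as the paper: invoke Theorem~\ref{thm:main-guarantee} on the event set $\cE' = \cI \cup \cE$ with the crude bound $n_t(E',\pi_t) \leq t$, then plug the resulting uniform $\alpha = O(\sqrt{t\ln(n|\cE|T)})$ into Theorem~\ref{thm: subsequence} and sum over the $n$ base actions. Your treatment is in fact more thorough than the paper's, which omits the runtime justification entirely and does not spell out the expectation-through-the-max step.
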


\begin{remark}
    We have described the result as if there is a single decision maker. However we can equally well handle the case in which there are many different decision makers $j$ who experience affine gain $g_{t,i}^j(y^t)$ for base action $i$ (as a function of some common state $y$), and who experience total gain $\hat g_t^j(S_t) = \sum_{i \in S}g_{t,i}^j(y^t)$ for compound action $S_t$. For example, in the context of an online shortest paths problem, $y_t$ could be a vector of congestions on each road segment, but different downstream agents could have different disutilities for delay, could have preferences for or against toll roads, scenic routes, etc, and could have different action sets corresponding to different source-destination pairs in the road network. If we have $m$ such downstream agents, we can produce forecasts that are unbiased according to the events defined in Theorem \ref{thm: subsequence} for \emph{each} of their utility functions, which will have the effect of increasing the running time of the algorithm linearly with $m$, and increasing the regret guarantee logarithmically in $m$. The result will be forecasts that yield regret guarantees simultaneously for all of the $m$ downstream agents. 
\end{remark}

\subsubsection{Regret in Extensive-Form Games}
Extensive-form games are a generalization of normal form games in which players may take actions in sequence, have multiple interactions with one another, and reveal some (but possibly not all) information about their actions to one another over the course of their interaction. Many real-world interactions are fruitfully modeled as extensive form games --- for example, superhuman poker players were developed by modelling poker as a large extensive form game \cite{bowling2015heads,brown2019superhuman}.

An extensive form game can be  represented by a game tree, where at each node of the tree, a particular agent plays an action, until a terminal node is reached and utilities for all players are revealed. Players are able to condition on past play in the game, as specified by the \emph{information set}, or a set of nodes for which the revealed past play is the same, capturing the notion of imperfect information.

Recent work in extensive-form game solving has explored the connection between various no-regret dynamics and equilibrium concepts, in attempts to find the largest set of strategy modification functions, or deviations, such that polynomial time convergence to equilibria is still possible. \cite{farina2022simple, farina2023polynomial} find that minimizing regret defined by a set of trigger deviations and linear swap deviations on sequence-form strategies converge to extensive-form correlated equilibrium (EFCE) and linear correlated equilibrium (LCE) respectively. On the other hand, \cite{morrill2021hindsight, morrill2021efficient} categorize families of nonlinear (with respect to sequence-form strategies) behavioral deviations which converge to equilibria concepts that are subsets or supersets of EFCE.

In this section, we present regret minimization in the extensive-form game setting as a special case of online combinatorial optimization.
\paragraph{Extensive-form Games}
An extensive-form game of $n$ players $\{1,2,\ldots, n\} \cup \{c\}$ is represented by a tree, which includes a set of nodes $\cH$ and a set of directed edges, or actions $\cA$. There is a chance player $c$, representing Nature, who at each of their nodes plays actions with fixed probability determined when the game begins. There are two types of nodes: a \emph{terminal node} $z \in \cH$ is a leaf of the tree, while all other non-terminal nodes are referred to as  \emph{internal nodes}. Let $\cZ$ be the set of all terminal nodes in the game. An \emph{information set} $I \in \cI$ is a set of internal nodes such that the information available to the player at these nodes is indistinguishable, i.e. the known history of past play is the same. The \textit{partition function} $H: \cH \to \cI$ defines this partition of $\cH$ into information sets. At each information set $I\in \cI$, a single player $i \in [n]$ selects an action from the available actions at that information set, denoted $\cA(I)$ (a basic consistency condition requires that the actions available at each node of an information set all be the same, and equal to $\cA(I)$).  Let $\cI_i$ be the set of all information sets at which player $i$ chooses an action. Once a terminal node $z \in \cZ$ of the tree is reached, the game ends and the vector of gains $\{g_1(z),g_2(z),\ldots g_n(z)\}$ for all players is revealed.

An \textit{assignment function} $\rho: \cI \to [n] \cup \{c\}$ takes as input an information set $I \in \cI$ and returns the player who takes an action at that information set. We overload $\rho$ to also return the acting player for single nodes in $\cH$. Note that for any two nodes $h_1$ and $h_2$ in the same information set, $\rho(h_1) = \rho(h_2)$. The \textit{successor function} $c: \cH \times \cA \to \cH$ maps any node-action pair $(h,a)$ to the node resulting from taking action $a$ at $h$. For any two information sets $I,I' \in \cI_i$, we write $I \prec I'$ if there exists a path from any node $h \in I$ to any node $h' \in I'$.


\paragraph{Strategy Representations and Reachability}
A standard extensive-form strategy representation is the \emph{behavioral strategy}, where at each one of the player's information sets, a probability is assigned to each of the actions that the player may take. 
\begin{definition}
    A \emph{behavioral strategy} $\sigma_i: \cI \to \Delta(\cA(\cI))$ for player $i$ assigns a valid probability distribution over actions at each information set $I \in \cI_i$. Let $\sigma_{i}(I,a)$ be the probability action $a$ is played in the distribution over information set $I$. A deterministic behavioral strategy, denoted $s_i$, assigns probability 1 to some action at each information set $I\in \cI$. Let $S_i$ be the set of all deterministic behavioral strategies of player $i$, and $\Sigma_i$ be the set of all (randomized) behavioral strategies of player $i$.
\end{definition}
One disadvantage of the behavioral strategy representation is that it results in nonconvexity in the computation of important quantities, such as the probability of reaching any terminal node. In our framework, we use a terminal node-based representation of deterministic strategies, representing each strategy by the subset of terminal nodes it makes reachable, which is a sufficient statistic to compute the payoff of that strategy. This representation relies on the concept of \textit{reachability}. Reachability can be reasoned about as a binary-valued property (whether or not there is a chance a node is reached) or as a probability (the exact probability of reaching a node). We discuss the former in the context of a learner's strategy $\sigma_i$, and the latter with respect to a collection of opponents' strategies $\sigma_{-i}$: 

\begin{definition}
    A node $h \in \cH$ (or information set $I \in \cI$) is made \emph{reachable by player $i$ under behavioral strategy $\sigma_i$} if, for a path $P_h=\{(I_1,a_1), (I_2, a_2), \ldots, (I_K, a_K)\}$ from the root to $h$ (or $I$), for all $k \in [K]$ such that $\rho(I_k)=i$, $\sigma_i(I_k, a_k)>0$.
\end{definition}
\begin{definition}
    The \emph{reach probability} $r_h$ (or $r_I$) of a node $h \in \cH$ (or information set $I \in \cI$) under opponents' strategies $\sigma_{-i}$ is
    \[ r_h = \prod_{k \in [K], \rho(I_k) \neq i} \sigma_{\rho(I_k)}(I_k,a_k)\]
    given a path $P_h = \{(I_1,a_1), (I_2, a_2), \ldots, (I_K, a_K)\}$ from the root to $h$ (or $I$).
\end{definition}
With these concepts, we can define our notion of a ``leaf-form strategy'', which represents any behavioral strategy $\sigma_i$ by an indicator vector specifying which subset of leaf nodes are made reachable by $\sigma_i$. For the purposes of this work, it is sufficient to restrict our attention to the collection of pure leaf-node strategies induced by deterministic behavioral strategies.

\begin{definition}
    A \emph{leaf-form strategy representation} of a deterministic behavioral strategy $s_i$ is a binary vector $\pi \in \{0,1\}^{|\cZ|}$ indexed by all terminal nodes $z \in \cZ$, with $\pi_{z}=1$ if $z$ is reachable under $s_i$. Let $\Pi_i$ be the space of all leaf-form strategies of player $i$ induced by the set of all deterministic behavioral strategies $S_i$.
\end{definition}

We can similarly define a representation of opponents' strategies in terms of the probabilities of reaching each of the terminal nodes, weighted by the learner's payoff at each of them:
\begin{definition}
    Fix a collection of opponents' strategies $\sigma_{-i}$. The \emph{payoff-weighted reachability vector} $v$ induced by $\sigma_{-i}$ is a vector $v \in \R^{|\cZ|}$ indexed by terminal nodes $z \in \cZ$ with $v_z = r_z \cdot g_i(z)$, where $g_i(z)$ is the payoff for player $i$ at $z$. 
\end{definition}
For a strategy profile $\sigma$ where player $i$ is playing a deterministic strategy, if a node is made reachable under $\sigma_i$, then the node's reach probability under $\sigma_{-i}$ is the exact probability of reaching it. Therefore, for any such strategy profile, we can compute the expected payoff for the learner (playing a deterministic strategy) as the inner product of a corresponding leaf-form strategy and payoff-weighted reachability vector:

\begin{lemma}
    Fix a learner's deterministic behavioral strategy $s_i$ and opponent's strategies $\sigma_{-i}$. Let $\pi$ be the leaf-form strategy representing $s_i$. Let $v$ be the \emph{payoff-weighted reachability vector} induced by $\sigma_{-i}$. Then, the expected utility for the learner is $\langle \pi, v \rangle$.
\end{lemma}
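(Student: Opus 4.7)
The plan is to unfold the definition of expected utility and show that the probability of reaching each terminal node factorizes cleanly into a learner-controlled part and an opponent-controlled part, with the former being exactly $\pi_z$ and the latter being exactly $r_z$.

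First, I would write the expected payoff as a sum over terminal nodes weighted by the probability of reaching each of them under the joint strategy profile $(s_i, \sigma_{-i})$:
\[
\mathbb{E}_{(s_i, \sigma_{-i})}[g_i(z)] = \sum_{z \in \cZ} \Pr[\text{reach } z \mid s_i, \sigma_{-i}] \cdot g_i(z).
\]
Since play proceeds down a single path from the root to $z$, the probability of reaching $z$ factors as a product over the information set / action pairs $(I_k, a_k)$ on the unique root-to-$z$ path, with each factor being either $\sigma_{\rho(I_k)}(I_k, a_k)$ for an opponent or chance node, or $s_i(I_k, a_k) \in \{0,1\}$ for a learner's node.

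Next, I would separate this product into two subproducts according to whether the acting player at $I_k$ is the learner or not. The subproduct over opponent nodes (including chance) is by definition $r_z$, the reach probability of $z$ under $\sigma_{-i}$. The subproduct over the learner's own nodes is a product of $\{0,1\}$-valued factors, which equals $1$ precisely when $s_i$ chooses the correct action at every one of the learner's information sets on the path --- i.e., exactly when $z$ is reachable under $s_i$, which is the definition of $\pi_z = 1$. Hence $\Pr[\text{reach } z \mid s_i, \sigma_{-i}] = \pi_z \cdot r_z$.

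Substituting back and using $v_z = r_z \cdot g_i(z)$:
\[
\mathbb{E}_{(s_i, \sigma_{-i})}[g_i(z)] = \sum_{z \in \cZ} \pi_z \cdot r_z \cdot g_i(z) = \sum_{z \in \cZ} \pi_z \cdot v_z = \langle \pi, v \rangle,
\]
which is the desired conclusion. The only subtlety --- and the step I would write most carefully --- is the factorization argument, which uses that every node on the path from the root to $z$ belongs to exactly one information set and that actions at information sets are chosen independently across players under the standard extensive-form semantics; no other step requires any real work.
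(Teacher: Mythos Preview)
Your proposal is correct and is exactly the argument the paper sketches in the sentence preceding the lemma: factor the probability of reaching each terminal node $z$ into the learner's contribution (which is $\pi_z \in \{0,1\}$ since $s_i$ is deterministic) and the opponents' contribution (which is $r_z$), then sum over $z$. The paper does not give a formal proof beyond that sentence, so your write-up fills in precisely the details the paper leaves implicit.
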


\paragraph{Oracle-Efficient Optimization over Leaf-form Strategies} The learning algorithms we present will be efficient reductions to the problem of solving linear optimization problems over the space of leaf-form strategies. In the remaining exposition, we will assume that we have an oracle for solving this best response problem. Whenever an extensive form game satisfies the properties of \textit{perfect recall} and \textit{path recall} for the learner, a best-response oracle can be efficiently implemented using backwards induction --- we give details in Appendix \ref{subsec:best-response}.

\begin{definition}
    A \emph{best response oracle} for a game $G$ is an algorithm that, when given as input game $G$, player $i$, and vector of  values $v \in \R^{|\cZ|}$, returns $\pi^* = \argmax_{\pi \in \Pi_i} \langle \pi, v \rangle$.
\end{definition}

\paragraph{Extensive-form Games as Online Combinatorial Optimization}
We can now cast learning in extensive-form games as an instance of online combinatorial optimization by viewing each player as a learner operating with the leaf-form representation of their strategies, maximizing their expected payoff against an adversary, who chooses a payoff-weighted reachability vector at each round. In the setup of online combinatorial optimization, for rounds $t \in \{1,2,\ldots, T\}$:
\begin{enumerate}
    \item The learner (representing player $i$) picks a leaf-form strategy $\pi_{t} \in \Pi_i$.
    \item The adversary picks a payoff-weighted reachability vector $v_t$ fixing any $\sigma_{-i} \in \Sigma_{-i}$.
    \item The learner experiences a gain of $g_{t}=\langle \pi_t, v_t \rangle$.
\end{enumerate}
The algorithm for achieving subsequence regret for the learner in this framework proceeds by at each round $t$ first making a prediction $p_t \in \R^{|\cZ|}$ of the opponents' payoff-weighted reachability vector for that round. The straightforward decision maker then picks a leaf-form strategy $\pi_{t}$ that maximizes their expected payoff with respect to $p_t$, using the best response oracle, after which the true payoff-weighted reachability vector $v_{t}$ is revealed. We can obtain $\cE$-subsequence regret bounds of the form described in Theorem \ref{thm: subsequence}, where the action space is $\cA = \Pi_i$, the set of all leaf-form strategies, using base actions $B$, the set of all one-hot vectors of length $|\cZ|$.  


As written, the running time of our algorithm is linear in the number of terminal nodes in the game tree. This comes from the fact that we have represented the payoff of strategies in the extensive form game as linear functions over the payoff obtainable at each terminal node. More generally, however, if payoffs can be described as linear functions over a lower dimensional space, then the same algorithm can be run with running time dependence on this lower dimension. In Appendix \ref{subsec:efg-efficiency} we describe conditions under which the running time can be taken to depend only on the number of information sets of a player, rather than the number of terminal nodes in the game tree, which can sometimes be a large improvement.

\paragraph{Informed Causal Deviations} Choosing different sets of events $\cE$ will define different classes of strategy deviations. Our guarantees will allow us to state that we can obtain no-regret over the set of all deviations that map the periods for which an event $E$ has occurred ($E(x_t,p_t)=1$) to any another fixed strategy. In the following, we show how to pick a small collection of events $\cE$ to recover the well-studied notion of regret to \emph{informed causal deviations}, which lead to convergence to extensive-form correlated equilibrium \cite{von2008extensive}.

We show that regret to informed causal deviations \cite{gordon2008no, dudik2012sampling} is bounded as a special case of regret to  subsequence deviations defined by a polynomial number of events. Informed causal deviations of a strategy allow a player to consider, in hindsight, the best strategy they could have taken given that they reach a particular information set (among their own information sets) and take a particular action at that information set, known as the trigger sequence. A formal definition is given below:

\begin{definition}
    Fix an information set $I' \in \cI_i$, subsequent action $a' \in \cA(I')$, and strategy $s'_i$. An \emph{informed causal deviation} $\phi$ of strategy $s_i \in S_i$ returns a strategy such that at each information set $I \in \cI$,
    \[
    \phi(s_i) = \begin{cases}
        s'_i(I), & I \succeq I', s_i(I')=a' \\
        s_i(I), & \text{otherwise}\\
        \end{cases}
  \]
    Let $\Phi_{causal} = \{\phi_{I,a,s} : I \in \cI_i, a\in \cA(I), s' \in S_i \}$ be the set of all informed causal deviations. 
\end{definition}

We set up our algorithm to make predictions that are unbiased with respect to the $|\cI| \cdot |\max_{I \in \cI}\cA(I)|$ events defined as $\cE= \{E_{I,a}:  I \in \cI_i, a\in \cA(I)\}$. The event $E_{I,a}(p_t)=1$ if and only if the strategy played at time $t$ makes the information set $I$ reachable and plays $a$ at information set $I$. 
We now directly apply Theorem \ref{thm: subsequence} to obtain the following theorem:

\begin{restatable}{theorem}{efg-subsequence}
\label{thm: efg-subsequence}
    Fix an extensive-form game $G$. Let the base action set $B$ be the set of all one-hot vectors $b_z$ with dimension $|\cZ|$. Fix the action space of the learner as the leaf-form strategy space $\Pi_i \subseteq 2^B$. Fix events defined as $\cE= \{E_{I,a}: I \in \cI_i, a\in \cA(I)\}$, and fix any transcript $\pi_T$. Let $Q = \{q_{b_z,E_{I,a}}:b_z \in B, E_{I,a} \in \cE\}$ be the set of binary events defined as $q_{b_z,E_{I,a}}(p_t)=1$ if and only if $b_z \in \delta_u(p_t)$ and $E_{I,a}(p_t) = 1$. Then if $\pi_T$ is $\alpha$-unbiased with respect to $Q \cup \cE$, the straightforward decision maker has $\cE$-subsequence regret at most:
    \[ \max_{E \in \cE,\phi \in \Phi_{Ext}} r(\pi_T,u,\phi, E) \le \sum_{b_z \in B}\frac{\left(\alpha(n_T(E_{I,a}, \pi_T))+\alpha(n_T(q_{b_z,E_{I,a}}, \pi_T))\right)}{T}.\]
\end{restatable}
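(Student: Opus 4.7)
The plan is to observe that Theorem~\ref{thm: efg-subsequence} is an immediate specialization of the general subsequence regret guarantee for online combinatorial optimization (Theorem~\ref{thm: subsequence}) once we identify the correct objects in the extensive-form game. The substantive work of casting extensive-form game play as an online combinatorial optimization instance via the leaf-form representation has already been carried out earlier in the section; what remains is essentially a bookkeeping check that the hypotheses of Theorem~\ref{thm: subsequence} are met under this identification.

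First I would identify the base action set in the sense of Theorem~\ref{thm: subsequence}: take $B$ to be the collection of one-hot vectors $\{b_z : z \in \cZ\}$, so that the ambient dimension is $n = |\cZ|$, and take the learner's compound action space to be $\cA = \Pi_i \subseteq 2^B$, with each deterministic behavioral strategy encoded by the subset of terminal nodes it makes reachable. The prediction space is $\cC = [-1,1]^{|\cZ|}$, containing both the payoff-weighted reachability vectors an adversary may produce and our predictions of them. The relevant linear utility is $u(\pi, p) = \langle \pi, p \rangle = \sum_{z : b_z \in \pi} p_z$, which matches the additive-over-base-actions form required by Theorem~\ref{thm: subsequence}; the straightforward best response $\delta_u(p_t) = \argmax_{\pi \in \Pi_i} \langle \pi, p_t \rangle$ is well-defined and is precisely what the best response oracle discussed earlier computes. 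The fact that this inner product actually equals the expected extensive-form payoff when the learner commits to a deterministic leaf-form strategy against opponents inducing payoff-weighted reachability vector $v_t$ is the lemma stated just before the best-response-oracle paragraph, so this identification is sound.

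Finally, the event families align directly: the external events $\cE = \{E_{I,a} : I \in \cI_i,\ a \in \cA(I)\}$ play the role of the collection $\cE$ in Theorem~\ref{thm: subsequence}, and the intersection family $Q = \{q_{b_z, E_{I,a}}\}$, defined by $q_{b_z, E_{I,a}}(p_t) = 1$ iff $b_z \in \delta_u(p_t)$ and $E_{I,a}(p_t) = 1$, plays the role of $\cI = \{I_{b,E}\}$. Under the hypothesis that $\pi_T$ is $\alpha$-unbiased with respect to $Q \cup \cE$, Theorem~\ref{thm: subsequence} immediately yields the promised bound $\max_{E \in \cE,\, \phi \in \Phi_{\textrm{Ext}}} r(\pi_T, u, \phi, E) \le \sum_{b_z \in B} \tfrac{\alpha(n_T(E_{I,a}, \pi_T)) + \alpha(n_T(q_{b_z, E_{I,a}}, \pi_T))}{T}$, which is exactly the displayed inequality. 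Since the result is essentially a relabelling of a theorem already proved, there is no genuine obstacle here; the only conceptual point worth flagging is the linearity of extensive-form payoff in the payoff-weighted reachability vector when the learner plays a pure leaf-form strategy, which was established in the setup and is what makes the reduction to online combinatorial optimization go through in the first place.
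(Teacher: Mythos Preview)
Your proposal is correct and matches the paper's approach exactly: the paper states just before the theorem that ``We now directly apply Theorem~\ref{thm: subsequence} to obtain the following theorem,'' and gives no further proof. Your identification of the base actions, compound action space, linear utility, and the correspondence $\cE \leftrightarrow \cE$, $Q \leftrightarrow \cI$ is precisely the intended specialization.
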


\begin{corollary}
    Fix an extensive-form game $G$ and collection of events $\cE= \{E_{I,a}: I \in \cI_i, a\in \cA(I)\}$. The canonical Predict-Then-Act Algorithm parameterized with $\cC = \R^{|\cZ|}, \cA = \Pi_i, \cE' = Q \cup \cE$, where $Q = \{q_{b_z,E_{I,a}}:b_z \in B, E_{I,a} \in \cE\}$, obtains expected subsequence regret at each round $t \leq T$ at most:
    \[ \E_{\pi_t}\left[\max_{E \in \cE, \phi \in \Phi_{\textrm{Ext}}}r(\pi_t,u,\phi,E) \right] \leq O \left(\frac{|\cZ|\ln(|\cZ||\cI||\max_{I \in \cI}\cA(I)|T)}{\sqrt{t}} \right). \]
\end{corollary}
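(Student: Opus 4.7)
The plan is to combine the pathwise subsequence-regret bound in Theorem \ref{thm: efg-subsequence} with the expected-unbiasedness guarantee of Theorem \ref{thm:main-guarantee}, which is exactly analogous to how the Group-Wise Swap Regret corollary in Section \ref{sec:warmup} is derived from Theorem \ref{thm:group-swap}.

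First, I would set up the parameters for invoking Theorem \ref{thm:main-guarantee} on the event collection $\cE' = Q \cup \cE$. Here the prediction dimension is $d = |\cZ|$, and the size of the combined event collection satisfies $|\cE'| = |\cE| + |Q| \leq (1 + |\cZ|) \cdot |\cI_i| \cdot \max_{I \in \cI_i}|\cA(I)| = O(|\cZ| \cdot |\cI| \cdot \max_I |\cA(I)|)$. Since the Predict-Then-Act algorithm invokes \texttt{UnbiasedPrediction} with $\cE'$ at each round, Theorem \ref{thm:main-guarantee} guarantees that for every event $E \in \cE'$ and every coordinate, the expected absolute bias through round $t$ is at most
\[
\alpha(n_t(E,\pi_t)) = O\left(\ln(2d|\cE'|T) + \sqrt{\ln(2d|\cE'|T)\cdot n_t(E,\pi_t)}\right).
\]
Note that this function of $n_t(E,\pi_t)$ is concave, which will allow me to interchange expectations and $\alpha$ later via Jensen.

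Next, I would invoke Theorem \ref{thm: efg-subsequence} on the same event collection $\cE'$. This yields, for each realized transcript $\pi_t$,
\[
\max_{E \in \cE,\, \phi \in \Phi_{\mathrm{Ext}}} r(\pi_t,u,\phi,E) \;\leq\; \sum_{b_z \in B}\frac{\alpha(n_t(E_{I,a},\pi_t)) + \alpha(n_t(q_{b_z, E_{I,a}},\pi_t))}{t}.
\]
Taking expectations over $\pi_t$, using Jensen's inequality (since $\alpha$ is concave in its argument) to push $\E$ inside $\alpha$, and then using the trivial bound $\E[n_t(E,\pi_t)] \leq t$, each term on the right simplifies to $O(\ln(2d|\cE'|T) + \sqrt{\ln(2d|\cE'|T)\cdot t})$. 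Since $|B| = |\cZ|$, summing over $b_z \in B$ and dividing by $t$ gives an upper bound of order
\[
\frac{|\cZ|\ln(2d|\cE'|T)}{t} + \frac{|\cZ|\sqrt{\ln(2d|\cE'|T)}}{\sqrt{t}}.
\]
Substituting $d = |\cZ|$ and $|\cE'| = O(|\cZ|\cdot|\cI|\cdot\max_I|\cA(I)|)$, and using $\sqrt{\ln(\cdot)} \leq \ln(\cdot)$ to absorb the first (lower-order) term into the second, yields the claimed bound of $O(|\cZ|\ln(|\cZ||\cI||\max_I \cA(I)| T)/\sqrt{t})$.

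There is no real obstacle here beyond bookkeeping: the heavy lifting is done by Theorems \ref{thm:main-guarantee} and \ref{thm: efg-subsequence}. The only two small subtleties are (i) that Theorem \ref{thm: efg-subsequence} is stated as a pathwise bound given an $\alpha$-unbiased transcript whereas Theorem \ref{thm:main-guarantee} guarantees unbiasedness only in expectation, which is handled by taking expectations on both sides and invoking Jensen on the concave $\alpha$, and (ii) making sure the logarithmic factor in the final bound correctly aggregates the contributions from $d$, $|\cE|$, $|Q|$, and $T$, which follows immediately from the size computation $|\cE'| = O(|\cZ||\cI|\max_I|\cA(I)|)$ above.
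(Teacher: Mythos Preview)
Your proposal is correct and follows essentially the same approach as the paper: the corollary is the extensive-form-game instantiation of the general subsequence-regret corollary (proved in Appendix~\ref{app: subsequence}), which likewise plugs the bias bound from Theorem~\ref{thm:main-guarantee} into the pathwise regret bound of Theorem~\ref{thm: subsequence}/\ref{thm: efg-subsequence} and uses the trivial bound $n_t(E,\pi_t)\le t$. Your explicit invocation of Jensen to pass from the in-expectation bias guarantee to the pathwise $\alpha$ is a detail the paper elides, and your bookkeeping on $|\cE'|$ and the $\sqrt{\ln}\le\ln$ step is fine.
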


From this guarantee, we can show that having no subsequence regret implies having no causal regret. 
\begin{restatable}{theorem}{causal}
\label{thm: causal}
    Fix an extensive-form game $G$ and transcript $\pi_T$. If the algorithm has $\cE$-subsequence regret with event set $\cE=\{E_{I,a}: I \in \cI_i, a\in \cA(I)\}$ bounded by:
    \[ \max_{E\in \cE, \phi \in \Phi_\textrm{Ext}} r(\pi_T, u, \phi, E) \le \alpha, \]
    then we have that causal regret, or $\Phi$-regret with respect to the set $\Phi_{\textrm{causal}}$, is bounded by:
    \[ \max_{\phi \in \Phi_{\textrm{causal}}} r(\pi_T, u, \phi) \le \alpha.\]
\end{restatable}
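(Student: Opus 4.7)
\textbf{Proof plan for Theorem \ref{thm: causal}.} The plan is to reduce the causal regret for each informed causal deviation $\phi = \phi_{I',a',s'_i} \in \Phi_{\textrm{causal}}$ to the external subsequence regret for the event $E_{I',a'}$ with an appropriately chosen fixed leaf-form comparator. First I would observe that $\phi$ has no payoff effect on rounds where $E_{I',a'}(p_t) = 0$: when $s_t(I') \neq a'$ the deviation is the identity by definition, and when $s_t$ fails to reach $I'$ the strategies $\phi(s_t)$ and $s_t$ agree at every information set lying on any path they traverse (since they differ only at $I \succeq I'$, which are not visited), so they induce the same leaf-form vector and the same payoff. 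Consequently, the unnormalized causal regret collapses to $\sum_{t : E_{I',a'}(p_t) = 1} [u(\phi(s_t), y_t) - u(s_t, y_t)]$.

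Next, I would analyze this per-round payoff difference via the leaf-form representation. Partition the terminal nodes into a set $A$ (those whose root-to-leaf path passes through $I'$) and its complement $B$. On the subsequence: (i) $\phi(s_t)$ and $s_t$ coincide at every $I \not\succeq I'$, hence make the same leaves in $B$ reachable, so the $B$-part of the payoff difference vanishes; (ii) because $s_t$ reaches $I'$, the leaves in $A$ made reachable by $\phi(s_t)$ form a \emph{fixed} subset $L_{s'_i} \subseteq A$ determined entirely by $s'_i$'s play at $I'$ and its descendants. Thus the per-round payoff difference equals $\langle \mathbb{1}_{L_{s'_i}} - \pi_A(s_t), y_t \rangle$, where $\pi_A(s_t)$ is the $A$-restriction of the leaf-form of $s_t$.

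I would then construct the fixed comparator $\hat s_\phi \in \Pi_i$ that (a) plays $s'_i(I)$ at every $I \succeq I'$, (b) plays the unique action leading toward $I'$ at every information set on the root-to-$I'$ path, and (c) plays any fixed consistent choice at all remaining information sets; this ensures $\pi_A(\hat s_\phi) = \mathbb{1}_{L_{s'_i}}$. Applying the hypothesized $\cE$-subsequence regret bound with event $E_{I',a'}$ and external comparator $\hat s_\phi \in \Phi_{\textrm{Ext}}$ gives $\sum_{t : E_{I',a'}(p_t)=1} \langle \pi(\hat s_\phi) - \pi(s_t), y_t \rangle \le \alpha$. Splitting $\pi(\hat s_\phi) - \pi(s_t)$ into its $A$- and $B$-restrictions, the $A$-part matches the causal per-round payoff difference exactly, so the causal regret equals the subsequence regret to $\hat s_\phi$ minus a $B$-leftover term $\sum_{t : E = 1} \langle \pi_B(\hat s_\phi) - \pi_B(s_t), y_t \rangle$.

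The main technical obstacle is controlling this leftover term: $\hat s_\phi$'s fixed play outside the subtree of $I'$ need not match $s_t$'s best-response-driven play, so naively the leftover can have either sign. I expect this to be handled by partitioning the subsequence $E_{I',a'}$ further according to the values $s_t$ takes at information sets outside $\{I : I \succeq I'\}$ (on each piece, $\phi(s_t)$ collapses to a single fixed strategy that one can match by a tailored comparator), and then combining the resulting per-piece subsequence regret bounds; the bookkeeping to ensure these finer events are either in $\cE$ or obtainable by taking suitable intersections/unions of events already in $\cE$ (e.g.\ via the $q_{b_z,E_{I,a}}$ events introduced in Theorem~\ref{thm: efg-subsequence}) is the subtle step I would expect to absorb most of the work.
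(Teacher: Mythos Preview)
Your approach parallels the paper's in spirit—both collapse the causal regret of $\phi_{I',a',s'}$ onto the subsequence $E_{I',a'}$—but you work at the leaf-form level while the paper stays at the level of strategy-modification rules. Your $A$/$B$ decomposition and the observation that (under perfect recall) $\pi_A(\phi(s_t))$ is a fixed set $L_{s'}$ on this subsequence, with $\pi_B(\phi(s_t))=\pi_B(s_t)$, are correct. You are also right to flag the leftover $\sum_{t:E=1}\langle \pi_B(\hat s_\phi)-\pi_B(s_t),\,y_{t,B}\rangle$: when the learner has information sets incomparable to $I'$ (e.g.\ an opponent at the root branches into two disjoint learner subtrees), $\pi_B(s_t)$ genuinely varies with $t$ on the subsequence, so this term need not vanish or have a definite sign.

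Your proposed fix does not go through under the theorem's hypothesis, however. Partitioning $E_{I',a'}$ by $s_t$'s play at incomparable information sets produces \emph{intersections} of events in $\cE$, and the hypothesis only bounds subsequence regret on each $E\in\cE$ individually—not on intersections, and not on the auxiliary events $q_{b_z,E_{I,a}}$ of Theorem~\ref{thm: efg-subsequence} (those are used to \emph{establish} $\cE$-subsequence regret upstream and are not part of the $\cE$ assumed in \emph{this} theorem). Summing per-piece bounds would accumulate a factor equal to the number of pieces, destroying the single-$\alpha$ conclusion. The paper's route is more direct and avoids your leaf-form bookkeeping entirely: it argues (Lemmas~\ref{lem: reachable-causal} and~\ref{lem: causal-subset}) that every reachable causal deviation is \emph{itself} a subsequence deviation in $\Phi_{\textrm{sub}}$, with target strategy $s^!$ equal to $s'$ at $I\succeq I'$ and to the input strategy elsewhere; the bound then follows in one line from Corollary~\ref{cor: subseq-deviation}. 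You should examine whether that identification actually delivers a \emph{fixed} target $s^!$ (independent of the input $s$) as required by the definition of $\Phi_{\textrm{sub}}$, or whether your leftover concern resurfaces there in disguise.
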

The proof can be found in Appendix \ref{app: efg}. An immediate consequence of this result is if all players minimize subsequence regret with respect to the events defined in Theorem \ref{thm: efg-subsequence}, the empirical frequency of play will converge to a solution subset of the set of extensive-form correlated equilibria (EFCE). 

In general, the kinds of subsequence deviations we define are incomparable to the behavioral deviation landscape defined by \cite{morrill2021efficient} or linear swap deviations as studied by  \cite{farina2023polynomial}. It appears that none of these three families of deviations subsumes any of the others.  We note that the machinery we have developed for subsequence deviations is easily able to take into account external context $x$ and past history, and so can be used to give regret conditional on features that may be relevant but are not represented within the action space of the game.

\section{Application: Score-Free Prediction Sets with Anytime Transparent Coverage}
\label{sec:conformal}

\paragraph{Set-valued multiclass prediction} Consider a multi-class prediction problem, in which examples are of the form $(x, y) \in \cX \times \cY$, where $\cX$ is some feature space and $\cY$ is some finite (but possibly large) label space with $|\cY| = k$. 
A standard task in the area of distribution-free uncertainty quantification is to train a model $S: \cX \to 2^\cY$ that, given features $x \in \cX$, outputs a \emph{prediction set} $S(x) \in 2^\cY$. The idea is that, in contrast to regular classification, here the trained model $S$ need not map every feature vector $x \in \cX$ to a single label in $\cY$: instead, it is at liberty to map any $x \in \cX$ to a set consisting of multiple labels at once, thus indicating uncertainty about the prediction. 

A central objective in this setting is to ensure that the prediction sets produced by the model \emph{cover} (i.e.\ include) the true label $y$ of a feature vector $x$ with probability at least $1-\alpha \in [0, 1]$ over the data. More formally, the \emph{coverage probability} $\Pr_{(x, y)}[y \in S(x)]$ must be \emph{no less than} (or, sometimes, approximately \emph{equal to}) a certain prespecified \emph{coverage level} $1-\alpha \in [0, 1]$. We refer to prediction sets satisfying this coverage guarantee as \emph{$(1-\alpha)$-prediction sets}. Typical choices of $\alpha$ include $0.1$ and $0.05$, but other values may be appropriate depending on the domain (e.g., safety-critical applications such as autonomous driving may require a very small $\alpha$). 

\paragraph{A naive approach to building prediction sets} For many multiclass prediction tasks, there exist readily available high-performance pretrained classifiers such as deep neural networks which, as an intermediate layer of their prediction mechanism, estimate a probability vector $\Tilde{p}(x) \in \Delta \cY$ on any input $x \in \cX$. Suppose we could take these probabilities at face value --- i.e., trust that they closely approximate the true conditional label distribution $p(y|x)$. Then, one compelling way to build $(1-\alpha)$-prediction sets would be to: 
\begin{enumerate}
    \item Sort the probabilities $\Tilde{p}(x)$ in nonincreasing order, such that each label $i \in [k]$ gets assigned a position $\sigma_\text{sorted}(i) \in [k]$ in this sorted order, and
    \item Output the prediction set $S(x)$ consisting of the top $k_x$ labels in this sorted order, where $k_x$ is the smallest index such that the sum of predicted probabilities of all chosen labels is at least $1-\alpha$. That is, \[S_\text{sorted}(x) := \left\{i \in [k] \, \Big| \, \sigma_\text{sorted}(i) \in [1, k_x], \text{ where } k_x = \min_{k_\mathrm{thr} \in \mathbb{N}} \sum_{i \in [k]: 1 \leq \sigma_\text{sorted}(i) \leq k_\mathrm{thr}} (\Tilde{p}(x))_i \geq 1-\alpha\right\}.\]
\end{enumerate}

The sorting step in this procedure guarantees that if $\Tilde{p}(x)$ in fact truly represented conditional label probabilities, this procedure would not only give the desired $1-\alpha$ coverage, but would in fact, on average, yield the \emph{smallest} prediction sets with $1-\alpha$ coverage, as measured by the number of labels included in $S(x)$. 

Unfortunately, the caveat here is that such probability scores $\Tilde{p}(x)$, even those output by pretrained models with very low cross-entropy (or other) loss, are not guaranteed to even be close to true conditional probabilities. For this reason, the above procedure will generically fail to obtain its target coverage level. 

\paragraph{Prediction sets with provable coverage guarantees} The field of \emph{conformal prediction}, which has seen substantial development in recent years (see \cite{angelopoulos2021gentle} for a gentle introduction), provides one principled way to address the shortcomings of using predicted probabilities to form prediction sets. Namely, instead of using raw predicted probabilities $\Tilde{p}(x)$, conformal prediction sets are formed with the help of a so-called \emph{non-conformity score} function $s: \cX \times \cY \to \mathbb{R}$, which projects the otherwise high-dimensional problem of coming up with a prediction set onto a one-dimensional quantile estimation problem: For a new example $x$, a prediction set is chosen by selecting all labels $y$ such that $s(x,y) \leq \tau$, for some threshold $\tau \in \mathbb{R}$ which should ideally be a $1-\alpha$ quantile of the distribution on non-conformity scores over examples $(x, y)$ in the calibration dataset. Most conformal prediction methods in both the batch and sequential setting boil down to estimating quantiles of this distribution in various ways. 

Conformal prediction, in contrast to naive set prediction methods, offers \emph{guaranteed} valid coverage: in other words, it provably forms $(1-\alpha)$-prediction sets, and is oblivious to the choice of score $s$ or the underlying data distribution. A disadvantage of this approach is that the (average) prediction set size, and hence the informativeness and usefulness of the produced prediction sets in practice, \emph{is} very sensitive to the choice of non-conformity score. For this reason, significant effort goes into designing custom non-conformity scores $s(x, y)$ for various applications, and such scores are usually not guaranteed to have any form of tight relationship with the true conditional probabilities $p(y|x)$.

\paragraph{Our contribution: Prediction sets with \emph{anytime transparent} coverage} In contrast to conformal prediction, in what follows we present an online algorithm for obtaining prediction sets with valid coverage that directly calibrates (high dimensional) raw predicted probability vectors, rather than working in a one-dimensional projection of the coverage problem. The probability forecasts we produce have a number of desirable properties:

\begin{enumerate}
    \item \emph{Transparent coverage:} Our predicted probability vectors can immediately be used by any downstream prediction set algorithm as if they were the true conditional probabilities --- in particular, the naive sorting-based prediction set algorithm discussed above, when fed our probability vectors as input, will in hindsight, on actual data, obtain exactly (up to vanishing terms) the same (marginal or conditional) coverage that it would obtain if our probabilities were the true conditional class probabilities. 
    %
    
    This transparency property can be understood as a novel \emph{interpretability}, or \emph{trustworthiness}, guarantee for prediction-set-based uncertainty quantification: 
    While existing conformal prediction methods rely on calibrating thresholds for nonconformity scores --- which are not necessarily interpretable as, or guaranteed to look like, true conditional class probabilities --- our method of producing raw predicted class probabilities bestows transparency upon any number of downstream prediction set algorithms, allowing them to produce valid-coverage sets by simply taking our probabilities at face value.
    \item \emph{Anytime coverage:} It is typical for online adversarial learning approaches to specify a fixed time horizon $T$ upfront, such that both the learning rates and the final guarantees are a function of $T$ and may only hold \emph{upon reaching} the time horizon. In contrast, on any prespecified subsequence of data points of length $t$, we are able to attain statistically optimal $O\left(\frac{1}{\sqrt{t}}\right)$ coverage error rates, and we only need a very coarse upper bound on the maximum time horizon $T_\mathrm{max}$ as (1) our method's per-round running time at any round $t$ does not depend on $T_\mathrm{max}$ (instead it depends on $t$ itself), and (2) our coverage error bound only depend \emph{logarithmically} on $T_\mathrm{max}$, i.e.\ is very insensitive to $T_\mathrm{max}$.\footnote{See also Remark~\ref{rem:anytime} on anytime coverage in Section~\ref{sec:general_algorithm}. Also, note that, while all our transparent coverage results are (almost) anytime, our Best-in-Class result, Theorem~\ref{thm:omnipred}, does require a more time-horizon-sensitive setting of a discretization parameter.} Thus, for example, we could pause the algorithm at any point $t \leq T_\mathrm{max}$, and then restart it when more data becomes available, and the transparency and coverage guarantees would continue to smoothly evolve from where we left off. A similar property holds for the one-dimensional adaptive conformal inference methods of~\cite{gibbs2022conformal, zaffran2022adaptive}, but these methods do not accommodate conditional coverage constraints (or have the  other desirable downstream properties of our method).
    \item \emph{Accommodates multiple diverse downstream tasks at once:} Our probability predictions not only help avoid the need to choose or commit to a non-conformity score, but allow us to make predictions that are \emph{simultaneously} useful for a \emph{variety} of downstream tasks. For example, we can predict probabilities that are useful for simultaneously producing prediction sets with different coverage levels (see Corollary~\ref{cor:aspiring-multiple-targets}) and/or aimed at optimizing different objectives.
    \item \emph{Achieves best-in-class prediction quality:} In our probability vector predictions, we are able to incorporate side information in the form of logits of any existing multiclass predictor: in fact, we show the stronger property that, given any finite collection $\cQ$ of such external predictors (which could, for instance, be pretrained NNs), we can increase the quality of our probability vector predictions to nearly match the performance of the logits of the best model in $\cQ$, simultaneously as measured by Brier score, cross-entropy loss, and other well-behaved proper scoring rules. 
    \item \emph{Enforces set-size-conditional validity, multigroup fairness, and other conditional coverage guarantees:} By making our predictions unbiased conditional on appropriate (modestly-sized) families of events, our probability predictions can easily guarantee various forms of conditional validity on the outputs of downstream prediction set algorithms. This includes not only strengthened multigroup fairness guarantees as compared to~\cite{bastanipractical} (in terms of the dependence on the number of demographic groups $|\cG|$), but also the highly desirable property of valid coverage conditional on the prediction set size (dubbed \emph{size-stratified coverage validity} in~\cite{angelopoulos2020uncertainty}), which to our knowledge has not been provably obtained in the adversarial conformal prediction literature until now. More generally, we can efficiently obtain coverage conditional on \emph{any} polynomially sized collection of events that are determined by any combination of external context and our own predictions. 
\end{enumerate}

We now provide (the most general form of) our algorithm: \texttt{Class-Probabilities-for-Prediction-Sets}, which is displayed below as Algorithm~\ref{alg:conformal}. We also informally summarize the statements of its diverse set of donwstream guarantees, which we just listed above. Before doing that, we give an informal definition that captures the behavior of \emph{straightforward} prediction set algorithms whose aim is to achieve valid $(1-\alpha)$-coverage.

\begin{definition*}[Prediction set algorithm]
    A \emph{prediction set algorithm} $S$ is a mapping \[S: \Pi^* \times \cX \times \Delta \cY \to [0, 1]^{|\cY|}.\] 
    
    At each round $t$, it takes in the past history, $\pi_{t-1}$, the new context $x_t$, our predicted class probability vector in this round, $p_t$, and outputs this round's prediction set $S(\pi_{t-1}, x_t, p_t)$ in the format of a vector in $[0, 1]^\cY$, where the entry corresponding to each label $y \in \cY$, $(S(\pi_{t-1}, x_t, p_T)) \in [0, 1]$, designates the probability with which the algorithm includes label $y$ into its prediction set. (If the algorithm is deterministic, then each entry $(S(\pi_{t-1}, x_t, p_T)) \in \{0, 1\}$.
\end{definition*}
This definition is reiterated, somewhat more formally, in Definition~\ref{def:p-set} below.

\begin{definition*}[$(1-\alpha)$-aspiring prediction set algorithm]
    A downstream prediction set algorithm $S$ is called \emph{$(1-\alpha)$-aspiring} if, when fed any predicted probability vector $\tilde{p}(x) \in \Delta \cY$, it always outputs a prediction set that would have coverage $1-\alpha$ if $\tilde{p}(x)$ was the true conditional probability vector.
\end{definition*}

This definition parallels Definition~\ref{def:straightforward-dm} of a straightforward decision maker and essentially adapts it to our uncertainty estimation setting: it captures prediction set algorithms that are naive, or straightforward, in that they are willing to \emph{trust} the predicted class probabilities as if they were correct. For instance, the naive greedy sorting-based algorithm described above is (roughly speaking) $\geq (1-\alpha)$-aspiring. (The precise definition is given in Definition~\ref{def:aspiring}.)

\begin{theorem*}[Guarantees for \texttt{Class-Probabilities-for-Prediction-Sets}]
    With any collection of downstream prediction set algorithms $\cS$ as input, \texttt{Class-Probabilities-for-Prediction-Sets} guarantees, on any sequence of potentially adversarially generated data, for every $S \in \cS$ that:
    \begin{itemize}
        \item The realized \emph{per-label coverage} of $S$ for each label $y \in \cY$ (the online analog of the batch probability $\Pr_{X, Y}[Y = y \, \wedge \, y \in S(X)]$) will be equal, up to a diminishing error, to the \emph{anticipated} per-label coverage for $y$ (i.e., the coverage level on $y$ that $S$ would expect to obtain, based on treating our predicted probabilities at each round as correct). See Theorem~\ref{thm:per-label};
        \item Similarly, the realized marginal coverage of $S$ (the analog of the batch probability $\Pr_{X, Y}[Y \in S(X)]$) will be equal, up to a diminishing error, to the \emph{anticipated} marginal coverage $S$ would obtain if our class probabilities had been correct. See Corollary~\ref{cor:marginal_coverage};
        \begin{itemize}
            \item For any downstream prediction set algorithm that is $(1-\alpha)$-aspiring, this implies valid $(1-\alpha)$ realized \emph{marginal} coverage; see Corollary~\ref{cor:aspiring-marginal}.
        \end{itemize}
        \item ($\text{set-size-conditional?} = Y$) Given as input any \emph{set-size function} defined as\footnote{More generally, $\mathrm{sz}$ can be any bounded mapping from $[0, 1]^{|\cY|}$ to $[0, \infty)$; in that case, we can simply discretize it.} $\mathrm{sz}: [0, 1]^{|\cY|} \to \{0\} \cup [N_\mathrm{maxsz}]$ for some maximum set size $N_\mathrm{maxsz}$, the anticipated and realized coverage will coincide, up to error terms, even conditionally on the prediction set size. See Section~\ref{sec:conditional-coverage};
        \begin{itemize}
            \item For any downstream prediction set algorithm $S$ that is $(1-\alpha)$-aspiring, this implies valid realized \emph{set-size-conditional} $(1-\alpha)$ coverage.
        \end{itemize}
        \item ($\text{multigroup-fair?} = Y$) Given as input any family $\cG$ of \emph{demographic groups} $G: \cX \to [0, 1]$ --- which for any individual with features $x$ determine if they belong ($G(x) = 1$), do not belong ($G(x) = 0$), or belong partially ($G(x) \in (0, 1)$) to a certain (often normatively defined) population group --- the anticipated and realized coverage will coincide, up to error terms, conditionally on every group $G \in \cG$. See Section~\ref{sec:conditional-coverage};
        \begin{itemize}
            \item For any downstream prediction set algorithm $S$ that is $(1-\alpha)$-aspiring, this implies valid realized $(1-\alpha)$ \emph{multigroup} coverage.
        \end{itemize}
        \item ($\text{booster?} = Y$) Generalizing  set-size-conditional and the multigroup coverage, it can be guaranteed for every family $\cB$ of conditional constraints defined by subsequence indicators $B: \Pi^* \times \cX \times \Delta \cY \to [0, 1]$ (which we call `boosters') that decide whether or not to invoke the conditional constraint given any transcript $\pi_{t-1}$, context $x_t$, and prediction $p_t$.  See Section~\ref{sec:conditional-coverage};
        \begin{itemize}
            \item For any downstream prediction set algorithm $S$ that is $(1-\alpha)$-aspiring, this implies valid realized $(1-\alpha)$ \emph{conditional} coverage with respect to the family of conditioning events given by the booster mappings.
        \end{itemize}
        \item ($\text{best-in-class?} = Y$) Given as input any collection of competing class probability predictors $\cQ$ --- where each predictor $q \in \cQ$ outputs, at each round $t$, a predicted class probability vector $q_t \in \Delta \cY$ as a function of the context $x_t$ (and possibly also as a function of past history and even of our class probability predictions in the current round) --- our predictions will be guaranteed to \emph{beat} \emph{all} predictors $q \in \cQ$ simultaneously with respect to \emph{all} $L$-Lipschitz Bregman losses, up to diminishing gap that scales with $L$. (The result also extends to \emph{locally} Lipschitz Bregman losses.) See Theorem~\ref{thm:omnipred} and Corollary~\ref{cor:log-loss}.
    \end{itemize}
    Moreover, at each round $t = 1, 2, \ldots$, the per-round runtime of \texttt{Class-Probabilities-for-Prediction-Sets} is polynomial in $t, |\cY|$, as well as in the size $|\cE^t|$ of the aggregated event collection in that round.
\end{theorem*}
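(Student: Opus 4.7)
The plan is to realize \texttt{Class-Probabilities-for-Prediction-Sets} as a single per-round invocation of \texttt{UnbiasedPrediction} (Algorithm~\ref{alg:unbiased-prediction}) over the prediction space $\cC = \Delta \cY$ (so $d = |\cY|$), fed an aggregated event collection $\cE$ consisting of one subfamily per requested guarantee. Each of the five listed guarantees then follows from Theorem~\ref{thm:main-guarantee} applied to that subfamily, interpreting coordinate-$y$ unbiasedness against an event $E$ as equality (up to an $\alpha(n_T(E,\pi_T))$ error) between two weighted sums over the transcript. Since $|\cE|$ is polynomial in $|\cS|, |\cY|, |\cB|, |\cG|, |\cQ|$ and in the discretization size used for best-in-class, and every constituent event is evaluable in polynomial time, Theorem~\ref{thm:main-guarantee} also directly yields the stated per-round runtime via the FTPL-based solver (Theorem~\ref{thm:FTPL-No-Sampling}).

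For per-label, marginal, and aspiring coverage, for each $S \in \cS$ and each $y \in \cY$ I would put into $\cE$ the real-valued event $E_{S,y}(\pi_{t-1}, x_t, p_t) := (S(\pi_{t-1}, x_t, p_t))_y \in [0,1]$, so that coordinate-$y$ unbiasedness against $E_{S,y}$ unfolds to
\begin{equation*}
    \left|\sum_{t=1}^T (S(\pi_{t-1}, x_t, p_t))_y \, (p_{t,y} - \mathbbm{1}[y_t = y])\right| \le \alpha(n_T(E_{S,y},\pi_T)),
\end{equation*}
which is precisely the statement that the realized per-label coverage on $y$ equals its anticipated counterpart up to an additive error. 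Summing over $y \in \cY$ and using the triangle inequality yields marginal coverage, and invoking the definition of $(1-\alpha)$-aspiring then gives valid realized $(1-\alpha)$-coverage. For the conditional variants (set-size, multigroup, arbitrary booster) I would instead insert into $\cE$ the products $E_{S,y,B}(\pi_{t-1}, x_t, p_t) := (S(\pi_{t-1}, x_t, p_t))_y \cdot B(\pi_{t-1}, x_t, p_t)$, one per allowed conditioning indicator $B$; the same one-line calculation then gives the anticipated-equals-realized identity restricted to the $B$-selected subsequence, and the aspiring implication carries over verbatim.

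The main obstacle is the best-in-class guarantee, which lifts a classical one-dimensional calibeating/omniprediction argument to the vector-valued Bregman setting. I would fix an $\eta$-net $\cN_\eta$ of $\Delta \cY$ in $\ell_\infty$ and, for each $q \in \cQ$ and each $v \in \cN_\eta$, insert into $\cE$ the binary event $E_{q,v}(\pi_{t-1}, x_t, p_t) := \mathbbm{1}[[q_t]_\eta = v]$, where $[\cdot]_\eta$ snaps to the net. Unbiasedness in every coordinate against each such event then says that on every level set of the discretized competitor, our cumulative prediction matches the cumulative label one-hot, coordinatewise, up to vanishing error. To convert this into a Bregman-loss comparison, I would use the standard three-point identity
\begin{equation*}
    \mathrm{Breg}_\phi(y_t, q_t) - \mathrm{Breg}_\phi(y_t, p_t) = \mathrm{Breg}_\phi(p_t, q_t) + \langle \nabla \phi(p_t) - \nabla \phi(q_t),\, y_t - p_t \rangle,
\end{equation*}
sum over $t$, and group the right-hand inner product by level set of $[q_t]_\eta$. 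The inner-product terms are then signed biases conditional on these events, controlled by the unbiasedness bound and by $L$-Lipschitzness of $\nabla \phi$; the $\mathrm{Breg}_\phi(p_t, q_t)$ term is nonnegative and may be dropped; and the snap-to-net cost is bounded by $L \eta T$ via Lipschitzness. Balancing $\eta$ against $T_\mathrm{max}$ yields the claimed rate. This step is the unique reason that the discretization parameter must be tuned with knowledge of a time horizon $T_\mathrm{max}$, and is thus precisely why (as anticipated in Remark~\ref{rem:anytime}) the best-in-class subguarantee is the only one that is not fully anytime.
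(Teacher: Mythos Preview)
Your treatment of the coverage items (per-label, marginal, aspiring, and the conditional variants via boosters, set sizes, and groups) is correct and matches the paper's approach essentially line for line: instantiate $\cE$ with the events $E_{S,y}=(S(\cdot))_y$ and their products with the relevant conditioning indicators, read off coordinate-$y$ unbiasedness from Theorem~\ref{thm:main-guarantee}, and sum over $y$.

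The best-in-class step, however, has a real gap. Your event family only contains the competitor's level sets $E_{q,v}=\mathbbm{1}[[q_t]_\eta=v]$, but after the three-point identity you must control
\[
\sum_t \langle\nabla\phi(p_t)-\nabla\phi(q_t),\,y_t-p_t\rangle
=\sum_t \langle\nabla\phi(p_t),\,y_t-p_t\rangle-\sum_t \langle\nabla\phi(q_t),\,y_t-p_t\rangle.
\]
Grouping by $[q_t]_\eta=v$ and using Lipschitzness handles the second sum, since $\nabla\phi(q_t)\approx\nabla\phi(v)$ is nearly constant on each bucket and the remaining factor is exactly a signed bias against $E_{q,v}$. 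But the first sum is \emph{not} a signed bias conditional on any of your events: the weight $\nabla\phi(p_t)$ varies freely within each $q$-bucket, and nothing in your $\cE$ pins it down. To make the argument go through you must also bucket your own predictor, i.e.\ include (coordinate-wise) level sets of $p_t$ as well; this is precisely the product family $\cE^t_{\mathrm{ls},p}\times\cE^t_{\mathrm{ls},\cQ}$ in Algorithm~\ref{alg:conformal}. The paper's proof of Theorem~\ref{thm:omnipred} then proceeds not via the three-point identity but by using that Bregman divergences elicit means: on each joint $(p,q)$-bucket the empirical label mean minimizes the cumulative divergence, so replacing $p_{t,i}$ by $\mu^i_{a,b}(y)$ (valid up to the bias bound) already beats $\mu^i_{a,b}(q)$. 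A secondary issue: an $\eta$-net of the full simplex $\Delta\cY$ has cardinality exponential in $|\cY|$, so your $|\cE^t|$ would be exponential in $k$, undermining the intended polynomial dependence; the paper's coordinate-wise bucketing uses only $O(k|\cQ|/\delta^2)$ events.
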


\begin{algorithm}[ht]
\begin{algorithmic}
\STATE \textbf{INPUT:} Collection of downstream prediction set algorithms $\cS$, \\ 
\quad \quad \quad \quad \: set-size-conditional? $\in \{\text{Y, N}\}$, 
\quad set size function $\mathrm{sz}: [0, 1]^{|\cY|} \to \{0\} \cup [N_\mathrm{maxsz}]$, \\
\quad \quad \quad \quad \: multigroup-fair? $\in \{\text{Y, N}\}$, 
\quad \quad \,\, collection of protected groups $\cG$, \\ 
\quad \quad \quad \quad \: booster? $\in \{\text{Y, N}\}$, 
\quad \quad \quad \quad \quad \quad collection of booster predictors $\cB$, \\ 
\quad \quad \quad \quad \: best-in-class? $\in \{\text{Y, N}\}$, 
\quad \quad \quad \quad collection of competing predictors $\cQ$
\FOR{rounds $t = 1, 2, \ldots$}
    \STATE \textbf{Receive} context $x_t \in \cX$
    \STATE \textbf{Assemble} event collection $\cE^t$: 
    \STATE For every $S \in \cS$, form: \text{$\cE^t_S := \{E_{S, y}\}_{y \in \cY}$, with $E_{S, y}(p) := (S(\pi_{t-1}, x_t, p))_y$ for $p \in \Delta \cY$ }
    \STATE $\cE_\cS^t \gets \cup_{S \in \cS} \cE^t_S$
    \STATE $\cE^t \gets \cE_\cS^t$ \tcp{transparent per-label coverage}
    \IF{$\text{set-size-conditional?} = Y$}
        \STATE For all $S \in \cS$, form: $\cE_\mathrm{sz, S}^t \gets \{E_{\mathrm{sz}, S, n}\}_{0 \leq n \leq N_\mathrm{maxsz}}$, with $E_{\mathrm{sz}, S, n}(p) := 1[\mathrm{sz}(S(\pi_{t-1}, x_t, p)) = n]$ for all $p$
        \STATE $\cE^t \gets \cE^t \cup \left( \bigcup_{S \in \cS}\cE_S^t \times \cE_\mathrm{sz, S}^t \right)$
    \ENDIF
    \IF{$\text{multigroup-fair?} = Y$}
        \STATE $\cE_\cG^t \gets \{E_G\}_{G \in \cG}$, where $E_{G} := G(x_t)$ \tcp*{each $G$ stands for a context-defined demographic group}
        \STATE $\cE^t \gets \cE^t \cup \cE_\cS^t \times \cE_\cG^t$
    \ENDIF
    \IF{$\text{booster?} = Y$}
        \STATE $\cE_\cB^t \gets \{E_B\}_{B \in \cB}$, where $E_{B}(p) := B(\pi_{t-1}, x_t, p)$ for $p \in \Delta \cY$ \tcp*{boosters find miscovered regions}
        \STATE $\cE^t \gets \cE^t \cup \cE_\cS^t \times \cE_\cB^t$
    \ENDIF
    \IF{$\text{best-in-class?} = Y$}
        \STATE $\cE_{\mathrm{ls}, p}^t \gets \{E_{p, n, i}\}_{n \in \{0\} \cup [\lceil 1/\delta \rceil], i \in [k]}$, where $E_{p, n, i}(p) := 1[p_{i} \approx n \delta]$ for $p \in \Delta \cY$  \tcp*{$\delta$-level sets of our predictor; see Definition~\ref{def: ls}, and see Theorem~\ref{thm:omnipred} for the appropriate setting of $\delta$} 
        \STATE $\cE_{\mathrm{ls}, \cQ}^t \gets \{E_{q, n, i}\}_{q \in \cQ, n \in \{0\} \cup [\lceil 1/\delta \rceil], i \in [k]}$, where $E_{q, n, i}(p) := 1[(q(\pi_{t-1}, p))_i \approx n \delta]$ for $p \in \Delta \cY$ \tcp*{$\delta$-level sets of all competing predictors $q \in \cQ$}
        \STATE $\cE^t \gets \cE^t \cup \cE_{\mathrm{ls}, p}^t \times \cE_{\mathrm{ls}, \cQ}^t$
    \ENDIF
    \STATE \textbf{Predict} $P_t \gets \texttt{UnbiasedPrediction}(\cE^t, t, \pi_{t-1}, x_t)$ \tcp*{$P_t \in \Delta \Delta \cY$ is a mixture over probability vectors}
    \STATE \textbf{Sample} realized predicted probability vector $p_t \sim P_t$
    \STATE \textbf{Receive} correct label $y_t \in \cY$ from adversary
    \STATE \textbf{Update} transcript: $\pi_t \gets \pi_{t-1} \cup (x_t, p_t, y_t)$
\ENDFOR

\end{algorithmic}
\caption{\texttt{Class-Probabilities-for-Prediction-Sets}}
\label{alg:conformal}
\end{algorithm}

\subsection{Online Adversarial Multiclass Set Prediction Preliminaries}

\paragraph{Setting} We let $\cX$ be the feature space and $\cY$ be the label space with $|\cY| = k$. When it is notationally convenient for indexing purposes, we without loss of generality assume $\cY = \{1, \ldots, k\}$. Examples arrive---and predictions about them are made---in a sequential fashion, in rounds $t = 1, 2, \ldots$. Specifically, in each round $t$:

\begin{enumerate}
    \item A new example arrives, and its features $x_t \in \cX$ are revealed to the learner.
    \item The learner computes a (realized) probability prediction $p_t \in \Delta \cY$ as a function of past rounds' history and the observed features $x_t$.
    \item The (realized) prediction set $S_t \subseteq \cY$ is generated as a function of $p_t$.
    \item The realized label $y_t \in \cY$ is revealed by the adversary.
\end{enumerate}

\paragraph{Prediction set algorithms} We now define the notion of a prediction set algorithm, which is responsible for forming prediction sets $S_t$ in all rounds $t$ as a function of past history, the current context, and our (the learner's) predicted multiclass probability vector. Intuitively, this algorithm may be viewed as representing a downstream decision maker who takes in our probability predictions and builds a prediction set so as to optimize whichever utility function (e.g., a notion of average prediction set size) it has.

\begin{definition}[Prediction set algorithm] \label{def:p-set}
    A \emph{prediction set algorithm} $S$ is a mapping \[S: \Pi^* \times \cX \times \Delta \cY \to [0, 1]^{|\cY|}.\] 
    The first input to $S$ is the transcript $\pi_{t-1}$ up until the current round $t$, thus allowing the algorithm's decisions to depend on past history. 
    The second input takes in the features of the example at round $t$. The third input captures the dependence of the algorithm's decisions in round $t$ on our predicted probability vector $p_t \in \Delta  \cY$. 
\end{definition}

For any $\pi_{t-1}, x$ and $p_t$, the algorithm's output $S(\pi_{t-1},x, p_t)$ is a $|\cY|$-dimensional vector; by convention here and below, we will index it by $y \in \cY$. The interpretation of the algorithm's output is that for any $y \in \cY$, the algorithm $S$ will include label $y$ in the realized prediction set $S_t$ with probability $(S(\pi_{t-1},x, p_t))_y \in [0, 1]$ over its internal randomness. Note that the entries of $S(\pi_{t-1}, x, p_t)$ will generally not sum to one: since $S$ outputs \emph{prediction sets} rather than singular labels, the output $S_t$ is allowed to contain multiple labels at once.

As an important special case of the above definition, a \emph{deterministic} prediction set algorithm is defined as a \emph{binary} vector valued mapping \[S: \Pi^* \times \cX \times \Delta \cY \to \{0, 1\}^{|\cY|}.\] A prediction set algorithm that is not deterministic is referred to as randomized.

When convenient, for any label $y$, instead of $(S(t,x, p_t))_y$ we may alternatively write $\Pr[y \in S_t]$ when $S$ is randomized and $\textbf{1}[y \in S_t]$ when $S$ is deterministic, suppressing the dependence on $\pi_{t-1}$, $x_t$ and $p_t$. 

\paragraph{Coverage metrics}
We now define the notions of (marginal and per-label) \emph{realized} and \emph{anticipated} coverage of a prediction set algorithm on any transcript. As discussed above, a natural desideratum in this setting is to ensure that the realized coverage of the prediction set algorithm meets some prespecified target level.

\begin{definition} \label{def:realized-marginal}
    The \emph{realized (marginal) coverage} of a prediction set algorithm $S$ on a $T$-round transcript $\pi_T$ is denoted:
    \[\overline{\Pr}_T[Y \in S] := \frac{1}{T} \sum_{t = 1}^T \Pr[y_t \in S_t],\]
    where $y_t$, for $t \in [T]$, are the realized labels.
\end{definition}
Note that this is simply the \emph{empirical probability} of the realized labels belonging to the prediction sets produced by the algorithm.

\begin{definition} \label{def:anticipated-marginal}
    The \emph{anticipated (marginal) coverage} of a prediction set algorithm $S$ on a $T$-round transcript $\pi_T$ is denoted: \[\widetilde{\Pr}_T[Y \in S] := \frac{1}{T} \sum_{t = 1}^T \sum_{y \in \cY} p_{t, y} \cdot \Pr[y \in S_t],\]
    where $p_{t, y}$ are the learner's predictions. Note that the anticipated coverage does not depend on the actual realized labels $y_t$.
\end{definition}
The interpretation is simple: If our predicted probability vector $p_{t,y}$ were to be taken at face value, i.e.\ assuming that the labels $y_t$ are sampled from the distribution $p_t$, then $\widetilde{\Pr}_T[Y \in S]$ would be exactly the expected coverage that the algorithm would get after round $T$, where the expectation is taken both over the randomness of $S$ and the randomness of the labels $y_t$ being sampled from the distributions $p_t$. Note that to evaluate the algorithm's anticipated coverage $\widetilde{\Pr}_T[Y \in S]$, one needs to know our predictions $p_t$, but \emph{not} the realized labels.

\begin{definition}[Per-label realized and anticipated coverage] \label{def:per-label}
    For any label $y \in \cY$, we define the \emph{realized} and \emph{anticipated} coverage that a prediction set algorithm $S$ achieves \emph{on label $y$ alone} over a $T$-round transcript $\pi_T$, analogously to the marginal coverage metrics defined above:
    \[\overline{\Pr}_T[y \in S] := \frac{1}{T} \sum_{t = 1}^T 1[y = y_t] \cdot \Pr[y \in S_t], \quad \text{and } 
    \widetilde{\Pr}_T[y \in S] := \frac{1}{T} \sum_{t = 1}^T p_{t, y} \cdot \Pr[y \in S_t].
    \]
\end{definition}
The just defined notion of per-label coverage will serve as a key stepping stone towards showing our transparent marginal coverage guarantee for any downstream algorithm $S$. Namely, we will (in Theorem~\ref{thm:per-label}) show the stronger guarantee of \emph{transparent per-label coverage} for any $S$. For intuition, it is instructive to compare \emph{per-label} coverage, which we just introduced, to \emph{label-conditional} coverage, a common object of study in the area of conditional conformal coverage guarantees (\cite{vovk2012conditional}). In the batch setting, the former would correspond to $\Pr_{(X, Y)}[Y = y \, \wedge \, y \in S(X)]$, whereas the latter would translate to $\Pr_{(X, Y)}[Y \in S(X) | Y = y]$. Thus, per-label coverage is a joint probability (of two events coinciding: label $y$ being selected as the correct label, and it being included in the prediction set by $S$), while label-conditional coverage is the corresponding conditional probability.

We observe that for any prediction set algorithm $S$, its time-averaged (anticipated or realized) marginal coverage is simply the sum over all labels $y \in \cY$ of its label-specific (anticipated or realized) coverage values:
\begin{equation} \label{eq:per-label-to-marginal}
    \overline{\Pr}_T[Y \in S] = \sum_{y \in \cY} \overline{\Pr}_T[y \in S], \quad \text{and } \widetilde{\Pr}_T[Y \in S] = \sum_{y \in \cY} \widetilde{\Pr}_T[y \in S].
\end{equation}
This observation will later be useful in translating our per-label transparent coverage guarantees (Theorem~\ref{thm:per-label}) into marginal transparent coverage guarantees (Corollary~\ref{cor:marginal_coverage}).





\subsection{Anytime Transparent Coverage: Marginal, Per-Label, and Conditional}

In this section, we show how to apply our \texttt{UnbiasedPrediction} algorithm in order to produce predictions of per-label probabilities $p_t \in \Delta \cY$ that provide transparent and flexible coverage guarantees for any downstream algorithm $S$ that produces prediction sets $S_t := S(\pi_{t-1} \cup \{x_t\}, p_t)$ as a function of past history and our current predictions. 


Intuitively, since our predictions are to be used by $S$ to compute a prediction set, we would like to make them unbiased conditional on the collection of events, for each label $y \in \cY$, that $y$ is included by $S$ in the prediction set once $S$ sees our predicted probabilities. (This may remind the reader of the ``regret'' constructions in the prior sections, where we condition on the collection of events defined by the downstream decision maker's best-response correspondence.)

Thus, we instantiate \texttt{UnbiasedPrediction} on the region $C_\mathrm{multiclass} = \Delta \cY \subset \R^k$ with a natural collection of $k$
events: $\cE_S := \{E_{S,y}\}_{y \in \cY}$, where each event $E_{S,y}$ is defined by $E_{S,y}(\pi_{t-1}, x_t, p_t) = (S(\pi_{t-1} \cup \{x_t\}, p_t))_y$, encoding the probabilities of inclusion of each fixed label $y$ into the prediction sets $S_t$ across rounds $t$.  

For instance, when the prediction set algorithm $S$ is deterministic, all events $E_{S,y}$ are binary, and simply represent the indicator of whether or not, given the predictions $p_t$ made on that round, the label $y$ is included in the prediction set $S_t$.

In fact, as we will show in Theorem~\ref{thm:per-label} below, we can achieve transparent coverage simultaneously for multiple downstream algorithms $S$ belonging to any finite collection $\cS$. It will suffice to instantiate \texttt{UnbiasedPrediction} with $\cE_\cS = \cup_{S \in \cS} \cE_S$ the \emph{union} of event families $\cE_S$ over all algorithms $S \in \cS$. As a result, the runtime of our method will only scale linearly in $|\cS|$, while the coverage error bound will only pick up a $\log |\cS|$ dependence.



\paragraph{On the dependence of \texttt{Class-Probabilities-for-Prediction-Sets} on input prediction set algorithms $S$:} Our algorithm only makes \emph{oracle queries} to the downstream prediction set algorithm $S$ at each round $t$. It completely disregards the internals of $S$, and only needs to know (for a deterministic algorithm) which labels the algorithm includes in its prediction set as a function of the underlying probability forecast $p_t$, or (for a randomized algorithm) the probability that each label is included in its prediction set.

In this manner, \texttt{Class-Probabilities-for-Prediction-Sets} functions as a wrapper around any prediction set algorithm $S$ (or even a collection of algorithms $\cS$). Given the computational efficiency guarantees for \texttt{UnbiasedPrediction} established in Theorem~\ref{thm:main-guarantee}, the ``wrapped'' prediction algorithm will thus be computationally efficient insofar as the downstream prediction algorithm $S$ is efficient and $\cY$ is enumerable --- i.e. the running time will be polynomial in $|\cY|$ and the running time of $S$. 

\paragraph{Transparent coverage guarantees:} We now show that \texttt{Class-Probabilities-for-Prediction-Sets} guarantees, no matter the downstream prediction set algorithm(s), that the realized and anticipated coverage for every label $y \in \cY$ will coincide, up to $O(1/\sqrt{T})$ --- or better --- error terms.

\begin{theorem}[Transparent Per-Label Coverage] \label{thm:per-label}
    Fix any collection $\cS$ of prediction set algorithms, where each algorithm $S \in \cS$ uses our method's probability vector predictions $p_t \in \Delta \cY$ in every round $t$. If we instantiate \texttt{Class-Probabilities-for-Prediction-Sets} with the event collection 
    \begin{equation} \label{eq:marginal-events}
        \cE_\cS := \{E_{S, y}\}_{S \in \cS, y \in \cY}, \text{ with } E_{S, y}(\pi_{t-1}, x_t, p) := (S(\pi_{t-1} \cup \{x_t\}, p))_y \text { for $p \in \Delta \cY$},
    \end{equation}
    then in expectation over $P_1, P_2, \ldots \in \Delta \Delta \cY$, the distributions \texttt{Class-Probabilities-for-Prediction-Sets} outputs, 
    the \emph{anticipated} and \emph{realized} coverage of every label $y \in \cY$ by every prediction set algorithm $S \in \cS$ will be approximately equal at every round $T \in [T_\mathrm{max}]$ (where $T_\mathrm{max}$ is the maximum time horizon):
    \[ \E_{p_t \sim P_t, \, \forall t} \left[ \left|\overline{\Pr}_T[y \in S] - \widetilde{\Pr}_T[y \in S] \right| \right]
    \leq O\left(\frac{\ln(k |\cS| T_\mathrm{max})+ \sqrt{\ln(k |\cS| T_\mathrm{max}) \cdot \sum_{t=1}^T \E_{p_t \sim P_t} \left[\Pr[y \in S_t] \right]}}{T}\right).\]
\end{theorem}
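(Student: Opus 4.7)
The plan is to directly reduce the claim to Theorem~\ref{thm:main-guarantee}, which bounds the event-conditional bias achieved by \texttt{UnbiasedPrediction}. The key observation is that our multiclass setting fits exactly into the general prediction framework of Section~\ref{sec:general_algorithm} once we identify the realized label $y_t \in \cY$ with its one-hot encoding $\hat{y}_t \in \Delta \cY$, i.e.\ the vector in $\R^k$ with $(\hat{y}_t)_y = \mathbbm{1}[y = y_t]$. Under this identification, the prediction space is $\cC = \Delta \cY \subset \R^k$, which is convex and compact of dimension $d = k$ (and whose $L_\infty$-diameter is bounded, as required up to the standard constant rescaling).

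First, I would unfold the two coverage quantities using Definition~\ref{def:per-label} and the definition of $E_{S,y}$ given in the theorem statement. Since $\Pr[y \in S_t] = (S(\pi_{t-1} \cup \{x_t\}, p_t))_y = E_{S,y}(\pi_{t-1}, x_t, p_t)$ and $\mathbbm{1}[y = y_t] = \hat{y}_{t,y}$, the two quantities differ as
\[
\widetilde{\Pr}_T[y \in S] - \overline{\Pr}_T[y \in S]
= \frac{1}{T}\sum_{t=1}^T (p_{t,y} - \hat{y}_{t,y}) \cdot E_{S,y}(\pi_{t-1}, x_t, p_t).
\]
This is exactly the per-coordinate bias in coordinate $y$ conditional on the event $E_{S,y}$, as in Definition~\ref{def:unbiased}.

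Next, I would apply Theorem~\ref{thm:main-guarantee} to the event collection $\cE_\cS$ defined in \eqref{eq:marginal-events}, which has size $|\cE_\cS| = k |\cS|$. For each fixed pair $(S, y)$, the theorem yields
\[
\E_{p_t \sim P_t, \forall t}\!\left[\left|\sum_{t=1}^T (p_{t,y} - \hat{y}_{t,y}) E_{S,y}(\pi_{t-1}, x_t, p_t)\right|\right]
\leq O\!\left(\ln(2k|\cE_\cS|T) + \sqrt{\ln(2k|\cE_\cS|T) \sum_{t=1}^T \E_{p_t \sim P_t}\!\left[E_{S,y}(\pi_{t-1}, x_t, p_t)^2\right]}\right),
\]
which, after substituting $|\cE_\cS| = k|\cS|$ and absorbing the extra logarithmic factor, matches the desired $\ln(k|\cS|T_\mathrm{max})$ dependence. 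To convert the $E_{S,y}^2$ term inside the square root into the stated $\Pr[y \in S_t]$ term, I would use the simple fact that $E_{S,y}$ takes values in $[0, 1]$, so $E_{S,y}^2 \leq E_{S,y} = \Pr[y \in S_t]$. Dividing through by $T$ then gives the stated bound.

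Finally, the anytime aspect, i.e.\ that the bound holds at every $T \leq T_\mathrm{max}$ while only depending logarithmically on $T_\mathrm{max}$, follows from Remark~\ref{rem:anytime} applied to the underlying \texttt{UnbiasedPrediction} call. The proof is essentially a bookkeeping exercise on top of the unbiasedness machinery; the only ``conceptual'' step is recognizing that the per-label coverage gap is precisely an event-conditional bias term of the kind that \texttt{UnbiasedPrediction} is designed to control, and choosing the event family $\cE_\cS$ to match. I do not expect any serious obstacle.
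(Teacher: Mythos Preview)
Your proposal is correct and follows essentially the same approach as the paper: identify the realized label with its one-hot encoding in $\Delta\cY$, recognize that the per-label coverage gap is precisely the coordinate-$y$ bias conditional on the event $E_{S,y}$, apply Theorem~\ref{thm:main-guarantee} with $d=k$ and $|\cE_\cS|=k|\cS|$, bound $E_{S,y}^2\le E_{S,y}$ since the events are $[0,1]$-valued, and divide by $T$. The paper's proof is structurally identical, and your explicit invocation of Remark~\ref{rem:anytime} for the $T_\mathrm{max}$ dependence is exactly the right way to handle the anytime aspect.
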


\begin{proof}
    Let $\cC_\mathrm{multiclass} := \Delta \cY \subset \R^k$. Suppose in each round, we predict and receive realizations from the space $\cC_\mathrm{multiclass}$, where at each round $t$, the adversary picks a vector in $\{\text{standard basis vectors in } \R^k\} \subset \cC_\mathrm{multiclass}$, and is interpreted as $(1[y = y_t])_{y \in \cY}$ --- the vector that indicates which label $y \in \cY$ was realized in round $t$.

    Accordingly, we abuse notation and index the coordinates of our \emph{predicted} vectors $p_t$ by $y \in \cY$ as well. In the current multiclass setting, we interpret our chosen predicted vector $p_t$ as a distribution over the labels.
    
    We now instantiate Theorem~\ref{thm:main-guarantee} on the space $\cC_\mathrm{multiclass}$ with the collection of events $\cE_\cS$ defined as in Equation~\ref{eq:marginal-events}.
    For each $S \in \cS$ and $y \in \cY$, the event $E_{S, y}$ is defined simply as the probability in round $t$ that $S$ would include label $y$ into the prediction set, given features $x_t$ and our prediction $p_t$. In other words, we have, over the internal randomness of the set selection algorithm $S$ given the current round's inputs, that
    \[E_{S, y}(\pi_{t-1}, x_t, p_t) = \Pr[y \in S_t].\]
    
    Abusing notation and indexing the coordinates of the predicted vector $p_t$ and the realized vector $y_t$ by $y \in \cY$ rather than $i \in [k]$, we then have, for all $E_{S, y} \in \cE_\cS$, any $T$, some constant $c' > 0$, and per-round distributions $\psi_t$ over the predictions output by our unbiased prediction algorithm, that
    \[
    \E_{p_t \sim \psi_t \, \forall t}  \left|\sum_{t=1}^T E_{S, y}(x_t, p_t) \cdot \left(p_{t,y} - 1[y = y_t] \right)\right|
    \leq c' \ln(2 k^2 |\cS| T_\mathrm{max})+ c' \sqrt{\ln(2 k^2 |\cS| T_\mathrm{max}) \cdot \sum_{t\in [T]} \E_{p_t \sim \psi_t} [(E_{S, y}(x_t, p_t))^2]}.
    \]
    Here, we used that $d = |\cE_\cS| = k |\cS|$. Now, rename the distributions over predictions from $\psi_t$ into $P_t$. Dropping the squares on the events inside the square root (since our events' values are in $[0, 1]$) and observing that \[\sum_{t=1}^T E_{S, y}(t, x_t, p_t) \cdot \left(p_{t,y} \!\!-\!\! 1[y \!=\! y_t] \right) = \sum_{t=1}^T \Pr[y \in S_t] \cdot p_{t, y} - \sum_{t=1}^T \Pr[y \in S_t] \cdot 1[y \!=\! y_t] = T ( \overline{\Pr}_T[y \in S] - \widetilde{\Pr}_T[y \in S]),\] we obtain, for every prediction set algorithm $S \in \cS$ and label $y \in \cY$, that
    \[
    \E_{p_t \sim P_t \, \forall t} \left[ T \cdot \left|\overline{\Pr}_T[y \in S] - \widetilde{\Pr}_T[y \in S]\right| \right]
    \leq c' \ln(2 k^2 |\cS| T_\mathrm{max})+ c' \sqrt{\ln(2 k^2 |\cS| T_\mathrm{max}) \cdot \sum_{t\in [T]} \E_{p_t \sim P_t} [\Pr[y \in S_t]]},
    \]
    and dividing through by $T$ yields the desired claim.
\end{proof}

Recalling that marginal (realized or anticipated) coverage is just the sum of per-label (realized or anticipated) empirical coverage probabilities over all labels, and applying the triangle inequality, we thus obtain, as a direct consequence of Theorem~\ref{thm:per-label}, the corresponding \emph{marginal} coverage guarantees for the instantiation of \texttt{Class-Probabilities-for-Prediction-Sets} with the same event family $\cE_\cS$: 

\begin{corollary}[Transparent Marginal Coverage] \label{cor:marginal_coverage}
 Fix any collection $\cS$ of prediction set algorithms, where each algorithm $S \in \cS$ uses our method's probability vector predictions $p_t \in \Delta \cY$ in every round $t$. If we instantiate \texttt{Class-Probabilities-for-Prediction-Sets} with the event collection as in Equation~\ref{eq:marginal-events}, i.e.:
    \[
        \cE_\cS := \{E_{S, y}\}_{S \in \cS, y \in \cY}, \text{ with } E_{S, y}(\pi_{t-1}, x_t, p) := (S(\pi_{t-1} \cup \{x_t\}, p))_y \text { for $p \in \Delta \cY$},
    \]
    then in expectation over $P_1, P_2, \ldots \in \Delta \Delta \cY$, the distributions \texttt{Class-Probabilities-for-Prediction-Sets} outputs, 
    the \emph{anticipated} and \emph{realized} marginal coverage of every prediction set algorithm $S \in \cS$ will be approximately equal at every round $T \in [T_\mathrm{max}]$ (where $T_\mathrm{max}$ is the maximum time horizon):
    \begin{align*}
        &\E_{p_t \sim P_t, \, \forall t} \left[ \left|\overline{\Pr}_T[Y \in S] - \widetilde{\Pr}_T[Y \in S] \right| \right] \\
        &\leq O\left(\frac{|\cY| \ln(|\cY| |\cS| T_\mathrm{max})}{T} + \frac{1}{T} \sum_{y \in \cY} \sqrt{\ln(|\cY| |\cS| T_\mathrm{max}) \cdot \sum_{t\in [T]} \E_{p_t \sim P_t} \left[\Pr[y \in S_t] \right]}\right) 
        \leq \tilde{O} \left(\frac{|\cY| \ln(|\cY| |\cS| T_\mathrm{max})}{\sqrt{T}}\right).
    \end{align*}
\end{corollary}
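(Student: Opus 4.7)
The plan is to reduce the marginal coverage claim to the per-label coverage guarantee established in Theorem~\ref{thm:per-label}, exploiting the additive decomposition in Equation~\ref{eq:per-label-to-marginal}. The key observation is that both marginal quantities decompose as sums of per-label quantities: $\overline{\Pr}_T[Y \in S] = \sum_{y \in \cY} \overline{\Pr}_T[y \in S]$ and $\widetilde{\Pr}_T[Y \in S] = \sum_{y \in \cY} \widetilde{\Pr}_T[y \in S]$. Hence, by the triangle inequality, $\E[|\overline{\Pr}_T[Y \in S] - \widetilde{\Pr}_T[Y \in S]|] \leq \sum_{y \in \cY} \E[|\overline{\Pr}_T[y \in S] - \widetilde{\Pr}_T[y \in S]|]$. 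Since the event family $\cE_\cS$ in the hypothesis of the corollary is identical to the one used in Theorem~\ref{thm:per-label}, we may plug the per-label bound into each summand directly.

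The first (logarithmic) piece of the per-label bound, when summed over $y \in \cY$, contributes the term $O(|\cY| \ln(|\cY||\cS|T_\mathrm{max}) / T)$, matching the first term of the corollary's stated bound. For the second (square-root) piece, summation over $y$ gives $\frac{1}{T} \sum_{y \in \cY} \sqrt{\ln(|\cY||\cS|T_\mathrm{max}) \cdot \sum_{t=1}^T \E_{p_t \sim P_t}[\Pr[y \in S_t]]}$, which already matches the second term written in the corollary. The remaining step is then to simplify this second term into the clean $\tilde{O}(|\cY| \ln(|\cY||\cS|T_\mathrm{max}) / \sqrt{T})$ form stated on the right of the inequality.

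For that final simplification, I would apply Cauchy--Schwarz to the sum over labels: writing $a_y := \sum_{t=1}^T \E[\Pr[y \in S_t]]$, we have $\sum_{y \in \cY} \sqrt{a_y} \leq \sqrt{|\cY| \sum_{y \in \cY} a_y}$. Swapping the order of summation yields $\sum_{y \in \cY} a_y = \sum_{t=1}^T \E[\sum_{y \in \cY} \Pr[y \in S_t]] = \sum_{t=1}^T \E[|S_t|] \leq T \cdot |\cY|$, since any prediction set has size at most $|\cY|$. Combining these gives $\sum_{y \in \cY} \sqrt{a_y} \leq |\cY| \sqrt{T}$, so the second-piece contribution is at most $O(|\cY| \sqrt{\ln(|\cY||\cS|T_\mathrm{max}) / T})$. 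Adding the two contributions yields the claimed $\tilde{O}(|\cY| \ln(|\cY||\cS|T_\mathrm{max})/\sqrt{T})$ rate.

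No step here poses a serious obstacle: the entire argument is a bookkeeping exercise on top of Theorem~\ref{thm:per-label}. The only mild subtlety is the Cauchy--Schwarz step, which is needed because a naive upper bound $\Pr[y \in S_t] \leq 1$ applied under the square root before summing would produce a suboptimal $|\cY| \sqrt{|\cY|}$ factor; using Cauchy--Schwarz together with the prediction-set size bound $\sum_y \Pr[y \in S_t] \leq |\cY|$ is what keeps the dependence on $|\cY|$ linear rather than $|\cY|^{3/2}$.
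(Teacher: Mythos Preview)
Your proposal is correct and matches the paper's approach exactly: the paper states that the corollary follows from Theorem~\ref{thm:per-label} by the additive decomposition in Equation~\ref{eq:per-label-to-marginal} together with the triangle inequality, which is precisely what you do. One minor inaccuracy in your commentary (not in the proof itself): the Cauchy--Schwarz step is not actually needed to avoid a $|\cY|^{3/2}$ factor, since the naive bound $\Pr[y \in S_t] \leq 1$ already gives $a_y \leq T$ and hence $\sum_{y} \sqrt{a_y} \leq |\cY|\sqrt{T}$, the same linear-in-$|\cY|$ result.
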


\paragraph{Application: Interpretable Conformal-Style Valid Coverage without Nonconformity Scores}

We now spell out a sample application of \texttt{Class-Probabilities-for-Prediction-Sets} instantiated with the collection of events $\cE = \cE_\cS$ for some family of downstream prediction set algorithms $\cS$, to illustrate how our method can easily produce $(1-\alpha)$-prediction sets, thereby achieving the main desideratum of e.g.\ conformal prediction. We begin with a definition:

\begin{definition}[$(1-\alpha)$-Aspiring Prediction Set Algorithm] \label{def:aspiring}
    Fix $\alpha \in [0, 1]$. A prediction set algorithm $S: \Pi^* \times \cX \times \Delta \cY \to [0, 1]^{|\cY|}$ is called \emph{$(1-\alpha)$-aspiring} if for all $\pi \in \Pi^*. x \in \cX, p = (p_y)_{y \in \cY} \in \Delta \cY$:
    \[
    \sum_{y \in \cY} p_y \cdot (S(\pi, x, p))_y = 1 - \alpha.
    \]
\end{definition}
Importantly, the property of an algorithm $S$ being $(1-\alpha)$-aspiring guarantees that, when it uses our predicted class probabilities in each round $t$, its \emph{anticipated marginal coverage} is equal to $1-\alpha$ --- and not just on average over all rounds $t \in [T]$, but also \emph{on any given round $t$}.

Perhaps the simplest example of an aspiring prediction set algorithm $S$ is a fractional variant of the naive algorithm discussed at the beginning of this section: 

\begin{enumerate}
    \item Given a predicted probability vector $p$, order labels $y \in \cY$ in decreasing order of $p_y$ (ignoring transcript $\pi$ and features $x$).
    \item Start with an empty prediction set, and keep adding labels to it in decreasing order of $p_y$, until the anticipated coverage (i.e., the running sum of predicted probabilities) has become $\geq 1-\alpha$.
    \item If the final running sum of $p_y$ has overshot $1-\alpha$, the last included label $y_\mathrm{last}$ should be included ``fractionally'', i.e., $(S(\pi, x, p))_{y_\mathrm{last}} \in (0, 1)$, to satisfy \emph{exactly} $1-\alpha$ anticipated coverage.
\end{enumerate}


Observe that at every round $T$, the anticipated coverage of any $(1-\alpha)$-aspiring algorithm $S$ satisfies: 
\[
    \widetilde{\Pr}_T[Y \in S] = \frac{1}{T} \sum_{t = 1}^T \sum_{y \in \cY} p_{t, y} \cdot \Pr[y \in S_t] = \frac{1}{T} \sum_{t=1}^T (1-\alpha) = 1-\alpha.
\]
Consequently, Corollary~\ref{cor:marginal_coverage} implies the following about $(1-\alpha)$-aspiring algorithms that use our predictions: 

\begin{corollary}[Valid realized marginal coverage for $(1-\alpha)$-aspiring prediction set algorithms] \label{cor:aspiring-marginal}
    For any family $\cS$ of $(1-\alpha)$-aspiring prediction set algorithms, instantiating \texttt{Class-Probabilities-for-Prediction-Sets} with the collection of events $\cE = \cE_\cS$ guarantees $1-\alpha$ marginal coverage \emph{to all algorithms $S \in \cS$ simultaneously}, at all times $T \leq T_\mathrm{max}$, with deviation from exact $1-\alpha$ coverage decaying as (or better than) $\tilde{O}\left(\frac{|\cY| \ln(|\cY| |\cS| T_\mathrm{max})}{\sqrt{T}}\right)$.
\end{corollary}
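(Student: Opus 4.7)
The plan is to observe that Corollary~\ref{cor:aspiring-marginal} is essentially an immediate two-step consequence of Corollary~\ref{cor:marginal_coverage} combined with Definition~\ref{def:aspiring}. There is no heavy lifting to do --- the technical work has already been done in establishing the transparent marginal coverage bound for the event collection $\cE_\cS$, so this corollary is just a reinterpretation of it in the special case where every downstream algorithm is $(1-\alpha)$-aspiring.

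First, I would unpack the implication of the $(1-\alpha)$-aspiring property. By Definition~\ref{def:aspiring}, for every $S \in \cS$, every transcript $\pi$, every context $x$, and every probability vector $p \in \Delta \cY$, we have $\sum_{y \in \cY} p_y \cdot (S(\pi, x, p))_y = 1 - \alpha$. Plugging this into the definition of anticipated marginal coverage (Definition~\ref{def:anticipated-marginal}) gives, for any realized run of \texttt{Class-Probabilities-for-Prediction-Sets} and any $T \leq T_\mathrm{max}$,
\[
\widetilde{\Pr}_T[Y \in S] = \frac{1}{T} \sum_{t=1}^T \sum_{y \in \cY} p_{t,y} \cdot (S(\pi_{t-1} \cup \{x_t\}, p_t))_y = \frac{1}{T} \sum_{t=1}^T (1-\alpha) = 1-\alpha,
\]
pointwise (i.e., not merely in expectation) over the internal randomness of our algorithm.

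Second, I would apply Corollary~\ref{cor:marginal_coverage} to the event collection $\cE_\cS$ exactly as stated. That corollary gives, for every $S \in \cS$ and every $T \leq T_\mathrm{max}$,
\[
\E_{p_t \sim P_t, \, \forall t}\bigl[\, \bigl|\overline{\Pr}_T[Y \in S] - \widetilde{\Pr}_T[Y \in S]\bigr| \,\bigr] \leq \tilde{O}\!\left(\frac{|\cY| \ln(|\cY||\cS|T_\mathrm{max})}{\sqrt{T}}\right).
\]
Substituting the identity $\widetilde{\Pr}_T[Y \in S] = 1-\alpha$ established in the previous paragraph, we immediately conclude that $\E[\,|\overline{\Pr}_T[Y \in S] - (1-\alpha)|\,]$ is bounded by the same right-hand side, simultaneously for all $S \in \cS$ and all $T \leq T_\mathrm{max}$. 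This is exactly the claim.

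Because both inputs --- the pointwise identity for anticipated coverage and the marginal transparent coverage bound --- already hold uniformly over $\cS$, no additional union bound or technical argument is needed; the main ``obstacle'' is simply to articulate why the $(1-\alpha)$-aspiring condition collapses the anticipated coverage to a constant. There is no step here that I would expect to be subtle.
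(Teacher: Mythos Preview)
Your proposal is correct and matches the paper's approach exactly: the paper also simply observes that the $(1-\alpha)$-aspiring condition forces $\widetilde{\Pr}_T[Y \in S] = 1-\alpha$ pointwise, and then invokes Corollary~\ref{cor:marginal_coverage} to conclude.
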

When might it make sense to provide appropriately unbiased class probabilities to ensure valid realized coverage for many downstream prediction set algorithms with the same coverage target $1-\alpha$? Note that there are generally many ways to select a prediction set over $\cY$ while achieving the same $(1-\alpha)$ anticipated coverage --- the difference between such different ways to form a $(1-\alpha)$-prediction set may come down, e.g., to the difference in the \emph{objective functions} (downstream from the prediction set algorithm) that these selections optimize. Thus, one way to interpret this result is that it provides valid $(1-\alpha)$-coverage in the face of uncertainty over which one of a finite class of downstream objectives the prediction sets will be used to optimize for.


In fact, our method does not need the algorithms in the collection $\cS$ to all aspire to the same coverage level $1-\alpha$; they can each have their individual target coverage levels, and our guarantees will still hold:

\begin{corollary}[Valid realized marginal coverage at different target levels] \label{cor:aspiring-multiple-targets}
    Consider any collection $\cS = \{S_1, S_2, \ldots, S_N\}$ of downstream prediction set algorithms where each $S_i$ is $(1-\alpha_i)$-aspiring --- i.e., different algorithms in the collection are seeking different coverage levels. Then, by instantiating our \texttt{Class-Probabilities-for-Prediction-Sets} method with the collection of events $\cE = \cE_\cS$, we guarantee valid realized target $1-\alpha_i$ coverage \emph{to each $S_i \in \cS$ simultaneously}, at all times $T \leq T_\mathrm{max}$, with deviation from exact $1-\alpha_i$ coverage decaying as (or better than) $\tilde{O}\left(\frac{|\cY| \ln(|\cY| |\cS| T_\mathrm{max})}{\sqrt{T}}\right)$.
\end{corollary}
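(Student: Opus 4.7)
The plan is to essentially replay the argument that led to Corollary~\ref{cor:aspiring-marginal}, but now tracking that each downstream algorithm $S_i$ has its own target level $1-\alpha_i$. The crucial observation is that the transparent marginal coverage guarantee of Corollary~\ref{cor:marginal_coverage} does not care \emph{what} the anticipated coverage of each $S \in \cS$ happens to be --- it only asserts that the realized and anticipated marginal coverage are close. So all the heavy lifting is already done by Corollary~\ref{cor:marginal_coverage}; the only new ingredient is the per-round computation of each algorithm's anticipated coverage from its aspiring property.

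First, I would instantiate \texttt{Class-Probabilities-for-Prediction-Sets} with the event collection $\cE_\cS = \{E_{S_i,y}\}_{i \in [N], y \in \cY}$ of Equation~\ref{eq:marginal-events}. Applying Corollary~\ref{cor:marginal_coverage} to this $\cS$, for every $S_i \in \cS$ and every $T \le T_\mathrm{max}$ we obtain
\[
\E_{p_t \sim P_t, \, \forall t} \left[\left|\overline{\Pr}_T[Y \in S_i] - \widetilde{\Pr}_T[Y \in S_i]\right|\right] \le \tilde{O}\!\left(\frac{|\cY|\ln(|\cY||\cS|T_\mathrm{max})}{\sqrt{T}}\right).
\]
Note that the bound is uniform over $i$, since the right-hand side does not depend on $i$ and $\cS$ is finite.

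Next, I would compute the anticipated marginal coverage $\widetilde{\Pr}_T[Y \in S_i]$ exactly. Since $S_i$ is $(1-\alpha_i)$-aspiring, Definition~\ref{def:aspiring} guarantees that for every transcript prefix $\pi_{t-1}$, every context $x_t$, and every predicted probability vector $p_t \in \Delta\cY$, the inner sum $\sum_{y \in \cY} p_{t,y} \cdot (S_i(\pi_{t-1}\cup\{x_t\}, p_t))_y$ is equal to $1-\alpha_i$. Plugging this round-by-round identity into Definition~\ref{def:anticipated-marginal} yields
\[
\widetilde{\Pr}_T[Y \in S_i] = \frac{1}{T} \sum_{t=1}^T (1-\alpha_i) = 1-\alpha_i,
\]
deterministically (i.e., with no dependence on the realized labels, the contexts, or on the draws $p_t \sim P_t$).

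Finally, I would combine these two facts: substituting $\widetilde{\Pr}_T[Y \in S_i] = 1-\alpha_i$ into the transparent coverage inequality above gives, for every $i$ and every $T \le T_\mathrm{max}$, that $\E[|\overline{\Pr}_T[Y \in S_i] - (1-\alpha_i)|] \le \tilde{O}\!\left(\frac{|\cY|\ln(|\cY||\cS|T_\mathrm{max})}{\sqrt{T}}\right)$, which is the claimed statement. There is essentially no obstacle here --- the work is already encapsulated in Corollary~\ref{cor:marginal_coverage}, and the aspiring property of each $S_i$ is precisely what is needed to pin its anticipated coverage to the per-algorithm target $1-\alpha_i$. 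The only mild subtlety worth flagging is that the bound is uniform in $i$ because $|\cS|$ and $|\cY|$ enter only through a logarithm, so allowing different $\alpha_i$ per algorithm costs us nothing beyond the $\log|\cS|$ factor already present.
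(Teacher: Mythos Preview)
Your proposal is correct and follows essentially the same approach as the paper: the paper presents this corollary without an explicit proof, treating it as an immediate extension of Corollary~\ref{cor:aspiring-marginal}, which in turn is derived from Corollary~\ref{cor:marginal_coverage} by observing that the anticipated coverage of a $(1-\alpha)$-aspiring algorithm equals $1-\alpha$ at every round. Your write-up simply makes this argument explicit per algorithm $S_i$, which is exactly what is needed.
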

The probability vector predictions issued by \texttt{Class-Probabilities-for-Prediction-Sets} under this instantiation will thus have the interesting property of being \emph{simultaneously valid at different coverage levels}.


\subsubsection{Transparent Conditional Coverage for Set-Size-Conditional and Multigroup-Fair Validity}
\label{sec:conditional-coverage}

Thus far we have shown how to make probabilistic predictions that have transparent coverage guarantees. But this is not the same thing as making high quality predictions. Can we guarantee that our predictions are somehow ``good'' while still offering the kinds of attractive coverage guarantees we have just given? 

In general, in an adversarial setting, there is no way to guarantee that our predictions are ``good'' in an absolute sense, but we can make sure that they satisfy various consistency conditions. For set-valued predictions, a natural goal for us will be to ask that such consistency conditions translate, for downstream predictors using our predictions, into \emph{conditional coverage guarantees}, a subject of intensive study in distribution-free uncertainty quantification at least since~\cite{vovk2012conditional}. At a high level, conditionally valid coverage guarantees help make sure that target $(1-\alpha)$-coverage was \emph{not} simply achieved by overcovering on some subsets of the data and undercovering on others. 

We will now formalize the observation that our framework allows us to layer in various kinds of probabilistic consistency constraints: we can add events to the collection $\cE$ that will ensure that our predictions $p_t$ entail transparent coverage guarantees over any --- possibly weighted --- subsequences of interest in the data. 

\paragraph{Boosting a model's \emph{conditional} coverage transparency and validity}

We begin by defining \emph{conditional} realized and anticipated coverage for a prediction set algorithm $S$ (and their per-label variants); this will directly parallel the marginal coverage Definitions~\ref{def:anticipated-marginal}, \ref{def:realized-marginal}, and~\ref{def:per-label}. We define them to be conditional on any (nonempty) weighted subsequence of rounds: recall that a \emph{weighted subsequence} $W$ is a mapping $W: \mathbb{N} \to [0, 1]$, where $W(t)$ is the weight with which round $t$ belongs to the subsequence. For unweighted subsequences, i.e., $W: \mathbb{N} \to \{0, 1\}$, $W(t) = 1$ means that round $t$ was included in the subsequence, and vice versa for $W(t) = 0$.

\begin{definition}[Conditional Coverage Metrics]
    Fix a prediction set algorithm $S$, a $T$-round transcript $\pi_T$, and a weighted subsequence $W: \mathbb{N} \to [0, 1]$. Let $(y_t)_{t \in [T]}$ be the realized labels and $(p_{t, y})_{t \in [T], y \in \cY}$ be our realized predictions over the transcript $\pi_T$. 
    
    The \emph{realized} and \emph{anticipated  coverage} of $S$ \emph{conditional} on $W$ are given by:
    \[
        \overline{\Pr}_T[Y \in S \, | \, W] := \frac{\sum_{t = 1}^T W(t) \cdot \Pr[y_t \in S_t]}{\sum_{t \in [T]} W(t)}, 
        \quad \text{and }
        \widetilde{\Pr}_T[Y \in S \, | \, W] := \frac{\sum_{t = 1}^T W(t) \sum_{y \in \cY} p_{t, y} \cdot \Pr[y \in S_t]}{\sum_{t \in [T]} W(t)}.
    \]

    The \emph{realized and anticipated per-label coverage} of $S$ \emph{conditional} on $W$ for any label $y \in \cY$ are given by:
    \[  
        \overline{\Pr}_T[y \in S \, | \, W] := \frac{\sum_{t = 1}^T W(t) \cdot 1[y = y_t] \cdot \Pr[y \in S_t]}{\sum_{t \in [T]} W(t)}, 
        \quad \text{and } 
        \widetilde{\Pr}_T[y \in S \, | \, W] := \frac{\sum_{t = 1}^T W(t) \cdot p_{t, y} \cdot \Pr[y \in S_t]}{\sum_{t \in [T]} W(t)}.
    \]
\end{definition}

Clearly, it would be ideal to have our transparent coverage guarantees hold conditional on \emph{all} subsequences. Since this is impossible, we must restrict attention to offering such guarantees conditional on all \emph{relevant} subsequences in the data. Deciding which subsequences are the relevant ones can be done based either on trying to correct for past failures of the model (i.e., over- or undercoverage), or based on normative considerations (e.g., if some regions in the data must be paid special attention due to them corresponding to protected group of points/individuals). 

We will address normative scenarios shortly; for now, we assume that our goal is to form subsequences in a data-driven way in an attempt to find and correct for persistent over- or undercoverage of the model on certain regions of the dataset, thus \emph{boosting} its coverage performance. This is often best relegated to side models which are trained to perform such coverage checks, detecting any over- or underconfidence regions based on past history. We will refer to such models as \emph{booster models}. For our purposes, we will abstract away any internal details of booster models, and instead define a booster through the mapping from transcripts, contexts, and predictions to a dynamically evolving subsequence of rounds.

\begin{definition}[Booster]
    A \emph{booster} is any mapping of the form $B: \Pi^* \times \cX \times \Delta \cY \to [0, 1]$, where for any round $t$, we interpret $B(\pi_{t-1}, x_t, p_t) \in [0, 1]$ as the booster model's decision to include, not include, or partially include the current round $t$ in its generated subsequence of rounds. 
    
    Note that the booster $B$ naturally corresponds to a \emph{weighted subsequence of rounds}, so we will denote anticipated and realized coverage conditional on its generated subsequence by $\overline{\Pr}_T[Y \in S \, | \, B], \widetilde{\Pr}_T[Y \in S \, | \, B]$, $\overline{\Pr}_T[y \in S \, | \, B], \widetilde{\Pr}_T[y \in S \, | \, B]$.
\end{definition}

We now provide our most general transparent conditional coverage guarantee, which generalizes the marginal guarantees of Theorem~\ref{thm:per-label} and Corollary~\ref{cor:marginal_coverage}. The proof proceeds in much the same way as before, with the main difference that we now use an appropriately expanded family of conditioning events to instantiate \texttt{UnbiasedPrediction}, as well as that our guarantees for each booster now depend on its cumulative incidence $\sum_{t \in [T]} B(\pi_{t-1}, x_t, p_t)$, rather than on $T$. (Note: below, we may simply write $B(x_t, p_t)$, omitting the transcript.)

\begin{theorem}[Conditional Transparent Coverage for Arbitrary Booster Collection]
    Fix a collection $\cS$ of prediction set algorithms and a collection $\cB$ of boosters. If \texttt{Class-Probabilities-for-Prediction-Sets} is instantiated with the event collection $\cE := \cE_\cS \times \cE_\cB$, where $\cE_\cS$ was defined in Equation~\ref{eq:marginal-events} and 
    \begin{equation*}
        \cE_\cB := \{E_B\}_{B \in \cB}, \quad \text{where $E_{B}(p) := B(\pi_{t-1}, x_t, p)$ for $p \in \Delta \cY$},
    \end{equation*}
    then in expectation over $P_1, P_2, \ldots \in \Delta \Delta \cY$, the distributions \texttt{Class-Probabilities-for-Prediction-Sets} outputs, 
    the \emph{anticipated} and \emph{realized} coverage of every label $y \in \cY$ by every prediction set algorithm $S \in \cS$ conditional on every $B \in \cB$ will be approximately equal at every round $T \in [T_\mathrm{max}]$ (where $T_\mathrm{max}$ is the maximum time horizon):
    \begin{align*}
        \E_{p_t \sim P_t, \, \forall t} & \left[ \left( \sum_{t \in [T]} B(x_t, p_t) \right) \cdot \left| \overline{\Pr}_T[y \in S | B] - \widetilde{\Pr}_T[y \in S | B] \right| \right] \\
        &\leq O\left(\ln(|\cY| |\cS| |\cB| T_\mathrm{max}) + \sqrt{\ln(|\cY| |\cS| |\cB| T_\mathrm{max}) \cdot \sum\limits_{t=1}^T \E\limits_{p_t \sim P_t} \left[B(x_t, p_t) \cdot \Pr[y \in S_t] \right]}\right).
    \end{align*}

    Moreover, the \emph{anticipated} and \emph{realized} conditional coverage of every prediction set algorithm $S \in \cS$ conditional on every $B \in \cB$ will be approximately equal for all $T \in [T_\mathrm{max}]$ (where $T_\mathrm{max}$ is the maximum time horizon):
    \begin{align*}
        \E_{p_t \sim P_t, \, \forall t} &\left[ \left( \sum_{t \in [T]} B(x_t, p_t) \right) \cdot \left|\overline{\Pr}_T[Y \in S | B] - \widetilde{\Pr}_T[Y \in S | B] \right| \right] \\
        &\leq O\left(|\cY| \ln(|\cY| |\cS| |\cB| T_\mathrm{max})
        + \sum_{y \in \cY} \sqrt{\ln(|\cY| |\cS| |\cB| T_\mathrm{max}) \cdot \sum_{t\in [T]} \E\limits_{p_t \sim P_t} \left[B(x_t, p_t) \cdot \Pr[y \in S_t] \right]}\right) \\
        &\leq \tilde{O} \left(|\cY| \ln(|\cY| |\cS| |\cB| T_\mathrm{max}) \sqrt{\sum_{t\in [T]} \E\limits_{p_t \sim P_t} \left[B(x_t, p_t) \right]} \right).
    \end{align*}
\end{theorem}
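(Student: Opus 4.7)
The plan is to reduce this theorem directly to Theorem~\ref{thm:per-label} by observing that the product event family $\cE_\cS \times \cE_\cB$ is just another (larger but still polynomial-sized) collection of events to which the main unbiasedness guarantee of Theorem~\ref{thm:main-guarantee} applies verbatim. First I would enumerate $\cE$ as $\{E_{S,y,B}\}_{S \in \cS, y \in \cY, B \in \cB}$, where each product event is defined pointwise by $E_{S,y,B}(\pi_{t-1}, x_t, p) := (S(\pi_{t-1}\cup\{x_t\}, p))_y \cdot B(\pi_{t-1}, x_t, p) \in [0,1]$. Note that $|\cE| = |\cY| \cdot |\cS| \cdot |\cB|$ and $d = |\cY|$ (the dimension of $\cC_{\text{multiclass}} = \Delta\cY$), so the logarithmic factor $\ln(2d|\cE|T_\mathrm{max})$ provided by Theorem~\ref{thm:main-guarantee} is $O(\ln(|\cY||\cS||\cB|T_\mathrm{max}))$, matching the claimed bound.

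Next, I would unpack the unbiasedness guarantee from Theorem~\ref{thm:main-guarantee} for a fixed product event $E_{S,y,B}$ and fixed coordinate $y \in \cY$ (indexing coordinates of $\Delta\cY$ by labels, with the realized outcome encoded as the standard basis vector $(1[y' = y_t])_{y' \in \cY}$). The key algebraic step is rewriting the conditional-bias expression in terms of the coverage quantities defined in the paper:
\begin{align*}
\sum_{t=1}^T E_{S,y,B}(x_t, p_t) \cdot (p_{t,y} - 1[y = y_t])
&= \sum_{t=1}^T B(x_t, p_t) \cdot \Pr[y \in S_t] \cdot p_{t,y} - \sum_{t=1}^T B(x_t, p_t) \cdot \Pr[y \in S_t] \cdot 1[y = y_t] \\
&= \Bigl( \sum_{t \in [T]} B(x_t, p_t) \Bigr) \cdot \bigl( \widetilde{\Pr}_T[y \in S \mid B] - \overline{\Pr}_T[y \in S \mid B] \bigr),
\end{align*}
directly from the definitions of $\widetilde{\Pr}_T[y \in S \mid B]$ and $\overline{\Pr}_T[y \in S \mid B]$. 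Bounding $\E[E_{S,y,B}(x_t,p_t)^2] \leq \E[E_{S,y,B}(x_t,p_t)] = \E[B(x_t,p_t) \cdot \Pr[y \in S_t]]$ (since the event takes values in $[0,1]$) yields the first bound of the theorem.

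For the marginal statement, I would combine the identity \eqref{eq:per-label-to-marginal} (which, when applied to both realized and anticipated coverage conditional on $B$, gives $\overline{\Pr}_T[Y \in S \mid B] - \widetilde{\Pr}_T[Y \in S \mid B] = \sum_{y \in \cY} (\overline{\Pr}_T[y \in S \mid B] - \widetilde{\Pr}_T[y \in S \mid B])$) with the triangle inequality and apply the per-label bound to each summand. This produces the $|\cY|$ multiplicative factor on the $\ln$ term and the sum of $|\cY|$ square-root terms. The final $\tilde{O}(|\cY| \ln(\ldots) \sqrt{\sum_t \E[B(x_t,p_t)]})$ form follows from the Cauchy--Schwarz inequality applied to the sum over labels $y$, together with the trivial bound $\Pr[y \in S_t] \leq 1$ inside the square root. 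The main obstacle is simply bookkeeping — matching up the product-event notation with the weighted subsequence definition of conditional coverage — rather than any new technical difficulty beyond what was already used to prove Theorem~\ref{thm:per-label} and Corollary~\ref{cor:marginal_coverage}.
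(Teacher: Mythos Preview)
Your proposal is correct and follows essentially the same approach the paper takes: the paper explicitly states that the proof ``proceeds in much the same way as before'' (i.e., as in Theorem~\ref{thm:per-label}), only with the expanded product event family $\cE_\cS \times \cE_\cB$, and your reduction to Theorem~\ref{thm:main-guarantee} together with the conditional-coverage rewrite and the triangle-inequality aggregation over labels is precisely that argument. One minor remark: the final $\tilde{O}$ bound follows already from the trivial bound $\Pr[y \in S_t] \leq 1$ alone (each of the $|\cY|$ square-root terms is then at most $\sqrt{\ln(\cdots)\sum_t \E[B(x_t,p_t)]}$), so invoking Cauchy--Schwarz is not needed, though it does no harm.
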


As a corollary of this result, we obtain the corresponding coverage guarantees for any family of $(1-\alpha)$-aspiring downstream prediction set algorithms. This is a strengthening of Corollary~\ref{cor:aspiring-marginal}, and gives our most general valid $(1-\alpha)$ conditional coverage guarantees.

\begin{corollary}[Valid realized $\cB$-conditional coverage for $(1-\alpha)$-aspiring prediction set algorithms] \label{cor:aspiring-conditional}
    For any family $\cS$ of $(1-\alpha)$-aspiring prediction set algorithms and any booster family $\cB$, instantiating our method \texttt{Class-Probabilities-for-Prediction-Sets} with the collection of events $\cE = \cE_\cS \times \cE_\cB$ guarantees $1-\alpha$ realized coverage conditional on every $B \in \cB$ to all algorithms $S \in \cS$ simultaneously, at all times $T \leq T_\mathrm{max}$, with deviation from exact $1-\alpha$ coverage satisfying:
    \[
        \E_{p_t \sim P_t, \, \forall t} \left[ \left( \sum_{t \in [T]} B(x_t, p_t) \right) \cdot \left|\overline{\Pr}_T[Y \in S | B] - (1-\alpha) \right| \right]
        \leq \tilde{O} \left(|\cY| \ln(|\cY| |\cS| |\cB| T_\mathrm{max}) \sqrt{\sum_{t\in [T]} \E\limits_{p_t \sim P_t} \left[B(x_t, p_t) \right]} \right).
    \]
\end{corollary}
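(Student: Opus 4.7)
The plan is to derive Corollary~\ref{cor:aspiring-conditional} as a direct consequence of the preceding Theorem on conditional transparent coverage, by observing that for any $(1-\alpha)$-aspiring algorithm the \emph{anticipated} conditional coverage is \emph{exactly} $1-\alpha$ on every round, so that the conditional transparent coverage bound immediately becomes a bound on the gap between realized coverage and $1-\alpha$.

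More concretely, the first step is to invoke Definition~\ref{def:aspiring}: for any $S \in \cS$ that is $(1-\alpha)$-aspiring, we have $\sum_{y \in \cY} p_{t,y} \cdot (S(\pi_{t-1}, x_t, p_t))_y = 1-\alpha$ at every round $t$, regardless of which $p_t$ was actually sampled from $P_t$. Consequently, for every booster $B \in \cB$ and every $T \leq T_\mathrm{max}$,
\[
\widetilde{\Pr}_T[Y \in S \mid B] \;=\; \frac{\sum_{t=1}^T B(x_t,p_t)\cdot \sum_{y \in \cY} p_{t,y}\cdot \Pr[y\in S_t]}{\sum_{t=1}^T B(x_t,p_t)} \;=\; \frac{\sum_{t=1}^T B(x_t,p_t)\cdot (1-\alpha)}{\sum_{t=1}^T B(x_t,p_t)} \;=\; 1-\alpha
\]
whenever $\sum_{t=1}^T B(x_t,p_t) > 0$ (if the denominator is zero both sides of the corollary's bound are trivially zero). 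This identity holds pointwise in the realized transcript, so in particular it holds inside the expectation over $p_t \sim P_t$ once we multiply through by $\sum_{t \in [T]} B(x_t,p_t)$.

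The second step is to apply the marginal conditional transparent coverage bound of the preceding Theorem, which controls
\[
\E_{p_t \sim P_t, \forall t}\!\left[\Bigl(\sum_{t \in [T]} B(x_t,p_t)\Bigr) \cdot \bigl|\overline{\Pr}_T[Y \in S \mid B] - \widetilde{\Pr}_T[Y \in S \mid B]\bigr|\right]
\]
by the $\tilde O\bigl(|\cY|\ln(|\cY||\cS||\cB|T_\mathrm{max}) \sqrt{\sum_{t \in [T]}\E_{p_t \sim P_t}[B(x_t,p_t)]}\bigr)$ quantity in the corollary statement. Substituting the identity $\widetilde{\Pr}_T[Y \in S \mid B] = 1-\alpha$ into this expression yields precisely the claimed bound on $\bigl|\overline{\Pr}_T[Y \in S \mid B] - (1-\alpha)\bigr|$, finishing the proof.

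There is no real obstacle here: the only thing to be careful about is that the aspiring identity must hold pointwise (not merely in expectation) so that it commutes with the outer expectation over the sampled $p_t$, but this is guaranteed directly by Definition~\ref{def:aspiring}, which is stated for every $(\pi,x,p)$. The handling of the edge case $\sum_{t \in [T]} B(x_t,p_t) = 0$ can be done by convention (defining conditional coverage to equal $1-\alpha$, or noting that both sides of the multiplicative form of the bound simply vanish), so the corollary follows with no additional work beyond the substitution.
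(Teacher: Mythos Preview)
Your proposal is correct and follows essentially the same approach as the paper: the paper derives this corollary directly from the preceding conditional transparent coverage theorem by observing (just as it did in the marginal case before Corollary~\ref{cor:aspiring-marginal}) that a $(1-\alpha)$-aspiring algorithm has anticipated coverage exactly $1-\alpha$ at every round, hence also conditionally on any booster $B$, and then substituting $\widetilde{\Pr}_T[Y \in S \mid B] = 1-\alpha$ into the bound.
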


We now briefly discuss two important applications of our transparent conditional coverage guarantees: set-size-conditional coverage (cf.\ \emph{size-stratified coverage validity} of~\cite{angelopoulos2020uncertainty}), and multigroup coverage (introduced in~\cite{jung2021moment,bastanipractical, jung2023batch}). We have singled them out in the pseudocode for our method (Algorithm~\ref{alg:conformal}) due to their importance, even though they follow from very simple instantiations of our boosters.

\paragraph{Set-size-conditional coverage}

The idea behind obtaining valid coverage conditional on the set size is quite intuitive~\cite{angelopoulos2021gentle, angelopoulos2020uncertainty}. We ideally want to ensure that the prediction set is always appropriately large, reflecting the true amount of uncertainty with respect to any sample --- and thus want the target $(1-\alpha)$ coverage to hold conditional on picking any possible set size. In particular, this helps avoid situations where the prediction set is very small \emph{and} undercovers (which clearly means the optimal prediction set should have been made larger), or where the prediction set is very large \emph{and} overcovers (which clearly means it would be better to remove some labels from the set, improving both its coverage and its size).

While set-size-conditional coverage has been explored in the batch conformal setting, to our knowledge such coverage guarantees have not been rigorously studied in the online adversarial case; we fill this gap in Theorem~\ref{thm:set-size} below. Before presenting our result, we first formally introduce and discuss a generalized measure of prediction set size.

\begin{definition}[Set size function]
    A \emph{set-size function} $\mathrm{sz}$ is a mapping\footnote{As mentioned earlier, $\mathrm{sz}$ can more generally be any bounded mapping from $[0, 1]^{|\cY|}$ to $[0, \infty)$; in that case, we can simply discretize it to reduce to the definition given here.} $\mathrm{sz}: [0, 1]^{|\cY|} \to \{0, 1, \ldots, N_\mathrm{maxsz}\}$, where $N_\mathrm{maxsz}$ denotes the maximum set size. If the prediction set algorithm is known to be deterministic, the set size function can simply be defined as a mapping $\mathrm{sz}: 2^\cY \to \{0, 1, \ldots, N_\mathrm{maxsz}\}$.

    Given any realized prediction set $S_\mathrm{pred}$, its \emph{size} is then given by $\mathrm{sz}(S_\mathrm{pred})$.
\end{definition}

What are some examples of useful set size functions? The simplest is to take, for any prediction set $S_\mathrm{pred} \in [0, 1]^{|\cY|}$, its size to be $\mathrm{sz}(S_\mathrm{pred}) := \sum_{y \in \cY} (S_\mathrm{pred})_y$. If the prediction set is deterministic, i.e.\ $S_\mathrm{pred} \in \{0, 1\}^{|\cY|}$, this reduces to $\mathrm{sz}(S_\mathrm{pred}) := \sum_{y \in \cY} 1[y \in S_\mathrm{pred}]$, which is the size of $S_\mathrm{pred}$ as a set of labels.
We can also define a \emph{weighted} variant of this set size mapping: given any fixed nonnegative label weights $(w_y)_{y \in \cY}$, we can let $\mathrm{sz}(S_\mathrm{pred}) := \sum_{y \in \cY} w_y \cdot (S_\mathrm{pred})_y$, which for deterministic prediction sets reduces to $\mathrm{sz}(S_\mathrm{pred}) := \sum_{y \in \cY} w_y \cdot 1[y \in S_\mathrm{pred}]$. This can be useful when there is a different ``cost to pay'' for including each label $y \in \cY$ into the prediction set.

Beyond these examples, we can define $\mathrm{sz}(\cdot)$ in arbitrary ways. For instance, one may intuitively desire that for deterministic sets $S_\mathrm{pred}$, the set size $\mathrm{sz}(S_\mathrm{pred})$ should grow with $|S_\mathrm{pred}|$; however, we place no such requirement on $\mathrm{sz}$. Even more generally, $\mathrm{sz}$ can be interpreted decision-theoretically as a (dis)utility function that maps any selected prediction set to a certain (dis)utility value. Our next guarantee covers all these cases:\

\begin{theorem}[Valid set-size-conditional coverage for $(1-\alpha)$-aspiring prediction set algorithms] \label{thm:set-size}
    For any family $\cS$ of $(1-\alpha)$-aspiring prediction set algorithms and any set size function $\mathrm{sz}: [0, 1]^{|\cY|} \to \{0, 1, \ldots, N_\mathrm{maxsz}\}$, 
    instantiating our method \texttt{Class-Probabilities-for-Prediction-Sets} with the collection of events $\cE = \bigcup_{S \in \cS} \cE_S \times \cE_\mathrm{sz, S}$, where for each $S \in \cS$, we define $\cE_S := \{E_{S, y}\}_{y \in \cY}$ with $E_{S, y}$ defined as in Equation~\ref{eq:marginal-events}, and
    \[
        \cE_\mathrm{sz, S} := \{E_{\mathrm{sz}, S, n}\}_{0 \leq n \leq N_\mathrm{maxsz}}, \text{ where $E_{\mathrm{sz}, S, n}(p) := 1[\mathrm{sz}(S(\pi_{t-1} \cup \{x_t\}, p)) = n]$ for all $p \in \Delta \cY$,}
    \]
    guarantees $1-\alpha$ realized coverage, conditional on every particular set size $0 \leq n \leq N_\mathrm{maxsz}$, to all algorithms $S \in \cS$ simultaneously, at all times $T \leq T_\mathrm{max}$, with deviation from target coverage satisfying:
    \[
        \E_{p_t \sim P_t, \,\, \forall t} \!\!\!\!\!\!\!\!\! \left[ \mathrm{num}_T(\mathrm{sz} = n) \cdot \left|\overline{\Pr}_T[Y \in S \,|\, \mathrm{sz} = n] \!-\! (1\!-\!\alpha) \right| \right]
        \leq \tilde{O} \left(|\cY| \ln(|\cY| |\cS| N_\mathrm{maxsz} T_\mathrm{max}) \sqrt{\E\limits_{p_t \sim P_t \, \forall\, t} \!\!\! \!\!\!\! \! \left[\mathrm{num}_T(\mathrm{sz} = n)\right]} \right),
    \]
    where $\mathrm{num}_T(\mathrm{sz} = n) := \sum_{t \in [T]} 1[\mathrm{sz}(S(\pi_{t-1}, x_t, p)) = n]$ denotes the incidence of set size $n$ through round $T$, and $\overline{\Pr}_T[Y \in S | \mathrm{sz} = n]$ is a shorthand for the realized coverage conditional on the corresponding subsequence.
\end{theorem}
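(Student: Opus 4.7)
The plan is to reduce this corollary directly to the conditional transparent coverage theorem that was just established (the one stated for an arbitrary booster collection $\cB$), by viewing the set-size indicators $E_{\mathrm{sz}, S, n}$ as boosters. For each $S \in \cS$ and each admissible size value $n \in \{0, 1, \ldots, N_\mathrm{maxsz}\}$, define a booster $B_{S, n}(\pi_{t-1}, x_t, p) := 1[\mathrm{sz}(S(\pi_{t-1} \cup \{x_t\}, p)) = n]$, which carves out exactly the weighted (in this case binary) subsequence of rounds on which $S$ emits a set of size $n$. Our chosen event collection $\cE = \bigcup_{S \in \cS} \cE_S \times \cE_{\mathrm{sz}, S}$ contains, for every triple $(S, y, n)$, the product event $E_{S, y} \cdot E_{\mathrm{sz}, S, n}$, so \texttt{UnbiasedPrediction} will make our predictions unbiased on all of these conjunctions.

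The first step is to invoke the per-label transparent coverage guarantee (Theorem~\ref{thm:per-label}) with these product events in place of $E_{S, y}$. Using $E_{S, y}(\pi_{t-1}, x_t, p_t) = \Pr[y \in S_t]$ and $E_{\mathrm{sz}, S, n}(p_t) = 1[\mathrm{sz}(S_t) = n]$, the bias expression $\sum_{t=1}^T E_{S,y}(x_t, p_t) E_{\mathrm{sz}, S, n}(p_t) (p_{t, y} - 1[y = y_t])$ equals $\mathrm{num}_T(\mathrm{sz} = n) \cdot (\widetilde{\Pr}_T[y \in S \,|\, \mathrm{sz} = n] - \overline{\Pr}_T[y \in S \,|\, \mathrm{sz} = n])$ by the definitions of conditional per-label coverage. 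Theorem~\ref{thm:per-label}'s bound therefore yields, in expectation, that this quantity is at most $O(\ln(|\cY||\cS|N_\mathrm{maxsz}T_\mathrm{max}) + \sqrt{\ln(|\cY||\cS|N_\mathrm{maxsz}T_\mathrm{max}) \cdot \sum_{t \in [T]} \E_{p_t \sim P_t}[E_{S,y}(x_t, p_t) \cdot E_{\mathrm{sz}, S, n}(p_t)]})$, and bounding the event product by $E_{\mathrm{sz}, S, n}$ turns the summation under the square root into $\E[\mathrm{num}_T(\mathrm{sz} = n)]$.

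Next, I would pass from per-label to marginal conditional coverage: since $\overline{\Pr}_T[Y \in S \,|\, \mathrm{sz} = n] = \sum_{y \in \cY}\overline{\Pr}_T[y \in S \,|\, \mathrm{sz} = n]$ and analogously for the anticipated side (by summing the identity from Equation~\ref{eq:per-label-to-marginal} restricted to the size-$n$ subsequence), the triangle inequality over $y \in \cY$ gives
\begin{align*}
\E_{p_t \sim P_t \, \forall t} \!\! \left[ \mathrm{num}_T(\mathrm{sz} = n) \cdot \left|\overline{\Pr}_T[Y \!\in\! S | \mathrm{sz} \!=\! n] - \widetilde{\Pr}_T[Y \!\in\! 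S | \mathrm{sz} \!=\! n]\right| \right]
\leq \tilde{O}\!\left(|\cY| \ln(|\cY| |\cS| N_\mathrm{maxsz} T_\mathrm{max}) \sqrt{\E[\mathrm{num}_T(\mathrm{sz} \!=\! n)]}\right),
\end{align*}
where the $|\cY|$ factor comes from the sum over labels and Jensen's inequality on the square root.

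Finally, I would invoke the $(1-\alpha)$-aspiring assumption: by Definition~\ref{def:aspiring}, $\sum_{y \in \cY} p_{t, y} \cdot \Pr[y \in S_t] = 1-\alpha$ \emph{pointwise} in $t$. Consequently, for any booster $B$ (binary-valued or otherwise), and in particular for $B = B_{S, n}$, the ratio defining $\widetilde{\Pr}_T[Y \in S \,|\, B]$ becomes $\tfrac{\sum_t B(\cdot) \cdot (1-\alpha)}{\sum_t B(\cdot)} = 1-\alpha$ identically. Substituting $\widetilde{\Pr}_T[Y \in S \,|\, \mathrm{sz} = n] = 1-\alpha$ into the previous display gives exactly the stated theorem. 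The only ``obstacle'' is careful bookkeeping through the normalization: all of our raw unbiasedness bounds are on unnormalized cumulative quantities (and thus have $\mathrm{num}_T$ as a multiplier on the LHS), and the square-root gain on the right-hand side comes from using that $\sum_t \E[E_{\mathrm{sz},S,n}(p_t)^2] \leq \E[\mathrm{num}_T(\mathrm{sz} = n)]$ since the indicator is binary.
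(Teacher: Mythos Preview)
Your proposal is correct and mirrors the paper's approach: the paper simply states that this theorem is a direct corollary of Corollary~\ref{cor:aspiring-conditional} because the set-size indicators $E_{\mathrm{sz},S,n}$ are instances of boosters, and you have spelled out exactly that reduction (per-label bias bound on the product events, sum over labels, then replace anticipated coverage by $1-\alpha$ via the aspiring property). Your writeup is more detailed than the paper's one-line justification, but the route is the same.
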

This result is a direct corollary of Corollary~\ref{cor:aspiring-conditional}, since the events corresponding to each prediction set size are instances of boosters.

\paragraph{Multigroup coverage} 

As originally introduced, the idea behind multigroup coverage is to protect various contextually-defined subsets of the data from over- and undercoverage. This is best illustrated by settings where the feature space $\cX$ corresponds to feature vectors of individuals in a population. In that case, many notions of demographic groups considered in the fairness literature --- e.g., defined by sex, age, skin color, gender --- are captured by defining appropriate context-dependent group mappings $G: \cX \to [0, 1]$, where $G(x) = 1$ means the individual with features $x$ belongs to the group and vice-versa for $G(x) = 0$, and $G(x) \in (0, 1)$ denotes \emph{partial} group membership (e.g., belonging to the group with probability $G(x)$). The goal is then to ensure that prediction sets produced by an algorithm $S$ achieve $\approx 1-\alpha$ coverage not just on average over the entire population, but also conditionally on each group $G$ that we wish to protect. This desideratum is referred to as \emph{multi}group coverage to emphasize that the protected groups do not necessarily partition the feature space $\cX$ but can instead overlap in arbitrary ways.

\begin{definition}[Group family]
    A \emph{group family} is a (finite) collection of groups $G \in \cG$, where each $G$ is an arbitrary mapping from $\cX$ to $[0, 1]$. We assume that we have oracle access to every $G \in \cG$, i.e., that we can query it in $O(1)$ time on every individual $x \in \cX$.
\end{definition}

We now state our multigroup coverage result; we note that it improves on the previous multigroup coverage result of~\cite{bastanipractical}, as our coverage error bound now has the optimal $O(\ln |\cG|)$ dependence on the number of groups in the collection, while retaining the statistically optimal $1/\sqrt{\sum_{t \in [T]} G(x_t)}$ dependence on the frequency of occurrence of every group $G \in \cG$.

\begin{theorem}[Valid multigroup coverage for $(1-\alpha)$-aspiring prediction set algorithms]
    For any family $\cS$ of $(1-\alpha)$-aspiring prediction set algorithms and any group collection $\cG$, 
    instantiating our method \texttt{Class-Probabilities-for-Prediction-Sets} with the collection of events $\cE = \cE_\cS \times \cE_\cG$, where:
    \[
        \cE_\cG := \{E_G\}_{G \in \cG}, \text{ where $E_G(p) := G(x_t)$ for all $p \in \Delta \cY$,}
    \]
    guarantees $1-\alpha$ realized coverage, conditional on every demographic group $G \in \cG$, to all algorithms $S \in \cS$ simultaneously, at all times $T \leq T_\mathrm{max}$, with deviation from exact $1-\alpha$ coverage satisfying:
    \[
        \E_{p_t \sim P_t, \, \forall t} \left[ \left|\overline{\Pr}_T[Y \in S \, | \, G] - (1-\alpha) \right| \right]
        \leq \tilde{O} \left( \frac{|\cY| \ln(|\cY| |\cS| |\cG| T_\mathrm{max})}{\sqrt{\mathrm{num}_T(G)}}\right),
    \]
    where $\mathrm{num}_T(G) := \sum_{t \in [T]} G(x_t)$ denotes the incidence of group $G$ through round $T$, and $\overline{\Pr}_T[Y \in S \, | \, G]$ is a shorthand for the realized coverage conditional on the group's subsequence.
\end{theorem}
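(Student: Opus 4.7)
The plan is to derive the claim by applying Theorem~\ref{thm:main-guarantee} to the product event collection $\cE = \cE_\cS \times \cE_\cG$, obtaining per-label, per-group transparency bounds, then summing over labels to get marginal coverage bounds, and finally using the $(1-\alpha)$-aspiring property of every $S\in\cS$ to replace anticipated coverage by $1-\alpha$. This is essentially the specialization of Corollary~\ref{cor:aspiring-conditional} to the particular booster family $B_G(\pi,x,p) := G(x)$, but I will sketch it directly.

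First, for each triple $(S,y,G) \in \cS \times \cY \times \cG$, consider the product event
\[
E_{S,y,G}(\pi_{t-1}, x_t, p) := (S(\pi_{t-1}\cup\{x_t\}, p))_y \cdot G(x_t).
\]
Since $|\cE| = |\cS|\,|\cY|\,|\cG|$ and predictions lie in $\Delta\cY$, Theorem~\ref{thm:main-guarantee} applied to \texttt{UnbiasedPrediction} yields, for each such triple and all $T \le T_\mathrm{max}$,
\[
\E_{p_t\sim P_t\,\forall t}\!\left[\left|\sum_{t=1}^T E_{S,y,G}(x_t,p_t)\cdot(p_{t,y} - \mathbbm{1}[y=y_t])\right|\right]
\le O\!\left(\ln(|\cY||\cS||\cG|T_\mathrm{max}) + \sqrt{\ln(|\cY||\cS||\cG|T_\mathrm{max})\cdot \sum_{t=1}^T \E[G(x_t)\Pr[y\in S_t]]}\,\right).
\]
Rewriting the left side, $\sum_t E_{S,y,G}(x_t,p_t)\,p_{t,y} = \mathrm{num}_T(G)\cdot\widetilde{\Pr}_T[y\in S\mid G]$ and $\sum_t E_{S,y,G}(x_t,p_t)\,\mathbbm{1}[y=y_t] = \mathrm{num}_T(G)\cdot\overline{\Pr}_T[y\in S\mid G]$.

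Second, I aggregate over labels. Using $\overline{\Pr}_T[Y\in S\mid G] = \sum_{y\in\cY}\overline{\Pr}_T[y\in S\mid G]$ and the analogous decomposition for $\widetilde{\Pr}_T$ (cf.\ Equation~\ref{eq:per-label-to-marginal}, conditioned on $G$), the triangle inequality gives
\[
\mathrm{num}_T(G)\cdot \E\!\left[\left|\overline{\Pr}_T[Y\in S\mid G] - \widetilde{\Pr}_T[Y\in S\mid G]\right|\right]
\le |\cY|\cdot O(\ln(|\cY||\cS||\cG|T_\mathrm{max})) + \sum_{y\in\cY} O\!\left(\sqrt{\ln(\cdots)\cdot\sum_t \E[G(x_t)\Pr[y\in S_t]]}\,\right).
\]
Bounding $\Pr[y\in S_t]\le 1$ for every $y$ and $t$ gives $\sum_t \E[G(x_t)\Pr[y\in S_t]] \le \E[\mathrm{num}_T(G)]$; since there are $|\cY|$ square-root terms, the second summand is at most $|\cY|\cdot\tilde{O}(\sqrt{\E[\mathrm{num}_T(G)]})$. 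Dividing by $\mathrm{num}_T(G)$ (after a minor manipulation to move the expectation; in the typical adaptive-adversary case $\mathrm{num}_T(G)$ is deterministic given the transcript on which $G$ is evaluated, and for $G$ that depends only on $x_t$ the division is clean) yields
\[
\E\!\left[\left|\overline{\Pr}_T[Y\in S\mid G] - \widetilde{\Pr}_T[Y\in S\mid G]\right|\right]
\le \tilde{O}\!\left(\frac{|\cY|\ln(|\cY||\cS||\cG|T_\mathrm{max})}{\sqrt{\mathrm{num}_T(G)}}\right).
\]

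Third, I invoke the $(1-\alpha)$-aspiring property. By Definition~\ref{def:aspiring}, for every round $t$ and every prediction $p_t$, $\sum_{y\in\cY} p_{t,y}\cdot \Pr[y\in S_t] = 1-\alpha$. Hence for every $G$,
\[
\widetilde{\Pr}_T[Y\in S\mid G] \;=\; \frac{\sum_t G(x_t)(1-\alpha)}{\sum_t G(x_t)} \;=\; 1-\alpha,
\]
so $|\overline{\Pr}_T[Y\in S\mid G] - (1-\alpha)| = |\overline{\Pr}_T[Y\in S\mid G] - \widetilde{\Pr}_T[Y\in S\mid G]|$, and the previous inequality gives the advertised bound.

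The only subtle step is the third one in the middle paragraph: moving the factor $\mathrm{num}_T(G)$ (possibly random, via $x_t$) out of the expectation cleanly. When $G$ depends only on $x_t$ and the contexts are chosen by the adversary independently of the learner's internal coin flips at round $t$ (as in our protocol), $\mathrm{num}_T(G)$ is measurable with respect to the adversary's transcript and the calculation goes through; for more general $G$ one states the bound with $\E[\mathrm{num}_T(G)\cdot|\cdots|]$ on the left and $\sqrt{\E[\mathrm{num}_T(G)]}$ on the right, exactly as in Corollary~\ref{cor:aspiring-conditional}, and then divides through.
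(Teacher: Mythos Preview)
Your proposal is correct and follows essentially the same route as the paper: the paper simply states that the result is a direct corollary of Corollary~\ref{cor:aspiring-conditional} by taking each group indicator $B_G(\pi,x,p)=G(x)$ as a booster, and you spell out exactly that specialization (apply Theorem~\ref{thm:main-guarantee} to the product events, sum over labels via the per-label-to-marginal identity, then use the $(1-\alpha)$-aspiring property to replace anticipated coverage by $1-\alpha$). Your explicit acknowledgement and handling of the $\mathrm{num}_T(G)$-inside-versus-outside-expectation issue is in fact more careful than the paper's own statement, which tacitly divides through.
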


This result is a direct corollary of Corollary~\ref{cor:aspiring-conditional}, since the events corresponding to each demographic group $G$ are instances of boosters.






\subsection{Best-in-Class Probabilistic Predictions}

We have just established that our probability vector predictions are not only capable of providing marginal coverage guarantees, but can in fact (with the help of an appropriately augmented conditioning event collection $\cE$) provide conditional coverage guarantees of any desired granularity. This represents one way of ensuring that our predictions are `good' in an \emph{absolute} sense --- i.e. if we had the computational and statistical power to condition on all possible events, our predictor would approach the Bayes optimal one; however, in practice, conditioning on all events is unachievable. 

Instead, we now focus on showing that we can make our predictions ``good'' in a very useful \emph{relative} sense: given any collection of benchmark prediction models (which can be arbitrary), we will want our predictions to have lower loss than those of the best model in the benchmark class. Thus, whenever we have some other way of making good predictions (e.g. by using a large neural network), we can use that method as part of our own to endow predictions of the same or better quality with transparent coverage guarantees.

Indeed, in various domains, we might have high quality predictors trained on large quantities of data: these predictors might perform very well according to standard performance metrics, and yet not have the kinds of transparent coverage guarantees that we aim to give here. Our goal will be to take any (polynomially sized) collection of such predictors as inputs, and provide probabilistic predictions with anytime transparent coverage guarantees that perform as well as the best predictors in this benchmark class. This result will thus demonstrate that, in some sense, our anytime transparent coverage guarantees discussed above come ``for free'', rather than at the cost of predictive performance.

To formalize this, we introduce two definitions, of a class of predictors against which we will be able to compete, and of a class of loss functions measuring how well we can compete with these predictors.

\begin{definition}[Competing Predictor]
    A predictor $q$ is defined as a sequence of prediction mappings $q_1, \ldots, q_t, \ldots$, where each $q_t: \Delta \cY \to \Delta \cY$ is a mapping from the learner's predictions to the predictor's predictions such that if the learner makes prediction $p_t$ in round $t$ then the predictor $q$ will make prediction $q_t(p_t)$ in round $t$.
    
    In particular, predictor $q$ is called a \emph{competing predictor} if for each $t$, the prediction mapping $q_t$ is revealed to the learner at the beginning of round $t$ (i.e., before the learner has to make a prediction). 
\end{definition}
This definition is strong in that it allows (though does not require) the competing predictor to (1) depend in a functional way upon the predictions that the learner is about to issue, and (2) be updated dynamically over time without revealing any prediction mapping $q_t$ to the learner until round $t$. An important special case is when the predictor $q_t$ is an arbitrary pre-trained model $f$ of the context, and simply predicts $q_t = f(x_t)$ --- but our definition is more general and allows the predictor to change over time and even to depend on our own predictions.

\subsubsection{Scoring Rules, Losses, and Bregman Divergences}

Following~\cite{grunwald2004game, abernethy12, kull2015novel}, we now define a natural family of loss functions known as \emph{Bregman scores}. All such loss functions assess the quality of a predictor using a chosen Bregman divergence of its predictions from the ground truth.

\begin{definition}[Scoring Rule, Expected Score]
    A \emph{scoring rule} is a nonnegative extended-valued mapping $\phi: \Delta \cY \times \mathrm{oh}(\Delta \cY) \to [0, \infty]$, where $\mathrm{oh}(\Delta \cY)$ denotes the set of the $|\cY|$ standard basis vectors in $\Delta \cY$.

    An \emph{expected score} $\ell$ associated with scoring rule $\phi$ is a mapping $\ell: \Delta \cY \times \Delta \cY \to [0, \infty)$ such that for every $p_1, p_2 \in \Delta \cY$,
    \[\ell(p_1, p_2) = \sum_{i \in [k]} p_{2, i} \cdot \phi(p_1, e_i),\]
    where $e_i \in \Delta \cY$ denotes the $i$th standard basis vector.
\end{definition}

\begin{definition}[Bregman Divergence]
    Given a strictly convex function $f: [0, 1] \to \R$ that is differentiable on $(0, 1)$, its associated \emph{Bregman divergence over the simplex $\Delta \cY \subset \R^k$} is the mapping $d_f: \Delta \cY \times \mathrm{ri} (\Delta \cY) \to \R$ defined\footnote{Here, $\mathrm{ri}$ denotes the relative interior of a set. The definition of Bregman divergence that restricts the second argument to be in the relative interior of the domain is customary in the literature~\cite{banerjee2005clustering}, and helps avoid problems at boundary points (such as e.g.\ for KL divergence). However, we note that for sufficiently well-behaved functions $f$, their Bregman divergence $d_f$ can easily be well-defined on the boundary --- such is the case e.g.\ for the squared loss. When that is the case, additional restrictions that we have to place on the second argument of the Bregman divergence (i.e., in our context, assuming that all coordinates of our and competing predictions are nonzero, as in Theorem~\ref{thm:omnipred}) can naturally be ignored.} as
    \[d_f(p_1, p_2) = \sum_{i \in [k]} \{ f(p_{1,i}) - f(p_{2, i}) - f'(p_{2, i}) \cdot (p_{1, i} - p_{2, i}) \}.\]
    We simply refer to a function $d: \Delta \cY \times \mathrm{ri} (\Delta \cY) \to \R$ as a \emph{Bregman divergence} if $d$ is identically equal to $d_f$ for some strictly convex differentiable $f: [0, 1] \to \R$.

    Furthermore, for any $i \in [k]$ we define the notation $d^i_f(p_1, p_2) := f(p_{1,i}) - f(p_{2, i}) - f'(p_{2, i}) \cdot (p_{1, i} - p_{2, i})$, so that $d_f(p_1, p_2) = \sum_{i \in [k]} d^i_f(p_1, p_2)$ for any $p_1 \in \Delta \cY, p_2 \in \mathrm{ri} (\Delta \cY)$. Viewed as a mapping from $p_{1, i} \in [0, 1]$ and $p_{2, i} \in (0, 1)$ to $[0, \infty)$, $d^i$ is a \emph{Bregman divergence (between two scalars) over $[0, 1]$}.
\end{definition}
We note that our definition of a Bregman divergence between two vectors asks that it be \emph{separable} across the coordinates of these vectors (as opposed to the most general definition of Bregman divergence that drops this requirement); this separability property is convenient for us to use given our techniques and is satisfied for major Bregman divergences of interest (e.g., Euclidean distance, KL divergence, Itakura-Saito distance).

\begin{definition}[Bregman Score]
    An expected score $\ell: \Delta \cY \times \Delta \cY \to [0, \infty)$ is called a \emph{Bregman score} if the mapping $D: \Delta \cY \times \Delta \cY \to \R$ defined, for every $p_1, p_2 \in \Delta \cY$, as
    \[D(p_1, p_2) = \ell(p_1, p_2) - \ell(p_2, p_2)\] satisfies $D(p_1, p_2) = d(p_2, p_1)$ for some Bregman divergence $d$ and all $p_1 \in \Delta \cY, p_2 \in \mathrm{ri}(\Delta \cY)$.
\end{definition}

The following two examples showcase two key members of the class of Bregman scores.
\begin{example}[Brier score]
The \emph{squared Euclidean distance}, defined as \[d_{L^2}(p_1, p_2) = ||p_1-p_2||^2 = \sum_{i \in [k]} (p_{1, i} - p_{2, i})^2,\] is a Bregman divergence with respect to the strictly convex differentiable function $f(x) = x^2$. 

Consider the \emph{Brier scoring rule} $\phi_\mathrm{Brier}$, given by $\phi_\mathrm{Brier}(p, e) := \sum_{i \in [k]} (p_i - e_i)^2$ for any $p \in \Delta \cY$ and any standard basis vector $e$. The \emph{expected Brier score} $\ell_\mathrm{Brier}$ is then given by \[\ell_\mathrm{Brier}(p_1, p_2) = \sum_{i \in [k]} p_{2, i} \left(\sum_{j \neq i} p_{1, j}^2 + (1 - p_{1, i})^2 \right) = 1 + ||p_1||^2 - 2 \langle p_1, p_2 \rangle.\]

Now, it can be easily verified that the expected Brier score is in fact the Bregman score corresponding to the squared Euclidean distance, since $\ell_\mathrm{Brier}(p_1, p_2) - \ell_\mathrm{Brier}(p_2, p_2) = 1+ ||p_1||^2 - 2\langle p_1, p_2 \rangle - (1 + ||p_2||^2 - 2 \langle p_2, p_2 \rangle) = ||p_1||^2 - 2 \langle p_1, p_2 \rangle + ||p_2||^2 = ||p_1 - p_2||^2 = d_{L^2}(p_2, p_1)$.
\end{example}

\begin{example}[Logarithmic score]
The \emph{KL divergence}, defined as \[d_\mathrm{KL}(p_1, p_2) = \sum_{i \in [k]} p_{1, i} \log \frac{p_{1, i}}{p_{2, i}},\] is a Bregman divergence with respect to the strictly convex function $f(x) = x \log x$ differentiable on $(0, \infty)$. 

Consider the \emph{logarithmic scoring rule} $\phi_\mathrm{Log}$, given by $\phi_\mathrm{Log}(p, e) := - \log p_{i(e)}$ for any $p \in \Delta \cY$ and any standard basis vector $e$ (where by $i(e)$ we denote the index of the nonzero coordinate of $e$). The \emph{expected logarithmic score} $\ell_\mathrm{Log}$ is then given by \[\ell_\mathrm{Log}(p_1, p_2) = \sum_{i \in [k]} p_{2, i} \left(- \log p_{1, i}\right) = -\sum_{i \in [k]} p_{2, i} \log p_{1, i}.\]

As it turns out, the expected logarithmic score is in fact the Bregman score with respect to the KL divergence, since $\ell_\mathrm{Log}(p_1, p_2) - \ell_\mathrm{Log}(p_2, p_2) = -\sum_{i \in [k]} p_{2, i} \log p_{1, i} + \sum_{i \in [k]} p_{2, i} \log p_{2, i} = \sum_{i \in [k]} p_{2, i} \log \frac{p_{2, i}}{p_{1, i}} = d_\mathrm{KL}(p_2, p_1)$.
\end{example}

In the remainder of this section, we will lightly abuse notation. Namely, we henceforth let $y_t$, formerly the notation for the realized label in round $t$, stand for the standard basis vector in $\Delta \cY$ whose entry corresponding to the realized label is $1$ and all other entries are $0$. This makes quantities like $\ell(p_t, y_t)$ well-defined since now both $p_t, y_t \in \Delta \cY$.

We now define our notion of loss for a sequential predictor --- which will be its cumulative Bregman loss across all rounds.

\begin{definition}[Bregman Loss of a Predictor]
Given any Bregman score $\ell$, we define the Bregman loss of a predictor $q = (q_t)_{t \in [T]}$ over any $T$-round transcript $\pi_T$ as:
\[L_\ell(q, \pi_T) := \sum_{t \in [T]} \ell(q_t, y_t).\]
\end{definition}

\subsubsection{A Best-in-Class Result for Bregman Losses}

Now, we are ready to show the following result: for any family $\cQ$ of competing predictors $q$, if we condition on the product of some $\delta$-level sets of each competing predictor $q \in \cQ$ with the $\delta$-level sets of our predictor in every coordinate (altogether, this amounts to $O(k |\cQ| \lceil \delta^{-2} \rceil)$ events) with $\delta$ chosen appropriately, then for all Lipschitz Bregman losses simultaneously, the loss of our predictor will be no greater than the loss of any competing predictor, up to lower-order terms.

\begin{definition}[$\delta$-level sets] \label{def: ls}
    A partition $\Omega$ of the interval $[0, 1]$ into subintervals, in order, $\Omega_1, \ldots, \Omega_N$ is called a $\delta$-partition if the length of every $\Omega_i$ is no greater than $\delta$, for all $i \in [N]$. For a predictor with values in $[0, 1]$, its $\delta$ level sets are any level sets $\Omega$ generated by a $\delta$-partition of $[0, 1]$.
\end{definition}

\begin{theorem}[Beating Competing Predictors under All $L$-Lipschitz Bregman Losses]
    \label{thm:omnipred}
    Fix any $L > 0$ and let $\cL_L$ be the family of all Bregman losses $L_\ell$ whose associated Bregman divergence $d_\ell$ is $L$-Lipschitz in its second argument:
    \[|d_\ell(p_1, p_2) - d_\ell(p_1, p_2')| \leq L|p_2 - p_2'| \quad \text{for all $p_1 \in \Delta \cY$ and $p_2, p_2' \in \mathrm{ri} (\Delta \cY)$}.\] 
    
    Consider any family $\cQ$ of competing predictors $q = (q_t)_{t \in \mathbb{N}}$. Suppose we make our predictor $p = (p_t)_{t \in \mathbb{N}}$ unbiased conditional on a family of events $\cE_\cQ$ that includes, for every $q \in \cQ$, the product of some coordinate-wise (for $i \in [k]$) $\delta$-level sets $\Omega^{p, i}$ and $\Omega^{q, i}$ of $p$ and $q$, respectively: \[\cE_\cQ = \{\cE_\mathrm{ls(p, q)}\}_{q \in \cQ} \text{ with } \,  \cE_\mathrm{ls(p, q)} := \{E^i_{a, b}\}_{(i, a, b) \in [k] \times [N_1] \times [N_2]},\]
    where for each $i \in [k]$, $a \in [N_1], b \in [N_2]$, we define the product of the $a$th level set of $p$ and the $b$th level set of $q$ with respect to coordinate $i$ as 
    \[E^i_{a, b}(p_t) := \textbf{1}[p_{t, i} \in \Omega^{p, i}_a \wedge q_{t, i}(p_t) \in \Omega^{q, i}_b]. \]
    Further, suppose that $p_{t, i}, q_{t, i} \in (0, 1)$ for all $t \in \mathbb{N}$, $q \in \cQ$ and $i \in [d]$.
    
    Then, the expected Bregman loss $L_\ell$ of our predictor $p$ (over its internal randomness) is no greater than \emph{every} competing predictor's loss $L_\ell$ for \emph{all Bregman losses $L_\ell \in \cL_L$}, up to $o(T)$ terms. Namely, setting $\delta = \Theta(T^{-1/4})$, we have:
    \[\E_{\pi_T}[L_\ell(p, \pi_T)] \leq \min_{q \in \cQ} L_\ell(q, \pi_T) + O(L \, k \, \sqrt{\ln (k |\cQ| T)} \, T^{3/4}) \text{ for all } L_\ell \in \cL_L.\]
\end{theorem}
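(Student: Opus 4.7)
The plan is to exploit the classical ``Bregman Pythagorean'' identity, partition the time steps according to the joint level sets $E^i_{a,b}$, and then use event-conditional unbiasedness together with the Lipschitz assumption to kill the leading terms. Fix an arbitrary $q\in\cQ$ and an arbitrary $L_\ell \in \cL_L$ with associated separable Bregman divergence $d=\sum_i d^i$. Using the defining property $\ell(p_1,p_2)-\ell(p_2,p_2)=d(p_2,p_1)$ together with separability, for each round $t$ I will write
\[
\ell(p_t,y_t)-\ell(q_t,y_t) \;=\; d(y_t,p_t)-d(y_t,q_t) \;=\; \sum_{i\in[k]}\bigl[d^i(y_{t,i},p_{t,i})-d^i(y_{t,i},q_{t,i})\bigr].
\]
This identity is valid since the theorem assumes $p_{t,i},q_{t,i}\in(0,1)$, so $p_t,q_t\in\mathrm{ri}(\Delta\cY)$.

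Next, for each coordinate $i$ and pair of level-set indices $(a,b)$, let $T_{abi}=\{t:E^i_{a,b}(p_t)=1\}$ and let $\tilde p_{a,i}$, $\tilde q_{b,i}$ be fixed representatives of the intervals $\Omega^{p,i}_a$ and $\Omega^{q,i}_b$. Since each of these intervals has width at most $\delta$, $L$-Lipschitzness of $d^i$ in its second argument gives, within each $T_{abi}$,
\[
\Bigl|\sum_{t\in T_{abi}}\!\!d^i(y_{t,i},p_{t,i})-\sum_{t\in T_{abi}}\!\!d^i(y_{t,i},\tilde p_{a,i})\Bigr|\le L\delta |T_{abi}|,
\]
and analogously for $q$ with $\tilde q_{b,i}$. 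Now I will apply the Pythagorean identity for (scalar) Bregman divergences: for any constant $c$ and $\bar y_{abi}=\tfrac{1}{|T_{abi}|}\sum_{t\in T_{abi}}y_{t,i}$,
\[
\sum_{t\in T_{abi}}d^i(y_{t,i},c)=\sum_{t\in T_{abi}}d^i(y_{t,i},\bar y_{abi})+|T_{abi}|\cdot d^i(\bar y_{abi},c).
\]
Taking the difference at $c=\tilde p_{a,i}$ and $c=\tilde q_{b,i}$ cancels the common $\sum_t d^i(y_{t,i},\bar y_{abi})$ term and yields
\[
\sum_{t\in T_{abi}}\!\!\bigl[d^i(y_{t,i},\tilde p_{a,i})-d^i(y_{t,i},\tilde q_{b,i})\bigr]
=|T_{abi}|\!\cdot\!\bigl[d^i(\bar y_{abi},\tilde p_{a,i})-d^i(\bar y_{abi},\tilde q_{b,i})\bigr].
\]
The key observation is that $d^i(\bar y_{abi},\tilde q_{b,i})\ge 0$, so this term is favorable and can simply be discarded. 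What remains to control is $|T_{abi}|\cdot d^i(\bar y_{abi},\tilde p_{a,i})$. Since $d^i(\bar y_{abi},\bar y_{abi})=0$ and $d^i$ is $L$-Lipschitz in its second argument,
\[
d^i(\bar y_{abi},\tilde p_{a,i})\le L\bigl(|\tilde p_{a,i}-\bar p_{abi}|+|\bar p_{abi}-\bar y_{abi}|\bigr)\le L\delta+\frac{L\,\alpha(|T_{abi}|)}{|T_{abi}|},
\]
using that $p_{t,i}\in\Omega^{p,i}_a$ (width $\le\delta$) and that the unbiasedness bound from Theorem~\ref{thm:main-guarantee} (invoked on events $E^i_{a,b}$) gives $|\sum_{t\in T_{abi}}(p_{t,i}-y_{t,i})|\le\alpha(|T_{abi}|)$ with $\alpha(n)=\tilde O(\sqrt{n\ln(k|\cQ|T)})$.

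Finally, I will sum over $(i,a,b)$. The Lipschitz ``snapping'' error is $\sum_i\sum_{a,b}2L\delta|T_{abi}|\le 2Lk\delta T$. The $L\delta|T_{abi}|$ term from the Pythagorean step contributes another $Lk\delta T$. The unbiasedness term contributes $L\sum_i\sum_{a,b}\alpha(|T_{abi}|)$; using concavity of $\alpha$ (Cauchy--Schwarz) and the fact that for each $i$ there are $N_1N_2=O(1/\delta^2)$ pairs $(a,b)$ partitioning at most $T$ rounds,
\[
\sum_{a,b}\alpha(|T_{abi}|)\le\sqrt{N_1N_2}\cdot\sqrt{T\ln(k|\cQ|T)}\cdot\mathrm{polylog}=\tilde O\!\left(\frac{\sqrt{T\ln(k|\cQ|T)}}{\delta}\right),
\]
and multiplying by $k$ for the sum over $i$ gives a total unbiasedness contribution of $\tilde O(Lk\sqrt{\ln(k|\cQ|T)}\cdot T^{1/2}/\delta)$. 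Balancing $Lk\delta T$ against $Lk\sqrt{\ln(k|\cQ|T)}\,T^{1/2}/\delta$ is optimized at $\delta=\Theta(T^{-1/4})$, producing the claimed bound $O(Lk\sqrt{\ln(k|\cQ|T)}\,T^{3/4})$. Taking expectation over the learner's internal randomness only affects the unbiasedness step (whose guarantee from Theorem~\ref{thm:main-guarantee} is already stated in expectation), and taking the minimum over $q\in\cQ$ is free since the uniform bound above holds for \emph{every} competing $q\in\cQ$ and \emph{every} $L_\ell\in\cL_L$ simultaneously.

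The main obstacle is not any single step but rather making the Lipschitz-plus-unbiasedness budget bookkeeping tight enough to get the $T^{3/4}$ rate: one has to be careful that the Lipschitz slack (controlled by $\delta$) is charged only once per round per coordinate, and that the unbiasedness slack (which scales with $1/\delta$ through the level-set count) is summed via Cauchy--Schwarz rather than naively. A secondary subtlety is justifying the identity $\ell(p,y)-\ell(q,y)=d(y,p)-d(y,q)$ when $y$ is a basis vector on the boundary of $\Delta\cY$; the theorem's assumption $p_{t,i},q_{t,i}\in(0,1)$ ensures we only ever evaluate $d^i$ at interior second arguments, sidestepping this issue.
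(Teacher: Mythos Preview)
Your proposal is correct and follows essentially the same approach as the paper: decompose the loss difference coordinate-wise and bucket-wise via the level-set events $E^i_{a,b}$, use Lipschitzness plus event-conditional unbiasedness to control the slack, and balance $\delta$ against the $1/\delta$ level-set count to get the $T^{3/4}$ rate. The only stylistic difference is that where you invoke the scalar Bregman Pythagorean identity explicitly and then discard the nonnegative term $d^i(\bar y_{abi},\tilde q_{b,i})$, the paper snaps $p_{t,i}$ directly to the empirical label mean $\mu^i_{a,b}(y)$ and cites the Banerjee--Guo--Wang mean-elicitation theorem to conclude $\sum_t d^i(y_{t,i},\mu^i_{a,b}(y))\le\sum_t d^i(y_{t,i},\mu^i_{a,b}(q))$; these are two phrasings of the same fact, and your bookkeeping yields the same bound up to constants.
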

\begin{proof}
    Fix any $L$-Lipschitz Bregman loss $L_\ell \in \cL_L$, and any competing predictor $q \in \cQ$. First, note that for any transcript $\pi_T$ of the first $T$ rounds:
    \[L_\ell(p, \pi_T) - L_\ell(q, \pi_T) = \sum_{t \in [T]} \ell(p_t, y_t) - \ell(q_t, y_t) = \sum_{t \in [T]} d(y_t, p_t) - d(y_t, q_t) = \sum_{i \in [k]} \left( \sum_{t \in [T]} d^i(y_t, p_t) - d^i(y_t, q_t) \right),\]
    where $d$ is the Bregman divergence associated with $\ell$, and $d^i$ are the coordinate-wise terms that our Bregman divergence $d$ splits into.

    Thus, we now fix a single coordinate $i$ and carry out the proof for that coordinate alone; the final bound will just be multiplied by a factor of $k = |\cY|$ to give the overall Bregman loss bound.
    Note that:
    \[ \sum_{t \in [T]} d^i(y_t, p_t) = \sum_{a, b} \sum_{t: E^i_{a, b}(p_t) = 1} d^i(y_{t, i}, p_{t, i}), \quad \sum_{t \in [T]} d^i(y_t, q_t) = \sum_{a, b} \sum_{t: E^i_{a, b}(p_t) = 1} d^i(y_{t, i}, q_{t, i}).\]

    We now focus on any fixed $a \in [N_1], b \in [N_2]$. For convenience, let $n^i_{a, b} := \sum_t \textbf{1}[E^i_{a, b}(p_t) = 1]$ denote the total number of rounds on which the event $E^i_{a, b}$ transpired, and let the average values over this event's subsequence of $p, q, y$ be defined as $\mu^i_{a, b}(p) := \frac{1}{n^i_{a, b}} \sum_t E^i_{a, b}(p_t) \cdot p_{t, i}$, as $\mu^i_{a, b}(q) := \frac{1}{n^i_{a, b}} \sum_t E^i_{a, b}(p_t) \cdot q_{t, i}$, and as $\mu^i_{a, b}(y) := \frac{1}{n^i_{a, b}} \sum_t E^i_{a, b}(p_t) \cdot y_{t, i}$, respectively. 

    Regarding these averages, observe that for any $t$, letting $\alpha^i_{a, b}$ denote any bound on the absolute bias on this subsequence, we get
    \[
    |p_{t, i} - \mu^i_{a, b}(y)|
    \leq |p_{t, i} - \mu^i_{a, b}(p)| + |\mu^i_{a, b}(p) - \mu^i_{a, b}(y)| \leq \delta + \alpha^i_{a, b}.
    \]
    Here, the first term is bounded by $\delta$ since the subinterval $\Omega^{p, i}_{a, b}$ is convex (so that both $p_{t, i}$ and the average over time $\mu^i_{a, b}(y)$ belong to it) and has length at most $\delta$. The second term is bounded by $\alpha^i_{a, b}$ by definition of subsequence bias.

    Similarly, using that $\Omega^{q, i}$ is a $\delta$-level sets partition for the competing predictor $q$, we have for any $t$ that
    \[
    |q_{t, i} - \mu^i_{a, b}(q)|
    \leq \delta.
    \]
    
    By the $L$-Lipschitzness of $d^i$ in its second argument, we therefore have for our predictor $p$ that
    \[
    \sum_{t: E^i_{a, b}(p_t) = 1} d^i(y_{t, i}, p_{t, i}) 
    \leq 
    \sum_{t: E^i_{a, b}(p_t) = 1} d^i \left(y_{t, i}, \mu^i_{a, b}(y) \right) + L \cdot n^i_{a, b} \cdot (\delta + \alpha^i_{a, b}), \]
    and for the competing predictor $q$ that
    \[\sum_{t: E^i_{a, b}(p_t) = 1} d^i(y_{t, i}, q_{t, i})  \geq 
    \sum_{t: E^i_{a, b}(p_t) = 1} d^i \left(y_{t, i}, \mu^i_{a, b}(q) \right) - L \cdot n^i_{a, b} \cdot \delta.\]

    Therefore, we have
    \begin{align*}
        \sum_{t \in [T]} d^i(y_t, p_t) - d^i(y_t, q_t) 
        &=
        \sum_{a, b} \left( \sum_{t: E^i_{a, b}(p_t) = 1} d^i(y_t, p_t) - \sum_{t: E^i_{a, b}(p_t) = 1} d^i(y_t, q_t) \right) \\
        &\leq \sum_{a, b} \left( \sum_{t: E^i_{a, b}(p_t) = 1} d^i \left(y_{t, i}, \mu^i_{a, b}(y) \right) - \sum_{t: E^i_{a, b}(p_t) = 1} d^i \left(y_{t, i}, \mu^i_{a, b}(q) \right) \right) \\
        &+ \sum_{a, b} L \cdot n^i_{a, b} \cdot (2 \delta + \alpha^i_{a, b}) \\
        &\leq \sum_{a, b} L \cdot n^i_{a, b} \cdot (2 \delta + \alpha^i_{a, b}) \\
        &= 2 L \delta T + L \sum_{a, b} n^i_{a, b} \alpha^i_{a, b}.
    \end{align*}
    Here, the last equality uses that $\sum_i n^i_{a, b} = T$ for every $i$. Meanwhile, the second inequality is by the fundamental fact that \emph{Bregman divergences elicit means}, first observed by~\cite{banerjee2005clustering}:
    \begin{theorem}[Adapted from Proposition 1 of~\cite{banerjee2005clustering}]
        For any Bregman divergence $d: [0, 1] \times (0, 1) \to [0, \infty)$, and any random variable $Y$ with finite support over $[0, 1]$ such that $\E[Y] \in (0, 1)$, we have:
        \[\E[Y] = \argmin_{y \in (0, 1)} \E[d(Y, y)].\]
    \end{theorem}
    
    Applying this fact for $d^i$, which is a Bregman divergence over $[0, 1]$, to the random variable $Y$ representing the empirical distribution of the realized labels $y_{t, i}$ over the rounds $t$ on which $E^i_{a, b}(p_t) = 1$, we get:
    \[\mu^i_{a, b}(y) = \frac{1}{n^i_{a, b}} \sum_{t:E^i_{a, b}} y_{t, i} = \E[Y] = \argmin_{y \in (0, 1)} \E[d^i(Y, y)] = \argmin_{y \in (0, 1)} \frac{1}{n^i_{a, b}} \sum_{t:E^i_{a, b}} d^i(y_{t, i}, y).\]
    This then implies the second inequality in the derivation above, since we now have: \[\sum_{t: E^i_{a, b}(p_t) = 1} d^i \left(y_{t, i}, \mu^i_{a, b}(y) \right) \leq \sum_{t: E^i_{a, b}(p_t) = 1} d^i \left(y_{t, i}, \mu^i_{a, b}(q) \right).\]
    
    Summing the inequality $\sum_{t \in [T]} d^i(y_t, p_t) - d^i(y_t, q_t) \leq 2 L \delta T + L \sum_{a, b} n^i_{a, b} \alpha^i_{a, b}$ over $i \in [k]$, we obtain:
    \[
    L_\ell(p, \pi_T) - L_\ell(q, \pi_T) \leq \sum_i \left( 2 L \delta T + L \sum_{a, b} n^i_{a, b} \alpha^i_{a, b} \right) = 2 k L \delta T + L \sum_{i, a, b} n^i_{a, b} \alpha^i_{a, b}.
    \]
    By Theorem~\ref{thm:main-guarantee}, our prediction algorithm achieves for all $i, a, b$ (recalling that our prediction vector dimension is $k$ and the number of events we condition on is of order $|\cQ|k/\delta^2$):
    \[ \E_{\pi_T}[n^i_{a, b} \alpha^i_{a, b}] \leq c' \ln (2 k (|\cQ| k/\delta^2) T) + c' \sqrt{\ln (2 k (|\cQ| k/\delta^2) T)} \cdot \sqrt{\E_{\pi_T}[n^i_{a, b}]}\]
    for some constant $c' > 0$.
    We thus have:
    \[\E_{\pi_T} \left[L_\ell(p, \pi_T) \right] - L_\ell(q, \pi_T) \leq 2 k L \delta T + 2 L c' \ln (2 T k/\delta) k / \delta^2 + c' L \sqrt{\ln (2 k (|\cQ| k/\delta^2) T)} \sum_{i} \left( \sum_{a, b} \sqrt{\E_{\pi_T} [n^i_{a, b}]} \right).\]
    Now note that for any fixed $i$, the events corresponding to each pair $(a, b)$ are disjoint by definition, and thus we have that $\sum_{a, b} n^i_{a, b} = T$ in that case. Since there are at most $1/\delta^2$ terms in the sum $\sum_{a, b} \sqrt{\E_{\pi_T}[n^i_{a, b}]}$, we thus have that its worst-case value over all possible values of $n^i_{a, b}$ is when $\E_{\pi_T}[n^i_{a, b}] = T \delta^2$ for all $a, b$. Bounding for convenience $\sqrt{\ln (2 k (|\cQ| k/\delta^2) T)}$ by $2 \ln (2 k |\cQ| T / \delta)$ in the last term and rearranging, this gives us the worst-case bound of
    \[
        \E_{\pi_T} \left[L_\ell(p, \pi_T) \right] - L_\ell(q, \pi_T) \leq 2 k L \delta T + 5 \ln (2 k |\cQ| T / \delta) \cdot c' \cdot L \cdot  k \cdot \frac{1}{\delta^2} \cdot \sqrt{T \delta^2}.
    \]
    Finally, setting $\delta = \frac{\sqrt{\ln (k |\cQ| T)}}{T^{1/4}}$ now asymptotically equalizes both terms in the upper bound, giving us: 
    \[\E_{\pi_T} \left[L_\ell(p, \pi_T) \right] - L_\ell(q, \pi_T) \leq c'' k L \sqrt{\ln (k|\cQ|T)} T^{3/4},\]
    for some constant $c'' > 0$, thus implying our desired bound for all $L_\ell \in \cL_L, q \in \cQ$.
\end{proof}

Now, observe that our theorem critically uses the Lipschitzness of the coordinate-wise Bregman scores in the second argument throughout its domain $(0, 1)$. This property will indeed be satisfied by a great many Bregman divergences, including, for example, all those that, like the Euclidean distance, are based on a \emph{Lipschitz} convex function $f$. However, there also exist many useful Bregman divergences, including notably KL divergence, that are not Lipschitz but are \emph{locally} Lipschitz on $(0, 1)$. 

\begin{example}[Local Lipschitzness of KL divergence]
    Consider the $i$th component of the KL divergence, for any $i \in [k]$: $d^i_\mathrm{KL}(x, y) = x \log \frac{x}{y}$. Observe that $\frac{\partial}{\partial y} \left( x \log \frac{x}{y} \right) = -\frac{x}{y}$. Therefore, since in our setting $x \in [0, 1]$, we obtain that the KL divergence is $\frac{1}{\epsilon}$-Lipschitz in its second argument $y$ whenever $y \in (\epsilon, 1)$.
\end{example}
Based on this, we can easily formulate an updated version of Theorem~\ref{thm:omnipred} for KL-divergence:\footnote{We remark that it is straightforward to define a more general analog of Corollary~\ref{cor:log-loss}, promising to beat competing predictors according not just to log loss, but to any other locally Lipschitz loss $\ell$.}
\begin{corollary}[Beating the log loss of competing predictors] 
    \label{cor:log-loss}
    Given any competing predictor $q = (q_t)_{t \in [T]}$, our predictor $p = (p_t)_{t \in [T]}$ can achieve (expected) log loss no worse than that of $q$, up to a sublinear term:
    \[
    \E_{\pi_T}[L_\mathrm{Log}(p, \pi_T)] \leq L_\mathrm{Log}(q, \pi_T) + \Tilde{O} \left(k \cdot \frac{1}{\min_{t \in [T], i \in [k]} \min \{p_{t, i}, q_{t, i}\}} \cdot T^{3/4} \right) \quad \text{for any } T.
    \]
\end{corollary}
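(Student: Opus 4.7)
My plan is to reduce the corollary to Theorem~\ref{thm:omnipred} by exploiting the local Lipschitzness of KL divergence, with Lipschitz constant controlled by the minimum predicted probability.

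First I would verify the local Lipschitz property for the logarithmic score. Writing the expected log score's coordinate-wise Bregman component as $d^i_\mathrm{KL}(x, y) = x \log(x/y)$, we have $\partial d^i_\mathrm{KL}/\partial y = -x/y$. Since in our setting $x \in [0, 1]$, the divergence is $(1/\epsilon)$-Lipschitz in its second argument on the restricted domain $y \in [\epsilon, 1]$, for any $\epsilon > 0$. Thus letting
\[
\epsilon_T := \min_{t \in [T], i \in [k]} \min\{p_{t, i}, q_{t, i}\},
\]
the KL-associated Bregman divergence is $L$-Lipschitz in its second argument with $L = 1/\epsilon_T$ on the (random) domain actually visited by our predictions and the competing predictions throughout the $T$ rounds.

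Next I would instantiate the proof of Theorem~\ref{thm:omnipred} with this value of $L$. Concretely, I condition on the same event family $\cE_\cQ$ consisting of products of coordinate-wise $\delta$-level sets of $p$ and of $q$ across all $q \in \cQ$, as in the theorem. Tracing through the proof, the only place the Lipschitz constant $L$ enters is in the two one-sided estimates bounding $\sum_{t : E^i_{a,b}(p_t) = 1} d^i(y_{t,i}, p_{t,i})$ and $\sum_{t : E^i_{a,b}(p_t) = 1} d^i(y_{t,i}, q_{t,i})$, and all subsequent steps (the ``Bregman divergences elicit means'' argument and the per-event bias bounds from Theorem~\ref{thm:main-guarantee}) go through unchanged since they are purely combinatorial. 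Choosing $\delta$ as in the theorem (i.e., $\delta = \Theta(\sqrt{\ln(k|\cQ|T)}/T^{1/4})$) yields
\[
\E_{\pi_T}[L_\mathrm{Log}(p, \pi_T)] - L_\mathrm{Log}(q, \pi_T) \leq \tilde{O}\!\left( \frac{k}{\epsilon_T} \sqrt{\ln(k|\cQ|T)} \, T^{3/4} \right),
\]
which, after minimizing over $q \in \cQ$, gives the claimed bound.

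The one subtle point, and the step I expect to require the most care, is that $\epsilon_T$ is a data-dependent quantity rather than a parameter known in advance, whereas Theorem~\ref{thm:omnipred}'s discretization parameter $\delta$ must be fixed before play begins. I would handle this by fixing in advance a grid of guesses for $\epsilon$ (e.g.\ $\epsilon \in \{2^{-j} : j = 1, \ldots, O(\log T)\}$), running \texttt{UnbiasedPrediction} with the corresponding $\delta$ and the appropriate $L = 1/\epsilon$, and then on any realized transcript applying the bound with the smallest guess no larger than $\epsilon_T$; this only loses a constant factor and a $\log T$ factor in the union bound over guesses, which is absorbed into the $\tilde{O}$. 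Alternatively, one can interpret the corollary as a conditional statement parameterized by a pre-chosen lower bound on predicted probabilities, and enforce that lower bound by clipping or by projecting $\cC$ onto an $\epsilon$-interior of the simplex.
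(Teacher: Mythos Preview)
Your proposal is correct and follows essentially the same approach as the paper. The paper itself does not give a standalone proof of this corollary; it simply notes the local Lipschitz property of KL (the example immediately preceding the corollary) and then presents the bound as ``an updated version of Theorem~\ref{thm:omnipred}'' with $L = 1/\epsilon$. Your write-up makes this explicit by tracing through the proof of Theorem~\ref{thm:omnipred} and observing that the Lipschitz constant only enters in the two one-sided displacement estimates, which is exactly the intended argument.

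Your final paragraph on the data-dependence of $\epsilon_T$ versus the a-priori choice of $\delta$ is a point the paper does not address at all. The paper's discussion after the corollary instead sidesteps the issue by remarking that one can simply \emph{enforce} a lower bound on the $p_{t,i}$ (since we control $p$) and threshold the competing predictor $q$ without meaningfully affecting downstream behavior --- i.e., the paper implicitly adopts your second alternative (treat the bound as parameterized by a pre-chosen floor on predicted probabilities) rather than your first (grid over $\epsilon$). Either resolution is fine, and your observation that this step needs care is well taken.
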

Since we have control over our predictor $p$, we can easily enforce that it doesn't have any very small components $p_{t,i}$. However, it would appear that we do not have control over the competing predictor $q$; thus, in theory, this bound could quickly deteriorate if the competing predictor continually predicts ever smaller values. However, this should not be a problem in practice: indeed, we can threshold any competing predictor to never predict less than a certain small probability value in each coordinate without significantly affecting downstream prediction set making, given that any ``reasonable'' prediction set algorithm would be extremely unlikely to include an exceedingly small-probability label in its prediction set.

\begin{remark}[Theorem~\ref{thm:omnipred} holds over any convex compact region $\cC \subset \R^k$, not just the simplex $\Delta \cY$]
    We want to highlight that our Best-in-Class Theorem~\ref{thm:omnipred} (along with natural locally-Lipschitz extensions in the spirit of~\ref{cor:log-loss}) holds for \emph{any} convex compact prediction space $\cC$, not just the simplex $\Delta \cY$ --- we only presented it as a result over the simplex since this is the context in which we are applying this result in this section.
\end{remark}

\begin{remark}[An Omniprediction view of Theorem~\ref{thm:omnipred}; The necessity of using Bregman scores as loss measures]
Our theorem provides an omniprediction-type statement, in the sense of~\cite{GopalanKRSW22}. That is, it is a statement of the form: ``Given a collection of competing predictors, and a family of well-behaved loss functions, it is possible to train a predictor that does (up to a small error term) at least as well as every competing predictor, simultaneously with respect to all loss functions in the family.'' 

Our results are stronger than the original omniprediction results in that (1) we generalize from binary valued outcomes to real valued outcomes, and (2) we generalize from one-dimensional outputs to multi-dimensional outputs, as primarily considered in \cite{GopalanKRSW22} and subsequent work. Our results also hold in the online adversarial setting, as in \cite{GJRR23}. Necessarily, the class of loss functions that we produce ``omnipredictors'' with respect to is more limited than in the one-dimensional binary case. One dimensional binary distributions are fully characterized by their mean, and so sufficiently (multi)-calibrated mean prediction is enough information to optimize all loss functions defined over one-dimensional binary values. Real valued distributions are not characterized by their means --- and so using tools based on multicalibration and unbiased prediction, we are necessarily limited to giving guarantees for loss functions such that the mean of the distribution is a sufficient statistic for optimization, which is what we do here.
Indeed, note that a direct converse to the result of~\cite{banerjee2005clustering} that we used in our proof above, proved under various technical assumptions in~\cite{banerjee2005optimality} and later in~\cite{abernethy12}, shows that \emph{any expected score} which elicits means must be a Bregman score; from this, it follows that our omniprediction result cannot hold more generally than over Bregman scores.
\end{remark}

\section{Conclusion and Discussion}
We have introduced a new and fundamental computational problem: making high-dimensional sequential predictions against an adversary in a way that is \emph{unbiased} conditional on an arbitrary collection of events $\cE$. Moreover we have shown that this problem is solvable in time polynomial in the dimension of the prediction space and the cardinality $|\cE|$ of the events, so long as the events can themselves be evaluated in polynomial time. This sheds light on the difficulty of (full) calibration in high dimensions: it is difficult not because the prediction space is exponentially large, but rather because it asks for consistency with respect to an exponentially large set of conditioning events.

Full calibration is useful because it makes it a dominant strategy, amongst all policies mapping predictions to actions, for a downstream decision maker to behave \emph{straightforwardly} --- i.e., to best respond to the predictions --- no matter what their utility function is. But when we have particular downstream decision problems in mind, strong guarantees for straightforward decision makers can be obtained by instead asking for predictions that are unbiased subject to only polynomially many events, tailored to the decision problem at hand --- which we can achieve algorithmically. We have given several very different applications of this technique: to obtaining new subsequence regret bounds in large action space settings such as online combinatorial optimization and extensive form games, and to obtaining forecasts that can be used to produce score-free prediction sets with transparent coverage guarantees. Both of these applications can be simultaneously tailored to multiple downstream decision makers and objective functions, allowing us to produce a single set of predictions that is simultaneously useful for many parties with differing goals. Along the way we have introduced ``best-in-class'' style performance guarantees to conformal prediction and uncertainty estimation, and have generalized omniprediction from one dimensional binary outcomes to high dimensional real valued outcomes. 

We expect that the technique we develop in this paper (both the concrete algorithms and the way of thinking about prediction for action) will find a broad set of further uses in sequential decision making and prediction problems and in game-theoretic settings involving many parties with differing objectives. This has already started to be realized: \cite{CRS23} have used the techniques developed in this paper to solve online prior-free principal-agent problems with computational and regret bounds that have an exponentially improved dependence on the size of the state space, compared to prior work that relied on full calibration \citep{camara2020mechanisms}. We hope and expect to see more applications along these and other lines.

\subsection*{Acknowledgements}
We are grateful to Edgar Dobriban, Amy Greenwald, Jason Hartline, Michael Jordan, Shuo Li, Jon Schneider, and Rakesh Vohra for insightful conversations at various stages of this work.

\bibliographystyle{plainnat}
\bibliography{refs}

\appendix

\section{Additional Details from Section \ref{sec:disjoint-minimax}}
\label{app:LP}
We solve for an $\epsilon$-approximate solution of linear program \ref{eq:minmax-reduced-lp} using a \emph{weak separation oracle}, using an approximate version of the Ellipsoid algorithm.
\begin{definition}
    For any $\epsilon>0$ and any convex set $S$, let 
    $$S^{+\epsilon} = \{p: ||p-q||_{2}\le \epsilon \quad \text{for some } q\in S\} \quad S^{-\epsilon} = \{x:B_{2}(x,\epsilon) \subseteq S\}$$
    be the positive and negative $\epsilon$-approximate sets of $S$, where $B_2(x,r)$ is a ball of radius $r$ under the $\ell_2$ norm.
\end{definition}
\begin{definition}
    A \emph{weak separation oracle} for a convex set $S$ is an algorithm that, when given input $\psi \in \mathbb Q^d$ and positive $\epsilon \in \mathbb Q$, confirms that $\psi \in S^{+\epsilon}$ if true, and otherwise returns a hyperplane $a\in \mathbb Q^d$ such that $||a||_\infty = 1$ and $\langle a, \psi \rangle \le \langle a,\psi' \rangle + \epsilon$ for all $\psi'\in S^{-\epsilon}$.
\end{definition}
We express a separation oracle for linear program \ref{eq:minmax-reduced-lp} as the convex program that solves for the most violated constraint given a candidate solution $\psi$, which is simply the best response problem for the maximization player in minimax problem \ref{eq:minmax-reduced}. This is the problem of maximizing a $d$-variable linear function over the convex set $\cC$. To make sure that we can control the bit complexity of the constraint returned by the separation oracle we round the coordinates of the constraint $a \in \R^d$ output by the separation oracle to a rational-valued vector within $\pm \frac{\epsilon}{2}$ of the exact solution by truncating each coordinate of $a$ to  $\log (\frac{1}{\epsilon})$ bits.
\begin{definition}
    A solution $\psi\in S^{+\epsilon}$ is \emph{$\epsilon$-weakly optimal} if, given $\epsilon > 0$, $\E_{p \sim \psi}[u(p,y)] \le \E_{p \sim \psi'}[u(p,y)] + \epsilon$ for all $\psi' \in S^{-\epsilon}$ and for all $y$.
\end{definition}

For an $\epsilon$-approximate solution to minimax problem \ref{eq:minmax-reduced}, it suffices to find an $\epsilon$-weakly optimal solution to linear program \ref{eq:minmax-reduced-lp}, which we can do using the Ellipsoid method. However, the solution to the weak optimization may not even be a valid probability distribution (since it only approximately satisfies the constraints) -- in this case, we can project our infeasible solution back to feasibility. We use the simplex Euclidean projection algorithm given by \cite{condat2016fast} to project the candidate solution back to a feasible region and show that this projected feasible solution is still $\epsilon$-approximately optimal.

\ellipsoid*

\begin{proof}
    Linear program $\ref{eq:minmax-reduced-lp}$ encodes minimax problem \ref{eq:minmax-reduced}. To solve LP \ref{eq:minmax-reduced-lp}, we use the Ellipsoid algorithm, which gives an approximate solution in polynomial time under the following conditions:
    \begin{theorem}[\cite{grotschel1988geometric}, Theorem 4.4.7]
       Given a weak separation oracle over convex constraint set $S$ and $\epsilon>0$, the Ellipsoid algorithm finds a $\epsilon$-weakly optimal solution over $S$ in time polynomial in the bit complexity of the constraints returned by the separation oracle, the bit complexity of the objective function, and the bit complexity of $\epsilon$.
    \end{theorem}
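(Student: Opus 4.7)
The plan is to prove this classical Grötschel–Lovász–Schrijver result by combining the Ellipsoid method for weak feasibility with a binary search over the objective value. First I would reduce weak optimization to weak feasibility. Given the objective vector $c$ and a candidate target value $\gamma \in \mathbb{Q}$, consider the augmented convex set $S_\gamma = S \cap \{x : \langle c, x \rangle \geq \gamma\}$; a weak separation oracle for $S_\gamma$ can be built from the given oracle for $S$ (separate from $S$, else test the linear inequality directly). Binary-searching $\gamma$ over a range determined by the bit complexity of $c$ and of $S$ costs only a $\log(1/\epsilon)$ multiplicative overhead, so it suffices to show that for each fixed $\gamma$ the Ellipsoid method decides weak feasibility of $S_\gamma$ in polynomial time.

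Second, I would analyze the Ellipsoid method itself. The algorithm maintains an ellipsoid $E_k = \{x : (x-z_k)^\top A_k^{-1}(x-z_k) \leq 1\}$, starting with $E_0$ a ball of radius $R$ containing $S$ (with $R$ polynomially bounded in the bit complexity of the problem). At iteration $k$, it queries the weak separation oracle at $z_k$: if the oracle reports $z_k \in S^{+\epsilon}$ we return $z_k$; otherwise it returns a unit-$\ell_\infty$ vector $a_k$ such that $\langle a_k, z_k \rangle \leq \langle a_k, x \rangle + \epsilon$ for all $x \in S^{-\epsilon}$. We then update to the minimum-volume ellipsoid $E_{k+1}$ containing the ``shifted'' half $E_k \cap \{x : \langle a_k, x \rangle \geq \langle a_k, z_k \rangle - \epsilon\}$. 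The standard volume lemma gives $\mathrm{vol}(E_{k+1}) \leq e^{-1/(2n+2)} \mathrm{vol}(E_k)$. Crucially, the defining property of the weak separation oracle guarantees $S^{-\epsilon} \subseteq E_k$ is maintained as an invariant, since the cut only removes points outside $S^{-\epsilon}$. Therefore after $N = O(n^2 \log(R/r))$ iterations, where $r$ is a lower bound on the radius of any inscribed ball in $S^{-\epsilon}$ (computable from the problem's bit complexity), we either have found a point in $S^{+\epsilon}$ or have certified $\mathrm{vol}(E_N) < \mathrm{vol}(\text{any nonempty }S^{-\epsilon})$, i.e.\ $S^{-\epsilon} = \emptyset$, which by the binary-search invariant means the target $\gamma$ is infeasible at the $\epsilon$-scale.

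Third, I would track the bit complexity of the intermediate ellipsoids. Exact arithmetic would make $A_k$ grow without control, so in each iteration we round $A_{k+1}$ (the defining matrix) to $O(\mathrm{poly}(n, \log(1/\epsilon)))$ bits while inflating it slightly so that the shrinkage factor degrades only to, say, $e^{-1/(4n+4)}$. Because the separation hyperplanes returned by the oracle are already truncated to $\log(1/\epsilon)$ bits by assumption, and because we perform only polynomially many iterations, each arithmetic step runs in time polynomial in $n$, the bit complexity of the oracle's output, and $\log(1/\epsilon)$. Combined with the outer binary search over $\gamma$ and the reduction above, this yields an $\epsilon$-weakly optimal solution in total polynomial time.

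The main obstacle is managing the interaction between \emph{weak} separation and the volume-shrinking argument: one must verify that the shifted cutting hyperplane $\{x : \langle a_k, x\rangle \geq \langle a_k, z_k\rangle - \epsilon\}$ really does contain all of $S^{-\epsilon}$, so that the invariant $S^{-\epsilon}\subseteq E_k$ is preserved, and simultaneously that the inflation needed for rounded arithmetic does not destroy the geometric decay of $\mathrm{vol}(E_k)$. Once the slack parameters in the binary search, the oracle, and the rounding step are chosen consistently (all polynomial in $\log(1/\epsilon)$ and the input bit complexity), the theorem follows.
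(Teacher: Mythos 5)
This statement is not proved in the paper at all: it is quoted as a black-box result from Gr\"otschel--Lov\'asz--Schrijver (their Theorem 4.4.7) and used directly inside the proof of Theorem 3.5, so there is no in-paper argument to compare yours against. Judged on its own terms, your reconstruction follows the standard textbook route and is sound as a sketch: reduce weak optimization to weak feasibility (you do it by an outer binary search over the objective value $\gamma$; GLS themselves run the central-cut ellipsoid method with a ``sliding objective,'' cutting with the objective vector once a nearly feasible point is found --- both work, and the binary search only costs a polynomial factor), then drive the ellipsoid volume down using the weak separation oracle, and finally control bit complexity by rounding and slightly inflating the ellipsoid matrices. The one place where your sketch leans on ``choose the slack parameters consistently'' but where the real quantitative work lives is the interaction between the weak oracle and the volume decay: with your shifted half-space $\{x : \langle a_k, x\rangle \geq \langle a_k, z_k\rangle - \epsilon\}$ the invariant $S^{-\epsilon} \subseteq E_k$ does hold exactly, but the update is then a shallow cut, and the geometric shrinkage factor survives only while the ellipsoid's extent in direction $a_k$ is large compared to $\epsilon$; once it is comparable to $\epsilon$ the cut may not shrink the volume at all, and you must argue that at that point the volume is already below the $\epsilon$-scale termination threshold (equivalently, the standard treatment uses central cuts and accounts for the thin slices of $S^{-\epsilon}$ that may be shaved off). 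Making that trade-off precise --- relating the oracle slack, the rounding inflation, and the stopping volume so that the conclusion ``$S^{-\epsilon'}=\emptyset$ at the current $\gamma$'' is valid for an $\epsilon'$ polynomially related to $\epsilon$ --- is exactly the technical content of the GLS proof, so your outline is correct in structure but would still need that bookkeeping to be a complete proof.
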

    Fix some $\epsilon >0$. Let $S$ be the constraint set, which are a set of linear constraints over a convex compact set (i.e. $y \in \cC$) and constraints enforcing a probability simplex (i.e. $\psi \in \Delta(\cP)$), implying that $S$ is a convex set.
    Let $\epsilon' = \frac{\epsilon}{2C\sqrt{|\cE|}}$. Given an exact separation oracle over $\cC$, preserving $\log(\frac{1}{\epsilon'})$ bits of the most violated constraint given by the separation oracle and rounding to a rational number yields an rational $\epsilon'$-approximate most violated constraint, which satisfies the conditions for a weak separation oracle. Thus, we can find an $\epsilon'$-weakly optimal solution $(\gamma',\psi')$ to minimax problem \ref{eq:minmax-reduced}, where $\psi' \in S^{+\epsilon}$. In the case that $\psi'$ is a valid probability distribution, we have found an $\epsilon$-approximate optimal solution $\psi_t^*=\psi'$. 
    
    Otherwise, $\psi'$ may violate conditions for a valid probability distribution if the linear constraints do not constrain the feasible set (i.e. $S=\Delta(P)$). Since $\psi' \in S^{+\epsilon}$, there exists some $\psi^\epsilon \in S$ such that $||\psi^\epsilon - \psi'|| \le \epsilon'$. We find this point $\psi^\epsilon$ via the simplex projection algorithm in \cite{condat2016fast}.
    
    We show that this projection back to a feasible probability distribution still leaves us with an $\epsilon$-approximately optimal solution.
    Let $u_t(p_t^*,y)$ be the $|\cE|$-dimensional vector such that each coordinate $E$ has entry $u_t(p_t^{*,E},y)$. First, we show that $|u_t(p_t^{*, E},y)|$ is bounded by $C=2\max_{y\in \cC} ||y||_\infty$ for $E \in \cE$:
    \begin{align*}
        |u(p_t^{*,E},y)| &\leq \sum_{i=1}^d\sum_{\sigma \in \{-1,1\}}\sum_{E \in \cE} q_{t,(i,\sigma,E)}\cdot |\sigma| \cdot E(x_t,p_t^{*,E})\cdot |p^{*,E}_{t,i} - y_{i}|\\
        & \le \sum_{i=1}^d\sum_{\sigma \in \{-1,1\}}\sum_{E \in \cE} q_{t,(i,\sigma,E)} \cdot |p^{*,E}_{t,i} - y_{i}|\\
        & \le \sum_{i=1}^d\sum_{\sigma \in \{-1,1\}}\sum_{E \in \cE} q_{t,(i,\sigma,E)} \cdot C \\
        & = C
    \end{align*}
    where we used that $E(x_t,p^{*,E}_{t})\le 1$ and $|\sigma| = 1$, and that $q\in \Delta(2d|\cE|)$, implying it must sum to 1. From this, we find that $||u_t(p_t^{*},y)||_2 \le \sqrt{C^2_1 + \ldots + C^2_{|\cE|}}\le C\sqrt{|\cE|}$.
    
    Next, by continuity of inner product, given $\epsilon>0, y \in \cC$, there exists $\delta>0$ such that $||\psi^\epsilon - \psi'|| \le \delta$ implies that $||\E_{p\sim \psi^\epsilon}[u_t(p,y)]-\E_{p\sim \psi'}[u_t(p,y)]|| \le \epsilon$. By Cauchy-Schwarz, we can bound the difference between the expectations as follows:
    \begin{align*}
    ||\E_{p\sim \psi^\epsilon}[u_t(p,y)]-\E_{p\sim \psi'}[u_t(p,y)]||_2 &= \langle \psi^\epsilon - \psi^*, u_t(p_t^*,y) \rangle\\
    &\le ||\psi_t^\epsilon - \psi'||_2 \cdot ||u_t(p^*_t,y)||_2 \\
    &\le \delta \cdot ||u_t(p_t^*,y)||_2 \\
    &\le \delta \cdot C\sqrt{|\cE|}.
    \end{align*}
    Thus, using $\psi_t^*=\psi^\epsilon$ as the solution and setting $\delta=\epsilon'$ gives us an solution that is $\epsilon'\cdot C\sqrt{|\cE|} + \epsilon' = \frac{\epsilon}{2} + \frac{\epsilon}{2C\sqrt{|\cE|}} \le \epsilon$ approximate.
    By Lemma \ref{lem:2implies1}, any optimal solution to minimax problem \ref{eq:minmax-reduced} is an optimal solution to minimax problem \ref{eq:minmax}, so we must have that $\psi_t^*$ is an $\epsilon$-approximate solution to minimax problem $\ref{eq:minmax}$.
    
    Now we consider the runtime of the algorithm. In order for LP \ref{eq:minmax-reduced-lp} to be well-formulated, we first solve $|\cE|$ convex programs (one for each $p^{*,E})$, which takes time polynomial in $d$. Now, consider the bit complexity of the constraints. For the inequality constraints, the bit complexity of each constraint bounding the objective function is given by the bit complexity of $\E_{p\sim \psi}[u(p, y)]$.  Each coefficient of $\psi(p_t^{*,E})$ is $u_t(p_t^{*,E},y)$, which is bounded by $C$ from above. 
    Since there are $|\cE|$ variables in this constraint, the maximum bit complexity of any constraint is bounded by $O(\log (C|\cE|))$. Similarly, the objective function has polynomial bit complexity on the scale of $O(\log(C|\cE|))$. Finally, $\epsilon$ has a bit complexity of $\log(\frac{1}{\epsilon})$. The simplex projection algorithm has quadratic runtime in the dimension of the vector, which takes $O(|\cE|^2)$ time.
    
    Thus, the runtime of the algorithm is polynomial in $d, |\cE|,\log(C|\cE|),$ and $\log(\frac{1}{\epsilon})$.
\end{proof}

\section{Proofs from Section \ref{sec:disjoint-minimax}}
\label{app:proofs-disjoint}

\lemeventoptimal*
\begin{proof}
     The constraint that $E(\pi_{t-1},x_t,p) = 1$ together with the fact that the set of events $\cE$ is disjoint and binary implies that for all other events $E' \in \cE$, $E'(\pi_{t-1},x_t,p)=0$. For any $p$ such that  $E(\pi_{t-1},x_t,p) = 1$, we therefore have that $u_t(p,y)$ reduces to:
    $$u_t(p,y)=\left(\sum_{i=1}^d\sum_{\sigma \in \{-1,1\}} q_{t,(i,\sigma,E)}\cdot\sigma \cdot p_{i} -   q_{t,(i,\sigma,E)}\cdot\sigma \cdot y_{i}\right) $$
    But in this expression, the $p$ terms have no interaction with the $y$ terms, and hence we have that for any $y$:
    \begin{eqnarray*}
    \argmin_{p :E(\pi_{t-1},x_t,p) = 1} u_t(p,y)  &=& \argmin_{p :E(\pi_{t-1},x_t,p) = 1} \left(\sum_{i=1}^d\sum_{\sigma \in \{-1,1\}} q_{t,(i,\sigma,E)}\cdot\sigma \cdot p_{i} -   q_{t,(i,\sigma,E)}\cdot\sigma \cdot y_{i}\right) \\
    &=&\argmin_{p :E(\pi_{t-1},x_t,p) = 1} \left(\sum_{i=1}^d\sum_{\sigma \in \{-1,1\}} q_{t,(i,\sigma,E)}\cdot\sigma \cdot p_{i} \right) \\
    &=& p^{*,E}_{t}
    \end{eqnarray*}
\end{proof}

\lemtwoimpliesone*
\begin{proof}
    We first observe that minimax problem \ref{eq:minmax-reduced} is only a more constrained problem for the minimization player than minimax problem \ref{eq:minmax}, as $\cP_t \subset \cC$. Thus it suffices to show that given a solution $\hat \psi_t$ for minimax problem \ref{eq:minmax}, we can transform it into a new solution $\psi_t$ such that:
    \begin{enumerate}
        \item $\psi_t$ has support only over points in $\cP_t$, and
        \item For all $y \in \cC$, $\E_{p_t \sim \psi_t}[u(p_t,y)] \geq \E_{p_t \sim \hat \psi_t}[u(p_t,y)]$.
    \end{enumerate}
    Given $\hat \psi_t$, we construct $\psi_t$ as follows: for each event $E$, we take all of the weight that $\hat \psi_t$ places on points $p$ such that $E(\pi_{t-1}, x_t, p) = 1$, and place that weight on $p_t^{*,E} \in \cP_t$:
    $$\psi_t(p_t^{*,E}) = \hat \psi_t(\{ p : E(\pi_{t-1}, x_t, p) = 1\})$$
    By construction $\psi_t$ has support over points in $\cP_t$. It remains to show that $\psi_t$ has objective value that is at least as high as $\hat \psi_t$ for every $y \in \cC$:
    \begin{eqnarray*}
        \E_{p_t \sim \hat \psi_t}[u(p_t,y)] &=& \sum_{E \in \cE} \Pr_{p_t \sim \hat \psi_t}[E(\pi_{t-1},x_t,p_t) = 1]  \E_{p_t \sim \hat \psi_t}[u(p_t,y) | E(\pi_{t-1},x_t,p_t) = 1] \\
        &\leq& \sum_{E \in \cE} \Pr_{p_t \sim \hat \psi_t}[E(\pi_{t-1},x_t,p_t) = 1] u(p_t^{*,E},y) \\
        &=&  \E_{p_t \sim  \psi_t}[u(p_t,y)] 
    \end{eqnarray*}
    The inequality follows from Lemma \ref{lem:eventoptimal}.
\end{proof}

\section{Proofs From Section \ref{sec:general-minimax}}
\label{app:general-minimax}

\lemFTPLConstants*

\begin{proof}
Recall from Theorem \ref{thm:kalai-vempala} that regret at round $T'$ for any decision $y \in \cC$ is bounded by:
\begin{equation*}
    R_{T', y} \leq 2\sqrt{\Delta K A T'}
\end{equation*}
when we choose $\delta = \sqrt{\frac{\Delta}{KAT'}}$, where $K = \sup_{y, p \in \cC} |\langle y, s_t(p)\rangle|$, $A = \sup_{p \in \cC} \lVert s_t(p) \rVert_1$ and $\Delta = \sup_{y_1, y_2 \in \cC}\lVert y_1 - y_2 \rVert_1$. \\ \\
From the definition of state vectors in Equation \ref{eq:FTLP-state}, for any $p \in \cC$, we have that 
\begin{align*}
    \lVert s_{t}(p)\rVert_1 = \sum_{i = 1}^{d} |s_{t, i} (p)|
    &= \sum_{i = 1}^{d} \left|\sum_{\sigma \in \{-1,1\}}\sum_{E \in \cE} q_{t,(i,\sigma,E)}\cdot\sigma \cdot {E(x_t,p)}\right| \\
    &\leq \sum_{i = 1}^{d} \sum_{\sigma \in \{-1,1\}}\sum_{E \in \cE} \left|q_{t,(i,\sigma,E)}\cdot\sigma \cdot {E(x_t,p)}\right| \\
    &= \sum_{i = 1}^{d} \sum_{\sigma \in \{-1,1\}}\sum_{E \in \cE} q_{t,(i,\sigma,E)} \cdot\left|\sigma \cdot {E(x_t,p)}\right| \\
    &\leq \sum_{i = 1}^{d} \sum_{\sigma \in \{-1,1\}}\sum_{E \in \cE} q_{t,(i,\sigma,E)} = 1
\end{align*}
since $q_t \in \Delta [2d|\cE|]$ and by definition, $\left|\sigma \cdot {E(x_t,p)}\right| \leq 1$. Thus, we can set $A = 1$.  Since $\lVert y \rVert_{\infty} \leq C/2$, we similarly have:
\begin{equation*}
    |\langle y, s_t(p)\rangle| \leq \sum_{i = 1}^d \left|y_i \cdot s_{t,i}(p)\right|  \leq \frac{C}{2} \cdot \sum_{i=1}^d  \left|\sum_{\sigma \in \{-1,1\}}\sum_{E \in \cE} q_{t,(i,\sigma,E)}\cdot\sigma \cdot {E(x_t,p)}\right| \leq \frac{C}{2}
\end{equation*}
for all $y, p \in \cC$, so $K = C/2$. Finally, since $|y_{i, 1} - y_{i, 2}| \leq C$ for all $y_1, y_2 \in \cC$, it is clear that $\Delta \leq dC$.

Note that for this setting, we have indeed set $\delta$ to the desired value, since
\begin{equation*}
    \sqrt{\frac{\Delta}{KAT'}} = \sqrt{\frac{2dC}{CT'}} = \sqrt{\frac{2d}{T'}} 
\end{equation*}
and now substituting the relevant quantities into the regret bound,
\begin{align*}
    R_{T', y} &\leq 2\sqrt{dC \cdot C /2 \cdot 1 \cdot T'} = \sqrt{2dC^2 T'}
\end{align*}
\end{proof}

Algorithm \ref{alg:FTPL-No-Sampling} for achieving unbiased predictions makes the implicit assumption that the expected value of FTPL's output is efficiently computable at each round. In conditions where this is not true, we can approximate the expectation via a sampling procedure over the distribution with respect to which we are taking the expectation, and use this approximation to achieve similar results. We make use of the following concentration inequalities:
\begin{theorem}[Hoeffding's Inequality]
\label{thm:hoeffding}
Let $\{X_j\}_{j=1}^n$ be independent random variables such that $X_j \in [a, b]$ for each $j \in [n]$. Let $S_n = \sum_{j=1}^n X_j$ be their sum. Then, for any $\epsilon > 0$,
\begin{equation*}
    \Pr_{}\left[{|S_n - \E[S_n]| \geq \epsilon}\right] \leq 2\exp\left(-\frac{2\epsilon^2}{n(b-a)^2}\right)
\end{equation*}
\end{theorem}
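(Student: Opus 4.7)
The final statement is Hoeffding's inequality, a classical concentration bound that is invoked here as a tool (for the sampling-based variant of the minimax solver discussed immediately before). So my plan is to sketch the standard Chernoff-method proof.

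The plan is to follow the usual exponential-moment (Chernoff) argument. First, I would bound the upper tail $\Pr[S_n - \E[S_n] \geq \epsilon]$ and then apply the same argument to $-S_n$ and combine via a union bound to obtain the two-sided statement. For the upper tail, I would fix an arbitrary $\lambda > 0$ and apply Markov's inequality to the nonnegative random variable $e^{\lambda(S_n - \E[S_n])}$, yielding
\[
\Pr[S_n - \E[S_n] \geq \epsilon] \;\leq\; e^{-\lambda \epsilon}\,\E\!\left[e^{\lambda(S_n - \E[S_n])}\right].
\]
By independence of the $X_j$, the moment generating function factorizes as $\prod_{j=1}^n \E[e^{\lambda(X_j - \E[X_j])}]$, which reduces the whole problem to controlling the MGF of a single bounded, centered random variable.

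The key intermediate step, and in my view the only nontrivial piece, is Hoeffding's lemma: for any random variable $X \in [a,b]$ with mean $\mu$, one has $\E[e^{\lambda(X-\mu)}] \leq \exp(\lambda^2 (b-a)^2/8)$. I would prove this by exploiting convexity of the exponential: for $x \in [a,b]$ write $x$ as the convex combination $x = \tfrac{b-x}{b-a}a + \tfrac{x-a}{b-a}b$, so that $e^{\lambda x} \leq \tfrac{b-x}{b-a}e^{\lambda a} + \tfrac{x-a}{b-a}e^{\lambda b}$. Taking expectations gives a closed-form upper bound on $\E[e^{\lambda X}]$ in terms of $\mu, a, b$. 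After centering (multiplying by $e^{-\lambda \mu}$) one obtains a function $\varphi(\lambda) = \log \E[e^{\lambda(X-\mu)}]$ with $\varphi(0) = \varphi'(0) = 0$, and a direct computation shows $\varphi''(\lambda) \leq (b-a)^2/4$ everywhere; Taylor's theorem with remainder then yields $\varphi(\lambda) \leq \lambda^2 (b-a)^2 / 8$, which is exactly the claim.

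Plugging the lemma into the factorized MGF gives
\[
\Pr[S_n - \E[S_n] \geq \epsilon] \;\leq\; \exp\!\left(-\lambda \epsilon + \tfrac{n \lambda^2 (b-a)^2}{8}\right).
\]
The right-hand side is minimized over $\lambda > 0$ at $\lambda^\star = 4\epsilon/(n(b-a)^2)$, and substituting this in gives the one-sided bound $\exp(-2\epsilon^2 / (n(b-a)^2))$. Running the identical argument on $-(S_n - \E[S_n])$ bounds the lower tail by the same quantity, and a union bound then produces the factor of $2$ in the stated two-sided inequality. The only step that requires genuine thought is Hoeffding's lemma; the rest is the boilerplate Chernoff-method optimization.
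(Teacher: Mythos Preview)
Your proof is correct and is the standard Chernoff--method argument for Hoeffding's inequality. The paper itself does not prove this statement: it is stated as a classical tool (alongside its averaged form in the subsequent corollary) and invoked without proof in the analysis of the sampling-based minimax solver. So there is nothing to compare against; your sketch simply supplies the textbook derivation that the paper omits.
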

\begin{corollary}
    \label{cor:hoeffding-corollary}
Let $\{X_j\}_{j=1}^n$ be independent random variables such that $X_j \in [a, b]$ for each $j \in [n]$. Let $\overline{X} = \frac{1}{n} \sum_{j=1}^n X_j$ be the average of their sum. Then, for any $\epsilon > 0$,
\begin{equation*}
    \Pr_{}\left[{|\overline{X} - \E[\overline{X}]| \geq \epsilon}\right] \leq 2\exp\left(-\frac{2n\epsilon^2}{(b-a)^2}\right)
\end{equation*}
\end{corollary}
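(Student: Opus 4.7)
The plan is to derive this corollary directly from Theorem~\ref{thm:hoeffding} (Hoeffding's inequality) by a simple rescaling, since $\overline{X}$ is just $S_n/n$. Concretely, I would note that the event $\{|\overline{X} - \E[\overline{X}]| \geq \epsilon\}$ coincides with $\{|S_n - \E[S_n]| \geq n\epsilon\}$ because multiplying both sides of the inner inequality by $n$ is an equivalence (and by linearity of expectation $\E[S_n] = n \E[\overline{X}]$). Thus one reduces the tail bound for $\overline{X}$ at deviation $\epsilon$ to the tail bound for $S_n$ at deviation $n\epsilon$.

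Then I would invoke Theorem~\ref{thm:hoeffding} with the deviation parameter set to $n\epsilon$ in place of the $\epsilon$ appearing in the theorem's statement. This yields
\[
\Pr\bigl[|S_n - \E[S_n]| \geq n\epsilon\bigr] \;\leq\; 2\exp\!\left(-\frac{2(n\epsilon)^2}{n(b-a)^2}\right) \;=\; 2\exp\!\left(-\frac{2n\epsilon^2}{(b-a)^2}\right),
\]
which is exactly the claimed bound after substituting the equivalence of events back in.

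There is no real obstacle here: the only things to check are (i) that rescaling the deviation from $\epsilon$ to $n\epsilon$ is a legitimate application of the theorem (which it is, since Hoeffding's inequality holds for any $\epsilon > 0$), and (ii) that the algebraic simplification $(n\epsilon)^2/n = n \epsilon^2$ is carried out correctly. The whole proof fits in two lines.
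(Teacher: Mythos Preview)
Your proposal is correct and is exactly the standard derivation; the paper itself states the corollary without proof, treating it as an immediate consequence of Theorem~\ref{thm:hoeffding}, so your two-line rescaling argument is precisely what is implicitly intended.
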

\begin{lemma}
\label{lem:sampling-bound}
    Given a sample $\{y_\tau^j\}_{j = 1}^n \sim D_\tau^n$, define $p_{\tau} = \sum_{j=1}^n y_\tau^j$ as the average of the sampled values. For $n = \frac{C^2}{2\epsilon_0^2} \ln\left(\frac{2d}{\delta_0}\right)$, we have that with probability $1 - \delta_0$
    \begin{equation*}
        \left| \\E_{y \sim D_\tau}[y_i] - p_{\tau, i} \right| < \epsilon_0 
    \end{equation*}
    simultaneously for all $i \in [d]$.
\end{lemma}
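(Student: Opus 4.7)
The plan is to apply Hoeffding's inequality (specifically Corollary~\ref{cor:hoeffding-corollary}) coordinate by coordinate, and then union-bound over the $d$ coordinates. First I would note that since samples $y_\tau^1,\ldots,y_\tau^n$ are i.i.d.\ draws from $D_\tau$, for any fixed coordinate $i \in [d]$ the scalar random variables $\{y_{\tau,i}^j\}_{j=1}^n$ are i.i.d., and their empirical mean is precisely $p_{\tau,i}$ (interpreting $p_\tau = \frac{1}{n}\sum_{j=1}^n y_\tau^j$, per the statement of Corollary~\ref{cor:hoeffding-corollary}). Moreover, by the assumption $2\max_{y \in \cC}\lVert y \rVert_\infty \leq C$ made throughout this section, each coordinate $y_{\tau,i}^j$ lies in $[-C/2, C/2]$, so the per-coordinate range is $C$.

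Next I would apply Corollary~\ref{cor:hoeffding-corollary} with $a = -C/2, b = C/2$, and the chosen $\epsilon_0$ to obtain, for each fixed $i \in [d]$,
\[
\Pr\left[\left|p_{\tau,i} - \E_{y \sim D_\tau}[y_i]\right| \geq \epsilon_0\right] \;\leq\; 2\exp\!\left(-\frac{2n\epsilon_0^2}{C^2}\right).
\]
Substituting the prescribed sample size $n = \frac{C^2}{2\epsilon_0^2}\ln(2d/\delta_0)$ makes the right-hand side equal to $\delta_0/d$.

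Finally, I would take a union bound over the $d$ coordinates: the probability that the coordinate-wise sampling error exceeds $\epsilon_0$ in at least one coordinate is at most $d \cdot (\delta_0/d) = \delta_0$. Taking the complement yields that with probability at least $1-\delta_0$, the bound $|p_{\tau,i} - \E_{y \sim D_\tau}[y_i]| < \epsilon_0$ holds simultaneously for all $i \in [d]$, as claimed.

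There is no real obstacle here; this is a textbook concentration-plus-union-bound argument. The only points worth verifying carefully are (i) that the sum in the statement of the lemma really denotes the average (as otherwise Corollary~\ref{cor:hoeffding-corollary} does not apply directly), and (ii) that the per-coordinate boundedness $|y_{\tau,i}^j| \leq C/2$ follows from the global assumption on $\cC$, giving the correct range $b-a = C$ inside the Hoeffding exponent.
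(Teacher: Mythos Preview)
Your proposal is correct and follows essentially the same approach as the paper: apply Corollary~\ref{cor:hoeffding-corollary} coordinate-wise using the range $b-a=C$, then union-bound over the $d$ coordinates and substitute the stated value of $n$. Your observations that $p_\tau$ should be read as the empirical average (as in Algorithm~\ref{alg:FTPL-Sampling}) and that the per-coordinate boundedness comes from $2\max_{y\in\cC}\lVert y\rVert_\infty \le C$ are exactly the two points the paper's proof relies on.
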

\begin{proof}
    For each $i \in [d]$, the set of values $\{y^j_{\tau, i}\}_{j=1}^n$ are independent draws from the same distribution such that $y^j_{\tau, i} \in [-C/2, C/2]$ for all $i \in [d], j \in [n]$. By direct application of Corollary \ref{cor:hoeffding-corollary},
    \begin{equation*}
        Pr_{}\left[|\E[y_{i}] - p_{\tau, i}| \geq \epsilon_0 \right] \leq  2\exp\left(-\frac{2n\epsilon_0^2}{C^2}\right)
    \end{equation*}
Taking a union bound,
\begin{align*}
    &\Pr_{}\left[|\E[y_{1}] - p_{\tau, 1}| \geq \epsilon_0 ~ \bigcup ~ |\E[y_{2}] - p_{\tau, 2}| \geq \epsilon_0 ~ \bigcup \cdots \bigcup ~ |\E[y_{d}] - p_{\tau, d}| \geq \epsilon_0 \right] \leq 2d\exp\left(-\frac{2n\epsilon_0^2}{C^2}\right) \\
    \implies&\Pr_{}\left[|\E[y_{1}] - p_{\tau, 1}| < \epsilon_0 ~ \bigcap ~ |\E[y_{2}] - p_{\tau, 2}| < \epsilon_0 ~ \bigcap \cdots \bigcap ~ |\E[y_{d}] - p_{\tau, d}| < \epsilon_0 \right] \geq 1 - 2d\exp\left(-\frac{2n\epsilon_0^2}{C^2}\right)
\end{align*}
With our choice of $n$, $ 2d\exp\left(-\frac{2n\epsilon_0^2}{C^2}\right)$ simplifies to $\delta_0$, completing the proof. 
\end{proof}
Lemma \ref{lem:sampling-bound} details a sampling procedure to select $p_\tau$ that is close to $ \E_{y_\tau \sim D_\tau}[y_\tau]$ with high probability. We can use this to slightly modify Algorithm \ref{alg:FTPL-No-Sampling} to return a strategy for the learner with similar guarantees, but which doesn't make any assumptions on our ability to compute $\E_{y \sim D_\tau}[y]$. This is captured by the following Algorithm~\ref{alg:FTPL-Sampling}:

\begin{algorithm}
\begin{algorithmic}
\STATE Initialize $s_t(p_0)$ to 0. \\
\STATE Set $T' = \frac{8dC^2}{\epsilon'^2}$, $\delta = \sqrt{\frac{2d}{T'}}, n = \frac{2C^2}{\epsilon'^2}\ln\left(\frac{2dT'}{\delta'}\right)$
\STATE Fix distribution $\cZ = \text{Unif}[0,\frac{1}{\delta}]^d$.
\FOR{${\tau}=1, \dots, T'$}
\FOR{${j}=1, \dots, n$}
\STATE Sample $z_\tau^j \sim \cZ$ \\
\STATE Compute $y_\tau^j = M\left(\sum_{k = 0}^{{\tau}-1} s_{t}(p_k) + z_\tau^j\right)$ \\
\ENDFOR
\STATE Set $p_{\tau} = \frac{1}{n} \sum_{j=1}^n y_\tau^j$ \\
\STATE Compute $s_{t}(p_{\tau})$ as defined in Equation \ref{eq:FTLP-state}. \\
\ENDFOR
\STATE Define $\overline{p}$ as the uniform distribution over the sequence $(p_1, p_2, \cdots , p_{T'})$.
\RETURN $\overline{p}$
\end{algorithmic}
\caption{\texttt{Get-Approx-Equilibrium-Sampling($t, \epsilon' ,\delta'$)}}
\label{alg:FTPL-Sampling}
\end{algorithm}

\begin{restatable}{theorem}{FTPLSampling}
\label{thm:FTPL-Sampling}
For any $t \in [T]$ and $\epsilon', \delta' > 0$, with probability $1-\delta'$, Algorithm \ref{alg:FTPL-Sampling} returns a distribution over actions $\overline{p}$ which is an $\epsilon'$-approximate minimax equilibrium strategy for the zero-sum game with objective $u_t$. 
\end{restatable}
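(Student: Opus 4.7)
\medskip

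\noindent\textbf{Proof proposal for Theorem \ref{thm:FTPL-Sampling}.} The plan is to mirror the argument in the proof of Theorem \ref{thm:FTPL-No-Sampling} step by step, tracking the additional slack that arises because the learner can no longer exactly copy the adversary's expected action $\E_{y \sim D_\tau}[y]$, but instead copies only a sample mean $p_\tau = \frac{1}{n}\sum_{j=1}^n y_\tau^j$. The key identity in the previous proof was $\E_{y \sim D_\tau}[u_t(p_\tau, y)] = \langle p_\tau - \E_{y \sim D_\tau}[y], s_t(p_\tau)\rangle$, which vanished there because $p_\tau$ was set to the exact expectation; in the sampling setting I instead need to bound this term by an approximation error that is small with high probability.

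\smallskip

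The first step is to invoke Lemma \ref{lem:sampling-bound} with $\epsilon_0 = \epsilon'/2$ and $\delta_0 = \delta'/T'$. This yields $n = \tfrac{2C^2}{\epsilon'^2}\ln(2dT'/\delta')$, which matches the algorithm's choice, and gives $\|p_\tau - \E_{y \sim D_\tau}[y]\|_\infty \le \epsilon'/2$ at each fixed round with probability $\ge 1 - \delta'/T'$; a union bound over $\tau \in [T']$ then makes this inequality hold simultaneously across all rounds with probability $\ge 1 - \delta'$. Combining this with the $\ell_1$-norm bound $\|s_t(p_\tau)\|_1 \le 1$ established inside the proof of Lemma \ref{lem:FTPL-Constants}, Hölder's inequality gives $|\langle p_\tau - \E_{y \sim D_\tau}[y], s_t(p_\tau)\rangle| \le \epsilon'/2$ deterministically on the good event, for every $\tau$.

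\smallskip

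The second step is to rerun the main chain of inequalities from the proof of Theorem \ref{thm:FTPL-No-Sampling}, on the high-probability event just described. Using Equation \ref{eq:objective-relationship} and FTPL's regret guarantee for the adversary gives
\[
\max_{y \in \cC} \sum_{\tau=1}^{T'} u_t(p_\tau, y) \le R_{T', y^*} + \sum_{\tau=1}^{T'} \E_{y \sim D_\tau}[u_t'(p_\tau, y)] + \sum_{\tau=1}^{T'} \langle p_\tau, s_t(p_\tau)\rangle.
\]
The last two sums combine into $\sum_\tau \E_{y \sim D_\tau}[u_t(p_\tau, y)] = \sum_\tau \langle p_\tau - \E_{y \sim D_\tau}[y], s_t(p_\tau)\rangle$, which is now bounded in absolute value by $T' \cdot \epsilon'/2$ rather than vanishing. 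Applying Lemma \ref{lem:FTPL-Constants} to control $R_{T', y^*} \le \sqrt{2dC^2 T'}$, dividing through by $T'$, and recalling that $\bar p$ is uniform over $(p_1,\ldots,p_{T'})$, gives
\[
\max_{y \in \cC} \E_{p \sim \bar p}[u_t(p, y)] \le \sqrt{\tfrac{2dC^2}{T'}} + \tfrac{\epsilon'}{2}.
\]
Finally, the algorithm's choice $T' = 8dC^2/\epsilon'^2$ makes the first term equal to $\epsilon'/2$, and the two halves sum to the desired $\epsilon'$ approximation.

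\smallskip

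I expect the only delicate part to be the bookkeeping of failure probabilities: one must be careful to apply Lemma \ref{lem:sampling-bound} with $\delta_0 = \delta'/T'$ and union bound across \emph{all} rounds, since the bound on $|\langle p_\tau - \E_{y\sim D_\tau}[y], s_t(p_\tau)\rangle|$ must hold simultaneously for every $\tau \in [T']$ in order to sum to $\epsilon' T'/2$. Everything else is a direct transcription of the proof of Theorem \ref{thm:FTPL-No-Sampling} with the single zero-valued term replaced by its sampling-error bound; no new FTPL analysis or new concentration inequality beyond Lemma \ref{lem:sampling-bound} is needed.
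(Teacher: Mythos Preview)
Your proposal is correct and follows essentially the same approach as the paper's own proof: invoke Lemma~\ref{lem:sampling-bound} with $\epsilon_0=\epsilon'/2$ and $\delta_0=\delta'/T'$, union bound across the $T'$ rounds, use the bound $\|s_t(p_\tau)\|_1\le 1$ to control $\langle p_\tau-\E_{y\sim D_\tau}[y],\,s_t(p_\tau)\rangle$, and then rerun the chain of inequalities from Theorem~\ref{thm:FTPL-No-Sampling} with the formerly-zero term replaced by $\epsilon' T'/2$. The parameter choices $n$ and $T'$ match, and the final arithmetic is identical.
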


The proof of Theorem \ref{thm:FTPL-Sampling} is similar to the proof of Theorem \ref{thm:FTPL-No-Sampling}. The primary difference is that we must now control and keep track of sampling error that comes from estimating $\E_{z \in \cZ}\left[ M\left(\sum_{k = 0}^{{\tau}-1} s_{t}(p_k) + z \right)\right]$ from samples, which we do with standard concentration inequalities.


\begin{proof}
We bound the adversary's maximum sum of scores in hindsight with respect to $u_t$:
\begin{equation}
\label{eq:no-sampling-bound}
    \max_{y \in \cC} \sum_{\tau = 1}^{T'} u_t(p_{\tau}, y) \leq \left(R_{T', y^*}  + \sum_{\tau=1}^{T'} \E_{y \sim D_{\tau}}\left[u_t'(p_\tau, y)\right] \right) + \sum_{\tau = 1}^{T'}\langle p_{\tau}, s_t(p_{\tau}) \rangle 
\end{equation}
where 
\begin{equation*}
    y^* =  \argmax_{y \in \cC} \sum_{\tau = 1}^{T'} u_t'(p_{\tau}, y^*) =  \argmax_{y \in \cC} \sum_{\tau = 1}^{T'} u_t(p_{\tau}, y).
\end{equation*}
Now, we choose $p_{\tau}$ as the average of $n = \frac{2C^2}{\epsilon'^2}\ln\left(\frac{2dT'}{\delta'}\right)$ independently drawn samples from $D_\tau$, which ensures we satisfy the guarantee detailed in Lemma \ref{lem:sampling-bound}. By our choice of $n$, we have $\epsilon_0 = \epsilon' / 2$ and $\delta_0 = \delta' / T'$. Since now  $p_\tau$ is an estimate of $\E_{y \sim D_\tau}[y]$ rather than the exact expectation, for any specific $\tau \in [T']$ we can only bound the expected utility with high probability:
\begin{align*}
    \E_{y \sim D_{\tau}}\left[u_t(p_{\tau}, y)\right] = \left\langle \left(\E_{y \sim D_{\tau}}[y] - p_{\tau}\right), s_t(p_{\tau}) \right\rangle  &\leq \left| \left\langle \left(\E_{y \sim D_{\tau}}[y] - p_{\tau}\right), s_t(p_{\tau}) \right\rangle \right| \\
    &\leq \sum_{i=1}^d \left| (\E_{y \sim D_\tau}[y_i] - p_{\tau,i}) \cdot s_{t,i}(p_\tau) \right| \\
    &\leq \epsilon_0 \cdot \sum_{i=1}^d  \left|\sum_{\sigma \in \{-1,1\}}\sum_{E \in \cE} q_{t,(i,\sigma,E)}\cdot\sigma \cdot {E(x_t,p_\tau)}\right| \\
    &\leq \epsilon_0
\end{align*}
where the second last inequality is true with probability $1 - \delta_0$, from Lemma \ref{lem:sampling-bound}. We can quantify the probability that this inequality holds for all $\tau \in [T']$ simultaneously using a union bound:
\begin{align*}
    &\Pr_{}\left[\bigcup_{\tau=1}^{T'}  \E_{y \sim D_{\tau}}\left[u_t(p_{\tau}, y)\right] > \epsilon_0 \right] \leq T' \delta_0 \\
    \implies&\Pr_{}\left[\bigcap_{\tau=1}^{T'}  \E_{y \sim D_{\tau}}\left[u_t(p_{\tau}, y)\right] \leq \epsilon_0 \right] \geq 1 - T'\delta_0 = 1 - \delta'
\end{align*}
So with probability $1 - \delta'$, the sum of expected utilities can be bound, once again following Theorem \ref{thm:FTPL-No-Sampling},
\begin{align*}
     &\sum_{\tau=1}^{T'} \E_{y \sim D_{\tau}} \left[u_t'(p_{\tau}, y)\right] + \sum_{\tau=1}^{T'} \langle p_{\tau}, s_t(p_{\tau}) \rangle = \sum_{\tau=1}^{T'} \E_{y \sim D_{\tau}} \left[u_t(p_{\tau}, y)\right] \leq \epsilon_0 T' = \epsilon' T' / 2 \\
     \implies &\sum_{\tau=1}^{T'} \E_{y \sim D_{\tau}} \left[u_t'(p_{\tau}, y)\right] \leq \epsilon' T' / 2 - \sum_{\tau=1}^{T'} \langle p_{\tau}, s_t(p_{\tau}) \rangle
\end{align*}
Substituting this into equation \ref{eq:no-sampling-bound},
\begin{align*}
    \max_{y \in \cC} \sum_{\tau = 1}^{T'} u_t(p_{\tau}, y) &\leq \left(R_{T', y^*} + \epsilon' T' /2 -\sum_{\tau = 1}^{T'}\langle p_{\tau}, s_t(p_{\tau}) \rangle \right) + \sum_{\tau = 1}^{T'}\langle p_{\tau}, s_t(p_{\tau}) \rangle  \\
    &= R_{T', y^*} + \epsilon' T' / 2\\
    &\leq  \sqrt{2dC^2 T'} + \epsilon' T' /2
\end{align*}
Since $\frac{1}{T'} \sum_{i=1}^{T'}  u({p_{\tau}, y}) = \E_{p \sim \overline{p}} [u_t(p, y)]$, it follows that
\begin{equation*}
     \max_{y \in \cC} \cdot \E_{p \sim \overline{p}} [u_t(p, y)] \leq \frac{ \sqrt{2dC^2 T'} + \epsilon' T' /2}{T'}
\end{equation*}
and so with probability $1 - \delta'$, $\overline{p}$ is a $\left(\sqrt{\frac{2dC^2}{T'}} + \frac{\epsilon'}{2}\right)$-approximate equilibrium strategy for the learner. By our choice of $T' = \frac{8dC^2}{\epsilon'^2}$, this simplifies to $\epsilon'$, as desired. 
\end{proof}

\section{Proofs from Section \ref{sec:generic-regret}}
\label{app:genericregret}

\typeregret*
\begin{proof}[Proof of Theorem \ref{thm:type}]
    Fix any pair $u, u' \in \cU_L$. We need to upper bound $r(\pi_T,u,u')$. Using linearity of both $u$ in its second argument, we can derive:
    \begin{eqnarray*}
        \frac{1}{T}\sum_{t=1}^T u(\delta_u(p_t),y_t) &=& \sum_{a \in \cA} \frac{1}{T}\sum_{t: \delta_u(p_t) =a}u(a,y_t) \\
        &=& \sum_{a \in \cA} \frac{1}{T} \sum_{t=1}^T E_{u,a}(p_t) u(a,y_t) \\
        &=& \sum_{a \in \cA} u\left(a,\frac{1}{T} \sum_{t=1}^T E_{u,a}(p_t) y_t\right) \\
        &\geq& \sum_{a \in \cA}\left( u\left(a,\frac{1}{T} \sum_{t=1}^T E_{u,a}(p_t) p_t\right) - \frac{L\alpha(n_T(E_{u,a},\pi_T)) }{T} \right)\\
        &=&  \frac{1}{T}\sum_{t=1}^T u(\delta_u(p_t),p_t)- \frac{L\sum_{a \in \cA}\alpha(n_T(E_{u,a},\pi_T))}{T} 
    \end{eqnarray*}
where the inequality follows from the $\alpha$-unbiasedness condition with respect to event $E_{u,a}$ and the $L$-Lipschitzness of $u$. Similarly, because of the $\alpha$-unbiasedness condition with respect to event $E_{u',a}$ we have that:
$$\frac{1}{T}\sum_{t=1}^T u(\delta_{u'}(p_t),y_t) \leq \frac{1}{T}\sum_{t=1}^T u(\delta_{u'}(p_t),p_t) + \frac{L \sum_{a \in \cA} \alpha(n_T(E_{u',a},\pi_T))}{T} $$

We can therefore bound the regret $r(\pi_T,u,u')$ as:
\begin{eqnarray*}
    r(\pi_T,u,u') &=& \frac{1}{T}\sum_{t=1}^T u(\delta_{u'}(p_t),y_t) - \frac{1}{T}\sum_{t=1}^T u(\delta_u(p_t),y_t) \\
    &\leq& \frac{1}{T} \sum_{t=1}^T\left(u(\delta_{u'}(p_t),p_t) -  u(\delta_u(p_t),p_t)\right) +  \frac{L \sum_{a \in \cA}\left( \alpha(n_T(E_{u',a},\pi_T))+\alpha(n_T(E_{u,a},\pi_T))\right)}{T} \\
    &\leq&  \frac{L \sum_{a \in \cA}\left( \alpha(n_T(E_{u',a},\pi_T))+\alpha(n_T(E_{u,a},\pi_T))\right)}{T} 
\end{eqnarray*}
Here the last inequality follows from the fact that by definition of $\delta_u(p_t)$, $u(\delta_u(p_t),p_t) \geq u(a', p_t)$ for all $a' \in \cA$ (including $a' = \delta_{u'}(p_t)$). 

The simplified bound follows from the concavity of $\alpha$ and the fact that for every utility function $u$, the events $\{E_{u,a}\}_{a \in \cA}$ are disjoint, which implies that $\sum_{a \in \cA}n_T(E_{u,a},\pi_T) \leq T$.
\end{proof}

\section{Proofs From Section \ref{sec:warmup}}
\groupswap*
\begin{proof}[Proof of Theorem \ref{thm:group-swap}]
    Fix any $\phi:\cA\rightarrow \cA$, $u \in \cU_L$, and $G \in \cG$. We need to upper bound $r(\pi_T,u,\phi,G)$. Using the linearity of $u(a,\cdot)$ in its second argument for all $a \in \cA$, we can write:
    \begin{eqnarray*}
r(\pi_T,u,\phi,G) &=& \frac{1}{T_G(\pi_T)}\sum_{t : x_t \in G} u(\phi(\delta_u(p_t)),y_t)-u(\delta_u(p_t),y_t) \\
&=& \frac{1}{T_G(\pi_T)} \sum_{a \in \cA} \sum_{t : \delta_u(p_t) = a,x_t \in G} u(\phi(a),y_t) - u(a,y_t) \\
&=& \sum_{a \in \cA}\frac{1}{T_G(\pi_T)}\sum_{t=1}^T E_{u,a,G}(x_t,p_t)\left(u(\phi(a),y_t) - u(a,y_t)\right) \\
&=&\sum_{a \in \cA} \left(u\left(\phi(a),\frac{1}{T_G(\pi_T)}\sum_{t=1}^T E_{u,a,G}(x_t,p_t) y_t\right) - u\left(a,\frac{1}{T_G(\pi_T)}\sum_{t=1}^T E_{u,a,G}(x_t,p_t) y_t\right)\right) \\
&\leq& \sum_{a \in \cA} ( u\left(\phi(a),\frac{1}{T_G(\pi_T)}\sum_{t=1}^T E_{u,a,G}(x_t,p_t) p_t\right) - u\left(a,\frac{1}{T_G(\pi_T)}\sum_{t=1}^T E_{u,a,G}(x_t,p_t) p_t\right) \\ 
&+& \frac{2L\alpha(n_T(E_{u,a,G},\pi_T)) }{T_G(\pi_T)} ) \\
&\leq& \sum_{a \in \cA} \left( \frac{2L\alpha(n_T(E_{u,a,G},\pi_T))}{T_G(\pi_T)}\right) \\
&=& \frac{2L \sum_{a \in \cA}\alpha(n_T(E_{u,a},\pi_T))  }{T_G(\pi_T)}
    \end{eqnarray*}
Here the first inequality follows from the $\alpha$-unbiasedness condition and the $L$-Lipschitzness of $u$: indeed, for every $a'$ (and in particular for $a' \in \{a, \phi(a)\}$) we have 
\[\left|u\left(a',\frac{1}{T_G(\pi_T)}\sum_{t=1}^T E_{u,a,G}(x_t,p_t) p_t\right) - u\left(a',\frac{1}{T_G(\pi_T)}\sum_{t=1}^T E_{u,a,G}(x_t,p_t) y_t\right)\right| 
\]
\[
\leq L|\frac{1}{T_G(\pi_T)} \sum_t (p_t - y_t) E_{u, a,G}(p_t)|_\infty = \frac{L}{T_G(\pi_T)} \max_{i \in [d]} |\sum_t (p_{t, i} - y_{t, i}) E_{u, a,G}(p_t)| \leq \frac{\alpha(n_T(E_{u,a,G},\pi_T)) L}{T_G(\pi_T)}. 
\]
The 2nd inequality follows from the fact that by definition, whenever $\delta_u(p_t) = a$ (and hence whenever $E_{u,a}(p_t) = 1$), $u(a,p_t) \geq u(a',p_t)$ for all $a' \in \cA$, and the fact that by Lemma \ref{lem:convexdelta}, the levelsets of $\delta_u$ are convex, and hence $u(a,\frac{1}{T_G(\pi_T)}\sum_{t=1}^T E_{u,a,G}(x_t,p_t)p_t) \geq u(a',\frac{1}{T_G(\pi_T)}\sum_{t=1}^T E_{u,a,G}(x_t,p_t)p_t) $ for all $a'$.

For any utility function $u$ and group $G$, the events $\{E_{u,a,G}\}_{a \in \cA}$ are disjoint, and so for any $u$, $G$, and any $\pi_T$, $\sum_{a \in \cA}n_T(E_{u,a,G},\pi_T) \leq T_G(\pi_T)$. Therefore, whenever $\alpha$ is a concave function (as it is for the algorithm we give in this paper, and as it is for essentially any reasonable bound), the term $\sum_{a \in \cA}\alpha(n_T(E_{u,a,G},\pi_T))$ evaluates to at most $K\alpha(T_G(\pi_T)/K)$.
\end{proof}

\section{Additional Details from Section \ref{sec: subsequence}}
\label{app: efg}
\subsection{A Best-Response Oracle for Leaf-Form Strategies}
\label{subsec:best-response}
In Section \ref{sec: subsequence}, we reduced the problem of obtaining subsequence regret in an extensive-form game to the problem of optimizing over the set of all leaf-form strategies for that game, to find the learner's best response to any collection of opponent strategies. Here, we present a best-response oracle that efficiently returns a learner's best response for the subset of games for which the learner has \textit{perfect recall} and \textit{path recall}. While perfect recall is a well-studied and standard property in the literature, we introduce path recall as a new but related property specific to the requirements of our best-response oracle.
\begin{definition} 
    A player $i$ has \emph{perfect recall} in an extensive-form game $G$ if, for any information set $I \in \cI_i$ and any two nodes $x, y \in I$, for any node $x' \in I' \in \cI_i$ that is a predecessor of $x$, there is also a node $y' \in I'$ that precedes $y$, such that the action played at $x'$ on the path to $x$ and the action played at $y'$ on the path to $y$ are the same. If all players have perfect recall, then the game $G$ itself is said to have perfect recall. 
\end{definition}

Intuitively, a player has perfect recall if the partition of their nodes into information sets does not lose them any information about their past actions, i.e. the full history of a player's own actions is deducible from their current information set.

\begin{definition} 
    A player $i$ has $\emph{path recall}$ in an extensive-form game $G$ if, for any information set $I \in \cI_i$ and any two nodes $x, y \in I$,
    \begin{itemize}
        \item For any node $x' \in I' \in \cI_i$ that is a predecessor of $x$, there is also a node $y' \in I'$ that precedes $y$.
        \item For any node $h$ such that $\rho(h) \neq i$, if $h$ is a predecessor of $x$, then $h$ is a predecessor of $y$. 
    \end{itemize}
\end{definition}

Intuitively, a player has path recall if the partition of their nodes into information sets does not lose them any information about the past sequence of their own information sets and opponents' nodes along the path to any information set, i.e. the full history of the player's information sets and opponents' nodes reached so far is deducible from their current information set. Thus, perfect recall allows us to reason about the past in terms of one's own actions, while path recall allows us to reason about the past in terms of the sequence of learner's information sets and opponents' nodes passed so far. A game can satisfy perfect recall and not path recall, and vice versa (see Figure \ref{fig:efg-1}). In conjunction, however, these properties imply a useful structure to the game which we can exploit.




\tikzset{ 
solid node/.style={circle,draw,inner sep=1.5,fill=black}, 
hollow node/.style={circle,draw,inner sep=1.5}, 
no node/.style={circle,draw,inner sep=0} 
} 
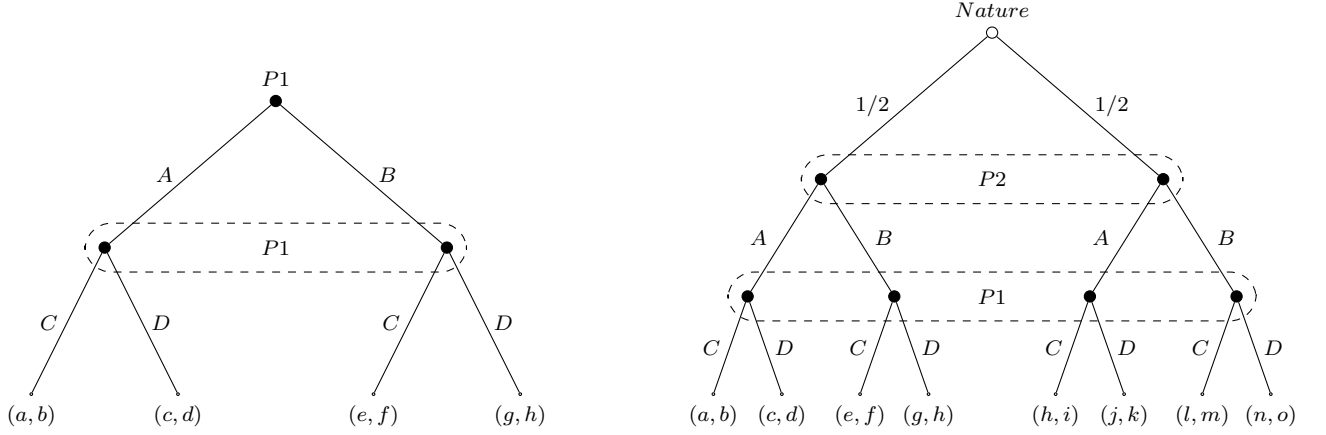
\begin{figure}[t!]
    \centering
    \begin{minipage}[t]{0.45\textwidth}
        \centering
    \begin{tikzpicture}[scale=1.3,font=\footnotesize] 
    \tikzstyle{level 1}=[level distance=15mm,sibling distance=35mm] 
    \tikzstyle{level 2}=[level distance=15mm,sibling distance=15mm]
    \node(0)[solid node,label=above:{$P1$}]{} 
        child{node(1)[solid node]{}
            child{node[no node,label=below:{$(a,b)$}]{}edge from parent node[left]{$C$}} child{node[no node,label=below:{$(c,d)$}]{}edge from parent node[right]{$D$}} 
        edge from parent node[left,xshift=-3]{$A$} 
    } 
        child{node(2)[solid node]{} 
            child{node[no node,label=below:{$(e,f)$}]{}edge from parent node[left]{$C$}} child{node[no node,label=below:{$(g,h)$}]{}edge from parent node[right]{$D$}}
        edge from parent node[right,xshift=3]{$B$} 
    };
    \draw[dashed,rounded corners=10]($(1)+(-.2,.25)$)rectangle($(2)+(.2,-.25)$);
    \node at ($(1)!.5!(2)$) {$P1$}; 
    \end{tikzpicture}
    \end{minipage}\hfill
    \begin{minipage}[t]{0.45\textwidth}
        \centering
\begin{tikzpicture}[scale=1.3,font=\footnotesize] 
    \tikzstyle{level 1}=[level distance=15mm,sibling distance=35mm] 
    \tikzstyle{level 2}=[level distance=12mm,sibling distance=15mm]
    \tikzstyle{level 3}=[level distance=10mm,sibling distance=7mm]
    \node(0)[hollow node,label=above:{$Nature$}]{} 
        child{node(1)[solid node]{}
            child{node(3)[solid node]{}
                child{node[no node, label=below:{$(a,b)$}]{}edge from parent node[left]{$C$}} 
                child{node[no node, label=below:{$(c,d)$}]{}edge from parent node[right]{$D$}} 
            edge from parent node[left, xshift=-3]{$A$}
            }
            child{node(4)[solid node]{}
                child{node[no node,label=below:{$(e,f)$}]{}edge from parent node[left]{$C$}} child{node[no node,label=below:{$(g,h)$}]{}edge from parent node[right]{$D$}} 
            edge from parent node[right,xshift=3]{$B$}
            }
        edge from parent node[left,xshift=-3]{$1/2$} 
    } 
        child{node(2)[solid node]{} 
            child{node(5)[solid node]{}
                child{node[no node,label=below:{$(h,i)$}]{}edge from parent node[left]{$C$}} child{node[no node,label=below:{$(j,k)$}]{}edge from parent node[right]{$D$}} 
            edge from parent node[left, xshift=-3]{$A$}
            }
            child{node(6)[solid node]{}
                child{node[no node,label=below:{$(l,m)$}]{}edge from parent node[left]{$C$}} child{node[no node,label=below:{$(n,o)$}]{}edge from parent node[right]{$D$}} 
            edge from parent node[right,xshift=3]{$B$}
            }
        edge from parent node[right,xshift=3]{$1/2$} 
    };
    \draw[dashed,rounded corners=10]($(1)+(-.2,.25)$)rectangle($(2)+(.2,-.25)$);
    \draw[dashed,rounded corners=10]($(3)+(-.2,.25)$)rectangle($(6)+(.2,-.25)$);
        \node at ($(1)!.5!(2)$) {$P2$};
    \node at ($(3)!.5!(6)$) {$P1$};
    \end{tikzpicture}
    \end{minipage}
    \caption{Examples of extensive-form games. In the game on the left, Player 1 has path recall, but not perfect recall. In the game on the right, Player 1 has perfect recall, but not path recall.}
\label{fig:efg-1}
\end{figure}

\begin{lemma}
\label{lem:adversary-children}
    Define an extensive-form game $G$ for which player $i$ has perfect recall and path recall. For any node $h$ such that $\rho(h) \neq i$, define $C_h$ as the set of all children of $h$, i.e. $$C_h = \{h' \in \cH \mid h' = c(h, a) \text{ for some $a \in \cA(\Pi(h))$}\}.$$
    For any of player $i$'s information sets $I' \in \cI_i$, either all nodes of $I'$ are contained in $C_h$, or none of them are.  
\end{lemma}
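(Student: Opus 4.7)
My plan is to prove the contrapositive-flavored claim: if some node $y \in I'$ lies in $C_h$, then every other $x \in I'$ also lies in $C_h$. Fix such a $y$ (so $y$ is an immediate child of $h$) and an arbitrary $x \in I'$. The key observation is that, since $y$ is a direct child of $h$, the set of predecessors of $y$ is exactly $\{h\} \cup \{\text{predecessors of } h\}$; in particular, every strict descendant of $h$ fails to be a predecessor of $y$. I will use this rigid structure of $y$'s ancestors, together with path recall, to sandwich $x$ against $h$.

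First, I will apply the second clause of path recall (which is symmetric between the two nodes of $I'$) to the opponent node $h$: since $h$ is a predecessor of $y$, it must also be a predecessor of $x$. Let $h = h_0 \to h_1 \to \cdots \to h_k = x$ be the unique root-to-$x$ path starting at $h$. The whole argument reduces to showing $k = 1$, i.e.\ there are no intermediate nodes between $h$ and $x$. I will do this by case analysis on each candidate intermediate $h_j$ with $1 \le j \le k-1$, using the acting player at $h_j$.

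If $\rho(h_j) \ne i$, then $h_j$ is a non-player-$i$ predecessor of $x$, so by path recall $h_j$ must also be a predecessor of $y$; but $h_j$ is a strict descendant of $h_0 = h$, contradicting the description of $y$'s predecessors above. If instead $\rho(h_j) = i$, then $h_j$ belongs to some player-$i$ information set $I''$, and the first clause of path recall produces a node $y'' \in I''$ that precedes $y$. Since $\rho(h) \ne i$ we have $y'' \ne h$, so $y''$ is a strict predecessor of $h$, and therefore also a strict predecessor of $h_j$. Thus $y''$ and $h_j$ lie in the same information set $I''$ with one preceding the other, which contradicts the standard consequence of perfect recall that no information set contains a node together with one of its own ancestors (established by an infinite-descent argument in the finite tree using perfect recall's matching-predecessor property). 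Both cases produce contradictions, forcing $k = 1$ and hence $x \in C_h$.

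The main obstacle I expect is the second case, where I need to invoke the fact that perfect recall rules out an information set containing a node and an ancestor of it. This is folklore but not stated explicitly in the paper, so I will spell out the short infinite-descent argument from the definition of perfect recall to keep the proof self-contained. Modulo that one subsidiary fact, the proof is essentially a clean two-case dissection of the path from $h$ to $x$.
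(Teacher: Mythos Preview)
Your proposal is correct and follows essentially the same approach as the paper: both fix a node of $I'$ in $C_h$, use the second clause of path recall to force $h$ to be an ancestor of any other node of $I'$, and then case-split on the acting player at an intermediate node on the path, deriving a contradiction from path recall in the $\rho \neq i$ case and from path recall plus the perfect-recall ``no ancestor in the same information set'' fact in the $\rho = i$ case. Your version is slightly more explicit in parametrizing the full path and in spelling out the infinite-descent argument for the folklore perfect-recall consequence, which the paper simply invokes.
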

\begin{proof}
    Fix a node $h$ played by someone other than player $i$ and an information set $I' \in \cI_i$. Assume by way of contradiction that there exist nodes $h_1$ and $h_2$ in $I'$ such that $h_1 \in C_h$ and $h_2 \notin C_h$. By definition of path recall, $h$ must be an ancestor of $h_2$. Since $h_2 \notin C_h$, there must be some node $h_2'$ on the path between $h$ and $h_2$. If $\rho(h_2') \neq i$, this contradicts the assumption of path recall (since $h_2'$ is an ancestor of $h_2$ but not of $h_1$). If $\rho(h_2') = i$, then in order for path recall to be satisfied, some ancestor $h_1'$ of $h_1$ must be in the same information set as $h_2'$. Since any ancestor of $h_1$ is also an ancestor of $h_2'$, $h_1'$ and $h_2'$ being in the same information set violates the assumption of perfect recall. Thus, for any node $h$ such that $\rho(h) \neq i$, if any node in $I'$ is in $C_h$, then all nodes of $I'$ must be in $C_h$. 
\end{proof}

\begin{lemma}
\label{lem:learner-information-sets}
    Define an extensive-form game $G$ for which player $i$ has perfect recall and path recall. For any information set $I$ such that $\rho(I) = i$ and action $a \in \cA(I)$, define $C_{I, a}$ as the set of all nodes resulting from playing action $a$ at nodes in $I$, i.e. $$ C_{I, a} = \{h' \in \cH \mid h' = c(h, a) \text{ for some $h \in I$} \}.$$
    For any information set $I' \in \cI_i$, either all nodes of $I'$ are contained in $C_{I,a}$, or none of them are.
\end{lemma}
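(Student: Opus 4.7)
The plan is to mirror the structure of Lemma \ref{lem:adversary-children}, proceeding by contradiction: suppose $h_1, h_2 \in I'$ with $h_1 \in C_{I,a}$ and $h_2 \notin C_{I,a}$. Write $h_1 = c(h,a)$ for some $h \in I$. The first step is to invoke perfect recall on $I'$ with the pair $(h_1, h_2)$: since $h \in I$ is an ancestor of $h_1$ and plays action $a$ on the path to $h_1$, there must exist $y' \in I$ that is an ancestor of $h_2$ and plays action $a$ on the path to $h_2$. Let $m := c(y', a)$; then $m$ lies on the path from $y'$ to $h_2$ and $m \in C_{I,a}$. If $m = h_2$ we are done with the contradiction, so assume $m$ is a strict ancestor of $h_2$.

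The main case split is on $\rho(m)$. If $\rho(m) \neq i$, then path recall (second condition) forces $m$ to also be an ancestor of $h_1$. Since the immediate parent $h$ of $h_1$ satisfies $\rho(h)=i \neq \rho(m)$, $m$ must be a strict ancestor of $h$. But then $y'$ (the parent of $m$) is a strict ancestor of $h$, and $y', h \in I$; by the standard consequence of perfect recall --- that the sequence of (information set, action) pairs along the root-to-node path is an invariant of the information set --- the path from $y'$ to $h$ must contribute an additional occurrence of $I$ with some action, contradicting this invariance.

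The harder subcase, and the main obstacle, is $\rho(m) = i$, where path recall's second condition does not directly apply. Here I would use path recall's \emph{first} condition: letting $I''$ be the information set containing $m$, since $I''$ is an ancestor of $h_2 \in I'$ there must exist $m' \in I''$ that is an ancestor of $h_1$. I would argue $m' \neq h$ (else $I = I''$ which collapses the case to the previous subcase), so $m'$ is a strict ancestor of $h$. Then perfect recall applied to $I''$ forces the path to $m'$ to have $I$ as its last $\cI_i$-predecessor with action $a$ (matching the structure of the path to $m = c(y',a)$), producing some $h'' \in I$ that is a strict ancestor of $h$. This again places two nodes of $I$ on the same root-to-node path, contradicting perfect recall as before. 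Since all subcases lead to contradiction, no such $h_2$ can exist, establishing the lemma.
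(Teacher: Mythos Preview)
Your proposal is correct and follows essentially the same approach as the paper's proof: both proceed by contradiction, use perfect recall to locate a node $m$ (the paper's $h_2'$) in $C_{I,a}$ on the path to $h_2$, split on whether $\rho(m)=i$, and in each case invoke the appropriate clause of path recall to place two distinct nodes of $I$ on a single root-to-node path, contradicting perfect recall. If anything, your handling of the subcases (in particular the $m'=h$ possibility and the explicit argument that two $I$-nodes on one path is forbidden) is more carefully spelled out than the paper's somewhat compressed version.
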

\begin{proof}
    The proof parallels that of \ref{lem:adversary-children}. Fix any two information sets $I, I' \in \cI_i$, and any action $a \in \cA(I)$. Assume by way of contradiction that there exist nodes $h_1$ and $h_2$ in $I'$ such that $h_1 \in C_{I, a}$ and $h_2 \notin C_{I,a}$. By definition of perfect recall, $h_2$ must be a descendant of some node in $I$ along the path taken after playing action $a$. That is, $h_2$ is the descendant of some node $h_2' \in C_{I,a}$. if $\rho(h_2') \neq i$, then path recall is not satisfied, since $h_2'$ is an ancestor of $h_2$ but not of $h_1$. If $\rho(h_2') = i$, then in order for path recall to be satisfied, some ancestor $h_1'$ of $h_1$ must be in the same information set as $h_2'$. However, this means that $h_1'$, an ancestor of a node in $I$ (call it $h_1''$), and $h_2'$, a descendant of a node in $I$, are in the same information set. In order for perfect recall to be satisfied, there must be some ancestor of $h_1''$ (a node in $I$) that is also in $I$, which itself violates the property of perfect recall. Thus, if any node in $I'$ is in $C_{I,a}$, then all nodes of $I'$ must be in $C_{I,a}$. 
\end{proof}

For any game that satisfies perfect recall and path recall for the learner, the children of any of the learner's information sets (or any opponent's nodes) is a union of learner's information sets and opponents' nodes. Thus, we can reason inductively about the leaf-form strategies restricted by any subtree of the game in terms of the leaf-form strategies restricted by the children of the root node of the subtree (or the information set rooting the subtree). This gives us a natural recursive algorithm able to return the optimal leaf-form strategy at any subtree rooted either at a learner's information set, or at an opponent's node, given a collection of opponents' strategies (in terms of a payoff-weighted reachability vector). 


\begin{algorithm}[h]
    \begin{algorithmic}
    \STATE // Base case - terminal nodes
    \IF{$I = \{h\} \cap h \in \cZ$}
        \RETURN ($\{h\}, \vec{v}(h)$) \tcp{return payoff-weighted probability of reaching terminal node $h$ and the corresponding leaf-form strategy}
    \ELSE 
        \IF{$I \in \cI_i$} 
        \STATE // Player $i$'s information set
            \FOR{$a \in \cA(I)$} \vskip 3pt
                \STATE Initialize $s_a = \emptyset, v_a = 0$.
                \FOR{$h' \in C_{I, a}$ such that $\rho(h') \neq i$}
                    \STATE Let $(s, v)$ = \texttt{Best-Response-Leaf-Form($\{h'\}, \vec{v}, i, G$)}
                    \STATE $s_a = s_a \cup s, v_a = v_a + v$
                \ENDFOR
                \FOR{$I' \in C_{I, a}$ such that $\rho(I') = i$}
                    \STATE Let $(s, v)$ = \texttt{Best-Response-Leaf-Form($I', \vec{v}, i, G$)}
                    \STATE $s_a = s_a \cup s, v_a = v_a + v$
                \ENDFOR \vskip 3pt
            \ENDFOR
            \STATE Let $a_{\text{max}} = \max_{a \in \cA(I)} v_a$ 
            \RETURN $(s_{a_{\text{max}}}, v_{a_{\text{max}}})$ \vskip 5pt
        \ELSIF{$I = \{h\}$ and $\rho(h) \neq i$}
            \STATE // Opponent node
            \STATE Initialize $s_I = \emptyset, v_I = 0$.
            \FOR{$h' \in C_{h}$ such that $\rho(h') \neq i$}
                \STATE Let $(s, v)$ = \texttt{Best-Response-Leaf-Form($\{h'\}, \vec{v}, i, G$)}
                \STATE $s_I = s_I \cup s, v_I = v_I + v$
            \ENDFOR
            \FOR{$I' \in C_{h}$ such that $\rho(I') = i$}
                \STATE Let $(s, v)$ = \texttt{Best-Response-Leaf-Form($I', \vec{v}, i, G$)}
                \STATE $s_I = s_I \cup s, v_I = v_I + v$
            \ENDFOR \vskip 3pt
            \RETURN $(s_I, v_I)$
        \ENDIF
    \ENDIF
    \end{algorithmic}
    \caption{\texttt{Best-Response-Leaf-Form($I, \vec{v}, i, G)$}}
    \label{alg:backward-ind}
\end{algorithm}
Firstly, note that Algorithm \ref{alg:backward-ind} returns a set of leaf nodes rather than a leaf-node strategy. We use this representation for ease of proof, with the observation that it is trivial to convert a set of leaf nodes to a leaf-node strategy, i.e. some $\vec{s} \in \{0, 1\}^{|\cZ|}$. For any subset of leaf nodes $s \in 2^\cZ$, we use $\vec{s}$ to represent the corresponding vector representation. Secondly, Algorithm $\ref{alg:backward-ind}$ takes a set of nodes $I$ as input. In practice, this will either be a single node played by an opponent (or a terminal node), or an information set played by the learner (player $i$). 

For notational purposes in following proofs, for any deterministic behavioral strategy $\pi \in \Pi_i$, let $s_{\pi}$ be the corresponding leaf-form strategy. For any set of nodes $I \in 2^\cH$, let $\pi|_{I}$ be the restriction of $\pi$ to nodes in the subtree rooted at $I$, and let $s_{\pi}|_{I}$ be the restriction of $s_{\pi}$ to the collection of leaf nodes reachable from $I$, that is, $z \in s_{\pi}|_{I}$ iff $z \in s_{\pi}$ and $z$ is a descendant of some node in $I$.

\begin{theorem}
    Fix an extensive-form game $G$, a player $i$, a set of nodes $I$ representing either one of player $i$'s information sets or a single node of another player, and a payoff-weighted reachability vector $v \in \R^{|\cZ|}$. If player $i$ has both perfect recall and path recall in $G$, then the backwards induction algorithm (Algorithm \ref{alg:backward-ind}) returns an optimal leaf-form strategy $s_I$ that maximizes the player's expected payoff over the set of all restrictions of leaf-node strategies to the subtree rooted at $I$, and the corresponding expected payoff $v_I$. That is, 
    \begin{align*}
        s_I &= \argmax_{s_{\pi}|_{I}, \pi \in \Pi_i} \langle \vec{v}, \vec{s}_{\pi}|_{I} \rangle \\
        v_I &= \langle \vec{v}, \vec{s_I}\rangle.
    \end{align*}
\end{theorem}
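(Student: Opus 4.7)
My plan is to prove the theorem by induction on the height of the subtree rooted at $I$ (taking the height of a terminal node to be zero). The base case is immediate: when $I = \{h\}$ with $h \in \cZ$, the only restriction to the subtree is the leaf $\{h\}$ itself, with value $\langle \vec{v}, \vec{s}\rangle = v(h)$, which is exactly what Algorithm~\ref{alg:backward-ind} returns. For the inductive step, I will consider the two cases that the algorithm branches on --- $I$ is an opponent's single node or $I$ is one of player $i$'s information sets --- and argue in each case that the algorithm's local greedy choice coincides with the global maximum, by reducing the maximization over restricted leaf-form strategies to independent maximizations over the children's subtrees.

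The structural backbone of the inductive step is the clean decomposition of children guaranteed by Lemmas~\ref{lem:adversary-children} and~\ref{lem:learner-information-sets}: the children $C_h$ of an opponent node (resp.\ $C_{I,a}$ for a learner information set $I$ and action $a$) split cleanly into individual opponent nodes and entire learner information sets. This is what makes each recursive call well-defined --- whenever the algorithm descends into an $I' \in \cI_i$, the set $I'$ lies wholly inside one child subtree. Given this, the opponent-node case is straightforward: the learner has no choice at $h$, so both the value and the optimal restricted leaf set decompose as a sum/union over children, matching the algorithm exactly. The learner-information-set case is slightly more delicate: for each fixed $a \in \cA(I)$, all nodes of $I$ play the same action (by definition of an information set), and subject to that the choices in the resulting child subtrees can be made independently, so the optimum with $a$ fixed is $v_a$; the algorithm then picks $a_{\max} = \argmax_a v_a$.

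The key lemma driving the inductive step --- and the one that I expect to require the most care to formalize --- is an \emph{independence-across-subtrees} claim: whenever $I'_1, I'_2 \in \cI_i$ are information sets that appear in two distinct sibling child-subtrees under the same parent in the recursion, then $I'_1 \neq I'_2$. This is exactly where both perfect recall and path recall must be used: if two nodes of the same $I'$ lay in different sibling subtrees, then the sequence of the learner's own past actions along the two root-to-node paths would differ (at $I$ itself if $\rho(I)=i$, or at some earlier learner information set whose existence is guaranteed by path recall because the two paths split at a non-learner node), contradicting perfect recall. Once this independence is established, the maximization $\max_{\pi \in \Pi_i} \langle \vec{v}, \vec{s}_\pi|_I \rangle$ provably factors as a sum of per-subtree maxima, and the inductive hypothesis finishes the argument.

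The main obstacle is precisely this independence claim, and in particular getting the interaction between perfect recall and path recall right: perfect recall alone is not enough (one needs path recall to even guarantee that the child decomposition is clean in the first place, as in Lemma~\ref{lem:learner-information-sets}), and path recall alone is not enough (one needs perfect recall to rule out coupling via an information set shared between sibling subtrees). Everything else --- the linearity of the payoff $\langle \vec{v}, \vec{s}_\pi|_I \rangle$ in the leaves reachable from $I$, the additivity of the value across children, and the bookkeeping inside the algorithm --- is routine once this decomposition is in hand.
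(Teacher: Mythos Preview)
Your proposal is correct and follows essentially the same inductive structure as the paper's proof: induction on the subtree, base case at terminal nodes, and the two-case split using Lemmas~\ref{lem:adversary-children} and~\ref{lem:learner-information-sets} to decompose the children cleanly. The paper's argument is slightly more informal on exactly the point you single out as the main obstacle --- the independence-across-subtrees claim --- simply asserting that ``the restricted strategy $\pi|_I$ is described entirely in terms of strategies restricted by each of the sets $I_k$'' without spelling out why no learner information set can straddle two sibling subtrees; your plan to derive this explicitly from the interaction of perfect recall and path recall is a welcome bit of extra rigor but not a different approach.
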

\begin{proof}
    We proceed by induction. Our base case is when $I$ consists of a single terminal node (some $h \in \cZ$). In this case, no more actions are to be played and since there must exist some strategy $\pi \in \Pi_i$ such that $h$ is reachable, there is exactly one leaf-form strategy restricted to the current sub-tree: $s = \{h\}$. So Algorithm \ref{alg:backward-ind} returns the only viable (and thus optimal) strategy, and the corresponding expected payoff for player $i$ is $\vec{v}(h)$.

    If $I$ is not just a terminal node, there are two cases to consider; when $I$ is an information set playable by player $i$, and when $I$ consists of a single node playable by some opponent. When $I \notin \cI_i$, $I = \{h\}$ such that player $i$ does not control the action taken at $h$. Define $S_I = \{s_{\pi}|_{I} \mid \pi \in \Pi_i\}$ as the collection of all leaf-node strategies restricted to the current sub-tree (rooted at $h$). Consider $C_h$, the set of children of node $h$. By Lemma \ref{lem:adversary-children}, $C_h$ can be written as a disjoint union $\cup_{k=1}^m I_k$ such that each $I_k$ is either an information set played by player $i$ or a single node played by some opponent. For any strategy $\pi \in \Pi_i$, the restricted strategy $\pi|_I$ is described entirely in terms of strategies restricted by each of the sets $I_k$. That is, for any information set $I' \in \cI_i$ within the current subtree, $\pi|_I$ is defined as follows:
    \begin{equation*}
        \pi|_I(I') = \pi|_C(I')
    \end{equation*}
    where $C$ is the unique $I_k$ which roots the subtree containing $I'$. Further, \begin{equation*}
        \langle \vec{v}, \vec{s}_{\pi}|_{I} \rangle = \sum_{k = 1}^m \langle \vec{v}, \vec{s}_{\pi}|_{I_k} \rangle
    \end{equation*} 
    since each terminal node in the subtree rooted at $I$ will be in the subtree of exactly one of the subtrees rooted at each of the subsets $I_k$. Therefore, we can optimize for $s_I$ by separately optimizing for each of the strategies $s_{I_k}$. By the inductive hypothesis, Algorithm \ref{alg:backward-ind} applied to each of these sets of nodes will return the optimal strategies, and the resulting leaf-node strategy (i.e. the set of reachable leaf nodes via this strategy) will be the union of all the resulting $s_{I_k}$. The corresponding expected payoff will accordingly be the sum of the childrens' payoffs. 

    Finally, we must consider the case when $I \in \cI_i$. For any action $a \in \cA(I)$, the set $C_{I, a}$ (using Lemma \ref{lem:learner-information-sets}) of children of nodes in $I$ reached upon taking action $a$ can be written as a disjoint union $\cup_{k=1}^m I_k$ of information sets played by player $i$ and single nodes played by opponents. Therefore, the maximum expected payoff \textit{conditioned on taking action $a$} at $I$ is the sum of the maximum expected payoffs with respect to each of the $I_k$ making up $C_{I, a}$ - and by the inductive hypothesis, the payoff returned by Algorithm \ref{alg:backward-ind} for eack $I_k$ is the payoff corresponding to the optimal leaf-form strategy restricted to $I_k$. Having computed the maximum expected payoff received for taking each possible action at $I$, selecting the optimal distribution over actions at $I$ is straightforward - we deterministically choose the single action which maximizes this value ex post. 

    As Algorithm \ref{alg:backward-ind} implements the logic described above for each separate case, the proof is complete.     
\end{proof}
Running Algorithm \ref{alg:backward-ind} on the root node of any game $G$ satisfying the necessary properties will return a best-response leaf-form strategy for the learner along with the corresponding expected payoff.

\subsection{Improving Algorithm Efficiency}
\label{subsec:efg-efficiency}
The algorithm for obtaining subsequence regret has computational efficiency depending largely on two factors - the length of the vector predicted at each round (upon which the algorithm making unbiased predictions has a linear dependence) and the efficiency of the best-response oracle. The best-response oracle described in the previous section (Algorithm \ref{alg:backward-ind}) uses the payoff-weighted reachability vector $v_t$ with length $|\cZ|$, the number of terminal nodes in the game being played. This representation was chosen in order to be able to describe the expected payoff of a learner's strategy as a linear function of the strategy. For extensive-form games where there is a more compact representation that satisfies this linearity property and is still sufficiently informative in order for the algorithm to compute a best-response, the general principle described in Section \ref{sec: subsequence} still works - now with the algorithms introduced in Section \ref{sec:general_algorithm} running in time linear with respect to this smaller vector dimension. Below, we describe a class of extensive-form games for which the length of the vector prediction scales with the number of the learner's information sets, as opposed to the number of terminal nodes, and for which the best-response oracle described in the previous section (with very minor alterations) still outputs best-responses. 

At a high level, Algorithm \ref{alg:backward-ind} reasons about the maximum possible payoff at any node of a tree in terms of the payoffs of its children, and, starting from the terminal nodes, backtracks through the tree to choose, at each of the learner's information sets, the action which maximizes expected payoff. Thus, any linear representation of the opponents' (predicted) strategy which allows us to compute the expected payoff for each action at each of the learner's information sets will suffice.

\begin{definition}
An information set $I \in \cI_i$ of player $i$ is a \emph{terminal information set} if for each node $h \in I$, none of the descendants of $h$ are playable by player $i$. 
\end{definition}

\begin{definition}
The \emph{predecessor function} $P_i: \cZ \to \cI_i \cup \emptyset$ for player $i$ maps each terminal node $z$ to the last information set of player $i$ that is played along the path from the root to $z$. If no such information set exists, then $P_i(z) = \emptyset$. 
\end{definition}

For any game $G$ where $P_i(z)$ is a terminal information set for each $z \in \cZ$\footnote{The games shown in Figure \ref{fig:efg-1} are examples of games satisfying this property. Figure \ref{fig:efg-2} is an example of game that does not satisfy this property.} (assuming the learner is player $i$), rather than predict the probability of reaching each terminal node, we can predict $v$ as the expected payoff for taking each viable action at each of the learner's terminal information sets. Correspondingly, instead of picking a leaf-form strategy, the learner can pick a ``terminal information set'' strategy $\pi$ indexed by pairs of terminal information sets $I$ and actions $a \in \cA(I)$, such that $\pi_{I, a} = 1$ iff action $a$ is taken at information set $I$. Then, the net payoff using strategy $\pi$ is $\langle \pi, v \rangle$, where $v_{I,a}$ is the expected payoff from taking action $a$ at information set $I$, and we once again have a linear optimization problem, now in dimension scaling linearly with the number of information sets and available actions at each of these information sets. Since the payoff for each non-terminal information set is deducible from the payoffs at each terminal information set (since the path to any leaf node must cross some terminal information set), the same Algorithm \ref{alg:backward-ind}, modified only to consider terminal information sets as base cases instead of terminal nodes, will be able to compute a best-response. 
\begin{figure}[t!]
    \begin{center}
    \begin{tikzpicture}[scale=1.5,font=\footnotesize] 
    \tikzstyle{level 1}=[level distance=15mm,sibling distance=35mm] 
    \tikzstyle{level 2}=[level distance=12mm,sibling distance=20mm]
    \tikzstyle{level 3}=[level distance=12mm,sibling distance=10mm]
     \tikzstyle{level 4}=[level distance=10mm,sibling distance=5mm]
    \node(0)[solid node,label=above:{$P1$}]{} 
        child{node(1)[solid node]{}
            child{node(3)[solid node, label=above left:{$P1$}]{}
                child{node(5)[solid node]{}
                    child{node[no node,label=below:{$z_1$}]{}edge from parent node[left]{$L$}} child{node[no node,label=below:{$z_2$}]{}edge from parent node[right]{$R$}}
                edge from parent node[left, xshift=-3]{$A$}
                }
                child{node(6)[solid node]{}
                    child{node[no node,label=below:{$z_3$}]{}edge from parent node[left]{$L$}} child{node[no node,label=below:{$z_4$}]{}edge from parent node[right]{$R$}} 
                    edge from parent node[right, xshift=3]{$B$}
                }
            edge from parent node[left, xshift=-3]{$C$}
            }
            child{node(4)[solid node, label=above right:{$P1$}]{}
                child{node(7)[solid node]{}
                    child{node[no node,label=below:{$z_5$}]{}edge from parent node[left]{$L$}} child{node[no node,label=below:{$z_6$}]{}edge from parent node[right]{$R$}}
                edge from parent node[left, xshift=-3]{$A$}
                }
                child{node(8)[solid node]{}
                    child{node[no node,label=below:{$z_7$}]{}edge from parent node[left]{$L$}} child{node[no node,label=below:{$z_8$}]{}edge from parent node[right]{$R$}} 
                    edge from parent node[right, xshift=3]{$B$}
                }
            edge from parent node[right, xshift=3]{$D$}
            }
        edge from parent node[left,xshift=-3]{$A$} 
    } 
    child{node(2)[solid node]{} 
        child{node[no node,label=below:{$z_9$}]{}edge from parent node[left]{$C$}} child{node[no node,label=below:{$z_{10}$}]{}edge from parent node[right]{$D$}} 
    edge from parent node[right, xshift=3]{$B$}
    };
    \draw[dashed,rounded corners=10]($(1)+(-.2,.25)$)rectangle($(2)+(.2,-.25)$);
    \draw[dashed,rounded corners=10]($(5)+(-.2,.25)$)rectangle($(6)+(.2,-.25)$);
    \draw[dashed,rounded corners=10]($(7)+(-.2,.25)$)rectangle($(8)+(.2,-.25)$);
    \node at ($(1)!.5!(2)$) {$P2 ~~ (I_1)$}; 
    \node at ($(5)!.5!(6)$) {$P2 ~ (I_2)$}; 
    \node at ($(7)!.5!(8)$) {$P2 ~ (I_3)$}; 
    \end{tikzpicture}
    \end{center}
    \caption{A two-player extensive-form game. Of Player 2's three information sets, only $I_2$ and $I_3$ are terminal. Note that $P_2(z_9) = P_2(z_{10}) = I_1$, which is not a terminal information set.}
    \label{fig:efg-2}
\end{figure}
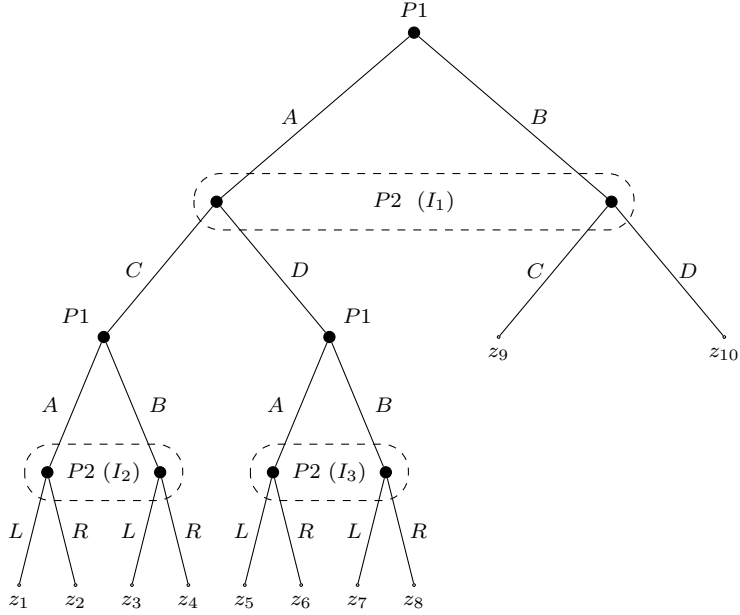


\subsection{Connecting Subsequence Regret to Other Forms of Regret in Extensive-Form Games}
To define subsequence regret in a similar framework as causal regret, or more generally, $\Phi$-regret, we must define deviations in terms of behavioral strategies, rather than in terms of leaf-form strategies. In the rest of this section, we work only with behavioral strategies, and we assume efficient conversion of leaf-form strategies to behavioral strategies via an oracle.
\begin{definition}
    Fix a collection of binary events $\cE$ and behavioral strategy space $S_i$. Fix some $E\in \cE$. A subsequence deviation is a function $\phi_E: S_i \to S_i$ such that
    \[ \phi_E(s_i) = \begin{cases}
        s'_i, & E = 1\\
        s_i, & \text{otherwise}\\
        \end{cases} \]
    Let $\Phi_{sub}$ be the set of all possible subsequence deviations.
\end{definition}

Translating the events defined by Theorem \ref{thm: efg-subsequence} to a set of subsequence deviations, we obtain:
\begin{corollary}
\label{cor: subseq-deviation}
    Fix an information set $I'$, subsequent action $a' \in \cA(I)$, strategy $s'_i$, events $\cE=\{E_{I,a}:I \in \cI_i, a \in \cA(I)\}$, and transcript $\pi_T$. Let a \emph{subsequence deviation $\phi_{I',a'}$} of strategy $s_i \in S_i$ return a strategy such that at each information set $I \in \cI$,
    \[
    \phi_{I',a'}(s_i)(I) = \begin{cases}
        s'_i(I), & E_{I',a'} = 1\\
        s_i(I), & \text{otherwise}\\
        \end{cases}
  \]
    Let $\Phi_{\textrm{sub}}$ be the collection of all $\phi_{I',a'}$. If the learner has $(\cE,\alpha)$-subsequence regret, then the learner has $\Phi$-regret with respect to the class of subsequence deviations bounded by:
    \[ \max_{\phi_{I',a'} \in \Phi_{\textrm{sub}}} r(\pi_T, u, \phi_{I',a'}) \le \alpha \]
\end{corollary}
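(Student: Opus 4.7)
\textbf{Proof Plan for Corollary \ref{cor: subseq-deviation}.}

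The plan is to show that for each subsequence deviation $\phi_{I',a'} \in \Phi_{\textrm{sub}}$, the induced $\Phi$-regret can be rewritten \emph{exactly} as a subsequence external regret associated with the event $E_{I',a'} \in \cE$ and a single constant strategy modification, so that the hypothesized $(\cE,\alpha)$-subsequence regret bound transfers over directly.

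First, I would unpack the definition of $\phi_{I',a'}$. By construction, $\phi_{I',a'}(s_i)(I) = s'_i(I)$ at every information set $I$ on rounds where $E_{I',a'}(\pi_{t-1}, x_t, p_t) = 1$, and $\phi_{I',a'}(s_i)(I) = s_i(I)$ otherwise. In particular, on rounds where $E_{I',a'} = 0$, the straightforward decision maker's action $a_t$ coincides with $\phi_{I',a'}(a_t)$, so $u(\phi_{I',a'}(a_t), y_t) - u(a_t, y_t) = 0$. Thus, summing over $t \in [T]$,
\begin{align*}
r(\pi_T, u, \phi_{I',a'})
&= \sum_{t=1}^T \bigl(u(\phi_{I',a'}(a_t), y_t) - u(a_t, y_t)\bigr) \\
&= \sum_{t: E_{I',a'} = 1} \bigl(u(s'_i, y_t) - u(a_t, y_t)\bigr) \\
&= \sum_{t=1}^T E_{I',a'}(\pi_{t-1}, x_t, p_t) \bigl(u(\phi_{s'_i}(a_t), y_t) - u(a_t, y_t)\bigr),
\end{align*}
where $\phi_{s'_i} \in \Phi_{\textrm{Ext}}$ is the constant strategy modification that maps every action to $s'_i$. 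The right-hand side is precisely $r(\pi_T, u, E_{I',a'}, \phi_{s'_i})$, the subsequence external regret with respect to event $E_{I',a'} \in \cE$ and external deviation $\phi_{s'_i}$.

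By the hypothesis of $(\cE, \alpha)$-subsequence regret, for every $E \in \cE$ and every $\phi \in \Phi_{\textrm{Ext}}$, $r(\pi_T, u, E, \phi) \leq \alpha$. Applying this to the event $E_{I',a'}$ and the constant deviation $\phi_{s'_i}$ yields $r(\pi_T, u, \phi_{I',a'}) \leq \alpha$. Since $\phi_{I',a'} \in \Phi_{\textrm{sub}}$ was arbitrary, taking the maximum over $\Phi_{\textrm{sub}}$ gives the claim.

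The main substantive step is the identification in the display above, which relies on the fact that subsequence deviations are ``null outside of their triggering event.'' There is no analytic obstacle here; the only care required is bookkeeping to ensure that the comparator strategy $s'_i$ used to define $\phi_{I',a'}$ is available as a constant modification in $\Phi_{\textrm{Ext}}$ (which it is, since $\Phi_{\textrm{Ext}}$ by definition contains one constant modification per action, and in the extensive-form setting actions are behavioral strategies in $S_i$), and to confirm that the event $E_{I',a'}$ defined by the deviation coincides with the event in the collection $\cE$ used to obtain the subsequence regret guarantee.
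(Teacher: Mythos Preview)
Your proposal is correct and matches the paper's approach exactly: the paper's proof is a single sentence stating that the result ``follows straight from definition -- for each event $E_{I',a'}$, the learner deviates to some fixed strategy $s'_i$ in the transcript whenever $E_{I',a'}=1$,'' which is precisely the identification you spell out. You have simply filled in the bookkeeping that the paper leaves implicit.
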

The proof of this follows straight from definition -- for each event $E_{I',a'}$, the learner deviates to some fixed strategy $s'_i$ in the transcript whenever $E_{I',a'}=1$. To show that no subsequence regret over events $\cE$ implies no causal regret, we will define a related, but slightly different notion of a causal deviation that we will use to directly compare to a subsequence deviation known as a \emph{reachable informed causal deviation}.

\begin{definition}
  Fix an information set $I'$, subsequent action $a' \in \cA(I)$, and strategy $s'_i$. A \emph{reachable informed causal deviation} $\phi$ of strategy $s_i \in S_i$ returns a strategy such that at each information set $I \in \cI$,
    \[
    \phi(s_i) = \begin{cases}
        s'_i(I), & I \succeq I', s_i(I')=a', I' \text{ reachable under } s_i \\
        s_i(I), & \text{otherwise}\\
        \end{cases}
  \]
    Let $\Phi_\textrm{r-causal} = \{\phi_{I,a,s} : I \in \cI_i, a\in \cA(I), s \in S_i \}$ be the set of all reachable informed causal deviations.
\end{definition}

\begin{lemma}
\label{lem: reachable-causal}
    Fix an extensive-form game $G$ and any transcript $\pi_T$. Let the action space $\cA = S_i$ be the set of all deterministic behavioral strategies of player $i$. Then, for all $\phi_{I,a,s} \in \Phi_\textrm{r-causal}$ and $\phi'_{I,a,s} \in \Phi_\textrm{causal}$,  
    \[ r(\pi_T, u, \phi_{I,a,s}) = r(\pi_T, u, \phi'_{I,a,s})\]
\end{lemma}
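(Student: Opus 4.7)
The plan is to show that the deviated strategies $\phi_{I,a,s}(s_t)$ and $\phi'_{I,a,s}(s_t)$ yield identical expected payoffs on every round $t$ of the transcript, which immediately implies equality of the two regret quantities. To do this, I would partition the rounds $t \in [T]$ into three cases based on the learner's realized deterministic behavioral strategy $s_t$: (a) rounds where $s_t(I) \neq a$; (b) rounds where $s_t(I) = a$ and $I$ is reachable under $s_t$; and (c) rounds where $s_t(I) = a$ but $I$ is not reachable under $s_t$.

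In cases (a) and (b), the two deviations agree pointwise on $s_t$: in case (a) both act as the identity (because both definitions require $s_t(I) = a$ in order to modify the strategy at $I$ and its descendants), and in case (b) the extra reachability condition in $\phi_{I,a,s}$ is satisfied, so both deviations produce the same modified strategy (following $s$ at $I$ and all $I'' \succeq I$, and agreeing with $s_t$ elsewhere). Hence the per-round payoffs coincide in these cases. The only potentially different case is (c). Here $\phi_{I,a,s}(s_t) = s_t$ by definition, while $\phi'_{I,a,s}(s_t)$ modifies $s_t$ at $I$ and its descendants $I'' \succeq I$.

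The key observation I need is that modifying the actions only at $I$ and its descendants cannot change the expected payoff when $I$ itself is not reachable under $s_t$. Reachability of $I$ is determined by the actions $s_t$ plays at the learner's information sets \emph{strictly preceding} $I$, which are untouched by the deviation; thus $I$ remains unreachable under $\phi'_{I,a,s}(s_t)$, and so are all descendants $I'' \succeq I$. Since only actions at unreachable information sets have been altered, the set of terminal nodes that can be reached under $\phi'_{I,a,s}(s_t)$ (paired with any opponent profile $\sigma_{-i}$) coincides with the set reachable under $s_t$, with identical reach probabilities; equivalently, the leaf-form vector $\pi$ representing the strategy is unchanged. Consequently $u(\phi'_{I,a,s}(s_t), y_t) = u(s_t, y_t) = u(\phi_{I,a,s}(s_t), y_t)$.

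Summing the per-round equalities across the three cases shows that $\sum_t u(\phi_{I,a,s}(s_t), y_t) = \sum_t u(\phi'_{I,a,s}(s_t), y_t)$, which yields $r(\pi_T, u, \phi_{I,a,s}) = r(\pi_T, u, \phi'_{I,a,s})$. The main obstacle is the formal argument in case (c): I must be precise about the fact that modifying a strategy solely at an unreachable information set (and its descendants) leaves the reach probabilities of all terminal nodes invariant. This can be made rigorous by noting that each terminal node's reach probability under the learner's deterministic strategy is a product of indicators over the learner's actions along the path to it; if that path passes through $I$, then it is zero both before and after the modification because $I$ is unreachable, while if it does not pass through $I$, none of the modified actions appear in the product. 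Everything else in the argument is essentially bookkeeping.
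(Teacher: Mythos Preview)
Your proposal is correct and follows essentially the same approach as the paper's proof: both argue that whenever the trigger information set $I$ is reachable the two deviations coincide by definition, and whenever $I$ is unreachable the deviations can differ only at $I$ and its descendants, which are never visited, so the per-round payoffs (and hence the regrets) agree. Your three-case decomposition is slightly finer than the paper's two-way split on reachability, and your leaf-form/reach-probability justification for case (c) is more explicit than the paper's ``play will never reach $I$'' remark, but the argument is the same.
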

\begin{proof}
    Suppose the transcript of behavioral strategies in $\pi_T$ is $(s_1,s_2,\ldots s_T)$. Observe that for all strategies $s_k$ where $k \in [T]$ such that the trigger information set $I$ is reachable under $s_k$, $\phi_{I,a,s}(s_k) = \phi'_{I,a,s}(s_k)$, by definition. 
    
    Thus, the modified strategies differ only between strategies $s_j$ such that $I$ is not reachable under $s_j$. When $I$ is not reachable under $s_j$, $\phi_{I,a,s}(s_j)(I') = s_j(I')$ and $\phi'_{I,a,s}(s_j)(I') = s_j(I')$ for all $I' \in \cI_i$ except for information sets $I' \succeq I$. However, since $I$ is not reachable under $s_j$, play will never reach $I$, and thus the regret must also be equal between the two modified strategies.
\end{proof}

\begin{lemma}
\label{lem: causal-subset}
    The set of reachable informed causal deviations $\Phi_\textrm{r-causal}$ is a subset of the subsequence deviations $\Phi_{sub}$ defined by events $\cE= \{E_{I,a}: I \in \cI_i, a\in \cA(I)\}$ in Theorem \ref{thm: efg-subsequence}.
\end{lemma}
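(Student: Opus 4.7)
The plan is to give each reachable informed causal deviation $\phi_{I',a',s'_i} \in \Phi_{r\text{-}causal}$ a concrete partner in $\Phi_{sub}$, namely the subsequence deviation $\phi_{E_{I',a'},\,s'_i}$ associated with the event $E_{I',a'} \in \cE$ and the same target strategy $s'_i$. The first thing I would check is that the activation conditions of the two deviations coincide: by the definition of $E_{I',a'}$ in Theorem \ref{thm: efg-subsequence}, the event fires on round $t$ iff the strategy $s_t$ makes $I'$ reachable and plays $a'$ at $I'$, which is exactly the triggering condition for $\phi_{I',a',s'_i}$. Hence on every round $t$ where one of the two deviations is nontrivial, so is the other; on all other rounds, both reduce to the identity and trivially agree.

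With the triggers matched, the next step is to compare the outputs on triggering rounds. The subsequence deviation returns the whole strategy $s'_i$, while the reachable informed causal deviation returns the strategy that plays $s'_i(I)$ at every $I \succeq I'$ and $s_t(I)$ at every $I \not\succeq I'$. I would argue that these two strategies are interchangeable for the purpose of the regret comparison with which $\Phi_{sub}$ is evaluated: the target $s'_i$ defining the subsequence deviation is free to be any element of $S_i$, so in particular we may choose its play outside the subtree rooted at $I'$ to be whatever is needed so that its induced leaf-form restricted to leaves outside $I'$'s subtree equals the leaf-form induced by $s_t$ there. Because the causal deviation only alters behavior at information sets $I \succeq I'$, the two strategies induce identical leaf-form vectors and hence identical utilities against every opponent play $y_t$.

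Combining the two steps gives, for every $s_i \in S_i$, that $\phi_{I',a',s'_i}(s_i)$ and $\phi_{E_{I',a'},\,s'_i}(s_i)$ are equal as actions in the learner's action space $\Pi_i$ (the leaf-form space), which is the sense in which set inclusion holds: the causal deviation $\phi_{I',a',s'_i}$ literally coincides with a member of $\Phi_{sub}$ indexed by event $E_{I',a'}$ and target $s'_i$. Since $(I',a',s'_i)$ ranged over an arbitrary element of $\Phi_{r\text{-}causal}$, this establishes $\Phi_{r\text{-}causal} \subseteq \Phi_{sub}$ as desired.

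The main obstacle I expect is the second step: justifying that the causal deviation's output, which keeps $s_t$'s play at non-successor information sets of $I'$, is interchangeable with a single fixed target strategy $s'_i$ used by a subsequence deviation. The cleanest way to carry this out is to work in the leaf-form representation (which is the action space used in Theorem \ref{thm: efg-subsequence}) and exploit the fact that changes to a behavioral strategy at information sets $I \not\succeq I'$ do not alter which leaves in the subtree of $I'$ are reachable; then the freedom in choosing $s'_i$ outside of $I'$'s subtree is exactly what is needed to absorb the dependence on $s_t$'s play at those nodes. Making this formal requires some careful bookkeeping of reach vectors, and this is the step that will take the most care in a fully rigorous proof.
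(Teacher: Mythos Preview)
Your approach is essentially the paper's: both match the causal trigger with the event $E_{I',a'}$ and both construct a target strategy so that the subsequence deviation reproduces the causal deviation's output. The paper does this directly in behavioral-strategy space, defining $s^!$ to equal $s'$ at all $I \succeq I'$ and to equal the input $s$ at every other information set, and then asserts that $\phi_{I',a',s'}$ coincides with the subsequence deviation $\phi^{\mathrm{sub}}_{I',a',s^!}$. The obstacle you flag---that the target inherits a dependence on the input strategy at information sets $I \not\succeq I'$---is exactly present in the paper's $s^!$ as well; the paper does not comment on it and does not carry out the leaf-form bookkeeping you outline. So your proposal matches the paper's proof in approach and in level of rigor, with you being more explicit than the paper about where the delicate step lies.
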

\begin{proof}
    Fix any $\phi_{I',a',s'} \in \Phi_\textrm{r-causal}$ with trigger sequence $(I',a')$ and trigger deviation $s'$. By definition, 
    \[
    \phi_{I',a',s'}(s) = \begin{cases}
        s'(I), & I \succeq I', s(I')=a', I' \text{ is reachable under } s \\
        s(I), & \text{otherwise}\\
        \end{cases}
    \]
   Clearly, when $I'$ is reachable under $s$ and $s(I')=a$, and thus $E_{I,a}=1$. Thus, $\phi_{I',a',s'}(s)$ can be expressed as a subsequence deviation:
   \[
    \phi_{I',a',s'}(s) = \phi^{sub}_{I',a',s^!}(s) = \begin{cases}
        s^!(I), & E_{I,a} = 1 \\
        s(I), & \text{otherwise}\\
        \end{cases}
    \]
    where $s^!(I)$ is the behavioral strategy equal to $s'(I)$ at all information sets $I \succeq I'$, and equal to $s(I)$ otherwise.
\end{proof}

\causal*
\begin{proof}
    We can apply Corollary \ref{cor: subseq-deviation}, showing that if we have $\cE$-subsequence regret at most $\alpha$, we must have $\Phi$-regret with respect to $\Phi_\textrm{sub}$ bounded by:
    \[ \max_{\phi \in \Phi_{\textrm{sub}}} r(\pi_T, u, \phi) \le \alpha \]
    
    Next, by Lemma \ref{lem: reachable-causal}, we can safely consider only the set of reachable causal deviations $\Phi_{\textrm{r-causal}}$ as relevant to causal regret. By Lemma \ref{lem: causal-subset}, we have that $\Phi_{\textrm{r-causal}}\subseteq \Phi_\textrm{sub}$, and thus, we must have 
    \[ \max_{\phi \in \Phi_{\textrm{causal}}} r(\pi_T, u, \phi) = \max_{\phi \in \Phi_{\textrm{r-causal}}} r(\pi_T, u, \phi) \le \max_{\phi^{sub} \in \Phi_{\textrm{sub}}} r(\pi_T, u, \phi) \le \alpha\]

\end{proof}

\begin{remark}
The set of linear-swap deviations \cite{farina2023polynomial} is not a superset of the deviations that can be captured by subsequence regret over a fixed collection of events $\cE$. In their paper, the authors give an example of a nonlinear swap (see their Example E.3), where strategy $A_1B_2$ is swapped to $A_1B_1$, and strategy $A_2B_1$ is swapped to $A_1B_1$. A simplified game tree is given below this remark.
This can be represented with subsequence deviations with event set $\cE=\{E_{1,2},E_{2,1}\}$, where $E_{i,j} = 1$ if $A_i$ is played at information set $I_1$ and $B_j$ is played at information set $I_2$, deviating to the fixed strategy $A_1B_1$ whenever an event occurs.
\end{remark}
\begin{figure}[h!]
\includegraphics[width=12cm]{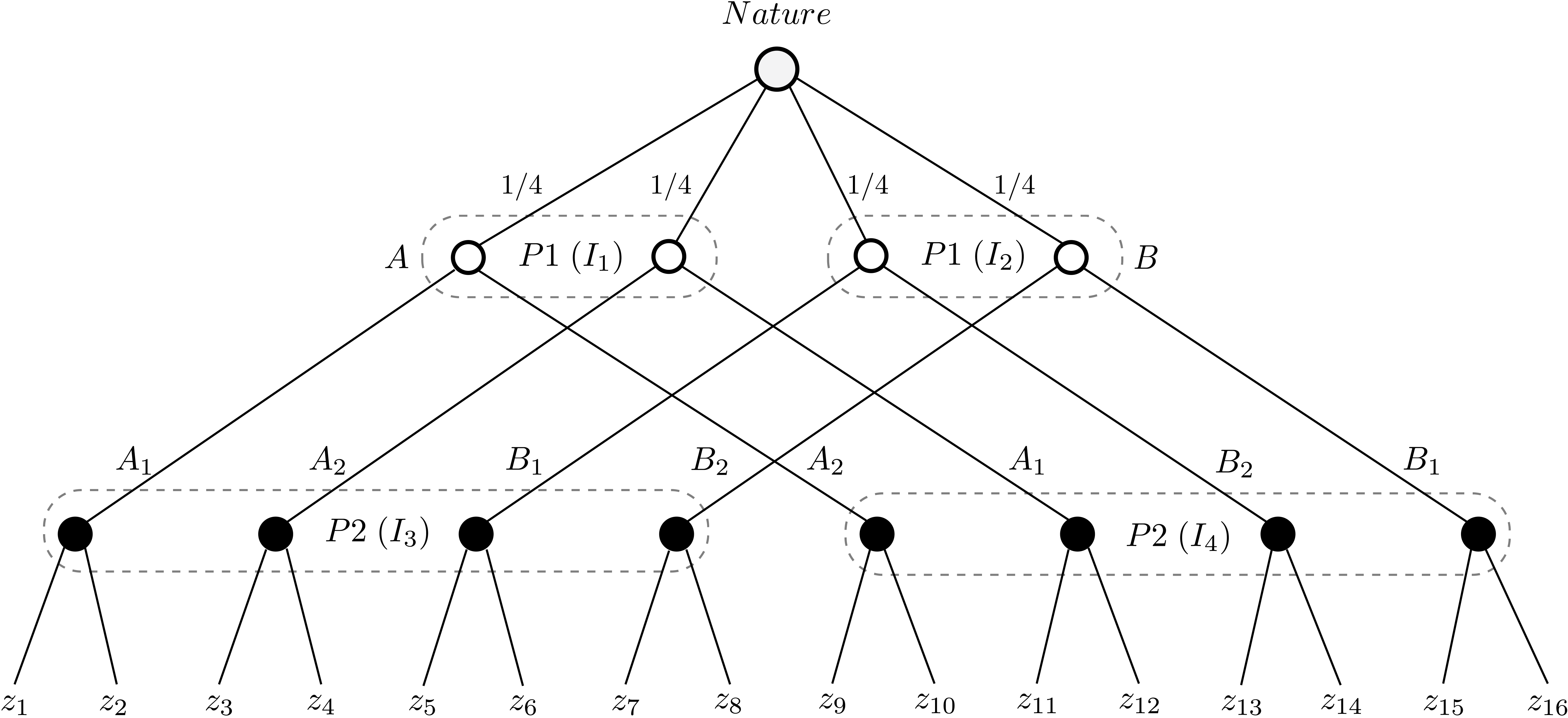}
\centering
\caption{A simplified game tree given in \cite{farina2023polynomial}, Example E.3. }
\end{figure}

\begin{remark}
The deviations of the behavioral deviation landscape \cite{morrill2021efficient} are not a superset of the deviations that can be captured by subsequence regret over a fixed collection of events $\cE$. For example, instantiating an event set $\cE = \{E_{I_1,I_2,I_3,a_3} : I_1,I_2,I_3 \in \cI_i, a_3 \in \cA(I_3)\}$ such that $E_{I_1,I_2,I_3,a_3}=1$ if a strategy makes $I_1,I_2$ reachable under the learner's strategy but also plays the action $a_3$ at information set $I_3$ does not correspond to any deviation type enumerated within the behavioral deviation landscape.
\end{remark}

\section{Proofs From Section \ref{sec: subsequence}}
\label{app: subsequence}
\subsequence*
\begin{proof}[Proof of Theorem \ref{thm: subsequence}]
    Fix any  $\phi \in \Phi_{\textrm{Ext}}$ and $E \in \cE$. We need to upper bound $r(\pi_T, u, \phi,E)$. Using the linearity of $u(S_t, p_t) = \sum_{i \in S_t} p_{t,i}$ in its second argument for all $S_t \in \cA$, and observing that $\phi(\cdot)$ is a constant strategy modification function (i.e. $\phi(D) = D'$ for all $D \in \cD$), we can write:
    \begin{eqnarray*}
        r(\pi_T, u, \phi,E) &=& \frac{1}{T} \sum_{t = 1}^T E(x_t,p_t,\pi_{t-1}) \cdot  \left(u(\phi(\delta_u(p_t)),y_t) - u(\delta_u(p_t),y_t)\right) \\
        &=& \frac{1}{T} \sum_{t = 1}^T E(x_t,p_t,\pi_{t-1}) \cdot  \left(u(D',y_t) - u(\delta_u(p_t),y_t)\right) \\
        &=& \frac{1}{T} \sum_{t = 1} ^ T E(x_t,p_t,\pi_{t-1}) \cdot \sum_{b \in D'} y_{t,b} - \frac{1}{T}\sum_{t=1}^T E(x_t, p_t, \pi_{t-1}) \sum_{b \in \delta_u(p_t)} y_{t,b}  \\
        &=& \frac{1}{T}  \left(\sum_{b \in D'}\sum_{t=1}^T E(x_t,p_t,\pi_{t-1}) y_{t,b} -\sum_{b \in B}\sum_{t=1}^T I_{b,E}(p_t) \cdot y_{t,b}\right) \\
        &\le& \frac{1}{T} \left(\sum_{b \in B}\sum_{t=1}^T E(x_t,p_t,\pi_{t-1}) \cdot p_{t,b}- I_{b, E}(p_t) \cdot p_{t,b}\right) \\
        &&  + \sum_{b \in B}\frac{\alpha(n_T(E, \pi_T)) + \alpha(n_T(I_{b,E}, \pi_T))}{T}\\
        &\le& \sum_{b \in B}\frac{\alpha(n_T(E, \pi_T)) + \alpha(n_T(I_{b,E}, \pi_T))}{T}\\
    \end{eqnarray*}
    Here the first inequality follows the $\alpha$-unbiasedness condition over $\cI$: for every $b \in B$, we have 
    \[
    \left|\sum_{t = 1}^T (p_{t,b}-y_{t,b}) I_{b, E}(p_t)\right| \leq \alpha(n_T(I_{b, E}, \pi_T)).
    \]
    Similarly, $\alpha$-unbiasedness over $\cE$ gives us, for every $b \in B$:
    $$\left|\sum_{t = 1}^T (p_{t,b} - y_{t,b}) E(x_t, p_t, \pi_{t-1})\right| \leq \alpha(n_T(E, \pi_T)).$$    
    The 2nd inequality follows from the fact that by definition of the straightforward decision maker, $u(\delta_u(p_t),p_t) \geq u(D',p_t)$ for any $D' \in \cD$, and the second term in the previous expression can be simplified:
    $$\sum_{b \in B} \sum_{t=1}^T I_{b, E} (p_t) \cdot p_{t,b} = \sum_{t=1}^T \sum_{b \in \delta_u(p_t)} E(x_t, p_t, \pi_{t-1}) \cdot p_{t,b} = \sum_{t=1}^T E(x_t, p_t, \pi_{t-1}) \cdot u(\delta_u(p_t), p_t).$$
\end{proof}

\subsequencecorr*
\begin{proof}
    From Theorem \ref{thm:main-guarantee}, at round $t$ the predictions of the canonical algorithm with event set $\cE'$ have expected bias bounded by

    \begin{align*}
        \E_{\pi^t}\left[\alpha(T, n_t(E', \pi_t))\right] &= O\left(\ln(n|\cE'|T) + \sqrt{\ln(n|\cE'|T) \cdot n_t(E',\pi_t)}\right) \\
        &= O\left(\ln(n|\cE|T) + \sqrt{\ln(n|\cE|T) \cdot n_t(E',\pi_t)}\right) \\
    \end{align*}
    for each event $E' \in \cE$. Since $n_t(E', \pi_t) \le t$, we can plug this into  Theorem \ref{thm: subsequence} to get that the canonical algorithm has $\cE$-subsequence regret bounded by 
    \[ \max_{E \in \cE, \phi \in \Phi_{\textrm{Ext}}}r(\pi_T,u,\phi,E) \leq O \left(\frac{n\sqrt{\ln(n|\cE|T)}}{\sqrt{t}} \right). \]
\end{proof}

\end{document}